\definecolor{redSap}{rgb}{0.50980392156,0.14117647058,0.2}
\definecolor{lightgraySap}{rgb}{0.84313725490,0.82745098039,0.78039215687}
\definecolor{blueSap}{rgb}{0,0.40392156862,0.47058823529}
  \renewcommand{\contentsname}%
    {Table of Contents}%
 \definecolor{MATLABgreen}{RGB}{28,172,0} % color values Red, Green, Blue
 \definecolor{MATLABlilas}{RGB}{170,55,241}
\theoremstyle{break}
\tikzstyle{thmbox} = [rectangle, rounded corners, draw=black,
\let\Algorithm\algorithm
\renewcommand\algorithm[1][]{\Algorithm[#1]\setstretch{1.2}}
\algrenewcommand\algorithmicrequire{\textbf{Inputs:}}
\algrenewcommand\algorithmicensure{\textbf{Output:}}
\newenvironment{AlgorithmCustomWidth}[1][h]%
{\begin{myalgo}[#1]
\centering
\begin{minipage}{12.5cm}
\begin{algorithm}[H]}%
{\end{algorithm}
\end{minipage}
\end{myalgo}}
\newcommand{\vect}[1]{\mathbf{#1}}
\DeclareMathSymbol{\R}{\mathalpha}{AMSb}{"52}
\newcommand{\norm}[2][2]{\left\lVert #2 \right\rVert_{#1}}
\DeclareMathOperator*{\argmin}{arg\,min}
\newcommand{\boldbeta}{\ensuremath{\boldsymbol{\beta}}}
\newcommand\blfootnote[1]{%
  \begingroup
  \renewcommand\thefootnote{}\footnote{#1}%
  \addtocounter{footnote}{-1}%
  \endgroup
}
\begin{document}
\dominitoc

\fancypagestyle{standard}{%
	\fancyhf{}
	\fancyhead[L]{\chaptername~\thechapter}
	\fancyhead[R]{\scshape\MakeTextUppercase{\leftmark}}
	\fancyfoot[C]{\thepage}
	\renewcommand{\headrulewidth}{0.4pt}
}

\fancypagestyle{starred}{%
	\fancyhf{}
	\fancyhead[L]{}
	\fancyhead[R]{\scshape\MakeTextUppercase{\leftmark}}
	\fancyfoot[C]{\thepage}
	\renewcommand{\headrulewidth}{0.4pt}
}

\pagestyle{empty}
\frontmatter
%%%%%%%%%%%%%%%%%%%%%%%%%%%%%%%%%%%%%%%%%%%%%%%%%%%%%%%%%%%%%%%%%%%%%%
%=========================== FRONT MATTER ===========================%
%%%%%%%%%%%%%%%%%%%%%%%%%%%%%%%%%%%%%%%%%%%%%%%%%%%%%%%%%%%%%%%%%%%%%%

\newcommand{\HRule}{\rule{\linewidth}{0.4mm}}

\thispagestyle{empty}
\frontmatter
\begin{titlepage}
 \begin{center}
 	 \vspace*{2em}
     \includegraphics[scale=0.7]{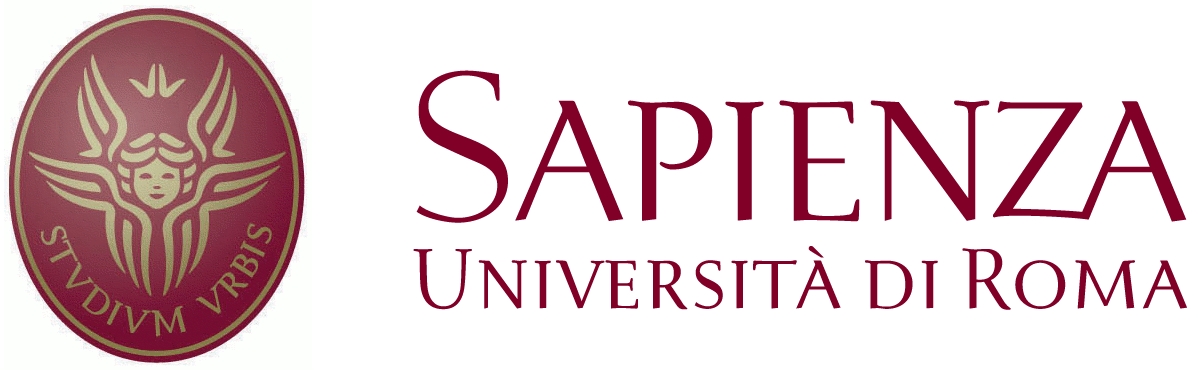}\\
     \vspace{2em}
     {\large \textsc{Sapienza University of Rome \\ Department of Information Engineering, \\Electronics and Telecommunications}}\\
     \vspace{4em}
     {\Large \textsc{PhD Thesis}}\\
     \vspace{2em}
     \HRule \\[0.4cm]
     {\huge \textsc{\textbf{Distributed Supervised Learning}}}\\
     \vspace{1em}
     {\huge \textsc{\textbf{using Neural Networks}}}
     \HRule \\[0.4cm]
     \vspace{2em}
     {\normalsize \textsc{Dissertation submitted in partial fulfillment of the \\ requirements for the degree of Doctor of Philosophy in \\ Information and Communication Engineering \\ XXVIII cycle}}\\     
 \end{center}

\vskip 1.3cm
  \begin{center}
    \begin{tabular}{c c c c c c c c}
      Supervisor & & & & & & & Candidate \\[0.2cm]
      \large{Prof. Aurelio Uncini} & & & & & & & \large{Simone Scardapane}\\[0.4cm]
    \end{tabular}
  \end{center}

\vskip 0.7cm
\begin{center}
  \normalsize Rome, Italy \\ April 2016
\end{center}

\vfill

\end{titlepage}

\clearpage

\pagestyle{starred}
\onehalfspacing

\chapter*{Abstract}
\addcontentsline{toc}{chapter}{\bf Abstract} \mtcaddchapter 

\begin{quotation}
\noindent Distributed learning is the problem of inferring a function in the case where training data is distributed among multiple geographically separated sources. Particularly, the focus is on designing learning strategies with low computational requirements, in which communication is restricted only to neighboring agents, with no reliance on a centralized authority. In this thesis, we analyze multiple distributed protocols for a large number of neural network architectures. The first part of the thesis is devoted to a definition of the problem, followed by an extensive overview of the state-of-the-art. Next, we introduce different strategies for a relatively simple class of single layer neural networks, where a linear output layer is preceded by a nonlinear layer, whose weights are stochastically assigned in the beginning of the learning process. We consider both batch and sequential learning, with horizontally and vertically partitioned data. In the third part, we consider instead the more complex problem of semi-supervised distributed learning, where each agent is provided with an additional set of unlabeled training samples. We propose two different algorithms based on diffusion processes for linear support vector machines and kernel ridge regression. Subsequently, the fourth part extends the discussion to learning with time-varying data (e.g. time-series) using recurrent neural networks. We consider two different families of networks, namely echo state networks (extending the algorithms introduced in the second part), and spline adaptive filters. Overall, the algorithms presented throughout the thesis cover a wide range of possible practical applications, and lead the way to numerous future extensions, which are briefly summarized in the conclusive chapter.
\end{quotation}

\clearpage

\begin{spacing}{1.2}
\tableofcontents
\end{spacing}

% Acronym definitions
\singlespacing
\chapter*{List of Acronyms}
\markboth{List of Acronyms}{List of Acronyms}
\addcontentsline{toc}{chapter}{\bf List of Acronyms} \mtcaddchapter
\newcommand{\acrospace}{\hspace{1.5em}}
\begin{acronym}[GD/SGD]
\acro{AMC}{\acrospace Automatic Music Classification}
\acro{ANN}{\acrospace Artificial Neural Network}
\acro{ATC}{\acrospace Adapt-Then-Combine}
\acro{BP}{\acrospace Back-Propagation}
\acro{BRLS}{\acrospace Blockwise Recursive Least Square (see also RLS)}
\acro{CTA}{\acrospace Combine-Then-Adapt (see also ATC)}
\acro{DA}{\acrospace Diffusion Adaptation}
\acro{DAC}{\acrospace Decentralized Average Consensus}
\acro{DAI}{\acrospace Distributed Artificial Intelligence}
\acro{DF}{\acrospace Diffusion Filtering}
\acro{DGD}{\acrospace Distributed Gradient Descent (see also DA)}
\acro{DL}{\acrospace Distributed Learning}
\acro{DSO}{\acrospace Distributed Sum Optimization}
\acro{EDM}{\acrospace Euclidean Distance Matrix}
\acro{ESN}{\acrospace Echo State Network}
\acro{ESP}{\acrospace Echo State Property}
\acro{FL}{\acrospace Functional Link}
\acro{GD/SGD}{\acrospace (Stochastic) Gradient Descent}
\acro{HP}{\acrospace Horizontally Partitioned (see also VP)}
\acro{KAF}{\acrospace Kernel Adaptive Filtering}
\acro{KRR}{\acrospace Kernel Ridge Regression}
\acro{LASSO}{\acrospace Least Angle Shrinkage and Selection Operator}
\acro{LIP}{\acrospace Linear-In-the-Parameters}
\acro{LMS}{\acrospace Least Mean Square}
\acro{LRR}{\acrospace Linear Ridge Regression (see also KRR)}
\acro{MEB}{\acrospace Minimum Enclosing Ball}
\acro{MFCC}{\acrospace Mel Frequency Cepstral Coefficient}
\acro{MIR}{\acrospace Music Information Retrieval}
\acro{ML}{\acrospace Machine Learning}
\acro{MLP}{\acrospace Multilayer Perceptron}
\acro{MR}{\acrospace Manifold Regularization}
\acro{MSD}{\acrospace Mean-Squared Deviation}
\acro{MSE}{\acrospace Mean-Squared Error}
\acro{NEXT}{\acrospace In-Network Nonconvex Optimization}
\acro{NRMSE}{\acrospace Normalized Root Mean-Squared Error (see also MSE)}
\acro{P2P}{\acrospace Peer-to-Peer}
\acro{PSD}{\acrospace Positive Semi-Definite}
\acro{QP}{\acrospace Quadratic Programming}
\acro{RBF}{\acrospace Radial Basis Function}
\acro{RKHS}{\acrospace Reproducing Kernel Hilbert Space}
\acro{RLS}{\acrospace Recursive Least Square}
\acro{RNN}{\acrospace Recurrent Neural Network}
\acro{RVFL}{\acrospace Random Vector Functional-Link (see also FL)}
\acro{SAF}{\acrospace Spline Adaptive Filter}
\acro{SL}{\acrospace}{Supervised Learning}
\acro{SSL}{\acrospace Semi-Supervised Learning}
\acro{SV}{\acrospace Support Vector (see also SVM)}
\acro{S$^3$VM}{\acrospace Semi-Supervised Support Vector Machine}
\acro{SVM}{\acrospace Support Vector Machine}
\acro{VP}{\acrospace Vertically Partitioned (see also HP)}
\acro{WLS}{\acrospace Weighted Least Square}
\acro{WSN}{\acrospace Wireless Sensor Network}
\end{acronym}
\onehalfspacing

% Improves spacing for list of figures
\makeatletter
     \renewcommand*\l@figure{\@dottedtocline{1}{1em}{2.9em}}
\makeatother

\listoffigures
\addcontentsline{toc}{chapter}{\bf List of Figures} \mtcaddchapter 

\listoftables
\addcontentsline{toc}{chapter}{\bf List of Tables} \mtcaddchapter 

\listofalgorithms
\addcontentsline{toc}{chapter}{\bf List of Algorithms} \mtcaddchapter 

\mainmatter

\pagestyle{standard}
\renewcommand{\chaptermark}[1]{\markboth{#1}{}}

\chapter{Introduction}
\label{chap:introduction}

\lettrine[lines=2]{S}{upervised} learning (SL) is the task of automatically inferring a mathematical function, starting from a finite set of examples \citep{Hastie2009}. Together with unsupervised learning and reinforcement learning, it is one of the three main subfields of machine learning (ML). Its roots as a scientific discipline can be traced back to the introduction of the first fully SL algorithms, namely the perceptron rule around $1960$ \citep{Rosenblatt1958}, and the $k$-nearest neighbors ($k$-NN) in $1967$ \citep{Cover1967}.\footnote{While this is a generally accepted convention, one may easily choose earlier works to denote a starting point, such as the work by R. A. Fisher on linear discriminant analysis in $1936$ \cite{Fisher1936}.} The perceptron, in particular, became the basis for a wider family of models, which are known today as artificial neural networks (ANNs). ANNs model the unknown desired relation using the interconnection of several building blocks, denoted as artificial neurons, which are loosely inspired to the biological neuron. Over the last decades, hundreds of variants of ANNs, and associated learning algorithms, have been proposed. Their development was sparked by a few fundamental innovations, including the Widrow-Hoff algorithm in $1960$ \citep{Widrow1990}, the popularization of the back-propagation (BP) rule in $1986$ \cite{Rumelhart1986} (and its later extension for dynamical systems \citep{Werbos1990}), the support vector machine (SVM) in $1992$ \citep{Boser1992}, and additional recent developments on `deep' architectures from $2006$ onwards \citep{Schmidhuber2014}.

As a fundamentally \textit{data-driven} technology, SL has been changed greatly by the impact of the so-called `big data' revolution \citep{Wu2014}. Big data is a general terminology, which is used to refer to any application where data cannot be processed using `conventional' means. As such, big data is not defined axiomatically, but only through its possible characteristics. These include, among others, its volume and speed of arrival. Each of these aspects has influenced SL theory and algorithms \citep{Wu2014}, although in many cases solutions were developed prior to the emergence of the big data paradigm itself. As an example, handling large volumes of data is known in the SL community as the large-scale learning problem \citep{Bottou2008}. This has brought forth multiple developments in parallel solutions for training SL models \citep{Garcia-Pedrajas2012}, particularly with the use of \textit{commodity computing} frameworks such as MapReduce \citep{Chu2007}. Similarly, learning with continuously arriving streaming data is at the center of the subfield of online SL \citep{zinkevich2009slow}.

In this thesis, we focus on another characteristic of several real-world big data applications, namely, their \textit{distributed} nature \citep{Wu2014}. In fact, an ever-increasing number of authors is starting to consider this last aspect as a defining property of big data in many real-world scenarios, which complements the more standard characteristics (e.g. volume, velocity, etc.). As an example, Wu \textit{et al.} \citep{Wu2014} state that ``\textit{autonomous data sources with distributed and decentralized controls are a main characteristic of Big Data applications}''. In particular, we consider the case where training data is distributed among a network of interconnected agents, a setting denoted as distributed learning (DL). If we assume that the agents can communicate with one (or more) coordinating nodes, then it is possible to apply a number of parallel SL algorithm, such as those described above. In this thesis, however, we focus on a more general setting, in which nodes can communicate exclusively with a set of neighboring agents, but none of them is allowed to coordinate in any way the training process. This is a rather general formalization, which subsumes multiple applicative domains, including learning on wireless sensor networks (WSNs) \citep{predd2006distributed,barbarossa2013distributed}, peer-to-peer (P2P) networks \citep{datta2006distributed}, robotic swarms, smart grids, distributed databases \citep{lazarevic2002boosting}, and several others.

More specifically, we develop distributed SL algorithms for multiple classes of ANN models. We consider first the standard SL setting, and then multiple extensions of it, including online learning \citep{zinkevich2009slow}, semi-supervised learning (SSL) \citep{Chapelle2006}, and learning with time-varying signals \citep{Uncini2015}. Due to the generality of our setting, we assume that computational constraints may be present at each agent, such as a sensor in a WSN. Thus, we focus mostly on relatively simple classes of ANNs, where both training and prediction can be performed with possibly low computational capabilities. In particular, in the first and last part of this thesis, we will be concerned with two-layered ANNs, where the weights of the first layer are stochastically assigned from a predefined probability distribution. These include random vector functional-link (RVFLs) \citep{Pao1994,igelnik1995stochastic}, and echo state networks (ESNs) \citep{lukovsevivcius2009reservoir}. Despite this simplification, these models are capable of high accuracies in most real-world settings. Additional motivations and an historical perspective on this are provided in the respective chapters. The rest of the thesis deals with linear SVMs, kernel ridge regression, and single neurons with flexible activation functions.

Another important point to note is that the algorithms developed here do not require the exchange of examples between the nodes, but only of a finite subset of parameters of the ANN itself (and a few auxiliary variables).\footnote{With the exception of Chapter \ref{chap:dist_ssl}, where nodes are allowed to compute a small subset of similarities between their training samples. As is shown in the chapter, privacy can still be kept with the use of privacy-preserving protocols for computing Euclidean distances \citep{liu2006random}.} Thus, they are able to scale easily to large, and possibly time-varying, networks, while keeping a fixed communication overhead. Constraining the exchange of data points is a reasonable assumption in big data scenarios, where datasets are generally large. However, it might be desirable even in other contexts, e.g. whenever privacy concerns are present \citep{verykios2004state}. A prototypical example in this case is that of distributed \textit{medical} databases, where sensible information on each patient must go through strict controls on its diffusion \citep{Torii2011}.

\section*{Structure of the Thesis}
\addcontentsline{toc}{section}{Structure of the Thesis}
A schematic categorization of the algorithms presented in the thesis is given in Fig. \ref{chap1:fig:thesis_organization}. A group corresponding to a specific part of the thesis is shown with a green rectangle, while the algorithms (together with their corresponding chapters) are given with light blue rectangles.
\begin{figure}
\centering
\includegraphics[]{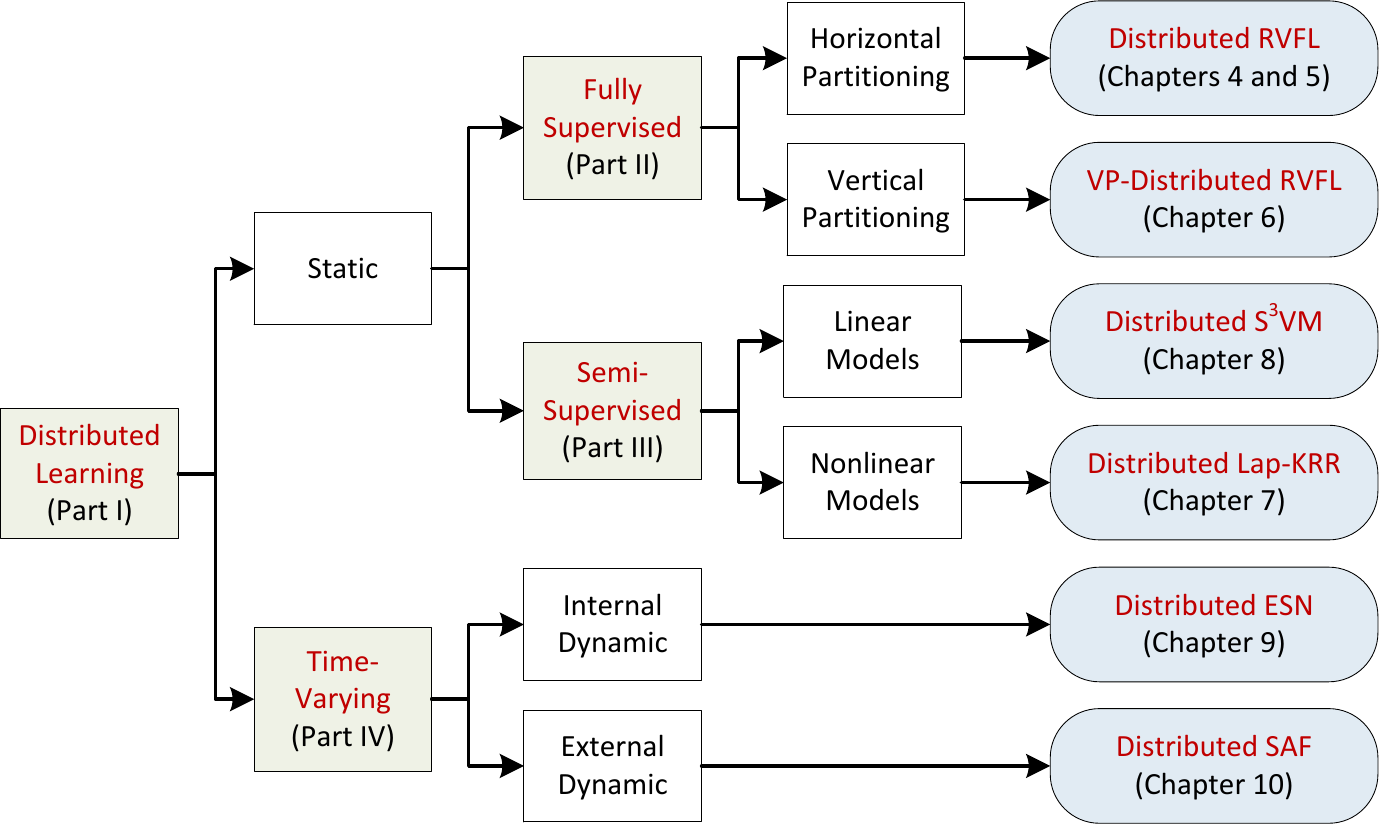}
\caption{Schematic organization of the algorithms presented in the thesis.}
\label{chap1:fig:thesis_organization}
\end{figure}
\newline\newline
\noindent \textbf{Part I} is devoted to introducing the required background material.

\begin{itemize}
\item[] \textbf{Chapter \ref{chap:single_agent_sl}} describes the basic tools of SL theory in the centralized case. It starts by stating formally the SL problem, and then moves on to introduce the ANN models to be used successively.
\item[] \textbf{Chapter \ref{chap:multi_agent_sl}} provides a formal definition of the distributed (multi-agent) SL problem. Additionally, we provide an in-depth overview of previous works dealing with distributed SL with ANN models. The overview combines works coming from multiple research fields, and tries to give an unified discussion by taking a model-based approach.
\end{itemize}

\noindent \textbf{Part II} introduces algorithms for training RVFL networks in the DL setting.

\begin{itemize}
\item[] In \textbf{Chapter \ref{chap:dist_rvfl}}, we develop two fully distributed training algorithms for them. Strength and weaknesses of both approaches are analyzed and compared to the pre-existing literature.
\item[] \textbf{Chapter \ref{chap:dist_rvfl_sequential}} extends one algorithm presented in the previous chapter to the sequential setting, where new data is arriving continuously at every node. Additionally, we present an application to the problem of distributed music classification, and we analyze how different strategies for computing a distributed average can influence the convergence time of the algorithm. 
\item[] \textbf{Chapter \ref{chap:dist_rvfl_vertical_partitioning}} presents a second extension of Chapter \ref{chap:dist_rvfl}, to the situation where each example is partitioned across multiple nodes. Technically, this is known as `vertical partitioning' in the data mining literature.
\end{itemize}

\noindent In \textbf{Part III} we consider distributed SL with the presence of additional \textit{unlabeled} data at every node, thus extending the standard theory of SSL \citep{Chapelle2006}. This part focuses on kernel models. To the best of our knowledge, these are the first algorithms for general purpose distributed SSL.

\begin{itemize}
\item[] In \textbf{Chapter \ref{chap:dist_ssl}} we provide a distributed protocol for a kernel-based algorithm belonging to the manifold regularization (MR) framework \citep{belkin2006manifold}. To this end, we also derive a novel algorithm for decentralized Euclidean distance matrix (EDM) completion, inspired to the theory of diffusion adaptation (DA) \citep{sayed2014adaptive}. 
\item[] In \textbf{Chapter \ref{chap:dist_ssl_next}} we propose two distributed algorithms for a family of semi-supervised linear SVMs, derived from the transductive literature. The first algorithm is again inspired to the DA theory, while the second builds on more recent developments in the field of distributed non-convex optimization.
\end{itemize}

\noindent \textbf{Part IV} considers the more general setting of DL in a \textit{time-varying} scenario.

\begin{itemize}
\item[] In \textbf{Chapter \ref{chap:dist_esn}}, we exploit a well-known recurrent extension of the RVFL network, called ESN \citep{lukovsevivcius2009reservoir}. We leverage on this to provide an extension of one algorithm presented in Chapter \ref{chap:dist_rvfl}. The algorithm is then tested on four large-scale prediction problems. We also present an extension for training ESNs with a sparse output layer.
\item[] Then, \textbf{Chapter \ref{chap:dist_saf}} considers learning from time-varying signals with the use of particular neurons with flexible nonlinear activation functions, called spline adaptive filters (SAF) \citep{Scarpiniti2013}. Again, the theory of DA is used to derive a fully distributed training protocol for SAFs, with local interactions between neighboring nodes. It requires only a small, fixed overhead with respect to a linear counterpart.
\end{itemize}

\noindent Finally, \textbf{Chapter \ref{chap:conclusions}} summarizes the main contributions of this thesis, along with the possible further developments. The thesis is complemented by two appendices. In \textbf{Appendix \ref{app:graph_theory}}, we provide a general overview of algebraic graph theory (which is used to model networks of agents), and of the decentralized average consensus (DAC) protocol \citep{olfati2007consensus,Xiao2007}. DAC is a flexible routine to compute global averages over a network, which is used extensively throughout the thesis, including Chapters \ref{chap:dist_rvfl}-\ref{chap:dist_rvfl_vertical_partitioning} and Chapter \ref{chap:dist_esn}. \textbf{Appendix \ref{app:software}} details the open-source software implementations which can be used to replicate the algorithms presented here.

\section*{Research contributions}
\addcontentsline{toc}{section}{Research Contributions}

Part of this thesis is adapted from material published (or currently under review) on several journals and conferences. Table \ref{chap1:tab:research_contributions} shows a global overview, while an introductory footnote on each chapter provides more information whenever required.

\begin{center}
\begin{table*}[h]
	\caption{Schematic overview of the research contributions related to the thesis}
	{\hfill{}
		\setlength{\tabcolsep}{4pt}
		\renewcommand{\arraystretch}{1.5}
		\begin{footnotesize}
			\begin{tabular}{@{}m{0.1\columnwidth}m{0.15\columnwidth}m{0.65\columnwidth}@{}}
				\toprule
				\multirow{3}{*}[-0.5em]{Part II} & \textbf{Chapter 4} & Published on \textit{Information Sciences} \cite{Scardapane2015} \\
				& \multicolumn{1}{l}{\textbf{Chapter 5}} & Presented at the \textit{2015 International Joint Conference on Neural Networks} \cite{scardapane2015distmusic}; one section is published as a book chapter in \cite{fierimonte2015a} \\
				& \textbf{Chapter 6} & Presented at the \textit{2015 INNS Conference on Big Data} \cite{scardapane2015hetfeatures} \\
				\midrule
				\multirow{2}{*}[-0.5em]{Part III} & \textbf{Chapter 7} & Conditionally accepted at \textit{IEEE Transactions on Neural Networks and Learning Systems} \\
				& \textbf{Chapter 8} & Published in \textit{Neural Networks} \cite{Scardapane2015esn}; final section is in final editorial review at \textit{IEEE Computational Intelligence Magazine} \\
				\midrule
				\multirow{2}{*}[-0.5em]{Part IV} & \textbf{Chapter 9} & Published in \textit{Neural Networks} \cite{scardapane2016distributedsemi} \\
				& \textbf{Chapter 10} & Submitted for presentation at the \textit{2016 European Signal Processing Conference} \\
				\bottomrule
			\end{tabular}
		\end{footnotesize}}
		\hfill{}\vspace{0.3em}
		\label{chap1:tab:research_contributions}
	\end{table*}
\end{center}

\vspace{-3em}
\section*{Notation}
\addcontentsline{toc}{section}{Notation}

Throughout the thesis, vectors are denoted by boldface lowercase letters, e.g. $\vect{a}$, while matrices are denoted by boldface uppercase letters, e.g. $\vect{A}$. All vectors are assumed to be column vectors, with $\vect{a}^T$ denoting the transpose of $\vect{a}$. The notation $A_{ij}$ denotes the $(i,j)$th entry of matrix $\vect{A}$, and similarly for vectors. $\norm[p]{\vect{a}}$ is used for the $L_p$-norm of a generic vector $\vect{a}$. For $p=2$ this is the standard Euclidean norm, while for $p=1$ we have $\norm[1]{\vect{a}} = \sum_i a_i$. The notation $a[n]$ is used to denote dependence with respect to a time-instant, both for time-varying signals (in which case $n$ refers to a time-instant) and for elements in an iterative procedure (in which case $n$ is the iteration's index). The spectral radius of a generic matrix $\vect{A}$ is $\rho(\vect{A}) = \underset{i}{\max}\left\{ | \lambda_i(\vect{A} ) | \right\}$, where $\lambda_i(\vect{A})$ is the $i$th eigenvector of $\vect{A}$. Finally, we use $\vect{A} \succeq 0$ to denote a positive semi-definite (PSD) matrix, i.e. a matrix for which $\vect{x}^T\vect{A}\vect{x} \ge 0$ for any vector $\vect{x}$ of suitable dimensionality.

\part{Background Material}
\chapter{Centralized Supervised Learning}
\label{chap:single_agent_sl}

\minitoc
\vspace{15pt}

\lettrine[lines=2]{T}{his} chapter is devoted to the exposition of the basic concepts of SL in the centralized (single-agent) case. It starts with the formalization of the SL problem, using standard tools from regularization theory, in Section \ref{sec:sl_definition}. Next, we introduce the ANN models (and associated learning algorithms) that are used successively, going from the simplest one (i.e., a linear regression) to a more complex multilayer perceptron (MLP). The exposition focuses on a few fundamental concepts, without going into the details on consistency, stability, and so on. The interested reader is referred to any introductory book on the subject for a fuller treatment, e.g. \citep{Hastie2009}.

\section{General definitions}
\label{sec:sl_definition}

SL is concerned on automatically extracting a mathematical relation between an \textit{input space} $\mathcal{X}$, and an \textit{output space} $\mathcal{Y}$. Throughout the thesis, we assume that the input is a $d$-dimensional vector of real numbers, i.e. $\mathcal{X} \subseteq \R^d$. The input $\vect{x}$ is also called \textit{example} or \textit{pattern}, while a single element $x_i$ of $\vect{x}$ is called a \textit{feature}. For ease of notation, we also assume that the output is a single scalar number, such that $\mathcal{Y} \subseteq \R$. However, everything that follows can be extended straightforwardly to the case of a multi-dimensional output vector. It is worth noting here that many representations can be transformed to a vector of real numbers through suitable pre-processing procedures, including categorical variables, complex inputs, texts, sequences, and so on. Hence, restricting ourselves to this case is a reasonable assumption in most real-world applications. Possible choices for the output space are discussed at the end of this section.

Generally speaking, in a stationary environment, it is assumed that the relation between $\mathcal{X}$ and $\mathcal{Y}$ can be described in its entirety with a joint probability distribution $p(\vect{x} \in \mathcal{X}, y \in \mathcal{Y})$. This probabilistic point of view takes into account the fact that the entries in the input $\vect{x}$ may not identify univocally a single output $y$, that noise may be present in the measurements, and so on. The only information we are given is in the form of a \textit{training} dataset of $N$ samples of the relation:
\begin{definition}[Dataset]
A dataset $D$ of size $N$ is a collection of $N$ samples of the unknown relation, in the form $D = \left\{ \vect{x}_i, y_i \right\}_{i=1}^N$. The set of all datasets of size $N$ is denoted as $\mathcal{D}(N)$.
\end{definition}
Informally, the task of SL it to infer from $D$ a function $f(\cdot)$ such that $f(\vect{x}) \approx y$ for any unseen pair $(\vect{x},y)$ sampled from $p(\vect{x}, y)$.\footnote{The emphasis on \textit{predicting} an output instead of explaining the underlying process distinguishes the ML field from a large part of previous statistics literature, see e.g. \citep{Shmueli2011}.} This process is denoted as \textit{training}. To make this definition more formal, let us assume that the unknown function belongs to a functional space $\mathcal{H}$. We refer to each element $f(\cdot) \in \mathcal{H}$ as an \textit{hypothesis} or, more commonly, as a \textit{model}. Consequently, we call $\mathcal{H}$ the hypothesis (or model) space. Additionally, a non-negative \textit{loss} function $l(y,f(\vect{x})): \mathcal{Y} \times \mathcal{Y} \rightarrow \R^+$ is used to determine the error incurred in estimating $f(\vect{x})$ instead of the true $y$ for any possible pair $(\vect{x},y)$. Using these elements, we are ready to define the (ideal) SL problem.
\begin{definition}[Ideal SL Problem]
Given an hypothesis space $\mathcal{H}$ and a loss function $l(\cdot, \cdot)$, the ideal solution to the SL problem is the function $f(\cdot)$ minimizing the following \textit{expected risk functional}:
\begin{equation}
I_{\text{exp}}[f] = \int l(y, f(\vect{x})) dp(\vect{x},y), \; f \in \mathcal{H} \;.
\label{chap2:eq:exp_risk_functional}
\end{equation}
\end{definition}
The function $f^*$ minimizing Eq. \eqref{chap2:eq:exp_risk_functional} is called Bayes estimator, while $I[f^*]$ is called Bayes risk. Since the probability distribution is unknown, Eq. \eqref{chap2:eq:exp_risk_functional} can be approximated using a generic dataset $D$:
\begin{equation}
I_{\text{emp}}[f] = \sum_{i=1}^N l(y_i, f(\vect{x}_i)) \;.
\label{chap2:eq:emp_risk_functional}
\end{equation}
Eq. \eqref{chap2:eq:emp_risk_functional} is known as the empirical risk functional. It is relatively easy to show that minimizing Eq. \eqref{chap2:eq:emp_risk_functional} instead of Eq. \eqref{chap2:eq:exp_risk_functional} may lead to a risk of \textit{overfitting}, i.e., a function which is not able to generalize efficiently to unseen data. A common solution is to include in the optimization process an additional ``regularizing'' term, imposing reasonable assumptions on the unknown function, such as smoothness, sparsity, and so on. This gives rise to the regularized SL problem.
\begin{definition}[Regularized SL problem]
Given a dataset $D \in \mathcal{D}(N)$, an hypothesis space $\mathcal{H}$, a loss function $l(\cdot, \cdot)$, a regularization functional $\phi[f]: \mathcal{H} \rightarrow \R$, and a scalar coefficient $\lambda > 0$, the regularized SL problem is defined as the minimization of the following functional:
\begin{equation}
I_{\text{reg}}[f] = \sum_{i=1}^N l(y_i, f(\vect{x}_i)) + \lambda\phi[f] \;.
\label{chap2:eq:reg_risk_functional}
\end{equation}
\end{definition}
Problem in Eq. \eqref{chap2:eq:reg_risk_functional} can be justified, and analyzed, from a wide variety of viewpoints, including the theory of linear inverse problems, statistical learning theory and Bayes' theory \citep{Evgeniou2000,Cucker2002}. Throughout the rest of this thesis, we will be concerned with solving it for different choices of its elements. In particular, we consider models belonging to the class of ANNs. These are briefly summarized in the rest of this chapter, going from the simplest one, a linear neuron trained via least-square regression, to the more complex MLP trained using SGD and the BP rule.

Before this, however, it is necessary to spend a few words on the possible choices for the output space $\mathcal{Y}$. We distinguish two different cases. In a \textit{regression} task, the output can take any real value in a proper subset of $\R$. Conversely, in a \textit{binary classification} task, the output can take only two values, which are customarily denoted as $-1$ and $+1$. More generally, in multi-class classification, the output can assume any value in the set $\left\{1,2,\ldots,M\right\}$, where $M$ is the total number of classes. This problem can be addressed by a proper transformation of the output (if the model allows for a multi-dimensional output), or by converting it to a set of binary classification problems, using well-known strategies \citep{rifkin2004defense}.

\section{ANN models for SL}

\subsection{Linear neuron}
\label{sec:linear_neuron}

The simplest ANN model for SL is given by the linear neuron, which performs a linear combination of its input vector:
\begin{equation}
f(\vect{x}) = \boldbeta^T\vect{x} + b \;,
\label{chap2:eq:linear_neuron}
\end{equation}
where $\boldbeta \in \R^d$ and $b \in \R$. For ease of notation, in the following we drop the bias term $b$, since it can always be included by considering an additional constant unitary input. A standard choice in this case is minimizing the squared loss $l(y,f(\vect{x})) = \left( y - f(\vect{x}) \right)^2$, subject to an $L_2$ regularization term on the weights. This gives rise to the well-known linear ridge regression (LRR) problem.
\begin{definition}[Linear ridge regression]
Given a dataset $D \in \mathcal{D}(N)$, the LRR problem is defined as the solution to the following optimization problem:
\begin{equation}
\min_{\boldbeta \in \R^d} \Biggl\{ \frac{1}{2}\norm{\vect{y} - \vect{X}\boldbeta}^2 + \frac{\lambda}{2}\norm{\boldbeta}^2 \Biggr\} \;,
\label{chap2:eq:lls}
\end{equation}
where $\vect{X} = \left[ \vect{x}_1^T \ldots \vect{x}_N^T \right]^T$ and $\vect{y} = \left[ y_1 \ldots y_N \right]^T$.
\end{definition}
Assuming that $\left( \vect{X}^T\vect{X} + \lambda \vect{I} \right)$ is invertible (which is always true for sufficiently large $\lambda$), where $\vect{I}$ is the identity matrix of suitable dimensionality, the solution of the LRR problem can be expressed in closed form as:
\begin{equation}
\boldbeta^* = \left( \vect{X}^T\vect{X} + \lambda\vect{I} \right)^{-1}\vect{X}^T\vect{y} \;.
\label{chap2:eq:lls_solution}
\end{equation}
Work on unregularized LRR dates as far back as Gauss and Legendre \citep{stigler1981gauss}, and it poses a cornerstone on which most of this thesis is built. It is interesting to note that the effect of the regularization term amounts in adding a fixed scalar value on the diagonal of $\vect{X}^T\vect{X}$, which is a common heuristic in linear algebra to ensure both the existence of an inverse matrix, and stability in its computation.

Three additional points are worth mentioning here, as they will be used in subsequent chapters. First of all, whenever $N < d$, it is possible to reformulate Eq. \eqref{chap2:eq:lls_solution} in order to obtain a computationally cheaper expression:
\begin{equation}
\boldbeta^* = \vect{X}^T\left( \vect{X}\vect{X}^T + \lambda\vect{I} \right)^{-1}\vect{y} \;.
\label{chap2:eq:lls_solution_alternative}
\end{equation}
Secondly, it is possible to modify the standard LRR problem in order to obtain a \textit{sparse} solution, meaning that only a subset of the entries of the optimal weight vector $\boldbeta^*$ are non-zero. This is achieved by substituting the $L_2$ norm in Eq. \eqref{chap2:eq:lls} with the $L_1$ norm $\norm[1]{\boldbeta}$, which provides a convex approximation to the $L_0$ norm. The resulting algorithm is known as the least absolute shrinkage and selection operator (LASSO) problem \citep{Society2007}. It provides an efficient feature selection strategy, as well as being central to multiple developments in sparse signal processing, including compressed sensing \citep{Candes2008}. While the optimization problem of LASSO cannot be solved in closed form anymore, efficient algorithms are available for its solution.

A third aspect that we briefly consider is the use of the LRR problem in binary classification tasks. In this case, in the testing phase, the obtained linear model in Eq. \eqref{chap2:eq:linear_neuron} is generally binarized using a predefined threshold, making it similar to the original perceptron \citep{Rosenblatt1958}. The squared loss acts as a convex proxy (or, more technically, as a surrogate loss) of the more accurate misclassification error. Other choices for binary classification might be more accurate, including the hinge loss commonly used in SVMs (introduced in Section \ref{sec:kernel_methods}), or the logistic loss \citep{Hastie2009}.

\subsection{Fixed nonlinear projection}
\label{sec:fixed_hidden_layer}

\begin{figure}
\centering
\includegraphics[scale=0.7]{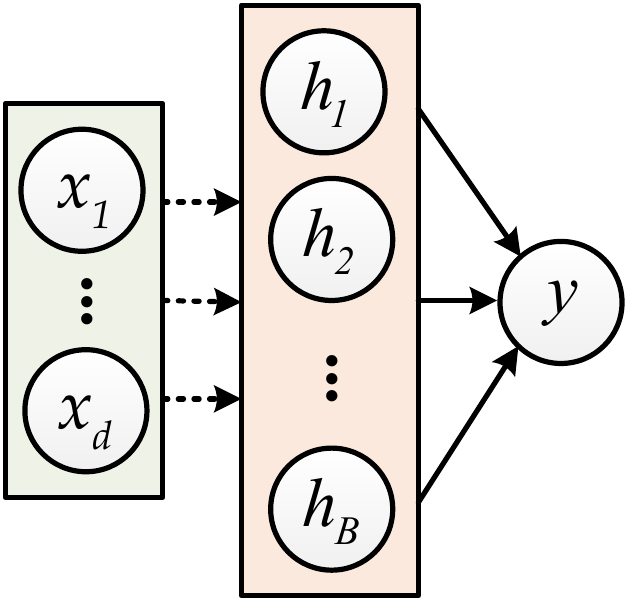}
\caption[Architecture of an ANN with one fixed hidden layer and a linear output layer.]{Architecture of an ANN with one fixed hidden layer and a linear output layer. Fixed connections are shown with a dashed line, while trainable connections are shown with a solid line.}
\label{chap2:fig:RVFL}
\end{figure}

Linear models, as described in the previous section, have been widely investigated in the literature due to their simplicity, particularly in terms of training efficiency. Clearly, their usefulness is limited to cases where the assumption of linearity in the underlying process is reasonable. One possibility of maintaining the general theory of linear models, while at the same time obtaining nonlinear modeling capability, is to add an additional \textit{fixed} layer of nonlinearities in front of the linear neuron. This is shown schematically in Fig. \ref{chap2:fig:RVFL}, where fixed and adaptable connections are shown with dashed and solid lines, respectively. In the context of binary classification, the usefulness of such transformations is known since the seminal work of Cover \citep{cover1965geometrical}.

Mathematically, we consider a model of the form:
\begin{equation}
f(\vect{x}) = \sum_{i=1}^B \beta_i h_i(\vect{x}) = \boldbeta^T\vect{h}(\vect{x}) \;,
\label{chap2:eq:rvfl}
\end{equation}
where $\boldbeta \in \R^B$ and we defined $\vect{h}(\vect{x}) = \left[ h_1(\vect{x}_1), \ldots, h_B(\vect{x}) \right]^T$. Clearly, Eq. \eqref{chap2:eq:rvfl} is equivalent to a linear model over the transformed vector $\vect{h}(\vect{x})$, hence it can be trained by considering the linear methods described in the previous section. Due to their characteristics, these models are widespread in SL, including functional link (FL) networks \cite{Pao1992}, kernel methods (introduced in the next section), radial basis function networks (once the centers are chosen) \citep{park1991universal}, wavelet expansions, and others \citep{gorban1998approximation}. One particular class of FL networks, namely RVFL networks, is introduced in Chapter \ref{chap:dist_rvfl} and further analyzed in Chapters \ref{chap:dist_rvfl_sequential} and \ref{chap:dist_rvfl_vertical_partitioning}.

\subsection{Kernel methods}
\label{sec:kernel_methods}

Although the methods considered in the previous section possess good nonlinear modeling capabilities, they may require an extremely large hidden layer (i.e., large $B$), possibly even infinite. An alternative approach, based on the idea of kernel functions, has been popularized by the introduction of the SVM \citep{Boser1992}.\footnote{The notion of kernel itself was known long before the introduction of the SVM, particularly in statistics and functional analysis, see for example \citep[Section 2.3.3]{Hofmann2008}.} The starting observation is that, for a wide range of feature mappings $\vect{h}(\cdot)$, there exists a function $\mathcal{K}(\cdot, \cdot): \R^d \times \R^d \rightarrow \R$, such that:
\begin{equation}
\mathcal{K}(\vect{x}, \vect{x}') = \vect{h}^T(\vect{x})\vect{h}(\vect{x}') \; \forall \vect{x}, \vect{x}' \in \R^d \;.
\label{chap2:eq:kernel_trick}
\end{equation}
The function $\mathcal{K}(\cdot, \cdot)$ is called a \textit{kernel} function, while Eq. \eqref{chap2:eq:kernel_trick} is known informally as the kernel trick. It allows transforming  any dot product in the transformed space to a function evaluation over the original space. To understand its importance, we first need to introduce a particular class of model spaces.

\begin{definition}[Reproducing Kernel Hilbert Space]
A Reproducing Kernel Hilbert Space (RKHS) $\mathcal{H}$ defined over $\mathcal{X}$ is an Hilbert space of functions such that any evaluation functional defined as:
\begin{equation}
\mathcal{F}_{\vect{x}}[f] = f(\vect{x}) \; \forall f \in \mathcal{H} \;,
\end{equation}
is linear and bounded.
\end{definition}

It can be shown that any RKHS has an associated kernel function. More importantly, solving a regularized SL problem over an RKHS has a fundamental property.

\begin{theorem}[Representer's Theorem]
Consider the regularized SL problem in Eq. \eqref{chap2:eq:reg_risk_functional}. Suppose that $\mathcal{H}$ is an RKHS, and $\phi[f] = \Phi(\norm[\mathcal{H}]{f})$, where $\norm[\mathcal{H}]{f}$ is the norm in the RKHS, and $\Phi(\cdot)$ is a monotonically increasing function. Then, any $f^* \in \mathcal{H}$ minimizing it admits a representation of the form:
\begin{equation}
f^*(\vect{x}) = \sum_{i=1}^N \alpha_i\mathcal{K}(\vect{x}, \vect{x}_i) \;,
\label{chap2:eq:representers_theorem}
\end{equation}
where $\alpha_i, i=1,\ldots,N \in \R$.
\end{theorem}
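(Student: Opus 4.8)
The plan is to exploit the defining property of the RKHS—that each evaluation functional is bounded and linear—together with an orthogonal decomposition of $\mathcal{H}$ relative to the subspace spanned by the training data. First I would invoke the Riesz representation theorem: since $\mathcal{F}_{\vect{x}_i}[f] = f(\vect{x}_i)$ is a bounded linear functional on the Hilbert space $\mathcal{H}$, there exists a unique element $k_i \in \mathcal{H}$ with $f(\vect{x}_i) = \langle f, k_i\rangle_{\mathcal{H}}$ for all $f \in \mathcal{H}$. One then identifies $k_i(\cdot) = \mathcal{K}(\cdot, \vect{x}_i)$, recovering the \emph{reproducing property}; this is the step that ties the abstract evaluation functional of the RKHS definition to the concrete kernel appearing in the desired representation.

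Next I would introduce the finite-dimensional subspace $\mathcal{H}_0 = \operatorname{span}\{\mathcal{K}(\cdot, \vect{x}_i)\}_{i=1}^N$. Being finite-dimensional it is closed, so every $f \in \mathcal{H}$ decomposes uniquely as $f = f_\parallel + f_\perp$ with $f_\parallel \in \mathcal{H}_0$ and $f_\perp$ orthogonal to every $\mathcal{K}(\cdot, \vect{x}_i)$. The two terms of the objective in Eq.~\eqref{chap2:eq:reg_risk_functional} are then treated separately. For the loss term, the reproducing property gives $f(\vect{x}_i) = \langle f_\parallel + f_\perp, \mathcal{K}(\cdot, \vect{x}_i)\rangle_{\mathcal{H}} = f_\parallel(\vect{x}_i)$, since the orthogonal component contributes nothing; hence the data-fitting term depends on $f$ only through $f_\parallel$. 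For the regularizer, Pythagoras' identity yields $\norm[\mathcal{H}]{f}^2 = \norm[\mathcal{H}]{f_\parallel}^2 + \norm[\mathcal{H}]{f_\perp}^2 \ge \norm[\mathcal{H}]{f_\parallel}^2$, and since $\Phi$ is monotonically increasing, $\Phi(\norm[\mathcal{H}]{f}) \ge \Phi(\norm[\mathcal{H}]{f_\parallel})$.

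Combining the two observations shows $I_{\text{reg}}[f] \ge I_{\text{reg}}[f_\parallel]$ for every $f$, with the inequality in the regularizer strict whenever $f_\perp \neq 0$ (assuming $\Phi$ strictly increasing). Consequently any minimizer $f^*$ must satisfy $f_\perp = 0$, i.e. $f^* \in \mathcal{H}_0$, which is precisely a combination $\sum_{i=1}^N \alpha_i \mathcal{K}(\cdot, \vect{x}_i)$. I expect the main conceptual obstacle to be the clean separation into these two independent effects—showing that the data term is entirely blind to $f_\perp$ while the penalty strictly prefers its removal—and, relatedly, being careful about the meaning of ``monotonically increasing.'' If $\Phi$ is only non-strictly increasing, the argument guarantees the \emph{existence} of a minimizer of the claimed form rather than forcing \emph{every} minimizer into $\mathcal{H}_0$; I would flag this caveat and appeal to strict monotonicity to secure the stronger statement as written.
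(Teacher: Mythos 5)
Your proof is correct and is essentially the same argument the paper relies on: the paper gives no proof of its own, deferring entirely to \citep{scholkopf2001generalized}, and your Riesz/reproducing-property plus orthogonal-decomposition argument is precisely the proof given in that reference. Your closing caveat is also well placed—the cited theorem indeed assumes $\Phi$ \emph{strictly} increasing to conclude that \emph{every} minimizer lies in $\operatorname{span}\{\mathcal{K}(\cdot,\vect{x}_i)\}_{i=1}^N$, so the paper's phrase ``monotonically increasing'' should be read in that stronger sense for the statement as written to hold.
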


\begin{proof}
See \citep{scholkopf2001generalized}.
\end{proof}

The representer's theorem shows that an optimization problem over a possibly infinite dimensional RKHS is equivalent to an optimization problem over the finite dimensional set of linear coefficients $\alpha_i, \; i =1, \ldots, N$. SL methods working on RKHSs are known as kernel methods, and we conclude this section by introducing two of them. First, by employing the standard squared loss as error function, and $\Phi(\norm[\mathcal{H}]{f}) = \norm[\mathcal{H}]{f}^2$, we obtain a kernel extension of LRR, which we denote as KRR. Similarly to LRR, the coefficients $\boldsymbol{\alpha}$ of the kernel expansion can be computed in closed form as \citep{Evgeniou2000}:
\begin{equation}
\boldsymbol{\alpha}^* = \left( \vect{K} + \lambda \vect{I} \right)^{-1}\vect{y} \;,
\label{chap2:eq:kernel_ridge_regression}
\end{equation}
where $K_{ij} = \mathcal{K}(\vect{x}_i, \vect{x}_j)$, and $\vect{K}$ is called the kernel matrix. KRR is used in Chapter \ref{chap:dist_ssl} to derive a distributed algorithm for SSL.

In the binary case, an alternative algorithm is given by the SVM, which considers the same squared norm, but substitutes the squared loss with the hinge loss $l(y, f(\vect{x})) = \max\left( 0, 1 - yf(\vect{x}) \right)$ \cite{Steinwart}. In this case, the optimization problem does not allow for a closed-form solution anymore, since it results in a quadratic programming (QP) problem. However, the resulting optimal weight vector is sparse, and the patterns $\vect{x}_i$ corresponding to its non-zero elements are called support vectors (SV). Similar formulations can be obtained for regression, such as the $\nu$-SVM and the $\varepsilon$-SVM \cite{Steinwart}. Due to the sparseness property of SVs, SVMs have been used extensively in the distributed scenario, as detailed more in depth in the next chapter.

\subsection{Multiple adaptable hidden layers}
\label{sec:mlp}

\begin{figure}
\centering
\includegraphics[scale=0.7]{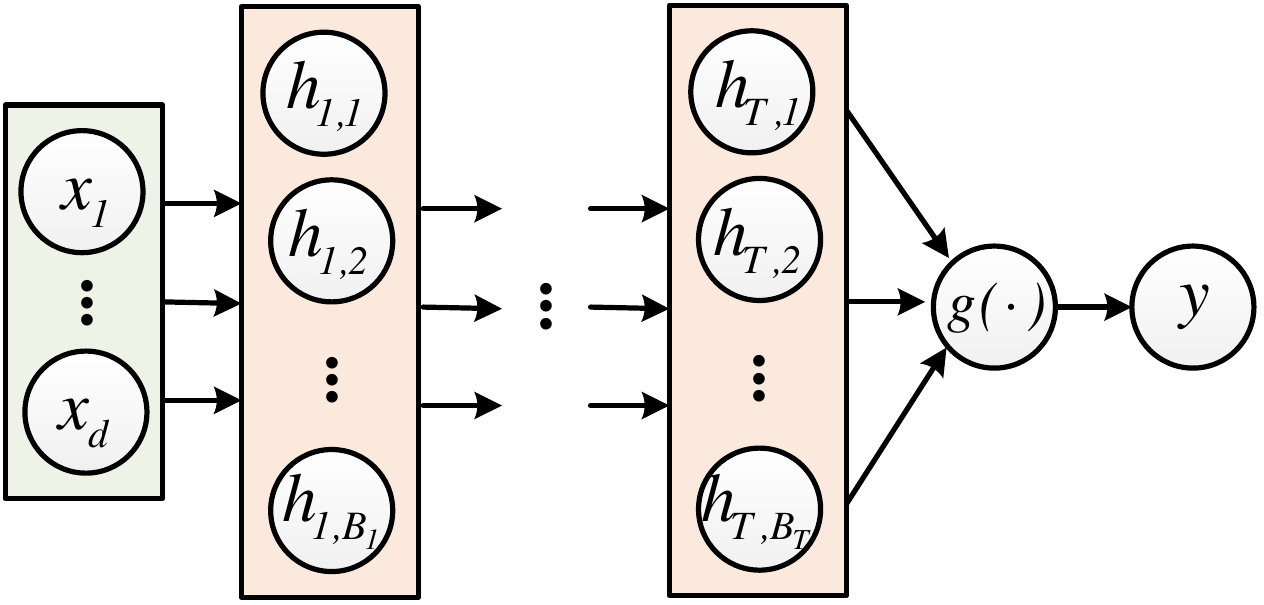}
\caption[Architecture of an MLP with $T$ hidden layers and a linear output layer with a single output neuron.]{Architecture of an MLP with $T$ hidden layers and a linear output layer with a single output neuron. All connections are adaptable.}
\label{chap2:fig:MLP}
\end{figure}

The ANN models discussed in Sections \ref{sec:fixed_hidden_layer} and \ref{sec:kernel_methods} are built on a \textit{single} layer of nonlinearities, followed by an adaptable linear layer. While this is enough in many practical situations (and indeed these methods generally possess universal approximation capabilities), more complex real-world applications may require the presence of multiple layers of adaptable nonlinearities, e.g. in the case of classification of multimedia signals \citep{Schmidhuber2014}. An ANN with these characteristics is called an MLP, and it is shown schematically in Fig. \ref{chap2:fig:MLP}. In this case, the input vector $\vect{x}$ is propagated through $T$ hidden layers. The activation of the $i$th neuron in the $j$th layer, with $i \in 1, \ldots, B_i$, and $j \in 1, \ldots, T$, is given by:
\begin{equation}
h_{i,j}(\vect{x}) = \sum_{t = 1}^{B_{j-1}} w_{t,j,i} h_{t,j-1}(\vect{x}) \;,
\label{chap2:eq:neuron_activation_mlp}
\end{equation}
where $h_{i,j}(\cdot)$ is the scalar activation function of the neuron, and we define axiomatically $B_0$ as $B_0 = d$ and $h_{t,0}$ as $h_{t,0} = x_i$ ($t = 1, \ldots, B_0$). In the one-dimensional output case, the output of the MLP is then given by:
\begin{equation}
y = g\left(\sum_{t=1}^{B_T}  w_{t,T+1,1} h_{t,T}(\vect{x})\right) \;.
\label{chap2:eq:mlp_output}
\end{equation}
Generally speaking, adapting the full set of weights $\left\{w_{t,j,i}\right\}$ results in a non-convex optimization problem, differently from the previous, simpler architectures \citep{Hastie2009}. This is commonly solved with the use of stochastic gradient descent (SGD), or Quasi-Newton optimization methods, where the error at the output layer can be analytically computed, while it is computed recursively (by back-propagating the outer error \citep{Rumelhart1986}) for the hidden layers.

As we stated in Chapter \ref{chap:introduction}, due to the generality of our distributed setting, in this thesis we focus on the simpler methods described previously, as they provide cheaper algorithms for training and prediction. However, we mention some works on DL for MLPs in the next chapter. Additionally, extending the algorithms presented subsequently to MLPs is a natural future research line, as we discuss in Chapter \ref{chap:conclusions}.
\chapter[Distributed Learning: Formulation and State-of-the-art]{Distributed Learning: \\Formulation and State-of-the-art}
\chaptermark{Distributed Learning Overview}
\label{chap:multi_agent_sl}

\minitoc
\vspace{15pt}

\lettrine[lines=2]{T}{his} chapter is devoted to an analysis of the problem of DL using ANN models. We provide a categorization of DL algorithms, in terms of required network topology, communication capabilities, and data exchange. After this, the biggest part of the chapter is devoted to an overview of previous work on DL using ANN models. For readability, the exposition follows the same structure as the previous chapter, i.e. it moves from the simplest ANN model, corresponding to a linear regression, to the more complex MLP. For each model, we describe relative strengths and weaknesses when applied in a distributed scenario. These comments will also serve as motivating remarks for the algorithms introduced in the next chapters. The review aggregates works coming from multiple interdisciplinary fields, including signal processing, machine learning, distributed databases, and several others. When possible, we group works coming from the same research field, in order to provide coherent pointers to the respective literature.

\section{Formulation of the Problem}
\label{sec:dist_learning_formulation}

In the previous chapter, it was assumed that the training dataset $S$ is available on a centralized location for processing. In many contexts, however, this assumption does not hold. As a motivating example for the following, consider the case of a distributed music classification task on a P2P network. Each peer in the network has access to a personal set of labeled songs, e.g., every user has categorized a certain number of its own songs with respect to a predefined set of musical genres. Clearly, solving efficiently this task requires leveraging over \textit{all} local datasets, since we can safely assume that no single dataset alone is sufficient for obtaining adequate performance. Practically, this means that the peers in the network must implement a suitable training protocol for converging to an optimal solution to this DL task. Other examples of DL abounds, and a few of them will be mentioned successively.

\begin{figure}
\centering
\includegraphics[width=0.7\columnwidth]{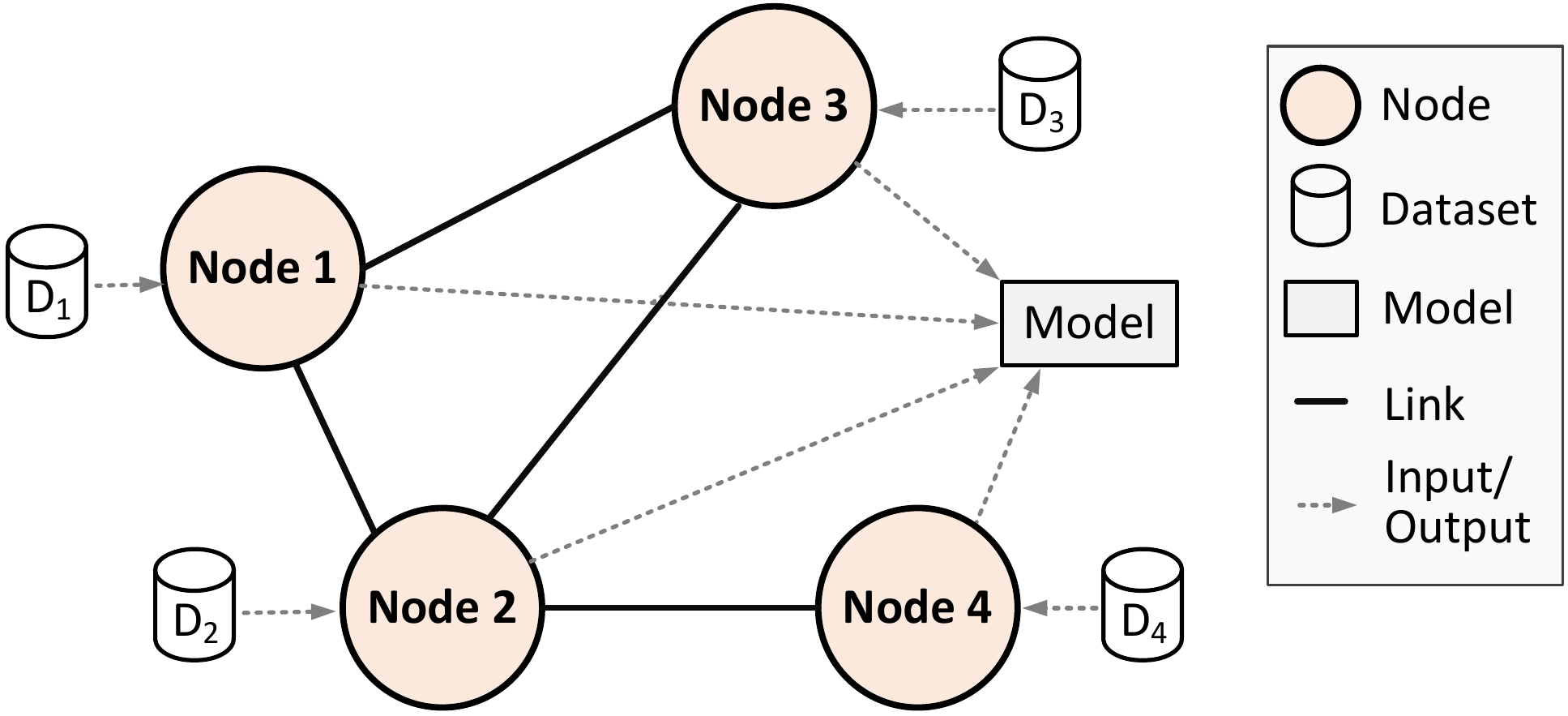}
\caption[Schematic depiction of DL in a network of $4$ agents.]{DL in a network of agents: training data is distributed throughout the nodes, and all of them must converge to the optimal parameters of a single model. For readability, we assume undirected connections between agents.}
\label{chap3:fig:distributed_learning_setting}
\end{figure}

More formally, we consider the setting described schematically in Fig. \ref{chap3:fig:distributed_learning_setting}. We have $L$ agents (or nodes), each of which has access to a local training dataset $D_k \in  \mathcal{D}(N_k)$, such that $\bigcup_{k=1}^L D_k = D$ and $\sum_{k=1}^L N_k = N$.\footnote{In the data mining literature, this is known as `horizontal partitioning' \citep{peteiro2013survey}. Chapter \ref{chap:dist_rvfl_vertical_partitioning} considers the complementary case of `vertical partitioning', where the features of every pattern $\vect{x}$ are distributed throughout the network.} The connectivity of the agents can be described entirely by a matrix $\vect{C} \in \R^{L \times L}$, as detailed in Appendix \ref{app:graph_theory}. Given these elements, we are now ready to provide a formal definition of the DL problem.
\begin{definition}[Distributed learning]
Given $L$ datasets $D_k \in \mathcal{D}(N_k), \; k = 1,\ldots, L$ distributed over a network, an hypothesis space $\mathcal{H}$, a loss function $l(\cdot, \cdot)$, a regularization functional $\phi[f]: \mathcal{H} \rightarrow \R$, and a scalar coefficient $\lambda > 0$, the distributed learning problem is defined as the minimization of the following (joint) functional:
\begin{equation}
I_{\text{dist}}[f] = \sum_{k=1}^L \sum_{\left(\vect{x}_i, y_i\right) \in S_k} l(y_i, f(\vect{x}_i)) + \lambda\phi[f] \;.
\label{chap3:eq:dist_risk_functional}
\end{equation}
\end{definition}

We distinguish between \textit{batch} DL algorithms and \textit{sequential} DL algorithms. In the latter case, each dataset $D_k$ is assumed to be observed in a set of successive batches $D_{k,1}, \ldots, D_{k,T}$, such that $D_k = \bigcup_{i = 1}^T D_{k,i}$. In the extreme case where each batch is composed of a single element, the resulting formulation is closely linked to the distributed adaptive filtering problem \citep{sayed2014adaptive}. New batches may arrive synchronously or asynchronously at every agent, as detailed next. The objective in this case is to produce a sequence of estimates $f_{k,1}, \ldots, f_{k,T}$ converging as rapidly as possible to the global solution of Eq. \eqref{chap3:fig:distributed_learning_setting} computed over the overall dataset.

\section{Categorization of DL algorithms}
\label{sec:categorization_dl_algorithms}

Despite the generality of the DL setting, existing algorithms can be categorized with respect to a few broad characteristics, which are briefly summarized next.

\begin{description}
\item[Coordination] Generally speaking, no node is allowed to coordinate specific aspects of the training process, and we assume that there is no shared memory capability. This lack of centralization is in fact the major difference with respect to prior work on parallel SL \citep{Garcia-Pedrajas2012}.\footnote{Clearly, there are also important overlaps between parallel SL algorithms and the DL problem considered here, such as the Cascade SVM detailed in Section \ref{chap3:sec:dist_svm}.} Still, some DL algorithms may require the presence of a given subset of dynamically chosen nodes aggregating results from their local neighborhood, such as the clusterheads in a WSN or the super-peers in a P2P network \cite{ang2009communication}.
\item[Connectivity] The minimum assumption in DL is that the overall network is connected, i.e. each node can be reached from any other node in a finite number of steps. DL algorithms differentiate themselves on whether they require specific additional properties on the connectivity (e.g. undirected connections). Additionally, some algorithms may assume that the connectivity graph is time-varying.
\item[Communication] Distributed training protocols can be categorized based on the communication infrastructure that is required. In particular, messages can be exchanged via one-hop or multi-hop connectivity. In multi-hop communication (e.g. ip-based protocols), messages can be routed from any node to any other node, while in single-hop communication, nodes can exchange messages only with their neighbors. At the extreme, each node is allowed to communicate with a single other node at every time slot, as in gossip algorithms \citep{boyd2006randomized}. It is easy to understand that multi-hop protocols are not able to efficiently scale to large networks, while they make the design of the algorithm simpler. Similarly, multi-hop communication may not be feasible in particularly unstructured scenarios (e.g. ad-hoc WSNs). This distinction is blurred in some contexts, as it is possible to design broadcast protocols starting from one-hop communication.
\item[Privacy] In our context, a privacy violation refers to the need of exchanging local training patterns to other nodes in the network. Algorithms that are designed to preserve privacy are important for two main aspects. First, datasets are generally large, particularly in big data scenarios, and their communication can easily become the main bottleneck in a practical implementation. Secondly, in some context privacy has to be preserved due to the sensitivity of the data, especially in medical applications \citep{Torii2011}.
\item[Primitives] Algorithms can be categorized according to the specific mathematical primitives that are requested on the network. Some algorithms do not require operations in addition to the one-hop exchange. Others may require the possibility of computing vector-sums over the network, Hamiltonian cycles, or even more complex operations. These primitives can then be implemented differently depending on the specific technology of the network, e.g. a sum implemented via a DAC protocol in a WSN \citep{olfati2007consensus}.
\item[Synchronization] Lastly, the algorithms differentiate themselves on whether synchronization among the different agents is required, e.g. in case of successive optimization steps. Most of the literature makes this assumption, as the resulting protocols are easier to analyze and implement. However, designing asynchronous strategies can lead to enormous speed-ups in terms of computational costs and training time.
\end{description}

\section{Relation to other research fields}
\label{chap3:sec:relation_other_research_fields}
% Relation other research fields: DSO in distributed optimization, distributed AI, distributed parametrical inference in signal processing, distributed data mining.

Before continuing on to the state-of-the-art, we spend a few words on the relationships between the DL problem and other research fields. 

First of all, the problem in Eq. \eqref{chap3:eq:dist_risk_functional} is strictly related to a well-known problem in the distributed optimization field, known as distributed sum optimization (DSO). We briefly introduce it here, as its implications will be used extensively in the subsequent sections. Suppose that the $k$th agent must minimize a generic function $J_k(\vect{w})$ parameterized by the vector $\vect{w}$. In SL, the function can represent a specific form of the loss functional in Eq. \eqref{chap2:eq:reg_risk_functional}, minimized over the local dataset $S_k$, and where the vector $\vect{w}$ embodies the parameters of the learning model $h \in \mathcal{H}$. DSO is the problem of minimizing the global joint cost function given by:
\begin{equation}
J(\vect{w}) = \sum_{k=1}^L J_k(\vect{w}) \,.
\label{chap3:eq:dso_cost_function}
\end{equation}
Note the relation between Eq. \eqref{chap3:eq:dso_cost_function} and Eq. \eqref{chap3:eq:dist_risk_functional}. For a single-agent minimizing a differentiable cost function, the most representative algorithm for minimizing it is given by the gradient descent (GD) procedure. In this case, denote by $\vect{w}_k[n]$ the estimate of the single node $k$ at the $n$th time instant. GD computes the minimum of $J_k(\vect{w})$ by iteratively updating the estimate as:
\begin{align}
\vect{w}_k[n+1] & = \vect{w}_k[n] - \eta_k \nabla_{\vect{w}} J(\vect{w}_k[n]) \,,
\label{chap3:eq:gd_step}
\end{align}
where $\eta_k$ is the local step-size at time $k$, whose sequence should be sufficiently small in order to guarantee convergence to the global optimum. Much work on DSO is sparked by the additivity property of the gradient update in the previous equation. In particular, a GD step for the joint cost function in Eq. \eqref{chap3:eq:dso_cost_function} can be computed by summing the gradient contributions from each local node.

Starting from this observation and the seminal work of Tsitsiklis \textit{et al.} \citep{tsitsiklis1986distributed}, a large number of approaches for DSO have been developed. These include \citep{nedic2009distributed,jakovetic2014fast} for convex unconstrained problems, \citep{sundhar2012new,boyd2011distributed,duchi2012dual} for convex constrained problems, and \citep{bianchi2013convergence} for the extension to non-convex problems. Beside GD, representative approaches include subgradient descents \citep{nedic2009distributed}, dual averaging \citep{duchi2012dual}, ADMM \citep{boyd2011distributed}, and others. Many of these algorithms can be (and have been) applied seamlessly to the setting of DL. For simplicity, in the following we mention only works that have been directly applied or conceived for the DL setting.

In signal processing, instead, the problem of distributed \textit{parametric} inference has a long history \cite{predd2006distributed}, and it was revived recently thanks to the interest in large, unstructured WSN networks. Novel approaches in this context are discussed in Section \ref{chap4:sec:diffusion_filtering} (linear distributed filtering) and Section \ref{chap4:sec:distributed_kernel} (distributed kernel-based filtering).

More in general, distributed AI (DAI) and distributed problem solving have always been two major themes in the AI community, particularly due to the diffusion of parallel and concurrent programming paradigms \citep{chaib1992trends}. From a philosophical perspective, this is due also to the realization that ``\textit{a system may be so complicated and contain so much knowledge that it is better to break it down into different cooperative entities in order to obtain more efficiency}'' \citep{chaib1992trends}. Recently, DAI has received renowned attention in the context of multi-agent systems theory.

Finally, distributed learning has also received attention from the data mining \citep{park2002distributed} and P2P fields \citep{datta2006distributed}. However, before $2004$, almost no work was done in this context using ANN models.

\section{State-of-the-art}

% State-of-the-art
	% Linear models and diffusion filtering
		% Distributed sparse linear models
	% Distributed learning with a fixed nonlinear projection
	% Kernel ridge-regression in WSNs
	% Distributed SVMs
	% Distributed MLPs and distributed ensembles

\subsection{Distributed linear regression}
\label{sec:dist_linear_neuron}

We start our analysis of the state-of-the-art in DL from the LRR algorithm introduced in Section \ref{sec:linear_neuron}. It is easy to show that this training algorithm is extremely suitable for a distributed implementation. In fact, denote as $\vect{X}_k$ and $\vect{y}_k$ the input matrix and output vector computed with respect to the $k$th local dataset. Eq. \eqref{chap2:eq:lls_solution} can be rewritten as:
\begin{equation}
\boldbeta^* = \left( \sum_{k=1}^L \left( \vect{X}_k^T \vect{X}_k  \right) + \lambda \vect{I} \right)^{-1}\sum_{k=1}^L \left( \vect{X}_k^T\vect{y}_k\right) \;.
\label{chap3:eq:dist_lrr}
\end{equation}
Thus, distributed LRR can be implemented straightforwardly with two sums over the network, the first one on the $d \times d$ matrices $\vect{X}_k^T \vect{X}_k$, the second one on the $d$-dimensional vectors $\vect{X}_k^T\vect{y}_k$. Generally speaking, sums can be considered as primitive on most networks, even the most unstructured ones, e.g. with the use of the DAC protocol introduced in Appendix \ref{appA:sec:consensus}. 

Due to this, the basic idea underlying Eq. \eqref{chap3:eq:dist_lrr} has been discussed multiple times in the literature. In the following, we consider three representative examples. Karr \textit{et al.} \citep{karr2005secure} were among the first to exploit it, with the additional use of a secure summation protocol for ensuring data privacy. In the same paper, secure summation is also used to compute diagnostic statistics, in order to confirm the validity of the linear model.

A similar idea is derived in Xiao \textit{et al.} \citep{xiao2005scheme,xiao2006space}, where it is applied to a generalization of LRR denoted as `weighted least-square' (WLS). In WLS, we assume that each output is corrupted by Gaussian noise with mean zero and covariance matrix $\boldsymbol{\Sigma}$. In the centralized case, in the absence of regularization, the solution of the WLS problem is then given by:
\begin{equation}
\boldbeta^*_{\text{WLS}} = \left( \vect{X}^T \boldsymbol{\Sigma}^{-1} \vect{X} \right)^{-1}\vect{X}^T\boldsymbol{\Sigma}^{-1}\vect{y} \,.
\label{chap4:eq:wls}
\end{equation}
In \citep{xiao2005scheme}, this is solved for a single example at every node using two DAC steps as detailed above. In \citep{xiao2006space}, this is extended to the case of multiple examples arriving asynchronously. In this case, the WLS solution is obtained by interleaving temporal updates with respect to the newly arrived data, with spatial updates corresponding to a single DAC iteration.

Similar concepts are also explored in Bhaduri and Kargupta \citep{bhaduri2008scalable}. As in the previous case, new data is arriving continuously at every node. Differently than before, however, the LRR solution is recomputed with a global sum only when the error over the training set exceeds a predefined threshold, to reduce communication.

The LRR problem has also been solved in a distributed fashion with the use of distributed optimization techniques, including the subgradient algorithm \citep{ram2009incremental} and the ADMM procedure \citep{boyd2011distributed}. These techniques can also be adapted to handle different loss functions for the linear neuron, including the Huber loss \citep{johansson2007simple} and the logistic loss \citep{jakovetic2014fast}.

An alternative approach is followed in \citep{mcwilliams2014loco} for the case where the overall output vector $\vect{y}$ is globally known. In this case, each agent projects its local matrix $\vect{X}_k$ to a lower-dimensional space with the use of random projections. Next, these projections are broadcasted to the rest of the network. By concatenating the resulting matrices, each agent can independently solve the global LRR problem with bounded error.

If we allow the nodes to exchange data points, Balcan \textit{et al.} \citep{balcan2012distributed} and Daum{\'e} III \textit{et al.} \citep{daume2012efficient} independently derive bounds on the number of patterns that must be exchanged between the agents for obtaining a desired level of accuracy in the context of probably approximately correct (PAC) theory. As an example, \citep[Section 7.3]{balcan2012distributed} shows that, if data is separated by a margin $\gamma$, the model is trained using the perceptron rule, and the nodes communicate in a round robin fashion, learning the model requires $\mathcal{O}\left(\frac{1}{\gamma^2}\right)$ rounds. Making additional assumptions on the distribution of the data allows to reduce this bound \citep{balcan2012distributed}.

\subsection{Diffusion filtering and adaptation}
\label{chap4:sec:diffusion_filtering}

Next, we consider the problem of solving the LRR problem in Eq. \eqref{chap2:eq:lls} with continuously arriving data. Additionally, we suppose that the nodes have stringent computational requirements, so that solving multiple times Eq. \eqref{chap2:eq:lls_solution} is infeasible. This setting is closely linked to the problem of adaptive filtering in signal processing \citep{Uncini2015}, where the input typically represents a buffer of the last observed samples of an unknown linear process. Two widespread solutions in the centralized case are the least mean square (LMS) algorithm, which is strictly related to the GD minimization procedure, and the recursive least square (RLS) algorithm, which recursively computes Eq. \eqref{chap2:eq:lls_solution}\footnote{The RLS is formally introduced in Chapter \ref{chap:dist_rvfl_sequential}.} \citep{Uncini2015}.

In the context of distributed filtering, these algorithms were initially extended using incremental gradient updates \citep{sayed2006distributed,lopes2007incremental}, were information on the update steps is propagated on a Hamiltonian cycle over the network. This includes incremental LMS \citep{lopes2007incremental} and incremental RLS \citep{sayed2006distributed}. These methods, however, have a major drawback, in that computing such a cycle is an NP-hard problem. Due to this, an alternative formulation, denoted as diffusion filtering (DF), was popularized in Lopes and Sayed \citep{lopes2008diffusion} for the LMS and in Cattivelli \textit{et al.} \citep{cattivelli2008diffusion} for the RLS. In a DF, local updates are interpolated with `diffusion' steps, where the estimates are locally weighted with information coming from the neighbors. Multiple extensions over this basic scheme have been introduced in the following years, including DF with adaptive combiners \citep{takahashi2010diffusion}, total least-square \citep{arablouei2013diffusion}, sparse models \citep{di2013sparse}, asynchronous networks \citep{zhao2015asynchronous1}, and so on.

The popularity of the DF field has led to its application to the wider problem of DSO, under the name of diffusion adaptation (DA) or distributed gradient descent (DGD) \citep{sayed2014adaptive}. Since DA will be used extensively in Chapters \ref{chap:dist_ssl} and \ref{chap:dist_saf}, we briefly detail it here. DA works by interleaving local gradient descents as in Eq. \eqref{chap3:eq:gd_step} with averaging steps given by:
\begin{align}
\vect{w}_k[n+1] & = C_{kk}\vect{w}_k[n+1] + \sum_{t \in \mathcal{N}_k} C_{kt} \vect{w}_t[n+1] \,,
\end{align}
where the weights $C_{kt}$ have the same meaning as the connectivity matrix of the DAC protocol (see Appendix \ref{appA:sec:consensus}). In fact, the previous equation can be understood as a single DAC step. An example of a diffusion step is shown in Fig. \ref{chap3:fig:diffusion_step_example}. In particular, this strategy is known as adapt-then-combine (ATC), while an equivalent combine-then-adapt (CTA) formulation can be obtained by interchanging the two updates. Different choices of $J_k(\vect{w})$ give rise to different algorithms, including the diffusion LMS and RLS mentioned before. For a recent exposition on the theory of DA, its convergence properties and applications to stochastic optimization, see the monograph by Sayed \citep{sayed2014adaptive}, where the author also mentions the application to a diffusion logistic regression in Section V-C.

\begin{figure}
\centering
\includegraphics[width=0.45\columnwidth]{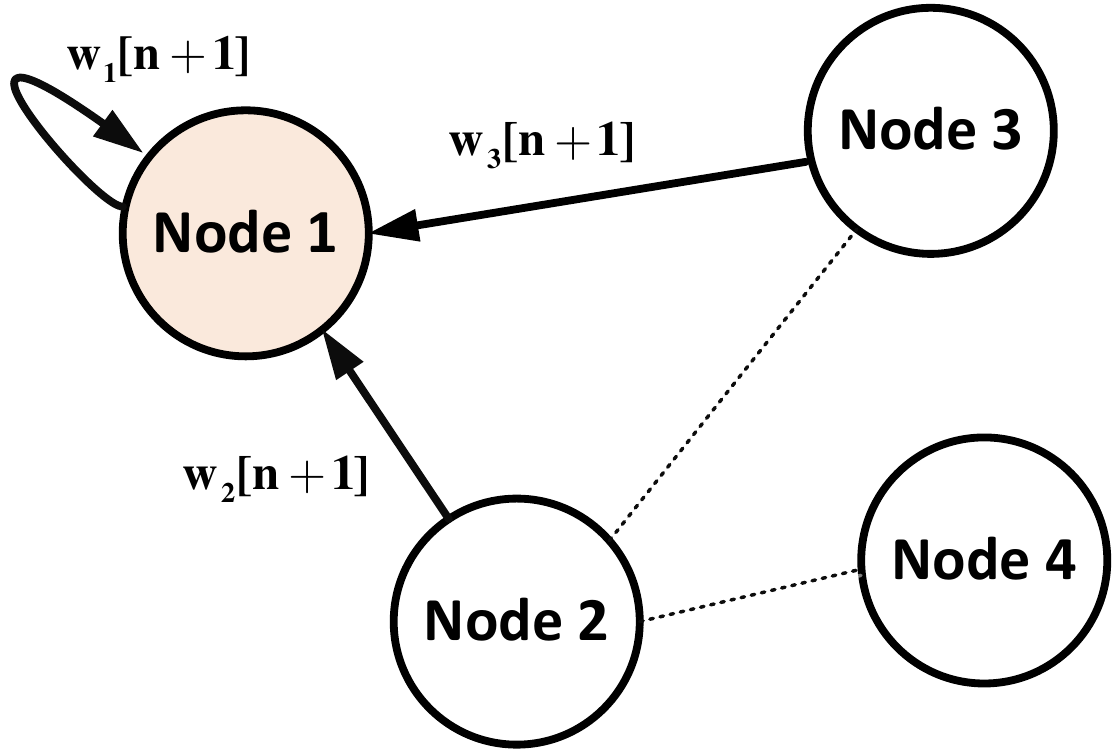}
\caption[Example of a diffusion step for the first node in the $4$-nodes network of Fig. \ref{chap3:fig:distributed_learning_setting}.]{Example of a diffusion step for the first node in the $4$-nodes network of Fig. \ref{chap3:fig:distributed_learning_setting}. Links that are deactivated are shown with dashed lines. Note that node $4$ is not directly connected to node $1$, thus its estimate will only reach it in an indirect way through node $2$.}
\label{chap3:fig:diffusion_step_example}
\end{figure}

Before concluding this section, we mention that distributed linear filters without the use of DF theory have also been proposed in the literature. As an example, Schizas \textit{et al.} \citep{schizas2009distributed} present a distributed LMS, where ADMM is used to enforce consensus on a set of `bridging' sensors. A similar formulation for the RLS is derived in Mateos \textit{et al.} \citep{mateos2009distributed}. An alternative RLS algorithm, which bypasses the need for bridge sensors, is analyzed instead in Mateos and Giannakis \citep{mateos2012distributed}.

\subsection{Distributed sparse linear regression}

Distributed training of a sparse linear method has also been investigated extensively in the literature. Mateos \textit{et al.} \citep{mateos2010distributed} reformulate the problem of LASSO in a separable form, and then solve it by enforcing consensus constraints with the use of the ADMM procedure. They present three different versions, which differ in the amount of computational resources required by the single node. Particularly, in the orthonormal design case, it is shown that the local update step can be computed by an elementary thresholding operation. Mota \textit{et al.} \citep{mota2012distributed} solve in a similar way a closely related problem, denoted as basis pursuit. In \citep[Section V]{mateos2010distributed}, the authors discuss also a distributed cross-validation procedure for selecting an optimal $\lambda$ in a decentralized fashion.

An alternative formulation is presented in Chen and Sayed \citep{chen2012diffusion}, where the $L_1$ norm is approximated with the twice-differentiable regularization term given by:
\begin{equation}
\norm[1]{\boldbeta} \approx \sum_{i=1}^d \sqrt{\beta_i^2 + \varepsilon^2} \,,
\end{equation}
where $\varepsilon$ is a small number. The problem is solved with DA (see Section \ref{chap4:sec:diffusion_filtering}).

A third approach, based on the method of iterative thresholding, is instead presented in Ravazzi \textit{et al.} \citep{ravazzi2015distributed}, for both the LASSO problem and the optimally sparse LRR problem with an $L_0$ regularization term. Results are similar to \citep{mateos2012distributed}, but the algorithm requires significantly less computations at every node.

Much work has been done also in the case of sequential distributed LASSO problems. Liu \textit{et al.} \citep{liu2012diffusion} extend the standard diffusion LMS with the inclusion of $L_0$ and $L_1$ penalties, showing significant improvements with respect to the standard formulation when the underlying vector is sparse. A similar formulation is derived in Di Lorenzo and Sayed \citep{di2013sparse}, with two important differences. First, they consider two different sets of combination coefficients, allowing for a faster rate of convergence. Secondly, they consider an adaptive procedure for selecting an optimal $\lambda$ coefficient.

In the case of sparse RLS, Liu \textit{et al.} \citep{liu2014distributed} present an algorithm framed on the principle of maximum likelihood, with the use of expectation maximization and thresholding operators. An alternative, more demanding formulation, is presented in Barbarossa \textit{et al.} \citep[Section IV-A4]{barbarossa2013distributed}, where the optimization problem is solved with the use of the ADMM procedure.

\subsection{Distributed linear models with a fixed nonlinear projection layer}
\label{sec:dist_fixed_hidden_layer}

We now consider DL with nonlinear ANN models, starting from the linear neuron with a fixed nonlinear projection layer introduced in Section \ref{sec:fixed_hidden_layer}. We already remarked that, in the centralized case, this family of models offers a good compromise between speed of training and nonlinear modeling capabilities. In the distributed scenario, however, their use has been limited to a few cases, which are briefly summarized next. This remark, in fact, offers a substantial motivation for the algorithms introduced in the next chapters.

Hershberger and Kargupta \citep{hershberger2001distributed} analyze a vertically partitioned scenario, where the input is projected using a set of wavelet basis functions. Particularly, they consider dilated and translated instances of the scaling function, denoted as ``box'' functions. The coefficients of the wavelet representation are then transmitted to a fusion center, which is in charge of computing the global LRR solution.

Sun \textit{et al.} \citep{sun2011elm} perform DL in a P2P network with an ensemble of extreme learning machine (ELM) networks. In ELM, the parameters of the nonlinear functions in the hidden layer are stochastically assigned at the beginning of the learning procedure (see Section \ref{chap4:sec:historical_perspective_rvfl_networks}). In \citep{sun2011elm}, an ensemble of ELM functions are handled by a set of `super-peers', using an efficient data structure in order to minimize data exchange over the network. An alternative approach for training an ELM network is presented in Samet and Miri \citep{samet2012privacy}, both for horizontally and vertically partitioned data, which makes use of secure protocols for computing vector products and the SVD decomposition. A third approach is presented in Huang and Li \citep{huang2015distributed}, where the output layer is trained with the use of diffusion LMS and diffusion RLS (see Section \ref{chap4:sec:diffusion_filtering}).

\subsection{Kernel filtering on sensor networks}
\label{chap4:sec:distributed_kernel}

We now begin our analysis of distributed kernel methods, starting with the KRR algorithm described in Section \ref{sec:kernel_methods}. On first glance, this algorithm is not particularly suited for a distributed implementation, as the kernel model in Eq. \eqref{chap2:eq:representers_theorem} requires knowledge of all the local datasets. This is particularly daunting for the implementation of incremental gradient (and subgradient) methods, as is already discussed in Predd \textit{et al.} \citep{predd2006distributed}: ``\textit{In consequence, all the data will ultimately propagate to all the sensors, since exchanging [the examples] is necessary to compute
[the gradient] and hence to share [the model] (assuming that the sensors are preprogrammed with the kernel).}''. Despite this apparent limitation, much work has been done in this context, particularly for non-parametric inference on WSNs.

Possibly the first investigation in this sense was done in Simi\'c \citep{simic2003learning}. In a WSN, in many cases, we can assume that the input $\vect{x}$ represents the geographical coordinates of the sensor itself, e.g. in the case of sensors measuring a specific field. In this case, if we use a translational kernel, i.e. a kernel that depends only on Euclidean distances, we have that $\mathcal{K}(\vect{x}_1, \vect{x}_2) = 0$ for any two sensors which are sufficiently far away. Thus, the resulting kernel matrix $\vect{K}$ is sparse. In \citep{simic2003learning}, each node solves its local KRR model, sending its optimal coefficients to a fusion center, which combines them by taking into consideration the previous observation. A similar procedure without the need for a fusion center is presented in Guestrin \textit{et al.} \citep{guestrin2004distributed}, where the problem is solved with a distributed Gaussian elimination procedure. The approach in \citep{guestrin2004distributed} has strong convergence guarantees and can be used even in cases where the matrix $\vect{K}$ is not sparse, albeit loosing most of its attractiveness in terms of communication efficiency.

A second approach is proposed for a more general case in Rabbat and Nowak \citep{rabbat2005quantized}, and applied to the KRR algorithm in \citep{predd2006distributed}. The method is based on incrementally passing the subgradient updates over the network, thus it requires the presence of a Hamiltonian cycle. Additionally, as we discussed before, the data must be propagated throughout the network. Hence, this approach is feasible only in specific cases, e.g. when the RKHS admits a lower dimensional parameterization.

A third approach is investigated in Predd \textit{et al.} \citep{predd2009collaborative}. The overall DL setting is represented as a bipartite graph. Each node in the first part corresponds to an agent, while each node in the second part corresponds to an example. An edge between the two parts means that a node has access to a given pattern. A relaxed version of the overall optimization problem is solved, by imposing that the agents reach a consensus only on the patterns that are shared among them. Due to this, it is possible to avoid sending the complete datasets, while communication is restricted to a set of state messages. Some extensions, particularly to asynchronous updates, are discussed in P\'erez-Cruz and Kulkarni \citep{perez2010robust}. All the three approaches introduced up to now are summarized in \citep{predd2006distributed}.

In the centralized case, solving the KRR optimization problem in a sequential setting has received considerable attention in the field of kernel adaptive filtering (KAF) \citep{Liu2010}, giving rise to multiple kernel-based extensions of the linear adaptive filters discussed in Section \ref{chap4:sec:diffusion_filtering}. The fact that the resulting model grows linearly with the number of processed patterns is also one of the main drawbacks of KAFs, where it is known as the `growth' problem. Much work has been done to curtail it \citep{Liu2010}, and in a limited part it has been extended to the decentralized case. In particular, Honeine \textit{et al.} \citep{honeine2008distributed} propose a criterion for discarding examples based on a previously introduced concept of `coherence'. Given a set of patterns $\vect{x}_1, \ldots, \vect{x}_m$, the coherence with respect to a new pattern $\vect{x}$ is defined as:
\begin{equation}
\min_{\gamma_1, \ldots, \gamma_m} \norm[\mathcal{H}]{\mathcal{K}(\vect{x}, \cdot) - \sum_{i=1}^m \gamma_i \mathcal{K}(\vect{x}_i, \cdot)} \,.
\end{equation}
In \citep{honeine2008distributed}, the pattern $\vect{x}$ is discarded if the coherence is greater than a certain threshold. The authors discuss efficient implementations of this idea. In \citep{honeine2008prediction}, similar ideas are developed for \textit{removing} elements that have already been processed.

Finally, we mention the distributed algorithm presented in Chen \textit{et al.} \citep{chen2010non}, where the previous setting is extended by considering non-negativity constraints on the model's coefficients. This is particularly important in applications imposing non-negativity constraints on the parameters to estimate.

\subsection{Distributed support vector machines}
\label{chap3:sec:dist_svm}

As we discussed in Section \ref{sec:kernel_methods}, an alternative widespread kernel method is the SVM. Intuitively, this algorithm is preferable to KRR for a distributed implementation, due to the sparseness property of the resulting output vector. In fact, the SVs embed all the information which is required from a classification point of view, providing for a theoretically efficient way of compressing information to be sent throughout the network. However, this idea is hindered by a practical problem; namely, the SVs of a reduced dataset may not correspond to the SVs of the entire dataset. More formally, denote by $\text{SV}(S)$ the set of SVs obtained by solving the QP problem with dataset $S$. Given two partitions $S_1$, $S_2$ such that $S_1 \cup S_2 = S$, we have:
\begin{equation}
\text{SV}(S_1) \cup \text{SV}(S_2) \neq \text{SV}(S) \,.
\label{chap3:eq:sv_subdivision}
\end{equation}
Nonetheless, for a proper subdivision of the dataset, we may expect that the two terms in the previous equation may still share a good amount of SVs.  Initial work on distributed SVM was fueled by an algorithm exploiting this idea, the cascade SVM \citep{graf2004parallel}, originally developed for parallelizing the solution to the global QP problem. In a cascade SVM, the network is organized in a set of successive layers. Nodes in the first layer receive parts of the input dataset, and propagate forward their SVs. Nodes in the next layers receive the set of SVs from their ancestors, merge them, and solve again the QP problem, up to a final node which outputs a final set of SVs. As stated by the authors, ``\textit{Often a single pass through this Cascade produces satisfactory accuracy, but if the global optimum has to be reached, the result of the last layer is fed back into the first layer}'' \citep{graf2004parallel}. This is shown schematically in Fig. \ref{chap3:fig:cascadesvm}.

\begin{figure}
\centering
\includegraphics[width=0.55\columnwidth]{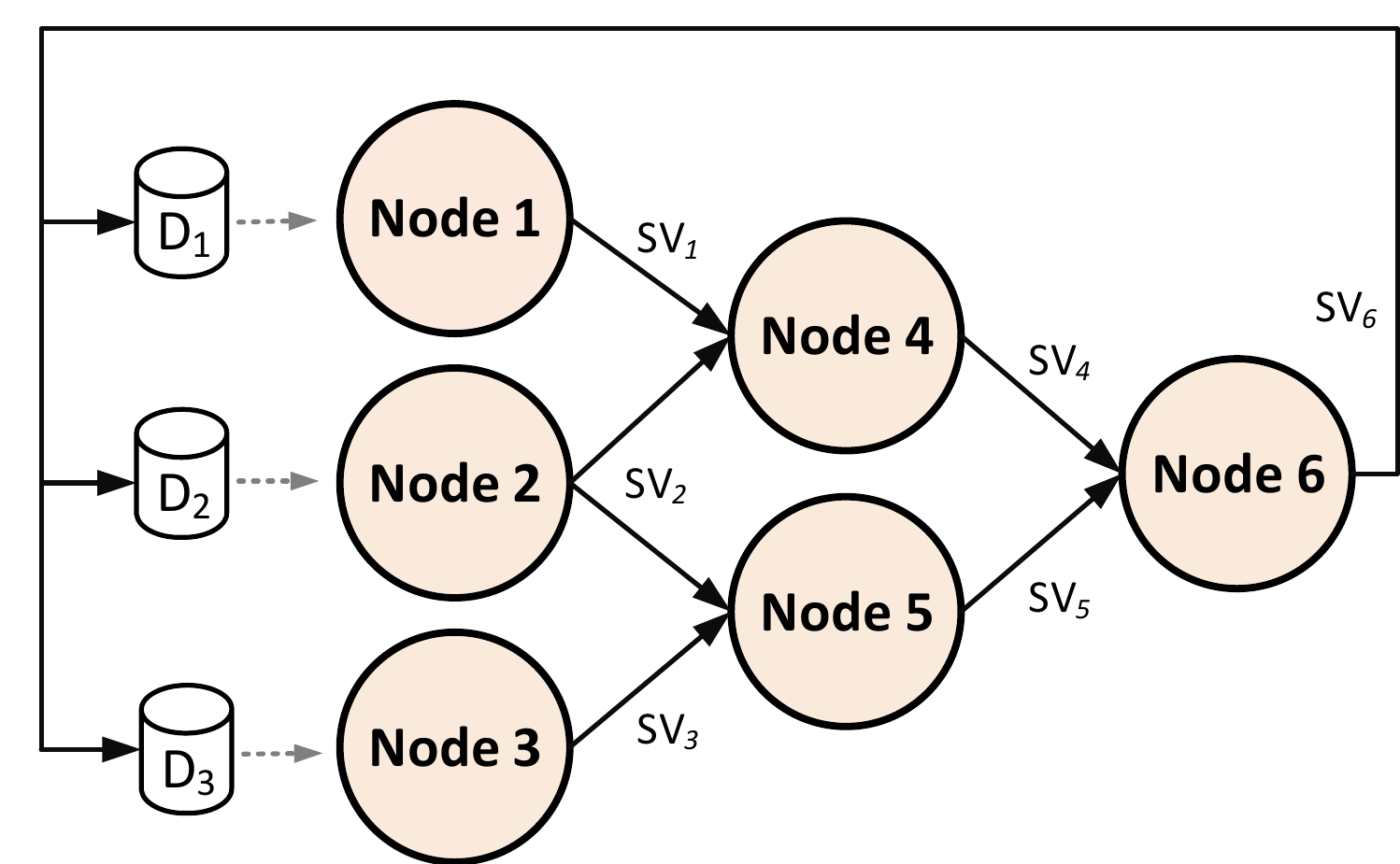}
\caption[Example of cascade SVM in a network with $6$ nodes.]{Example of cascade SVM in a network with $6$ nodes. For readability, $\text{SV}_k$ denotes the output SVs of the $k$th node.}
\label{chap3:fig:cascadesvm}
\end{figure}

The first work we are aware of to explore training an SVM in a fully distributed setting, without constraining the network topology as in the cascade SVM, is the PhD thesis by Pedersen \citep{pedersen2005using}, which explores informally multiple data exchange protocols in a distributed Java implementation. Another early work was the Distributed Semiparametric Support Vector Machine (DSSVM) presented in \citep{navia2006distributed}. In the DSSVM, every local node selects a number of centroids from the training data. These centroids are then shared with the other nodes, and their corresponding weights are updated locally based on an iterated reweighted least squares (IRWLS) procedure. Privacy can be preserved by adding noise to the elements of the training data when selecting the centroids. The DSSVM may be suboptimal depending on the particular choice of centroids. Moreover, it requires incremental passing of the SVs, or centroids, between the nodes, which in turn requires the computation of a Hamiltonian cycle between them. An alternative Distributed Parallel SVM (DPSVM) is presented in \citep{lu2008distributed}. Differently from the DSSVM, the DPSVM does not depend on the particular choice of centroids, and it is guaranteed to reach the global optimal solution of the centralized SVM in a finite number of steps. Moreover, it considers general strongly connected networks, with only exchanges of SVs between neighboring nodes. Still, the need of exchanging the set of SVs reduces the capability of the algorithm to scale to very large networks. A third approach is presented in \citep{forero2010consensus}, where the problem is recast as multiple convex subproblems at every node, and solved with the use of the ADMM procedure. Most of the attempts described up to now are summarized in the overview by Wang and Zhou \citep{wang2012distributed}.

To conclude this section, we shortly present a set of alternative formulations for distributed SVM implementations that were proposed in the last years. Flouri \textit{et al.} \citep{flouri2006training} consider the possibility of exchanging SVs cyclically, through a set of `clusterheads' distributed over the network. In \citep{flouri2008distributed,flouri2009optimal}, the algorithm is refined in order to consider only one-hop communication. Additionally, the authors propose the idea of exchanging only the SVs which are on the borders of the convex hulls for each class, in order to accelerate the convergence speed.

Ang \textit{et al.} \citep{ang2008cascade} combines the idea of the cascade SVM and the Reduced SVM (similar to the Semiparametric SVM in \citep{navia2006distributed}) in order to reduce the communication cost on a P2P network. In \citep{ang2009communication}, the cost is further reduced by considering a bagging procedure at each node. 

Hensel and Duta \citep{hensel2009gadget} investigate a gradient descent procedure, where the overall gradient update is computed with a `Push-Sum' protocol, a special gossip procedure for computing approximate sums over a network, allowing the algorithm to scale linearly with respect to the size of the agents' network. Differently, in Wang \textit{et al.} \citep{wang2010scalable}, gradient descent is used locally to update the local models, which are then fused at the prediction phase with a DAC procedure.

Lodi \textit{et al.} \citep{lodi2010single} explicitly consider the problem of multi-class SVM, exploiting the equivalence between a multi-class extension of SVM and the Minimum Enclosing Ball (MEB) problem (see \citep[Section 3.1]{lodi2010single} for the definition of MEB). Each node computes its own MEB, and forwards the results to a central processor, which is in charge of computing the global solution. The authors mention that this last step can be parallelized by reformulating the algorithms in the Cascade SVM \citep{graf2004parallel} or the DPSVM \citep{lu2008distributed}, substituting the SVs with the solution to the MEB problems.

Finally, several authors have considered the use of distributed optimization routines for solving the distributed linear SVM problem (i.e. with a kernel corresponding to dot products in the input space, see Chapter \ref{chap:dist_ssl_next}). Among these, we may cite the random projection algorithm \citep{lee2013distributed}, dual coordinate ascent \citep{jaggi2014communication}, and the box-constrained QP \citep{lee2015distributed}. 

\subsection{Distributed multilayer perceptrons}
\label{sec:distributed_mlp}

We conclude this chapter with a brief overview on distributed MLPs. Remember from Section \ref{sec:mlp} that MLPs are generally trained with SGD. As we stated previously in Section \ref{chap3:sec:relation_other_research_fields}, gradient descent is relatively easy to implement in a distributed fashion, due to the additivity of the gradient. In fact, the literature on online learning and prediction has considered multiple distributed implementations of SGD, starting from the work of Zinkevich \textit{et al.} \citep{zinkevich2009slow,zinkevich2010parallelized}, including variants with asynchronous updates \citep{recht2011hogwild}, and without the need for a parameters' server \citep{dekel2012optimal}.

However, these ideas have been rarely applied to the distributed training of MLPs, except in a handful of cases. As examples, Georgopoulos and Hasler \citep{georgopoulos2014distributed} train it by summing the gradient updates directly with a DAC procedure. Similarly, Schlitter \citep{schlitter2008protocol} and Samet and Miri \citep{samet2012privacy} investigate the use of secure summation protocols for ensuring privacy during the updates. This scarcity of results has a strong motivation. In fact, relatively large MLPs may possess millions of free parameters, resulting in millions of gradient computations to be exchanged throughout each node, making it impractical (e.g. \citep{seide2014parallelizability}). This problem has started to being addressed in different ways. The first is model parallelism, where the MLPs itself is split over multiple machines, such as in DistBelief \citep{dean2012large}. The other is quantization, where the gradient updates are heavily quantized in order to reduce the communication cost \citep{seide20141}. Additionally, the problem involved in training the MLP is non-convex, making it more complex (both theoretically and practically) to apply the aforementioned ideas.

An alternative approach is to construct an \textit{ensemble} of MLPs, one for each node, in order to avoid the gradient exchanges. Lazarevic and Obradovic \citep{lazarevic2002boosting} were among the first to consider this idea, with a distributed version of the standard AdaBoost algorithm. A similar approach is presented in Zhang and Zhong \citep{zhang2013privacy}.

\part{Distributed Training Algorithms for RVFL Networks}
\chapter{Distributed Learning for RVFL Networks}
\label{chap:dist_rvfl}

\minitoc
\vspace{15pt}

\blfootnote{The content of this chapter, except Sections 4.1.1 and 4.1.2, is adapted from the material published in \citep{Scardapane2015}.}

\lettrine[lines=2]{T}{his} chapter introduces two distributed algorithms for RVFL networks, which are a special case of the fixed hidden layer ANN models presented in Section \ref{sec:fixed_hidden_layer}. As we said in Section \ref{sec:fixed_hidden_layer}, the use of these models is widespread in the centralized case, due to their good trade-off of algorithmic simplicity and nonlinear modeling capabilities. At the same time, as detailed in Section \ref{sec:dist_fixed_hidden_layer}, their use in the DL setting has been relatively limited, which is the main motivation for this chapter. After introducing the RVFL network, we describe the two distributed strategies, based on the DAC protocol and the ADMM optimization algorithm. Next, we evaluate them on multiple real-world scenarios.

\section{Basic concepts of RVFL networks}

RVFL networks are a particular class of ANN models with a fixed hidden layer, as depicted in Section \ref{sec:fixed_hidden_layer}. Mathematically, their most common variation is given by \citep{Pao1994}:
\begin{equation}
f(\vect{x}) = \sum_{m=1}^B \beta_m h_m(\vect{x}; \vect{w}_m) = \boldsymbol{\beta}^T\vect{h}(\vect{x}; \vect{w}_1, \ldots, \vect{w}_B) \;,
\label{eq:flnn_model}
\end{equation}
\noindent where the $m$th transformation is parametrized by the vector $\vect{w}_m$.\footnote{The original derivation in \citep{Pao1994} had additional connections from the input layer to the output layer, however, this is a trivial extension with respect to our formulation.} The parameters $\vect{w}_1, \ldots, \vect{w}_B$ are chosen in the beginning of the learning process, in particular, they are extracted randomly from a predefined probability distribution. Conceptually, this is similar to the well-known technique of random projections \citep{mahoney2011randomized}, which is a common procedure in statistics for dimensionality reduction. Differently from it, however, in RVFL networks the stochastic transformation of the input vector is not required to preserve distances and, more importantly, can increase the dimensionality. In the following, dependence of the hidden functions with respect to the stochastic parameters is omitted for readability. If we define the hidden matrix $\vect{H} \in \R^{N \times B}$ as:
\begin{equation}
	\vect{H} = 
	 \begin{pmatrix}
		h_1(\vect{x}_1) &  \cdots & h_B(\vect{x}_1) \\
		\vdots  & \ddots & \vdots  \\
		h_1(\vect{x}_N)  & \cdots & h_B(\vect{x}_N)
	 \end{pmatrix} \,,
\label{eq:hidden}
\end{equation}
it is straightforward to show that training of an RVFL network can be implemented efficiently with the use of the LRR algorithm described in Eq. \eqref{chap2:eq:lls} and Eq. \eqref{chap2:eq:lls_solution}, by substituting the input matrix $\vect{X}$ with the hidden matrix $\vect{H}$. The resulting output weights are given by:
\begin{equation}
\boldbeta^* = \left( \vect{H}^T\vect{H} + \lambda \vect{I} \right)^{-1}\vect{H}^T \vect{y} \,.
\label{eq:rvfl_opt}
\end{equation}
Throughout this chapter (and subsequent ones), we will use sigmoid activation functions given by:
\begin{equation}
h(\vect{x}; \vect{w}, b) = \frac{1}{1 + \exp\left\{ - \left(\vect{w}^T\vect{x} + b\right) \right\}} \,.
\label{eq:sigmoid}
\end{equation} 
The derivation in this section extends trivially also to the situation of $M > 1$ outputs. In this case, $\boldbeta$ becomes a $B \times M$ matrix and the output vector $\vect{y}$ becomes an $N \times M$ matrix, where the $i$th row corresponds to the $M$-dimensional output $\vect{y}_i^T$ of the training set. Additionally, we replace the $L_2$-norm on vectors in Eq. \eqref{chap2:eq:lls} with a suitable matrix norm.

\subsection{An historical perspective on RVFL networks}
\label{chap4:sec:historical_perspective_rvfl_networks}

This kind of random-weights ANNs have a long history in the field of SL. The original perceptron, indeed, considered a fixed layer of binary projections, which was loosely inspired to the biological vision \citep{Rosenblatt1958}. In 1992, Schmidt \textit{et al.} \citep{schmidt1992feedforward} investigated a model equivalent to the RVFL network, however, this was not ``\textit{presented as an alternative learning method}'', but ``\textit{only to analyse the functional behavior of the networks with respect to learning}''. The RVFL network itself was presented by Pao and his coworkers \citep{Pao1994} as a variant of the more general functional-link network \citep{Pao1992}. In \citep{igelnik1995stochastic}, it was shown to be an universal approximator for smooth functions, provided that the weights were extracted in a proper range and $B$ was large enough. In particular, the rate of convergence to zero of the approximation error is $\mathcal{O}(\frac{C}{\sqrt{B}})$, with the constant $C$ independent of $B$. Further analyses of approximation with random bases were obtained successively in \citep{Rahimi2008,Rahimi2009,Gorban2015a}. Recently, similar models were popularized under the name extreme learning machine (ELM) \citep{huang2006extreme}, raising multiple controversies due to the lack of proper acknowledgment of previous material, particularly RVFL networks \citep{wang2008comments}. Historically, the RVFL network is also connected to the radial basis function (RBF) network investigated by Broomhead and Lowe in 1988 \citep{lowe2multi}, to the statistical test for neglected nonlinearities presented by White \citep{white1989additional}, and to the later QuickNet family of networks by the same author \citep{white2006approximate}. Another similar algorithm has been proposed in 2013 as the `no-prop' algorithm \citep{widrow2013no}.\footnote{Which, ironically, has been criticized for its similarity to the ELM network \citep{widrow2013letter}.}

\subsection{On the effectiveness of random-weights ANNs}

Despite the stochastic assignment of weights in the first layer, RVFL networks are known to provide excellent performance in many real-world scenario, giving a good trade-off between accuracy and training simplicity. This was shown clearly in a 2014 analysis by Fernandez-Delgato \textit{et al.} \citep{Fernandez-Delgado2014}. Over $179$ classifiers, a kernel-based variation of RVFL networks with RBF functions was shown to be among the top-three performing algorithms over $121$ different datasets. In the words of B. Widrow \citep{widrow2013letter}: ``\textit{we [...] have independently discovered that it is not necessary to train the hidden layers of a multi-layer neural network. Training the output layer will be sufficient for many applications.}''. Clearly, randomly selecting bases is at most a naive approach, which can easily be outperformed by proper adaptation of the hidden layer. Worse, in some cases this choice can introduce a large variance in the results, as stated by Principe and Chen \citep{principe2015universal}: ``\textit{[random-weights models] still suffer from design choices, translated in free parameters, which are difficult to set optimally with the current mathematical framework, so practically they involve many trials and cross validation to find a good projection space, on top of the selection of the number of hidden [processing elements] and the nonlinear functions.}''. Although we use RVFL networks for their efficiency in the DL setting, these limitations should be kept in mind.

\section{Distributed training strategies for RVFL networks}
\label{chap4:sec:dist_training_strategies_rvfl_networks}

Let us now consider the problem of training an RVFL network in the DL setting. By combining the DL problem in Eq. \eqref{chap3:eq:dist_risk_functional} with the RVFL least-square optimization criterion, the global optimization problem of the distributed RVFL can be stated as:
\begin{equation}
\boldsymbol{\beta}^* = \argmin_{\boldsymbol{\beta} \in \R^B}  \frac{1}{2} \left( \sum_{k=1}^L \norm{\vect{H}_{k} \boldsymbol{\beta} - \vect{y}_{k}}^2 \right) + \frac{\lambda}{2}\norm{\boldsymbol{\beta}}^2 \,,
\label{eq:dd_rvfl_opt}
\end{equation}
\noindent where $\vect{H}_{k}$ and $\vect{y}_{k}$ are the hidden matrix and output vector computed over the local dataset $S_k$. Remember from Section \ref{sec:dist_linear_neuron} that the optimal weight vector in this case can be expressed as:
\begin{align}
\boldbeta^* = & \left( \sum_{k=1}^L \left( \vect{H}_k^T \vect{H}_k  \right) + \lambda \vect{I} \right)^{-1}\sum_{k=1}^L \left( \vect{H}_k^T\vect{y}_k\right) =  \nonumber \\
 = & \left( \vect{H}^T \vect{H} + \lambda\vect{I} \right)^{-1}\vect{H}^T\vect{y} \;.
\end{align}
This can be implemented in a fully distributed fashion by executing two sequential DAC steps\footnote{The DAC protocol is introduced in Appendix \ref{appA:sec:consensus}.}: the first on the matrices $\vect{H}_k^T \vect{H}_k$, and the second on $\vect{H}_k^T\vect{y}_k$. However, since the matrices $\vect{H}_k^T \vect{H}_k$ have size $B \times B$, this approach is feasible only for small hidden expansions, i.e. small $B$. Otherwise, the free exchange of these matrices over the network can become a computational bottleneck or, worse, be infeasible. For this reason, we do not consider this idea further in this chapter, and we focus on computationally cheaper algorithms which are able to scale better with large hidden layers. Two strategies to this end are introduced next.

\subsection{Consensus-based distributed training}
\label{sec:consrvfl}

The first strategy that we investigate for training an RVFL network in a fully decentralized way is simple, yet it results in a highly efficient training algorithm.  It is composed of three steps:
\begin{enumerate}
\item \textbf{Initialization}: Parameters $\vect{w}_1, \dots, \vect{w}_B$ of the activation functions are agreed between nodes. For example, one node can draw these parameters from a uniform distribution and broadcast them to the rest of the network. This can be achieved in a decentralized way using a basic leader election strategy \cite{awerbuch1987optimal}. Alternatively, they can be generated during the design of the distributed system (i.e. hardcoded in the network's design), so that they are already available when the system is actually started.
\item \textbf{Local training}: Each node solves its local training problem, considering only its own training dataset $S_k$. Solution is given by Eq. \eqref{eq:rvfl_opt}, obtaining a local set of output weights $\boldbeta^*_k$, ${k = 1\dots L}$.
\item \textbf{Averaging}: Local parameters vectors are averaged using a DAC strategy. After running DAC, the final weight vector at every node is given by:
\begin{equation}
\boldbeta^*_{\text{CONS}} = \frac{1}{L} \sum_{k=1}^L \boldbeta^*_k \,.
\label{eq:consensus_beta_final}
\end{equation} 
\end{enumerate}
\noindent Despite its simplicity, consensus-based RVFL (denoted as CONS-RVFL) results in an interesting algorithm. It is easy to implement, even on low-cost hardware \cite{olfati2007consensus}; it requires low training times (i.e., local training and a short set of consensus iterations); moreover, our results show that it achieves a very low error, in many cases comparable to that of the centralized problem. From a theoretical standpoint, this algorithm can be seen as an ensemble of multiple linear predictors defined over the feature space induced by the mapping $\vect{h}(\cdot)$, i.e. it is similar to a bagged ensemble of linear predictors \cite{breiman1996bagging}. The overall algorithm is summarized in Algorithm \ref{alg:dac_dist_rvfl}.

\begin{AlgorithmCustomWidth}[h]
    \caption{CONS-RVFL: Consensus-based training for RVFL networks ($k$th node).}
    \label{alg:dac_dist_rvfl}
  \begin{algorithmic}[1]
    \Require{Training set $S_k$, number of nodes $L$ (global), regularization factor $\lambda$ (global)}
    \Ensure{Optimal weight vector $\boldbeta_k^*$}
    \State Select parameters $\vect{w}_1, \dots, \vect{w}_B$, in agreement with the other $L-1$ nodes.
    \State Compute $\vect{H}_k$ and $\vect{y}_k$ from $S_k$.
    \State Compute $\boldbeta_k^*$ via Eq. \eqref{eq:rvfl_opt}.
    \State $\boldbeta^* \leftarrow \text{DAC}(\boldbeta_1^*, \ldots, \boldbeta_L^*)$. \Comment Run in parallel, see Appendix \ref{app:graph_theory}.
    \State \Return{$\boldbeta^*$}
  \end{algorithmic}
\end{AlgorithmCustomWidth}

\subsection{ADMM-based distributed training}
\label{sec:admmrvfl}

Another strategy for training in a decentralized way a RVFL network is to optimize directly the global problem in Eq. \eqref{eq:dd_rvfl_opt} in a distributed fashion. Although potentially more demanding in computational time, this would ensure convergence to the global optimum. We can obtain a fully decentralized solution to problem in Eq. \eqref{eq:dd_rvfl_opt} using the well-known ADMM. Most of the following derivation will follow \citep[Section 8.2]{boyd2011distributed}.
\subsubsection{Derivation of the training algorithm}
First, we reformulate the problem in the so-called `global consensus' form, by introducing local variables $\boldbeta_{k}$ for every node, and forcing them to be equal at convergence. Hence, we rephrase the optimization problem as:
\begin{equation}
\boldsymbol{\beta}^*_{\text{ADMM}} = \begin{aligned}
&\underset{\vect{z}, \boldbeta_{1}, \dots, \boldbeta_{L} \in \R^B}{\text{minimize}}
& & \frac{1}{2} \left( \sum_{k=1}^L \norm{\vect{H}_{k} \boldsymbol{\beta}_{k} - \vect{y}_{k}}^2 \right) + \frac{\lambda}{2}\norm{\vect{z}}^2 \\
&\;\;\;\;\text{subject to}
& & \boldsymbol{\beta}_{k} = \vect{z}, \; k = 1\ldots L \,.
\end{aligned}
\label{eq:dd_rvfl_local}
\end{equation}
\noindent Then, we construct the augmented Lagrangian:
\begin{equation}
\begin{aligned}
\mathcal{L}   & = \frac{1}{2} \left( \sum_{k=1}^L \norm{\vect{H}_{k} \boldsymbol{\beta}_{k} - \vect{y}_{k}}^2 \right) + \frac{\lambda}{2}\norm{\vect{z}}^2 + \\ 
              & + \sum_{k=1}^L \vect{t}^T_k (\boldbeta_{k} - \vect{z}) + \frac{\gamma}{2} \sum_{k=1}^L \norm{\boldbeta_k - \vect{z}}^2 \,,
\end{aligned}
\label{eq:dd_rvfl_local_augmented_lagrangian}
\end{equation}
\noindent where ${\mathcal{L}=\mathcal{L}\left(\vect{z}, \boldbeta_{1}, \dots, \boldbeta_{L}, \vect{t}_1,\dots,\vect{t}_L\right)}$, the vectors $\vect{t}_k$, ${k=1\dots L}$, are the Lagrange multipliers, $\gamma > 0$ is a penalty parameter, and the last term is introduced to ensure differentiability and convergence \cite{boyd2011distributed}. ADMM solves problems of this form using an iterative procedure, where at each step we optimize separately for $\boldbeta_k$, $\vect{z}$, and we update the Lagrangian multipliers using a steepest-descent approach:
\begin{align}
\boldbeta_k[n+1] & =  \underset{\boldbeta_k \in \R^B}{\arg\min} \; \mathcal{L}\left(\vect{z}[n], \boldbeta_{1}, \dots, \boldbeta_{L}, \vect{t}_1[n],\dots,\vect{t}_L[n]\right) \,, \label{eq:admm_step1}\\
\vect{z}[n+1] & = \underset{\vect{z} \in \R^B}{\arg\min} \; \mathcal{L}\left(\vect{z}, \boldbeta_{1}[n+1], \dots, \boldbeta_{L}[n+1], \vect{t}_1[n],\dots,\vect{t}_L[n]\right) \,, \label{eq:admm_step2}\\
\vect{t}_k[n+1]&  =  \vect{t}_k[n] + \gamma\left( \boldbeta_k[n+1] - \vect{z}[n+1] \right) \,.\label{eq:admm_step3}
\end{align}
\noindent In our case, the updates for $\boldbeta_k[n+1]$ and $\vect{z}[n+1]$ can be computed in a closed form:
\begin{align}
\boldbeta_k[n+1] & =  \left( \vect{H}_k^T \vect{H}_k + \gamma \vect{I} \right)^{-1} \left( \vect{H}^T_k \vect{y}_k - \vect{t}_k[n] + \gamma \vect{z}[n] \right) \,, \label{eq:admm_1} \\
\vect{z}[n+1] & =  \frac{\gamma \hat{\boldsymbol{\beta}} + \hat{\vect{t}}}{\lambda/L + \gamma} \,,
\label{eq:admm_2}
\end{align}
\noindent where we introduced the averages $\hat{\boldsymbol{\beta}} = \frac{1}{L}\sum_{k=1}^L \boldsymbol{\beta}_k[n+1]$ and $ \hat{\vect{t}} = \frac{1}{L} \sum_{k=1}^L \vect{t}_k[n]$. These averages can be computed in a decentralized fashion using a DAC step. We refer to \cite{boyd2011distributed} for a proof of the asymptotic convergence of ADMM.

\subsubsection*{Remark 1}
In cases where, on a node, $N_k \ll B$, we can exploit the matrix inversion lemma to obtain a more convenient matrix inversion step \cite{mateos2010distributed}:
\begin{equation}
\left( \vect{H}_k^T \vect{H}_k + \gamma \vect{I} \right)^{-1} = \gamma^{-1}\left[ \vect{I} - \vect{H}_k^T\left( \gamma \vect{I} + \vect{H}_k\vect{H}_k^T \right)^{-1} \vect{H}_k \right] \,.
\label{eq:lemma_2}
\end{equation} 
\noindent Moreover, with respect to the training complexity, we note that the matrix inversion and the term $\vect{H}^T_k \vect{y}_k$ in Eq. \eqref{eq:admm_1} can be precomputed at the beginning and stored into memory. More advanced speedups can also be obtained with the use of Cholesky decompositions. Hence, time complexity is mostly related to the DAC step required in Eq. \eqref{eq:admm_2}. Roughly speaking, if we allow ADMM to run for $T$ iterations (see next subsection), the ADMM-based strategy is approximately $T$ times slower than the consensus-based one.

\subsubsection{Stopping criterion}
\label{chap4:sec:stopping_criterion}

Convergence of the algorithm at the $k$th node can be tracked by computing the `primal residual' $\vect{r}_k[n]$ and `dual residual' $\vect{s}[n]$, which are defined as:
\begin{align}
\vect{r}_k[n] & = \boldbeta_k[n] - \vect{z}[n] \label{eq:primal_residual} \,, \\
\vect{s}[n] & = - \gamma \left( \vect{z}[n] - \vect{z}[n-1] \right) \,. \label{eq:dual_residual}
\end{align}
\noindent A possible stopping criterion is that both residuals should be less (in norm) than two thresholds:
\begin{align}
\norm{\vect{r}_k[n]} & < \epsilon_{\text{primal}} \label{eq:primal_stopping_criterion}  \,, \\
\norm{\vect{s}[n]} & < \epsilon_{\text{dual}} \label{eq:dual_stopping_criterion} \,.
\end{align}
\noindent A way of choosing the thresholds is given by \cite{boyd2011distributed}:
\begin{align}
\epsilon_{\text{primal}} & = \sqrt{L}\,\epsilon_{\text{abs}} + \epsilon_{\text{rel}} \max \big\{ \norm{\boldbeta_k[n]}, \norm{\vect{z}[n]} \big\} \,, \label{eq:primal_epsilon} \\
\epsilon_{\text{dual}} & = \sqrt{L}\,\epsilon_{\text{abs}} + \epsilon_{\text{rel}} \norm{\vect{t}_k[n]} \label{eq:dual_epsilon} \,,
\end{align}
\noindent where $\epsilon_{\text{abs}}$ and $\epsilon_{\text{rel}}$ are user-specified absolute and relative tolerances, respectively. Alternatively, as in the previous case, the algorithm can be stopped after a maximum number of iterations is reached. The pseudocode for the overall algorithm, denoted as ADMM-RVFL, at a single node is given in Algorithm \ref{tab:admmcode}.

\begin{AlgorithmCustomWidth}[h]
	\caption{ADMM-RVFL: ADMM-based training for RVFL networks ($k$th node).}
	\label{tab:admmcode}
	\begin{algorithmic}[1]
		\Require Training set $S_k$, number of nodes $L$ (global), regularization factors $\lambda, \gamma$ (global), maximum number of iterations $T$ (global)
		\Ensure Optimal vector $\boldbeta_k^*$
		\State Select parameters $\vect{w}_1, \dots, \vect{w}_B$, in agreement with the other $L-1$ nodes.
		\State Compute $\vect{H}_k$ and $\vect{y}_k$ from $S_k$.
		\State Initialize $\vect{t}_k[0] = \vect{0}$, $\vect{z}[0] = \vect{0}$.
		\For{$n$ from $0$ to $T$}
		\State Compute $\boldbeta_k[n+1]$ according to Eq. \eqref{eq:admm_1}.
		\State Compute averages $\hat{\boldsymbol{\beta}}$ and $\hat{\vect{t}}$ using DAC.
		\State Compute $\vect{z}[n+1]$ according to Eq. \eqref{eq:admm_2}.
		\State Update $\vect{t}_k[n]$ according to Eq. \eqref{eq:admm_step3}.
		\State Check termination with residuals.
		\EndFor
		\State \textbf{return} $\vect{z}[n]$
	\end{algorithmic}
\end{AlgorithmCustomWidth}

\section{Experimental Setup}

\subsection{Description of the Datasets}
\label{chap4:sec:description_datasets}

We tested our algorithms on four publicly available datasets, whose characteristics are summarized in Table~\ref{chap4:tab:datasets}. 

\begin{center}
\begin{table*}[h]
	\caption[General description of the datasets for testing CONS-RVFL and ADMM-RVFL]{General description of the datasets}
	{\hfill{}
		\setlength{\tabcolsep}{4pt}
		\renewcommand{\arraystretch}{1.3}
		\begin{footnotesize}
			\begin{tabular}{lrrll}  %@{}p{0.20\columnwidth}p{0.15\columnwidth}p{0.15\columnwidth}p{0.7\columnwidth}p{0.45\columnwidth}@{}
				\toprule
				\textbf{Dataset name} & \textbf{Features} & \textbf{Instances} & \textbf{Desired output} & \textbf{Task type}\\
				\midrule
				G50C & 50 & 550 & Gaussian of origin & Classification (2 classes) \\ 
				%\midrule
				Garageband &  44 &  1856 & Genre recognition & Classification (9 classes) \\
				%\midrule
				Skills &  18 & 3338 & User's level & Regression \\
				%\midrule
				Sylva & 216  & 14394 & Forest Type & Classification (2 classes) \\
				\bottomrule
			\end{tabular}
		\end{footnotesize}}
		\hfill{}\vspace{0.3em}
		\label{chap4:tab:datasets}
	\end{table*}
\end{center}

\vspace{-2.5em} \noindent We have chosen them to represent different applicative domains of our algorithms, and to provide enough diversity in terms of size, number of features, and imbalance of the classes:

\begin{itemize}
\item \textit{Garageband} is a music classification problem \cite{mierswa2005automatic}, where the task is to discern among $9$ different genres. As we stated in the previous chapter, in the distributed case we can assume that the songs are present over different computers, and we can use our strategies as a way of leveraging over the entire dataset without a centralized controller.
\item \textit{Skills} is a regression dataset taken from the UCI repository \cite{thompson2013video}. The task is to assess the skill level of a video game user, based on a set of recordings of its actions in the video game itself. This is useful for letting the game adapt to the user's characteristics. In this case, data is distributed by definition throughout the different players in the network. By employing our strategy several computers, each playing their own version of the game, can learn to adapt better by exploiting collective data.
\item \textit{Sylva} is a binary classification task for distinguishing classes of trees (Ponderosa pine vs. everything else).\footnote{\url{http://www.causality.inf.ethz.ch/al_data/SYLVA.html}} It is an interesting dataset since it has a large imbalance between the positive and negative examples (approximately $15:1$), and a large subset of the features are not informative from a classification point of view. In the distributed case, we can imagine that data is collected by different sensors.
\item \textit{G50C}, differently from the others, is an artificial dataset \cite{melacci2011laplacian}, whose main interest is given by the fact that the optimal (Bayes) error rate is designed to be equal exactly to $5\%$.
\end{itemize} 
In all cases, input variables are normalized between $0$ and $1$, and missing values are replaced with the average computed over the rest of the dataset. Multi-class classification is handled with the standard $M$ bit encoding for the output, associating to an input $\vect{x}_i$ a single output vector $\vect{y}_i$ of $M$ bits, where if its elements are ${y_{ij} = 1}$ and ${y_{ik} = 0}$, ${k\neq j}$, then the corresponding pattern is of class $j$.  We can retrieve the actual class from the $M$-dimensional RVFL output as:
\begin{equation}
\text{Class of } \vect{x} = \underset{j = 1\dots M}{\arg\max} \; f_j(\vect{x}) \,,
\end{equation}
\noindent where $f_j(\vect{x})$ is the $j$th element of the $M$-dimensional output $f(\vect{x})$. For all the models, testing accuracy and training times is computed by executing a $5$-fold cross-validation over the available data. This $5$-fold procedure is then repeated $15$ times by varying the topology of the agents and the initial weights of the RVFL net. Final misclassification error and training time is then collected for all the $15 \times 5 = 75$ repetitions, and the average values and standard deviations are computed.

\subsection{Algorithms and Software Implementation}
\label{chap4:sec:algo_implementation}

We compare the following algorithms:

\begin{itemize}
\item \textbf{Centralized RVFL} (C-RVFL): this is a RVFL trained with all the available training data. It is equivalent to a centralized node collecting all the data, and it can be used as a baseline for the other approaches.
\item \textbf{Local RVFL} (L-RVFL): in this case, training data is distributed evenly across the nodes. Every node trains a standard RVFL with its own local dataset, but no communication is performed. Testing error is averaged throughout the nodes.
\item \textbf{Consensus-based RVFL} (CONS-RVFL): as before, data is evenly distributed in the network, and the consensus strategy explained in Section \ref{sec:consrvfl} is executed. We set a maximum of $300$ iterations and $\delta = 10^{-3}$.
\item \textbf{ADMM-based RVFL} (ADMM-RVFL): similar to before, but we employ the ADMM-based strategy described in Section \ref{sec:admmrvfl}. In this case, we set a maximum of $300$ iterations, ${\epsilon_{\text{rel}} = \epsilon_{\text{abs}} = 10^{-3}}$ and ${\gamma = 1}$.
\end{itemize}
In all cases, we use sigmoid hidden functions given by Eq. \eqref{eq:sigmoid}, where parameters $\vect{w}$ and $b$ in \eqref{eq:sigmoid} are extracted randomly from an uniform distribution over the interval ${\left[ -1, +1 \right]}$. To compute the optimal number of hidden nodes and the regularization parameter $\lambda$, we execute an inner $3$-fold cross-validation on the training data only for C-RVFL. In particular, we search the uniform interval ${\left\{ 50, 100, 150, \dots, 1000 \right\}}$ for the number of hidden nodes, and the exponential interval $2^j$, ${j \in \left\{ -10, -9, \dots, 9, 10 \right\}}$ for $\lambda$. The step size of $50$ in the hidden nodes interval was found to provide a good compromise between final accuracy and the computational cost of the grid-search procedure. These parameters are then shared with the three remaining models. We experimented with a separate fine-tuning for each model, but no improvement in performance was found. Optimal parameters averaged over the runs are shown in Table~\ref{chap4:tab:gridsearch}.

\begin{center}
	\begin{table}[h]
		\caption[Optimal parameters found by the grid-search procedure for CONS-RVFL and ADMM-RVFL]{Optimal parameters found by the grid-search procedure}
		{\centering\hfill{}
			\setlength{\tabcolsep}{4pt}
			\renewcommand{\arraystretch}{1.3}
			\begin{footnotesize}
				\begin{tabular}{lcl}   %@{}p{0.3\columnwidth}p{0.2\columnwidth}p{0.2\columnwidth}@{}
					\toprule
					\textbf{Dataset} & \textbf{Hidden nodes} & $\lambda$   \\
					\midrule
					G50C & $500$ & $2^3$ \\
					%\midrule
					Garageband & $200$ & $2^{-3}$ \\
					%\midrule
					Skills & $400$ & $2^{-2}$ \\
					%\midrule
					Sylva & $450$ & $2^{-5}$ \\
					\bottomrule
				\end{tabular}
			\end{footnotesize}
		}
		\hfill{}\vspace{0.3em}
		\label{chap4:tab:gridsearch}
	\end{table}
\end{center}

\vspace{-2.5em} \noindent
We have implemented CONS-RVFL and ADMM-RVFL in the open-source Lynx MATLAB toolbox (see Appendix \ref{app:software}). Throughout this thesis, we are not concerned with the analysis of communication overhead over a realistic channel; hence, we employ a serial version of the code where the network is simulated artificially. However, in the aforementioned toolbox we also provide a fully parallel version, able to work on a cluster architecture, in order to test the accuracy of the system in a more realistic setting.

\section{Results and Discussion}
\label{sec:expresults}

\subsection{Accuracy and Training Times}

The first set of experiments is to show that both algorithms that we propose are able to approximate very closely the centralized solution, irrespective of the number of nodes in the network. The topology of the network in these experiments is constructed according to the so-called `Erd\H{o}s$-$R\'enyi model' \cite{newman2010networks}, i.e., once we have selected a number $L$ of nodes, we randomly construct an adjacency matrix such that every edge has a probability $p$ of appearing, with $p$ specified a-priori. For the moment, we set $p = 0.2$; an example of such a network for ${L = 8}$ is shown in Fig.~\ref{chap4:fig:networkexample}. 

\begin{figure}[h]
\centering
\includegraphics[width=0.5\linewidth]{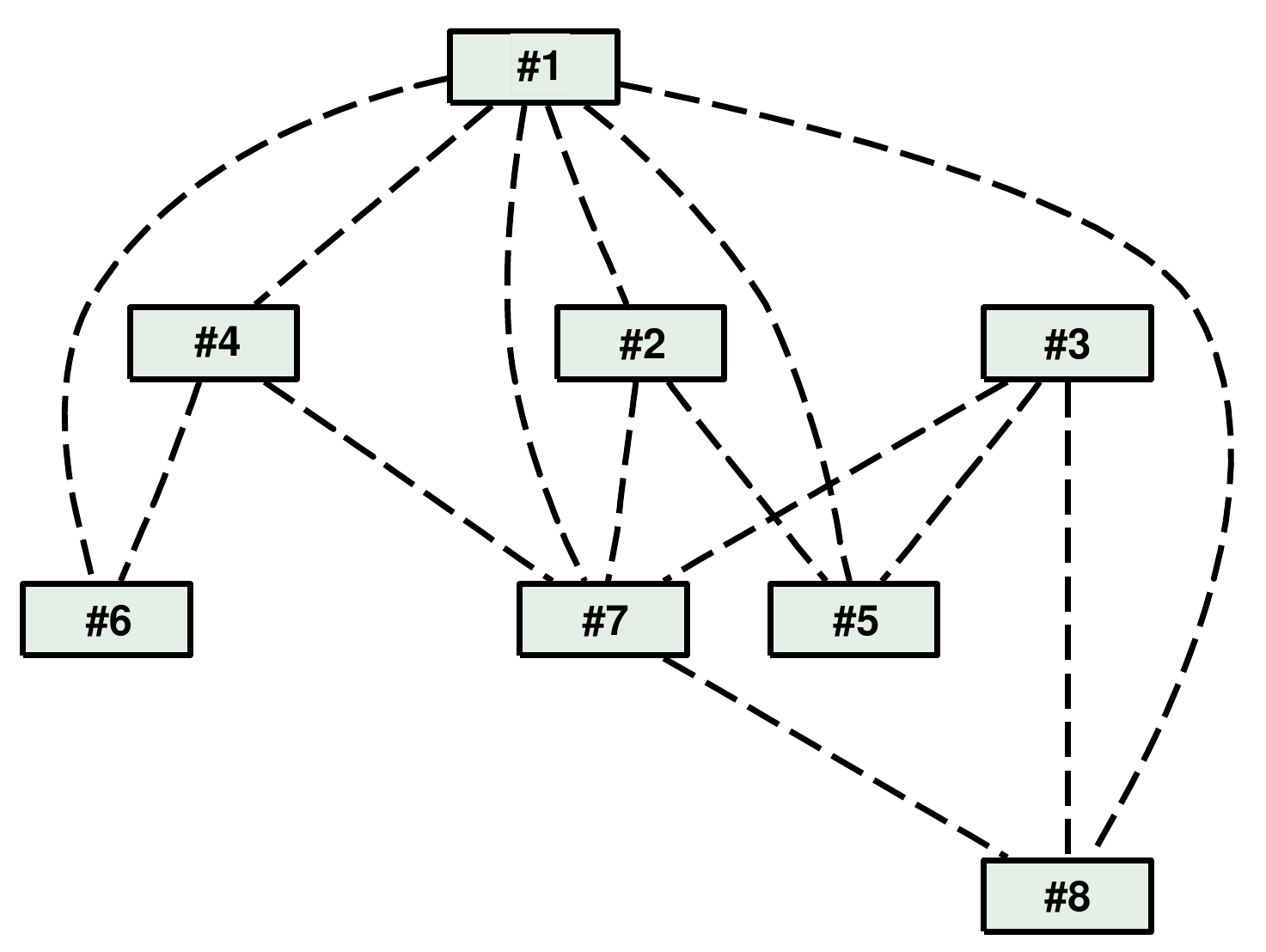}
\caption[Example of network (with $8$ nodes) considered in the experimental sections throughout the thesis.]{Example of network used in the experiments with $8$ nodes. Connectivity is generated at random, with a $20\%$ probability for each link of being present.}
\label{chap4:fig:networkexample}
\end{figure}

To test the accuracy of the algorithms, we vary $L$ from $5$ to $50$ by steps of $5$. Results are presented in Fig.~\ref{chap4:fig:errors} \subref{chap4:fig:g50cerror}-\subref{chap4:fig:sylvaerror}. For the three classification datasets, we show the averaged misclassification error. For the Skills dataset, instead, we show the Normalized Root Mean-Squared Error (NRMSE), defined for a test set $\mathcal{T}$ as:
\begin{equation}
\text{NRMSE}(\mathcal{T}) = \sqrt{\frac{\sum_{(\vect{x}_i, y_i) \in \mathcal{T}} \left[f(\vect{x}_i)-y_i\right]^2}{|\mathcal{T}| \hat{\sigma}_y}} \,,
\label{eq:nrmse}
\end{equation}
\noindent where $|\mathcal{T}|$ denotes the cardinality of the set $\mathcal{T}$ and $\hat{\sigma}_y$ is an empirical estimate of the variance of the output samples $y_i$, ${i=1, \dots, |\mathcal{T}|}$. For every fold, the misclassification error of L-RVFL is obtained by averaging the error over the $L$ different nodes. While this is a common practice, it can introduce a small bias with respect to the other curves. Nonetheless, we stress that it does not influence the following discussion, which mostly focuses on the comparison of the other three algorithms.

\begin{figure}
\centering
\subfloat[Dataset G50C]{\includegraphics[scale=0.75]{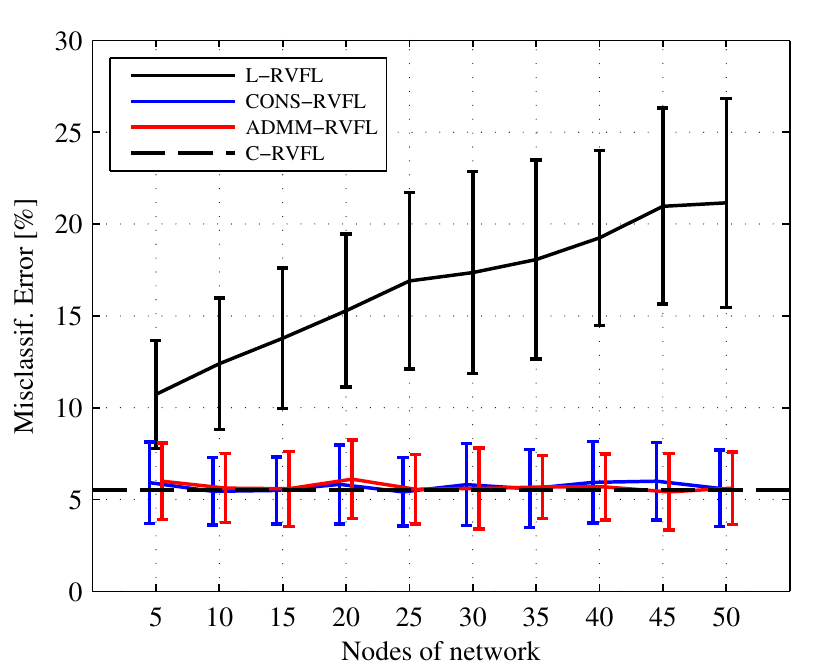}%
\label{chap4:fig:g50cerror}}
\hfil
\subfloat[Dataset Garageband]{\includegraphics[scale=0.75]{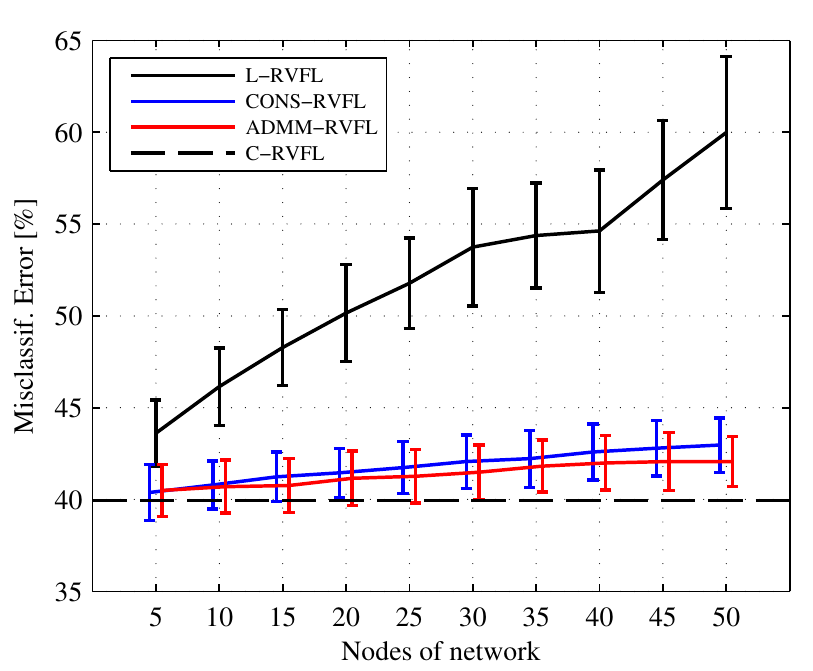}%
\label{chap4:fig:garagebanderror}}
\vfill
\subfloat[Dataset Skills]{\includegraphics[scale=0.75]{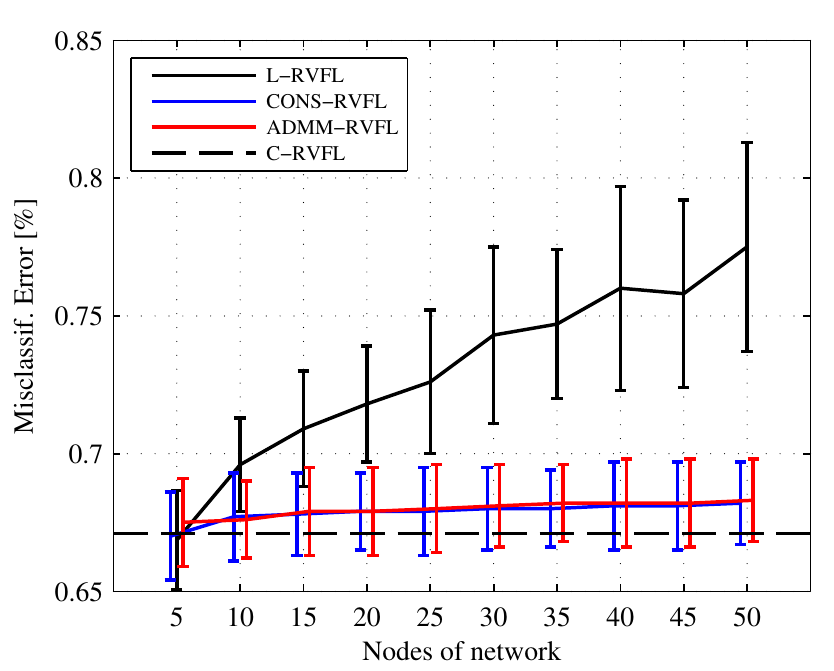}%
\label{chap4:fig:skillserror}}
\hfil
\subfloat[Dataset Sylva]{\includegraphics[scale=0.75]{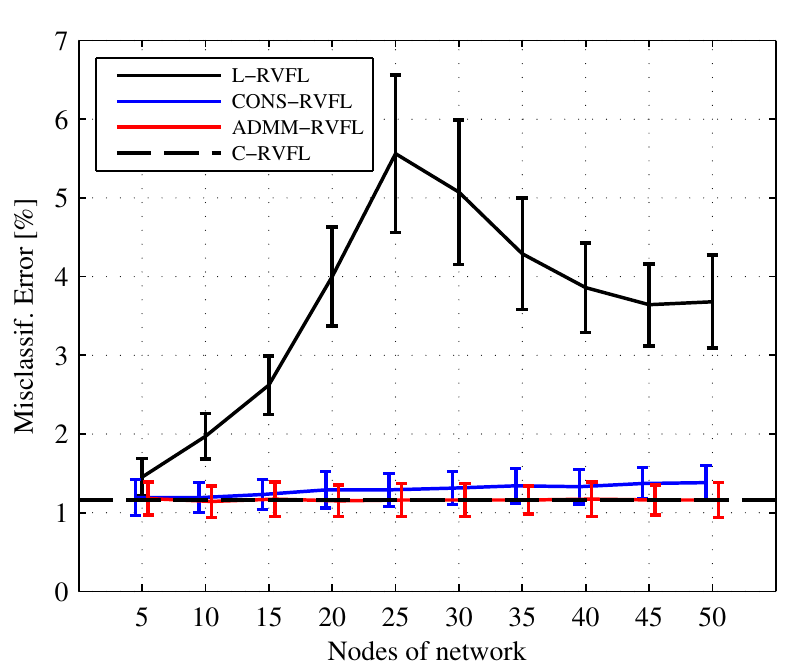}%
\label{chap4:fig:sylvaerror}}
\caption[Average error and standard deviation of CONS-RVFL and ADMM-RVFL on four datasets, when varying the number of nodes in the network from $5$ to $50$.]{Average error and standard deviation of the models on the four datasets, when varying the number of nodes in the network from $5$ to $50$. For G50C, Garageband, and Sylva we show the misclassification error, while for Skills we show the NRMSE. Lines for CONS-RVFL and ADMM-RVFL are slightly separated for better readability. Vertical bars represent the standard deviation from the average result.}
\label{chap4:fig:errors}
\end{figure}

\begin{figure}
\centering
\subfloat[CONS-RVFL]{\includegraphics[scale=0.75]{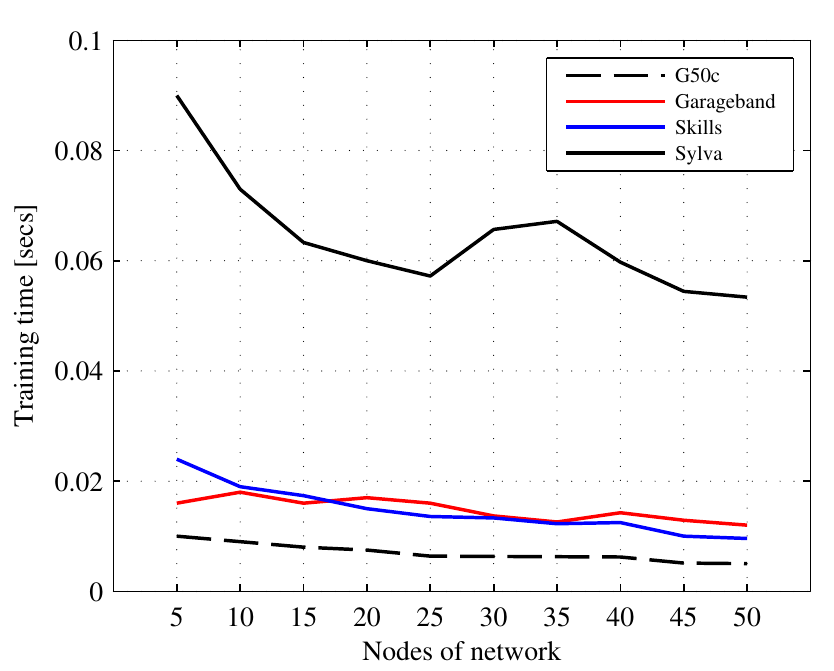}%
\label{chap4:fig:time_consensus}}
\subfloat[ADMM-RVFL]{\includegraphics[scale=0.75]{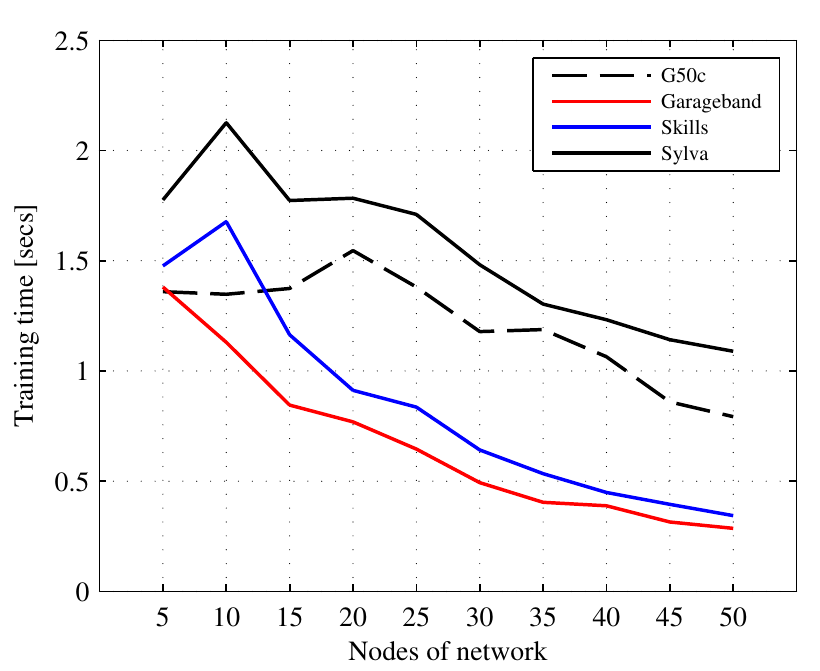}%
\label{chap4:fig:time_admm}}
\caption[Average training time for CONS-RVFL and ADMM-RVFL on a single node.]{Average training time for (a) CONS-RVFL and (b) ADMM-RVFL on a single node.}
\end{figure}

The first thing to observe is that L-RVFL has a steady decrease in performance in all situations, ranging from a small decrease in the Skills dataset with $5$ nodes, to more than $20\%$ of classification accuracy when considering networks of $50$ agents in the G50C and Garageband datasets. Despite being obvious, because of the decrease of available data at each node, it is an experimental confirmation of the importance of leveraging over \textit{all} possible data in terms of accuracy. It is also interesting to note that the gap between L-RVFL and C-RVFL does not always increase monotonically with respect to the size of the network, as shown by Fig.~\ref{chap4:fig:errors}-\subref{chap4:fig:sylvaerror}. A possible explanation of this fact is that, by keeping fixed the $\lambda$ parameter, the effect of the regularization factor in Eq. \eqref{eq:dd_rvfl_opt} is proportionally higher when decreasing the amount of training data.

The second important aspect is that CONS-RVFL and ADMM-RVFL are \textit{both} able to match very closely the performance of C-RVFL, irrespective of the network's size. In particular, they have the same performance on the G50C and Skills datasets, whilst a small gap is present in the Garageband and Sylva cases, although it is not significant.

Next, let us analyze the training times of the distributed algorithm, shown in Fig.~\ref{chap4:fig:time_consensus} for CONS-RVFL and Fig.~\ref{chap4:fig:time_admm} for ADMM-RVFL, respectively. In particular, we show the average training time spent at a single node. Generally speaking, CONS-RVFL is approximately one order of magnitude faster than ADMM-RVFL, which requires multiple iterations of consensus. In both cases, the average training time spent at a single node is monotonically decreasing with respect to the overall number of nodes. Hence, the computational time of the matrix inversion is predominant compared to the overhead introduced by the DAC and ADMM procedures.
\subsection{Effect of Network Topology}
\label{sec:networktopology}

Now that we have ascertained the convergence properties of both algorithms, we analyze an interesting aspect: how does the topology of the network influences the convergence time? Clearly, as long as the network stays connected, the accuracy is not influenced. However, the time required for the consensus to achieve convergence is dependent on how the nodes are interconnected. At the extreme, in a fully connected network, two iterations are always sufficient to achieve convergence at any desired level of accuracy. More in general, the time will be roughly proportional to the average distance between any two nodes. To test this, we compute the iterations needed to reach consensus for several topologies of networks composed of $50$ nodes:

\begin{itemize}
\item \textbf{Random network}: this is the network constructed according to the Erd\H{o}s$-$R\'enyi model described in the previous subsection. We experiment with $p=0.2$ and $p=0.5$, and denote the corresponding graphs as $R(0.2)$ and $R(0.5)$ respectively.
\item \textbf{Linear network}: in this network the nodes are ordered, and each node in the sequence is connected to its most immediate $K$ successors, with $K$ specified a-priori, except the last $K-1$ nodes, which are connected only to the remaining ones. We experiment with $K=1$ and $K=4$, and denote the networks as $K(1)$ and $K(4)$ respectively.
\item \textbf{Small world}: this is a network constructed according to the well-known `Watts-Strogazt' mechanism \cite{watts1998collective}. First, a cyclic topology is constructed, i.e., nodes are ordered in a circular sequence, and every node is connected to $K$ nodes to its left and $K$ to its right. Then, every link is `rewired' with probability set by a parameter $\alpha \in \left[0,1\right]$. In our case, we have $K=6$ and $\alpha=0.15$, and denote the resulting topology as $SW$.
\item \textbf{Scale-free}: this is another topology that tries to reflect realistic networks, in this case exhibiting a power law with respect to the degree distribution. We construct it according to the `Barab\'asi-Albert' model of preferential attachment \cite{albert2002statistical}, and denote the resulting topology as $SF$.
\end{itemize}

\noindent Results are presented in Fig.~\ref{chap4:fig:iterations} \subref{chap4:fig:g50citerations}-\subref{chap4:fig:sylvaiterations}. 

\begin{figure*}[h]
\centering
\subfloat[Dataset G50C]{\includegraphics[scale=0.9]{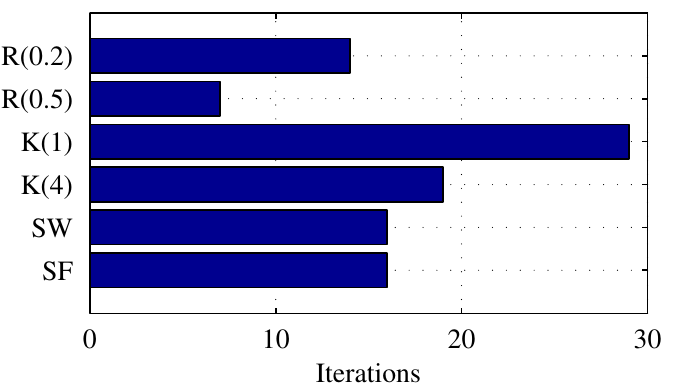}%
\label{chap4:fig:g50citerations}}
\hfil
\subfloat[Dataset Garageband]{\includegraphics[scale=0.9]{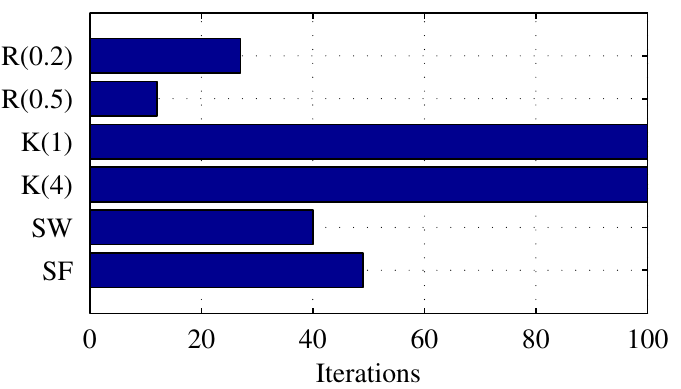}%
\label{chap4:fig:garagebanditerations}}
\vfill
\subfloat[Dataset Skills]{\includegraphics[scale=0.9]{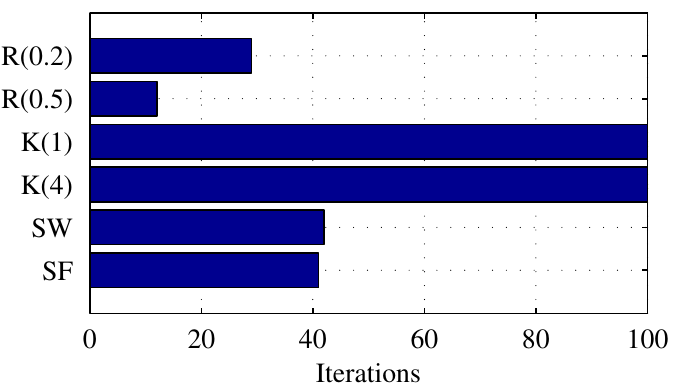}%
\label{chap4:fig:skillsiterations}}
\hfil
\subfloat[Dataset Sylva]{\includegraphics[scale=0.9]{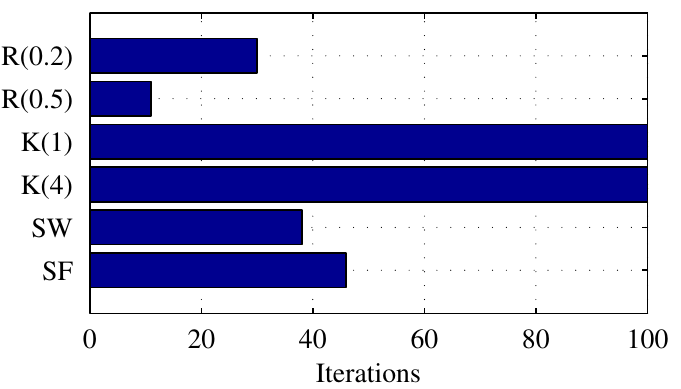}%
\label{chap4:fig:sylvaiterations}}
\caption[Consensus iterations needed to reach convergence in CONS-RVFL when varying the network topology.]{Consensus iterations needed to reach convergence when varying the network topology. For the explanation of the topologies see Sect.~\ref{sec:networktopology}. The number of iterations is truncated at $100$ for better readability.}
\label{chap4:fig:iterations}
\end{figure*}

We see that the algorithm has very similar results on all four datasets. In particular, as we expected, consensus is extremely slow in reaching agreement when considering linear topologies, where information takes several iterations to reach one end of the graph from the other. At the other extreme, it takes a very limited number of iterations in the case of a highly connected graph, as in the case of $R(0.5)$. In between, we can see that consensus is extremely robust to a change in topology and its performance is not affected when considering small-world or scale-free graphs.

\subsection{Early Stopping for ADMM}
Next, we explore a peculiar difference between CONS-RVFL and ADMM-RVFL. In the case of CONS-RVFL, no agreement is reached between the different nodes until the consensus procedure is completed. Differently from it, an intermediate solution is available at every iteration in ADMM-RVFL, given by the vector $\vect{z}[n]$. This allows for the use of an early stopping procedure, i.e., the possibility of stopping the optimization process before actual convergence, by fixing in advance a predefined (small) number of iterations. In fact, several experimental findings support the idea that ADMM can achieve a reasonable degree of accuracy in the initial stages of optimization \cite{boyd2011distributed}. To test this, we experiment early stopping for ADMM-RVFL at ${\left\{5, 10, 15, 25, 50, 100, 200\right\}}$ iterations for the three classification datasets. In Fig.~\ref{chap4:fig:error_admm} we plot the relative decrease in performance with respect to L-RVFL. We can see that $10$-$15$ iterations are generally enough to reach a good performance, while the remaining iterations are proportionally less useful. As a concrete example, misclassification error of ADMM-RVFL for G50C is $16.55\%$ after only $5$ iterations, $12.44\%$ after $10$, $7.89\%$ after $25$, while the remaining $175$ iterations are used to decrease the error only by an additional $2$ percentage points.
\begin{figure}[t]
\centering
\includegraphics[]{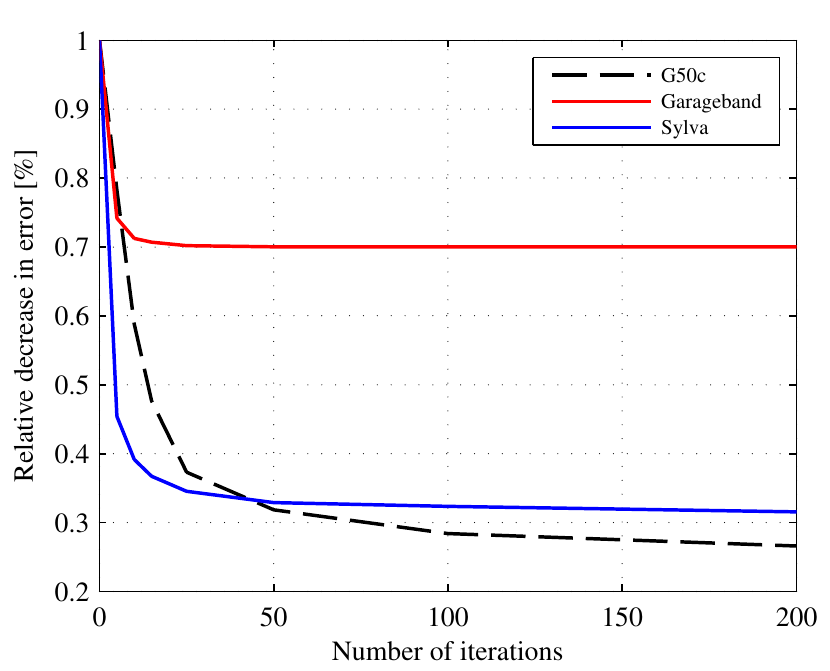}
\caption{Relative decrease in error of ADMM-RVFL with respect to L-RVFL, when using an early stopping procedure at different iterations.}
\label{chap4:fig:error_admm}
\end{figure}

\subsection{Experiment on Large-Scale Data}

As a final experimental validation, we analyze the behavior of CONS-RVFL and ADMM-RVFL on a realistic large-scale dataset, the well-known CIFAR-10 image classification database \cite{krizhevsky2009learning}. It is composed of $50000$ images labeled in $10$ different classes, along with a standard testing set of additional $10000$ images. Each image is composed of exactly $32 \times 32$ pixels, and each pixel is further represented by $3$ integer values in the interval $\left[1, \; 255\right]$, one for each color channel in the RGB color space. Classes are equally distributed between the training patterns, i.e., every class is represented by exactly $5000$ images. Since we are mostly interested into the relative difference in performance between the algorithms, and not in achieving the lowest possible classification error, we preprocess the images using the relatively simple procedure detailed in \cite{coates2011analysis}. In particular, we extract $1600$ significant patches from the original images, and represent each image using their similarity with respect to each of the patches. We refer to \cite{coates2011analysis} for more details on the overall workflow. In this experiment, we use $B = 3000$ and $\lambda = 10$, a setting which was found to work consistently on all situations. Moreover, we use the $R(0.2)$ graph explained before, but we experiment with lower number of nodes in the network, which we vary from $2$ to $12$ by steps of $2$. All the other parameters are set as in the previous experiments. Although the test set is fixed in this case, we repeat each experiment $15$ times to average out the effect of randomness in the RVFL and connectivity initializations.

\begin{figure}[t]
\centering
\subfloat[Misclassification error]{\includegraphics[scale=0.75]{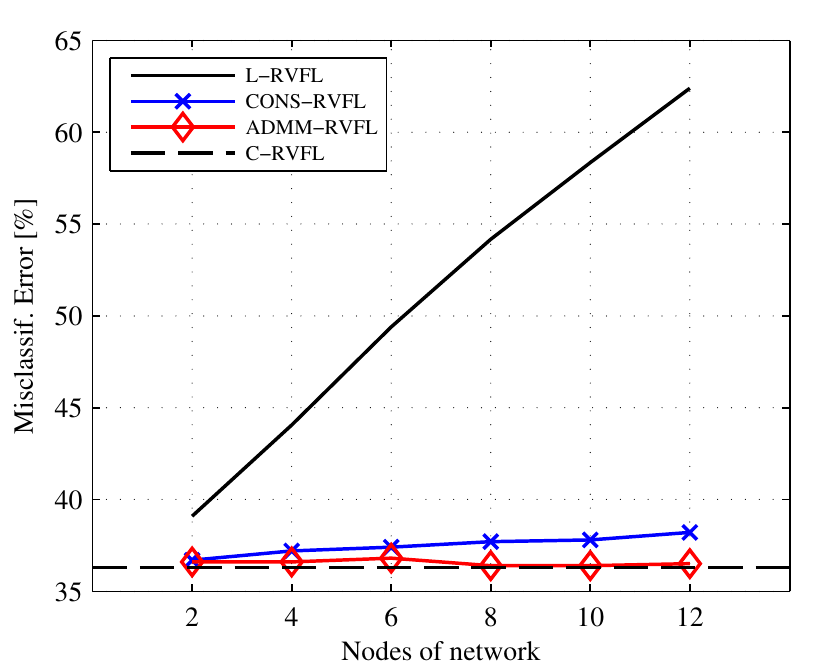}%
\label{chap4:fig:cifar10_error}}
\subfloat[Training time]{\includegraphics[scale=0.75]{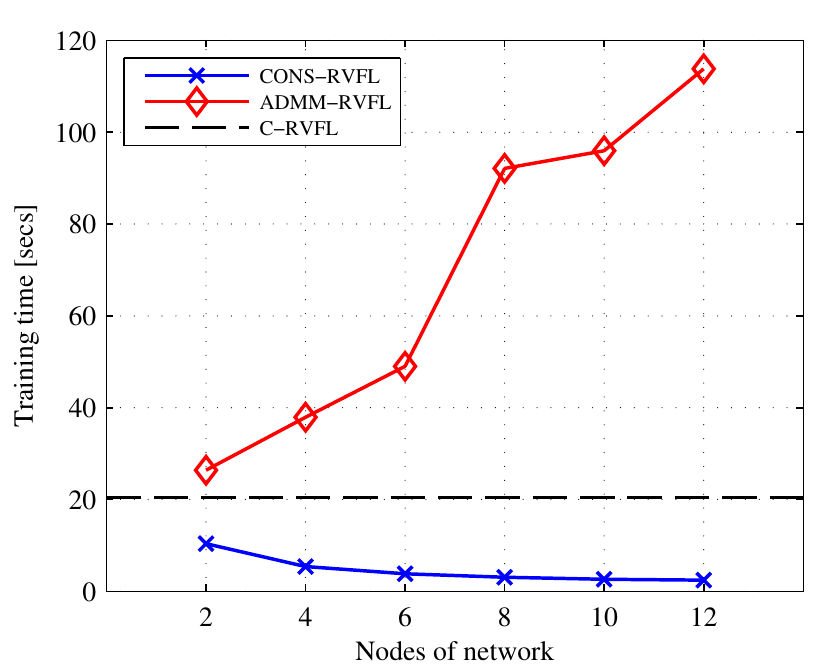}%
\label{chap4:fig:cifar10_time}}
\caption{Average misclassification error and training time of CONS-RVFL and ADMM-RVFL on the CIFAR-10 dataset, when varying the nodes of the network from $2$ to $12$.}
\end{figure}

The average misclassification error of the four models is shown in Fig. \ref{chap4:fig:cifar10_error}. In this case, the effect of splitting data is extremely pronounced, and the average misclassification error of L-RVFL goes from $39 \%$ with $2$ nodes, up to $62.4 \%$ with $12$ nodes. Both CONS-RVFL and ADMM-RVFL are able to track very efficiently the centralized solution, although there is a small gap in performance between the two (of approximately $1 \%$), when distributing over more than $8$ nodes. This is more than counter-balanced, however, by considering the advantage of CONS-RVFL with respect to the required training time. To show this, we present the average training time (averaged over the nodes) in Fig. \ref{chap4:fig:cifar10_time}. In this case, due to the large expansion block, time required to perform the multiple consensus iterations in ADMM-RVFL prevails over the rest, and the average training time tends to increase when increasing the size of the network. This is not true of CONS-RVFL, however, which obtains an extremely low training time with respect to C-RVFL, up to an order of magnitude for sufficiently large networks. Hence, we can say that CONS-RVFL can also be an efficient way of computing an approximate solution to a standard RVFL, with good accuracy, by distributing the computation over multiple machines.
\chapter[Extending Distributed RVFL Networks to a Sequential Scenario]{Extending Distributed RVFL Networks \\ to a Sequential Scenario}
\chaptermark{Sequential Distributed RVFL Networks}
\label{chap:dist_rvfl_sequential}

\minitoc
\vspace{15pt}

\blfootnote{The content of this chapter is adapted from the material published in \citep{scardapane2015distmusic} and \citep{fierimonte2015a}.}

\lettrine[lines=2]{T}{he} algorithms presented in the previous chapter have been designed for working in a batch setting. In this chapter, we extend CONS-RVFL to the distributed training of RVFL networks in the case where data is arriving sequentially at every node. Particularly, we combine the DAC-based strategies with local updates based on the blockwise RLS (BRLS) training algorithm. Next, we present a case study for distributed music classification in Section \ref{chap5:sec:dist_music_classification}. Finally, we compare the impact of using advanced choices for the connectivity matrix $\vect{C}$ of the DAC protocol in Section \ref{chap5:sec:dac_comparison}.

\section{Derivation of the algorithm}

Remember from Section \ref{sec:categorization_dl_algorithms} that in a sequential setting, the local dataset $S_k$ is not processed as a whole, but it is presented in a series of batches (or chunks) $S_{k,1}, \ldots, S_{k,T}$ such that:
\begin{equation}
\bigcup_{i=1}^T S_{k,T} = S_k \;\; k = 1, \ldots, L \;.
\label{eq:chunck}
\end{equation}

\noindent This encompasses situations where training data arrives in a streaming fashion, or the case where the dataset $S_k$ is too large for the matrix inversion in Eq. \eqref{eq:rvfl_opt} to be practical. In this section, we assume that new batches arrive synchronously at every node. In the single-agent case, an RVFL network can be trained efficiently in the sequential setting by the use of the BRLS algorithm \cite{Uncini2015}. Denote by  $\boldbeta[n]$ the estimate of its optimal weight vector after having observed the first $n$ chunks, and by $\vect{H}_{n+1}$ and $\vect{y}_{n+1}$ the matrices collecting the hidden nodes values and outputs of the $(n+1)$th chunk $S_{n+1}$. BRLS recursively computes Eq. \eqref{eq:rvfl_opt} by the following two-step update:
\begin{align}
\vect{P}[n+1] & =  \vect{P}[n] -  \vect{P}[n]\vect{H}_{n+1}^T\vect{M}_{n+1}^{-1} \vect{H}_{n+1}\vect{P}[n] \,, \label{eq:p_update_rls}\\
\boldsymbol{\beta}[n+1] & = \boldsymbol{\beta}[n] + \vect{P}[n+1]\vect{H}_{n+1}^T\left( \vect{y}_{n+1} - \vect{H}_{n+1}\boldsymbol{\beta}[n] \right) \,,
\label{eq:brls}
\end{align}
\noindent where we have defined:
\begin{equation}
	\vect{M}_{n+1} = \vect{I} + \vect{H}_{n+1}\vect{P}[n]\vect{H}_{n+1}^T \,.
	\label{eq:m_matrix}
\end{equation}
\noindent The matrix $\vect{P}$ in Eq. \eqref{eq:brls} and Eq. \eqref{eq:m_matrix} can be initialized as $\vect{P}[0] = \lambda^{-1} \vect{I}$, while the weights $\boldbeta[0]$ as the zero vector. For a derivation of the algorithm, based on the Sherman-Morrison formula, and an analysis of its convergence properties we refer the interested reader to \cite{Uncini2015}. The BRLS gives rise to a straightforward extension of the DAC-based training algorithm presented in Section \ref{sec:consrvfl} for the DL setting, consisting in interleaving local update steps with global averaging over the output weight vector. Practically, we consider the following algorithm:

\begin{enumerate}
	\item \textbf{Initialization}: the nodes agree on parameters $\vect{w}_1, \ldots, \vect{w}_B$ in Eq. \eqref{eq:flnn_model}. The same considerations made in Section \ref{sec:consrvfl} apply here. Moreover, all the nodes initialize their own local estimate of the $\vect{P}$ matrix in Eq. \eqref{eq:p_update_rls} and Eq. \eqref{eq:m_matrix} as $\vect{P}_k[0] = \lambda^{-1} \vect{I}$, and their estimate of the output weight vector as $\boldbeta_k[0] = \vect{0}$.
	\item At every iteration $n+1$, each node $k$ receives a new batch $S_{k,n+1}$. The following steps are performed:
	\begin{enumerate}
		\item \textbf{Local update}: every node computes (locally) its estimate $\boldbeta_k[n+1]$ using Eqs. \eqref{eq:p_update_rls}-\eqref{eq:brls} and local data $S_{k,n+1}$.
		\item \textbf{Global average}: the nodes agree on a single parameter vector by averaging their local estimates with a DAC protocol. The final weight vector at iteration $n+1$ is then given by:
		\begin{equation}
		\boldbeta[n+1] = \frac{1}{L}\sum_{k=1}^L \boldbeta_k[n+1] \,.
		\end{equation}
	\end{enumerate}
\end{enumerate}

\noindent The overall algorithm, denoted as S-CONS-RVFL, is summarized in Algorithm \ref{alg:dist_brls_rvlf}.

\begin{AlgorithmCustomWidth}[h]
    \caption{S-CONS-RVFL: Extension of CONS-RVFL to the sequential setting ($k$th node).}
    \label{alg:dist_brls_rvlf}
  \begin{algorithmic}[1]
    \Require{Number of nodes $L$ (global), regularization factor $\lambda$ (global)}
    \Ensure{Optimal weight vector $\boldbeta_k^*$}
    \State Select parameters $\vect{w}_1, \dots, \vect{w}_B$, in agreement with the other $L-1$ nodes.
    \State $\vect{P}_k[0] = \lambda^{-1} \vect{I}$.
    \State $\boldbeta_k[0] = \vect{0}$.
    \For{$n = 1, \ldots, T$}
    \State Receive batch $S_{k,n}$.
    \State Update $\boldbeta_k[n+1]$ using Eqs. \eqref{eq:p_update_rls}-\eqref{eq:brls}.
    \State $\boldbeta_k[n+1] \leftarrow \text{DAC}(\boldbeta_k[n+1], \ldots, \boldbeta_L[n+1])$. \Comment Run in parallel, see Appendix \ref{app:graph_theory}.
    \EndFor
    \State \textbf{return} $\boldbeta_k[T]$
  \end{algorithmic}
\end{AlgorithmCustomWidth}

\section{Experiments on Distributed Music Classification}
\label{chap5:sec:dist_music_classification}

\subsection{The Distributed Music Classification Problem}

As an experimental setting, we consider the problem of distributed automatic music classification (AMC). AMC is the task of automatically assigning a song to one (or more) classes, depending on its audio content \cite{scardapane2013music}. It is a fundamental task in many music information retrieval (MIR) systems, whose broader scope is to efficiently retrieve songs from a vast database depending on the user's requirements \cite{fu2011survey}. Examples of labels that can be assigned to a song include its musical genre, artist \cite{ellis2007classifying}, induced mood \cite{fu2011survey} and leading instrument. Classically, the interest in music classification is two-fold. First, being able to correctly assess the aforementioned characteristics can increase the efficiency of a generic MIR system (see survey \cite{fu2011survey} and references therein). Secondly, due to its properties, music classification can be considered as a fundamental benchmark for supervised learning algorithms \cite{scardapane2013music}: apart from the intrinsic partial subjectivity of assigning labels, datasets tend to be relatively large, and a wide variety of features can be used to describe each song. These features can also be supplemented by meta-informations and social tags.

More formally, we suppose that the input $\vect{x} \in \R^d$ to the model is given by a suitable $d$-dimensional representation of a song. Examples of features that can be used in this sense include temporal features such as the zero-crossing count, compact statistics in the frequency and cepstral domain \cite{fu2011survey}, higher-order descriptors (e.g. timbre \cite{ellis2007classifying}), meta-information on the track (e.g., author), and social tags extracted from the web. The output is instead given by one of $M$ predefined classes, where each class represents a particular categorization of the song, such as its musical genre. In the distributed AMC setting, these songs are distributed over a network, as is common in distributed AMC on peer-to-peer (P2P) systems, and over wireless sensor networks \cite{ravindran2004low}.

\subsection{Experiment Setup}
\label{chap5:sec:exp-setup}
We use four freely available AMC benchmarks. A schematic description of their characteristics is given in Table \ref{chap5:tab:datasets}.

\begin{center}
\begin{table*}[h]
\caption{General description of the datasets for testing the sequential S-CONS-RVFL algorithm.}
{\hfill{}
	\setlength{\tabcolsep}{4pt}
	\renewcommand{\arraystretch}{1.3}
	\begin{footnotesize}
		\begin{tabular}{lcclcc}  %@{}p{0.20\columnwidth}p{0.15\columnwidth}p{0.15\columnwidth}p{0.7\columnwidth}p{0.45\columnwidth}@{}
			\toprule
			\textbf{Dataset name} & \textbf{Features} & \textbf{Instances} & \textbf{Task} & \textbf{Classes}& \textbf{Reference}\\
			\midrule
			Garageband &  49 &  1856 & Genre recognition & 9 & \cite{mierswa2005automatic} \\
			\midrule
			Latin Music Database (LMD) & 30 & 3160 & Genre recognition & 10 & \cite{silla2007automatic} \\
			\midrule
			Artist20 & 30 & 1413 & Artist recognition & 20 & \cite{ellis2007classifying} \\
			\midrule
			YearPredictionMSD & 90 & 200000 & Decade identification & 2 & \cite{bertin2011million} \\
			\bottomrule
		\end{tabular}
	\end{footnotesize}}
	\hfill{}\vspace{0.6em}
	\label{chap5:tab:datasets}
\end{table*}
\end{center}

\vspace{-3.5em} \noindent Below we provide more information on each of them.
\begin{itemize}
	\item \textit{Garageband} \cite{mierswa2005automatic} is a genre classification dataset, considering $1856$ songs and $9$ different genres (alternative, blues, electronic, folkcountry, funksoulrnb, jazz, pop, raphiphop and rock). The input is given by $49$ features extracted according to the procedure detailed in \cite{mierswa2005automatic}. It is the same as the one used in the previous chapter.
	\item \textit{LMD} is another genre classification task, of higher difficulty \cite{silla2007automatic}. In this case, we have $3160$ different songs categorized in $10$ Latin American genres (tango, bolero, batchata, salsa, merengue, ax, forr, sertaneja, gacha and pagode). The input is a $30$-dimensional feature vector, extracted from the middle $30$ seconds of every song using the Marsyas software.\footnote{\url{http://marsyas.info/}} Features are computed both in the frequency domain (e.g. the spectral centroid) and in the cepstral domain, i.e. Mel Frequency Cepstral Coefficients (MFCC).
	\item \textit{Artist20} is an artist recognition task comprising $1413$ songs distributed between $20$ different artists \cite{ellis2007classifying}. The $30$-dimensional input vector comprises both MFCC and chroma features (see \cite{ellis2007classifying} for additional details).
	\item \textit{YearPredictionMSD} is a year recognition task derived from the subset of the million song dataset \cite{bertin2011million} available on the UCI machine learning repository.\footnote{\url{https://archive.ics.uci.edu/ml/}} It is a dataset of $500000$ songs categorized by year. In our experiment, we consider a simplified version comprising only the initial $200000$ songs, and the following binary classification output: a song is of class (a) if it was written previously than $2000$, and of class (b) otherwise. This is a meaningful task due to the unbalance of the original dataset with respect to the decade $2001-2010$.
\end{itemize}
In all cases, input features were normalized between $-1$ and $+1$ before the experiments. Testing accuracy is computed over a $10$-fold cross-validation of the data, and every experiment is repeated $50$ times to average out randomness effects due to the initialization of the parameters. Additionally, to increase the dataset size, we artificially replicate twice the training data for all datasets, excluding YearDatasetMSD.

We consider networks of $8$ nodes, whose topology is constructed according to the `Erd\H{o}s$-$R\'enyi model' (see Section \ref{chap4:sec:algo_implementation}). In particular, every pair of nodes in the network has a $20\%$ probability of being connected, with the only constraint that the overall network is connected. Training data is distributed evenly across the nodes, and chunks are constructed such that every batch is composed of approximately $20$ examples ($100$ for the YearPredictionMSD dataset). We compare the following algorithms:
\begin{itemize}
	\item \textbf{Sequential CONS-RVFL} (S-CONS-RVFL): this is trained according to the consensus-based sequential algorithm. For the DAC procedure, we set the maximum number of iterations to $300$, and $\delta = 10^{-4}$.
	\item \textbf{Centralized RVFL} (C-RVFL): this is a RVFL trained by first collecting all the local chunks and aggregating them in a single batch. It can be considered as an upper bound on the performance of S-CONS-RVFL.
	\item \textbf{Local RVFL} (L-RVFL): in this case, nodes update their estimate using their local batch, but no communication is performed. Final misclassification error is averaged across the nodes. This can be considered as a worst-case baseline for the performance of any distributed algorithm for RVFL networks.
\end{itemize}
In all cases, we use sigmoid hidden functions given by Eq. \eqref{eq:sigmoid}. Optimal parameters for C-RVFL are found by executing an inner $3$-fold cross-validation on the training data. In particular, we search the uniform interval ${\left\{ 50, 100, 150, \dots, 1000 \right\}}$ for the number of hidden nodes, and the exponential interval $2^j$, ${j \in \left\{ -10, -9, \dots, 9, 10 \right\}}$ for $\lambda$. These parameters are then shared with L-RVFL and S-CONS-RVFL. Resulting parameters from the grid search procedure are listed in Table \ref{chap5:tab:gridsearch}.

\begin{center}
	\begin{table}[h]
		\caption[Optimal parameters found by the grid-search procedure for S-CONS-RVFL]{Optimal parameters found by the grid-search procedure.}
		{\centering\hfill{}
			\setlength{\tabcolsep}{4pt}
			\renewcommand{\arraystretch}{1.3}
			\begin{footnotesize}
				\begin{tabular}{lcl}   %@{}p{0.3\columnwidth}p{0.2\columnwidth}p{0.2\columnwidth}@{}
					\toprule
					\textbf{Dataset} & \textbf{Hidden nodes} & $\lambda$   \\
					\midrule
					Garageband & $300$ & $2^{-3}$ \\
					\midrule
					LMD & $400$ & $2^{-2}$ \\
					\midrule
					Artist20 & $200$ & $2^{-4}$ \\
					\midrule
					YearPredictionMSD & $300$ & $1$ \\
					\bottomrule
				\end{tabular}
			\end{footnotesize}
		}
		\hfill{}\vspace{0.6em}
		\label{chap5:tab:gridsearch}
	\end{table}
\end{center}

\vspace{-2.5em} \noindent 
We note that C-RVFL can be considered as a benchmark for audio classification using shallow neural networks. In fact, in \cite{scardapane2013music} it is shown that it outperforms a standard MLP trained using SGD.
\subsection{Results and Discussion}

We start our discussion of the results by analyzing the final misclassification error and training time for the three models, reported in Table \ref{chap5:tab:finalerrorandtime}. Results of the proposed algorithm, S-CONS-RVFL, are highlighted in bold.

\begin{center}
	\begin{table}[h]
		\caption[Final misclassification error and training time for the sequential S-CONS-RVFL algorithm, together with one standard deviation.]{Final misclassification error and training time for the three models, together with standard deviation. The proposed algorithm is highlighted in bold. Training time for S-CONS-RVFL and L-RVFL is averaged over the nodes.}
		{\centering\hfill{}
			\setlength{\tabcolsep}{4pt}
			\renewcommand{\arraystretch}{1.3}
			\begin{footnotesize}
				\begin{tabular}{llcc}
					\toprule
					\textbf{Dataset} & \textbf{Algorithm} & \textbf{Error} & \textbf{Time} [secs]  \\
					\midrule
					\multirow{3}{*}{Garageband $\,\,$}  & C-RVFL & $0.40 \pm 0.02$ & $0.24 \pm 0.09$ \\
					& L-RVFL & $0.45  \pm 0.03$ & $0.13 \pm 0.03$ \\
					& \textbf{S-CONS-RVFL} & $\vect{0.40  \pm 0.02}$ & $\vect{0.15 \pm 0.04}$ \\
					\midrule
					\multirow{3}{*}{LMD}  & C-RVFL & $0.25 \pm 0.02$ & $0.70 \pm 0.17$ \\
					& L-RVFL & $0.31 \pm 0.03$ & $0.46 \pm 0.08$ \\
					& \textbf{S-CONS-RVFL} & $\vect{0.26 \pm 0.02}$ & $\vect{0.49 \pm 0.10}$ \\
					\midrule
					\multirow{3}{*}{Artist20}  & C-RVFL & $0.37 \pm 0.04$ & $0.13 \pm 0.07$ \\
					& L-RVFL & $0.47 \pm 0.04$ & $0.06 \pm 0.01$ \\
					& \textbf{S-CONS-RVFL} & $\vect{0.37 \pm 0.04}$ & $\vect{0.09 \pm 0.02}$ \\
					\midrule
					\multirow{3}{*}{YearPredictionMSD}  & C-RVFL & $0.27 \pm 0.01$ & $8.66 \pm 0.93$ \\
					& L-RVFL & $0.27 \pm 0.01$ & $2.35 \pm 0.48$ \\
					& \textbf{S-CONS-RVFL} & $\vect{0.27 \pm 0.01}$ & $\vect{2.46 \pm 0.62}$ \\
					\bottomrule
				\end{tabular}
			\end{footnotesize}
		}
		\hfill{}\vspace{0.6em}
		\label{chap5:tab:finalerrorandtime}
	\end{table}
\end{center}

\vspace{-2.5em} \noindent
Whenever we consider medium-sized datasets, the performance of L-RVFL is strictly worse than the performance of C-RVFL (similarly to the previous chapter), ranging from an additional $5\%$ misclassification error for Garageband and LMD, up to an additional $10\%$ for Artist20. The most important fact highlighted in Table \ref{chap5:tab:finalerrorandtime}, however, is that S-CONS-RVFL is able to efficiently match the performance of C-RVFL in all situations, except for a small decrease in the LMD dataset. From a computational perspective, this performance is achieved with a very small overhead in terms of training time with respect to L-RVFL in all cases (as evidenced by the fourth column in Table \ref{chap5:tab:finalerrorandtime}).

In a sequential setting, the evolution of the testing error after every batch is equally as important as the final accuracy obtained. We report it in Fig. \ref{chap5:fig:error}\subref{chap5:fig:garageband}-\subref{chap5:fig:msd} for the four datasets. 

\begin{figure*}[t]
	\centering
	\subfloat[Dataset Garageband]{\includegraphics[scale=0.75]{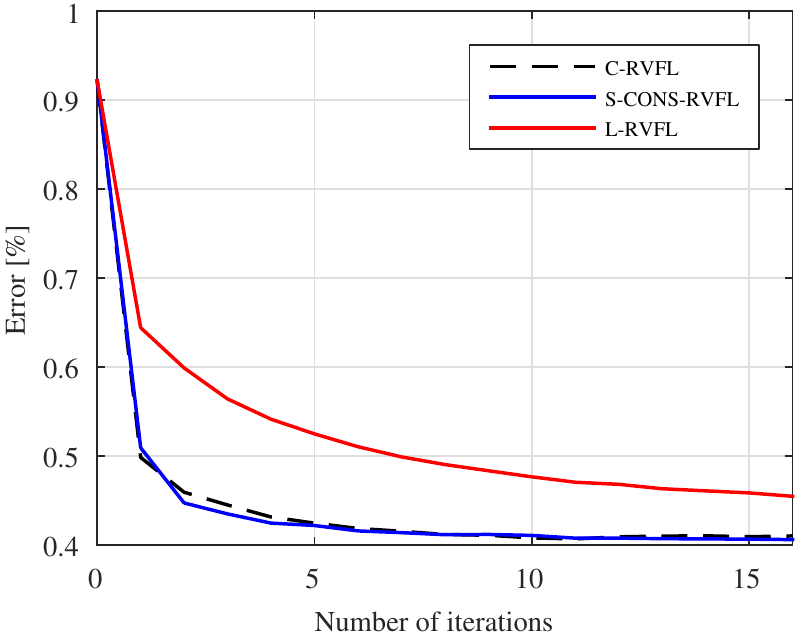}%
		\label{chap5:fig:garageband}}
	\hfil
	\subfloat[Dataset LMD]{\includegraphics[scale=0.75]{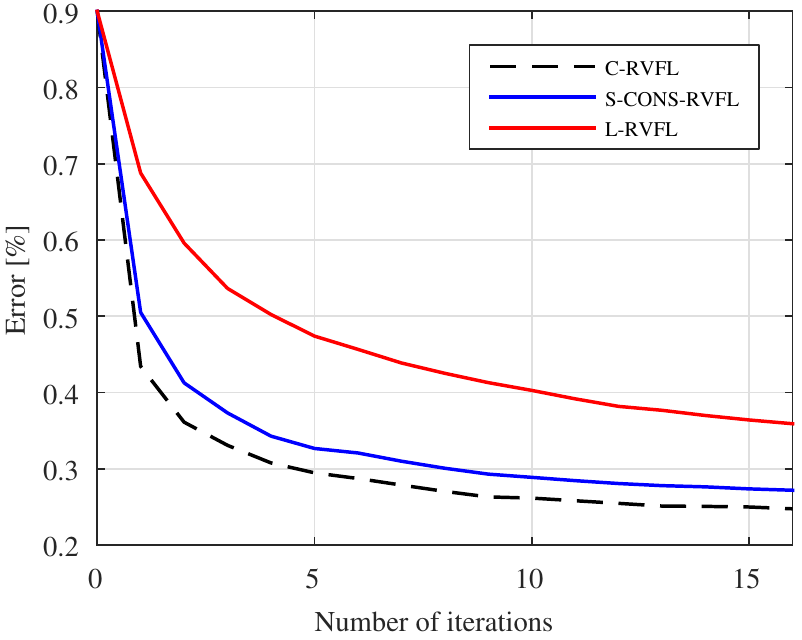}%
		\label{chap5:fig:lmd}}
	\vfill
	\subfloat[Dataset Artist20]{\includegraphics[scale=0.75]{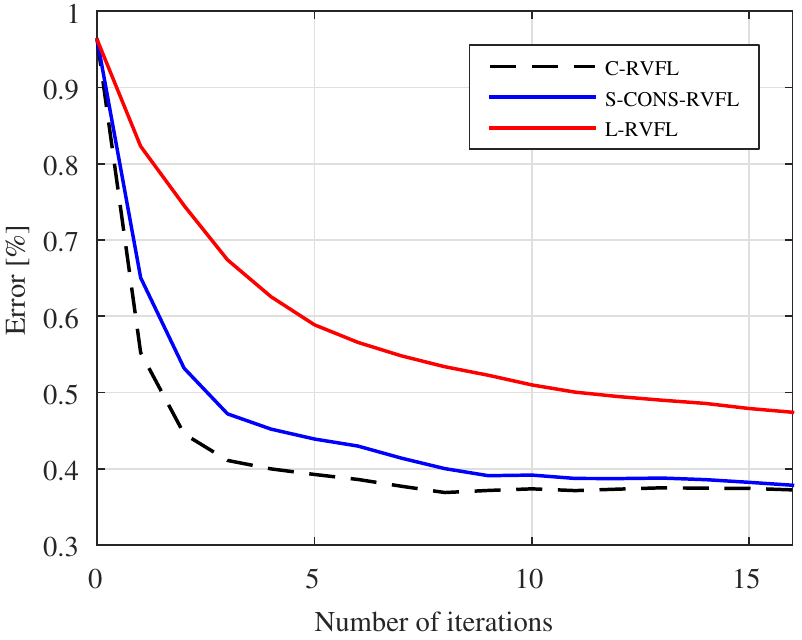}%
		\label{chap5:fig:artist20}}
	\hfil
	\subfloat[Dataset YearPredictionMSD]{\includegraphics[scale=0.75]{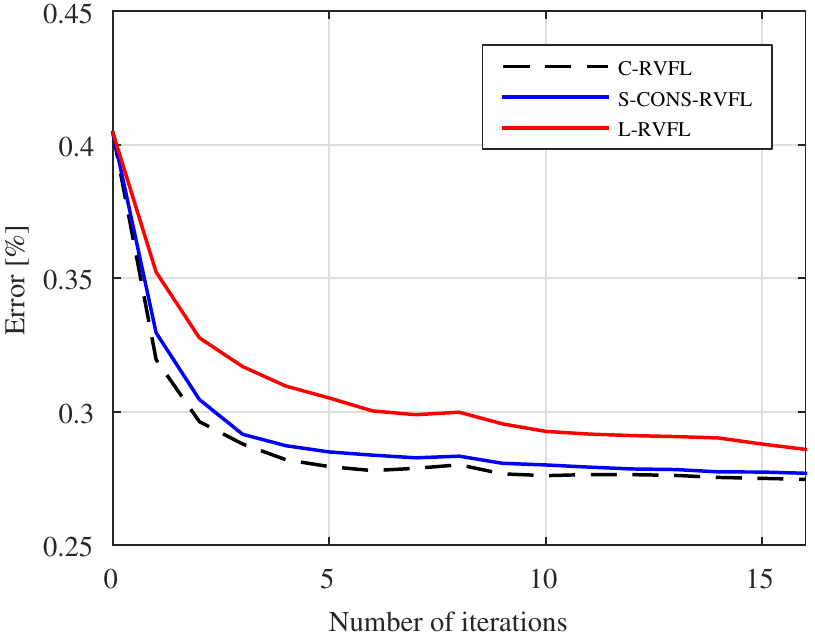}%
		\label{chap5:fig:msd}}
	\vfill
	\caption[Evolution of the testing error for the sequential S-CONS-RVFL after every iteration.]{Evolution of the testing error after every iteration. Performance of L-RVFL is averaged across the nodes.}
	\label{chap5:fig:error}
\end{figure*}

Performance of C-RVFL, L-RVFL and S-CONS-RVFL are shown with dashed black, solid red and solid blue lines respectively. Moreover, performance of L-RVFL is averaged across the nodes. Once again, we see that S-CONS-RVFL is able to track very efficiently the accuracy obtained by C-RVFL. The performance is practically equivalent in the Garageband and YearPredictionMSD datasets (Fig. \ref{chap5:fig:error}\subref{chap5:fig:garageband} and Fig. \ref{chap5:fig:error}\subref{chap5:fig:msd}), while convergence speed is slightly slower in the LMD and Artist20 case (Fig. \ref{chap5:fig:error}\subref{chap5:fig:lmd} and Fig. \ref{chap5:fig:error}\subref{chap5:fig:artist20}), although by a small amount. This gap depends on the fact that S-CONS-RVFL remains an approximation of C-RVFL. In particular, in the current version of S-CONS-RVFL no information is exchanged with respect to the state matrices $\vect{P}_k[n]$, which would be infeasible for large $B$ (see also the similar observation for the batch case in Section \ref{chap4:sec:dist_training_strategies_rvfl_networks}).

Next, we investigate the behavior of S-CONS-RVFL when varying the size of the network. In fact, due to its parallel nature, we expect that, the higher the number of nodes, the lower the training time (apart from communication bottlenecks, depending on the real channel of the network). The following experiments show that the increase in time required by the DAC procedure for bigger networks is more than compensated by the gain in time obtained by processing a lower number of samples per node. To this end, we consider the training time required by CONS-RVFL when varying the number of nodes of the network from $2$ to $14$ by steps of $2$, keeping the same topology model as before. Results of this experiment are presented in Fig. \ref{chap5:fig:training_time}\subref{chap5:fig:time-a} for datasets Garageband and Artist20, and in Fig. \ref{chap5:fig:training_time}\subref{chap5:fig:time-b} for datasets LMD and YearPredictionMSD. 

\begin{figure*}[t]
	\centering
	\subfloat[Datasets Garageband and Artist20]{\includegraphics[scale=0.75]{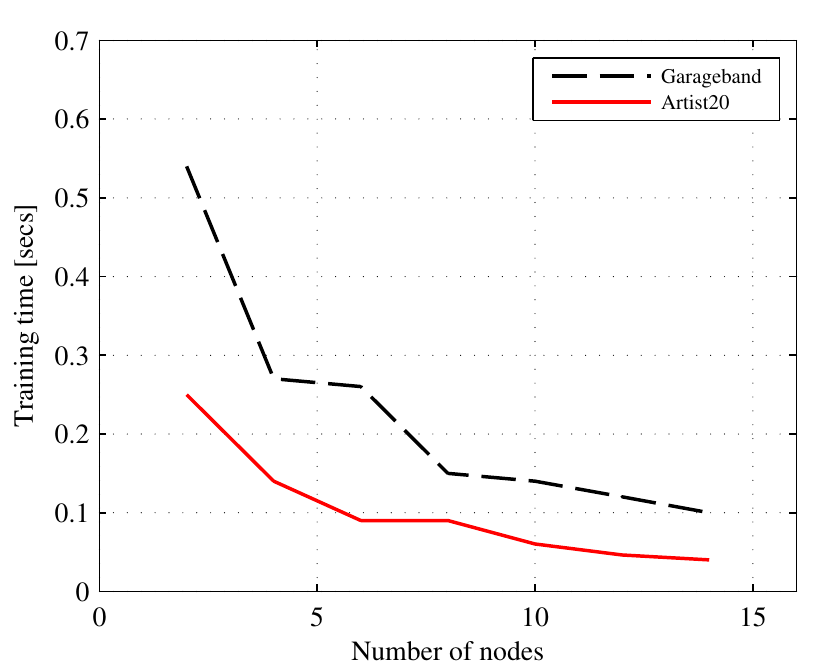}%
		\label{chap5:fig:time-a}}
	\hfil
	\subfloat[Dataset LMD and YearPredictionMSD]{\includegraphics[scale=0.75]{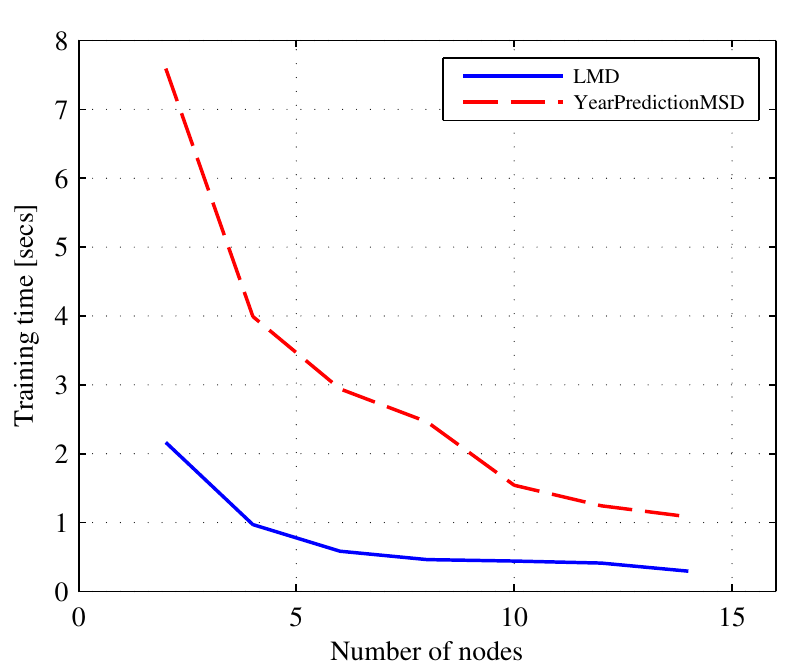}%
		\label{chap5:fig:time-b}}
	\vfill
	\caption{Training time required by the sequential S-CONS-RVFL, for varying sizes of the network, from $2$ to $14$ by steps of $2$.}
	\label{chap5:fig:training_time}
\end{figure*} 

The decrease in training time is extremely pronounced for Garageband, with a five-fold decrease going from $2$ to $14$ nodes, and for YearPredictionMSD, with a seven-fold decrease. This result is especially important, showing that S-CONS-RVFL can be efficiently used in large-scale situations. It is also consistent with the analysis of the batch CONS-RVFL in the previous chapter. Similarly, the number of consensus iterations needed to reach the desired accuracy is shown in Fig. \ref{chap5:fig:consensus_iterations}. 

\begin{figure}[t]
	\centering
	\includegraphics[scale=0.9]{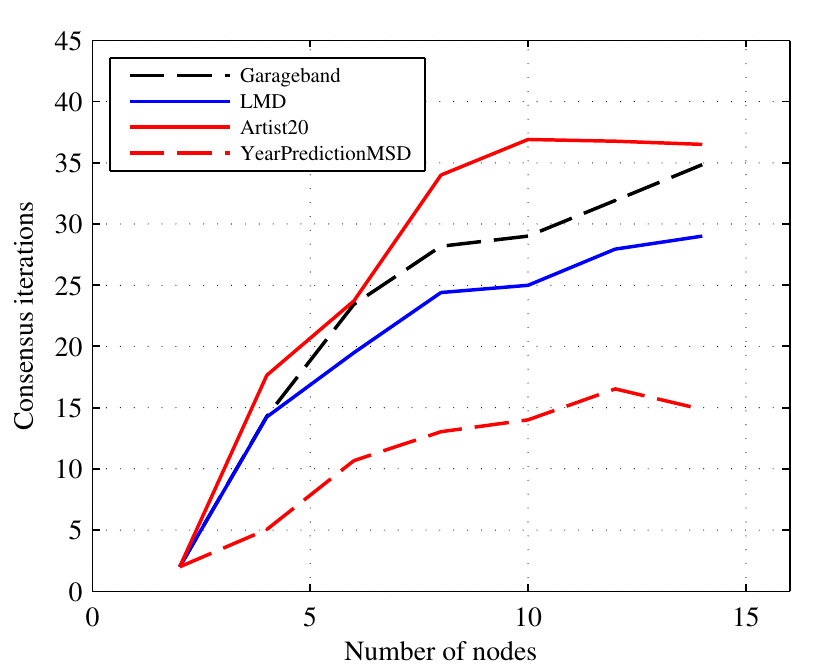}
	\caption{Number of consensus iterations required to reach convergence in the S-CONS-RVFL, when varying the number of nodes in the network from $2$ to $14$.}
	\label{chap5:fig:consensus_iterations}
\end{figure}

Although the required number of iterations grows approximately linearly with respect to the size of the network, a low number of iterations is generally enough to reach convergence to a very good accuracy. In fact, no experiment in this section required more than $35$ iterations in total. Additionally, the consensus procedure is extremely robust to a change in the network topology, as shown in the previous chapter. The same considerations apply here.

\section{Comparison of DAC strategies}
\label{chap5:sec:dac_comparison}

Up to this point, we have considered only the `max-degree' strategy for the DAC protocol, detailed in Appendix \ref{app:graph_theory}. However, it is known that the performance of the DAC protocol, and by consequence the performance of any distributed training algorithm based on its application, can improve significantly with proper choices of the mixing parameters \citep{xiao2004fast}. At the same time, a thorough investigation of multiple strategies for choosing the weights is missing in the literature. In this section, we compare four of them in the context of S-CONS-RVFL, ranging from choosing a fixed value for every coefficient, to more complex choices satisfying strong optimality conditions. Our experimental results show that the performance of the DAC protocol, and by consequence the performance of any distributed training algorithm based on its application, can improve significantly with proper choices of the mixing parameters.

\subsection{Description of the strategies}

Different strategies for the DAC protocol corresponds to different choices of the weights matrix. Clearly, the choice of a particular weight matrix depends on the available information at every node about the network topology, and on their specific computational requirements. Apart from the max-degree strategy, we consider three additional strategies, which are briefly detailed next.

\subsubsection{Metropolis-Hastings} The Metropolis-Hastings weights matrix is defined as:
\begin{equation}
C_{ij}=
\begin{cases} 
	\frac{1}{\max\lbrace d_i,d_j\rbrace +1} &  j \in \mathcal{N}_i \\ 
	1 - \sum_{j\in\mathcal{N}_i} \frac{1}{\max\lbrace d_i,d_j\rbrace+1} & i=j \\ 
	0 & \text{otherwise} 
\end{cases} \;.
\end{equation}
\noindent Differently from the max-degree strategy, the Metropolis-Hastings strategy does not require the knowledge of global information (the maximum degree) about the network topology, but requires that each node knows the degrees of all its neighbors.

\subsubsection{Minimum Asymptotic} The third matrix strategy considered here corresponds to the optimal strategy introduced in \cite{xiao2004fast}, wherein the weights matrix is constructed to minimize the asymptotic convergence factor $\rho(\vect{C} - \frac{\vect{1}\vect{1}^{\mathrm{T}}}{L})$, where $\rho(\cdot)$ denotes the \mbox{spectral} radius operator. This is achieved by solving the constrained optimization \mbox{problem}:
\begin{equation}
	\begin{aligned}
		& \mathrm{minimize}\quad\rho(\vect{C}- \frac{\vect{1}\vect{1}^{\mathrm{T}}}{L}) \\ 
		& \mathrm{subject \ to}\quad \vect{C}\in \mathcal{C}, \quad  \vect{1}^{\mathrm{T}}\vect{C}=\vect{1}^{\mathrm{T}},\quad \vect{C}\vect{1}=\vect{1}
	\end{aligned}
\label{eq:min_asymptotic}
\end{equation}
\noindent where $\mathcal{C}$ is the set of possible weight matrices. Problem \eqref{eq:min_asymptotic} is non-convex, but it can be shown to be equivalent to a semidefinite programming (SDP) problem \cite{xiao2004fast}, solvable using efficient ad-hoc algorithms.

\subsubsection{Laplacian Heuristic} The fourth and last matrix considered here is an heuristic approach based on constant edge weights matrix \cite{xiao2004fast}:
\begin{equation}
\vect{C} = \vect{I} - \alpha \vect{L} \,,
\label{eq:const_edge}
\end{equation}
where $\alpha \in \R$ is a user-defined parameter, and $\vect{L}$ is the Laplacian matrix associated to the network (see Appendix \ref{app:graph_theory}). For weights matrices in the form of \eqref{eq:const_edge}, the asymptotic convergence factor satisfies:
\begin{equation}
	\begin{aligned}
		\rho(\vect{C} - \frac{\vect{1}\vect{1}^{\mathrm{T}}}{L}) & = \max\lbrace\lambda_2(\vect{C}),-		\lambda_n(\vect{C})\rbrace \\
		& = \max\lbrace 1-\alpha\lambda_{n-1}(\vect{L}),\alpha\lambda_1(\vect{L})-1\rbrace \,,
	\end{aligned}
\label{eq:spec_radius}
\end{equation}
where $\lambda_i(\vect{C})$ denotes the $i$-th eigenvalue associated to $\vect{C}$. The value of $\alpha$ that minimizes \eqref{eq:spec_radius} is given by:
\begin{equation}
\alpha^*=\frac{2}{\lambda_1(\vect{L})+\lambda_{L-1}(\vect{L})} \,.
\end{equation}

\subsection{Experimental Results}

We compare the performance of the $4$ different strategies illustrated in the previous section in terms of number of iterations required to converge to the average and speed of convergence. In order to avoid that a particular network topology compromises the statistical significance of the experiments, we perform $25$ rounds of simulation. In each round, we generate a random topology for an $8$-nodes network, according to the Erd\H{o}s-R\'enyi model with $p=0.5$. We consider $2$  datasets: G50C (detailed in Section \ref{chap4:sec:description_datasets}); and CCPP, a regression dataset with $4$ features and $9568$ examples taken from the UCI repository.\footnote{\url{https://archive.ics.uci.edu/ml/datasets/Combined+Cycle+Power+Plant}} At each round, datasets are subdivided in batches following the procedure detailed in the previous experimental section. Since in real applications the value of the average is not available to the nodes, in order to evaluate the number of iterations, we consider that all the nodes reached consensus when $\Vert\boldbeta_k(t)-\boldbeta_k(t-1)\Vert^2\leq 10^{-6}$ for any possible value of $k$. In Fig. \ref{chap5:fig:iterations} we show the average number of iterations required by the DAC protocol, averaged over the rounds. 

\begin{figure*}[h]
\centering
\subfloat[Dataset: G50C]{\includegraphics[scale=0.8]{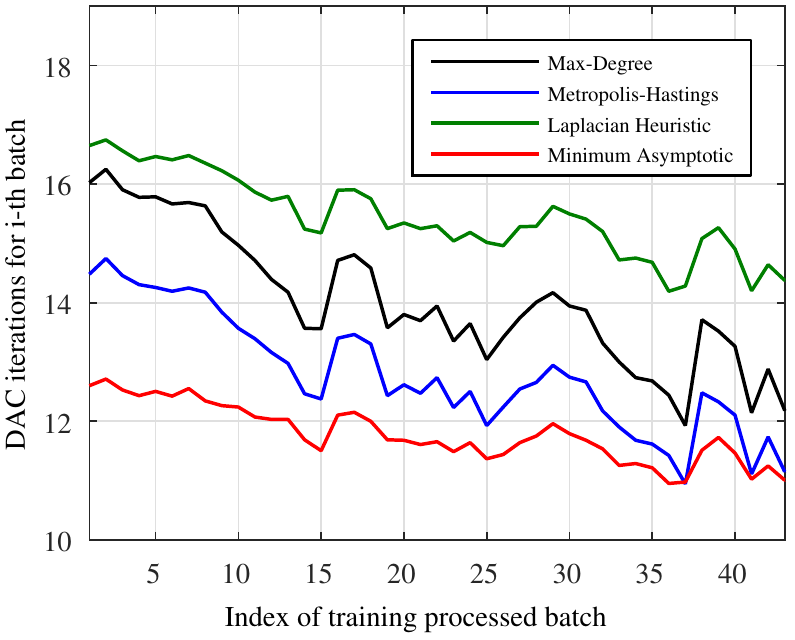} \label{chap5:fig:g50c_iter}}
\hfill
\subfloat[Dataset: CCPP]{\includegraphics[scale=0.8]{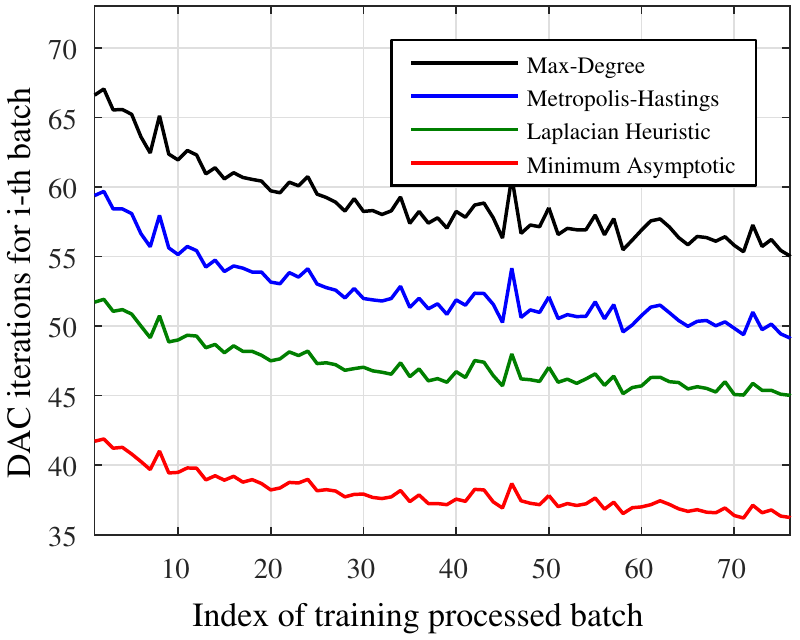} \label{chap5:fig:CCPP_iter}}
\vfill
\caption[Evolution of the DAC iterations required by four different strategies, when processing successive amounts of training batches.]{Evolution of the DAC iterations required by the considered strategies to converge to the average, when processing successive amounts of training batches.}
\label{chap5:fig:iterations}
\end{figure*}

The $x$-axis in Fig. \ref{chap5:fig:iterations} shows the index of the processed batch. As expected, the number of DAC iterations shows a decreasing trend as the number of processed training batches grows, since the nodes are slowly converging to a single RVFL model. The main result in Fig. \ref{chap5:fig:iterations}, however, is that a suitable choice of the mixing strategy can significantly improve the convergence time (and hence the training time) required by the algorithm. In particular, the optimal strategy defined by Eq. \eqref{eq:min_asymptotic} achieves the best performance, with a reduction of the required number of iterations up to $35\%$ and $28\%$ when compared with max-degree and Metropolis-Hasting strategies respectively. On the other side, the strategy based on constant edge matrix in Eq. \eqref{eq:const_edge} shows different behaviors for the $2$ datasets, probably due to the heuristic nature of this strategy.

The second experiment, whose results are shown in Fig. \ref{chap5:fig:disagreement}, is to show the speed of convergence for the considered strategies. This is made by evaluating the trend of the relative network disagreement:

\begin{equation}
RND(t)=\frac{1}{N}\sum_{i=1}^N\frac{\Vert\boldbeta_i(t)-\hat{\boldbeta}\Vert^2}{\Vert\boldbeta_i(0)-\hat{\boldbeta}\Vert^2} \,,
\label{eq:network_disagreement}
\end{equation}

as the number of DAC iterations increases. 
\begin{figure*}[h]
\centering
\subfloat[Dataset: G50C]{\includegraphics[scale=0.8]{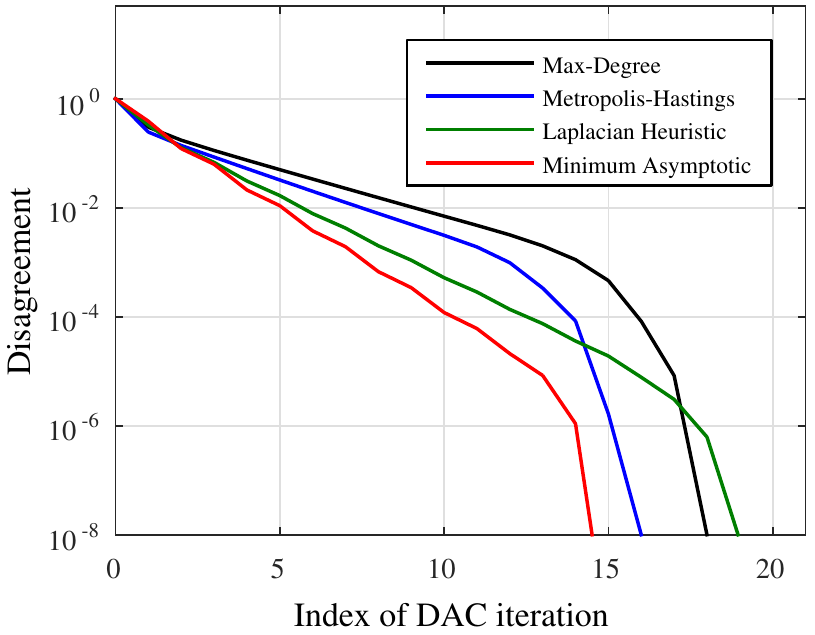} \label{chap5:fig:g50c_disagr}}
\hfill
\subfloat[Dataset: CCPP]{\includegraphics[scale=0.8]{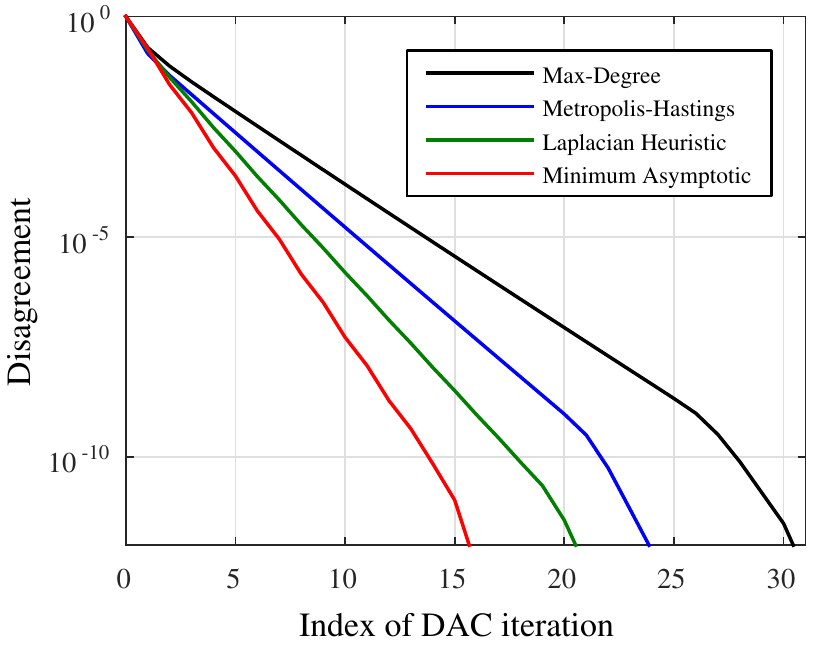} \label{chap5:fig:CCPP_disagr}}
\vfill
\caption[Evolution of the relative network disagreement for four different DAC strategies as the number of DAC iterations increases.]{Evolution of the relative network disagreement for the considered strategies as the number of DAC iterations increases. The $y$-axis is shown with a logarithmic scale.}
\label{chap5:fig:disagreement}
\end{figure*}
The value of $\hat{\boldbeta}$ in Eq. \eqref{eq:network_disagreement} is the true average. The $y$-axis in Fig. \ref{chap5:fig:disagreement} is shown with a logarithmic scale. Results show that the ``optimal'' strategy has the fastest speed of convergence, as expected, while it is interesting to notice how, when compared to max-degree and Metropolis-Hastings weights, the heuristic strategy achieves a rapid decrease in disagreement in the initial iterations, while its speed become slower in the end (this is noticeable in Fig. \ref{chap5:fig:g50c_disagr}). This may help to explain the lower performance of this strategy in Fig. \ref{chap5:fig:g50c_iter}.

Overall, this set of experimental results show how an appropriate choice of the weights matrix can lead to considerable improvements both in the number of iterations required by the protocol to converge to the average, and in the speed of convergence. In particular, when compared to other strategies, an ``optimal'' choice of the weights matrix can save up to $30\%$ in time.
\chapter[Distributed RVFL Networks with Vertically Partitioned Data]{Distributed RVFL Networks with \\ Vertically Partitioned Data}
\chaptermark{Vertically-partitioned RVFL Networks}
\label{chap:dist_rvfl_vertical_partitioning}

\minitoc
\vspace{15pt}

\blfootnote{The content of this chapter is adapted from the material published in \citep{scardapane2015hetfeatures}.}

\lettrine[lines=2]{T}{his} chapter presents an extension of the ADMM-RVFL algorithm presented in Section \ref{sec:admmrvfl} to the case of vertically partitioned (VP) data. In the VP scenario, the features of every pattern are partitioned over the nodes. A prototypical example of this is found in the field of distributed databases \cite{lazarevic2002boosting}, where several organizations possess only a partial view on the overall dataset (e.g., global health records distributed over multiple medical databases). In the centralized case, this is also known as the problem of learning from heterogeneous sources, and it is typically solved with the use of ensemble procedures \cite{li2012flame}. However, as we show in our experimental results, in the VP setting naive ensembles over a network tend to achieve highly sub-optimal results, with respect to a fully centralized solution.

\section{Derivation of the algorithm}
\label{chap6:sec:description_datasets}

We suppose that the $k$th agent has access to a subset $\vect{x}_k$ of features, such that:
\[
\vect{x} = \left[ \vect{x}_1 \vline \ldots \vline \vect{x}_L \right]
\]
The main problem for the distributed training of an RVFL network in this setting is that the computation of any functional link in Eq. \eqref{eq:flnn_model} requires knowledge of the full sample. However, as we stated in the previous chapters, we would like to avoid exchange of data patterns, due to both size and privacy concerns. To this end, we approximate model in Eq. \eqref{eq:flnn_model} by considering local expansion blocks:
\begin{equation}
f(\vect{x}) = \sum_{k=1}^L \left( \boldsymbol{\beta}_k \right)^T \vect{h}_k(\vect{x}_k) \,.
\label{eq:rvfl_model_vp_distributed}
\end{equation} 
In this way, each term $\vect{h}_k(\vect{x}_k)$ can be computed locally. Input vectors and expansion blocks may have different lengths at every node, depending on the application and on the local computational requirements. This is shown pictorially in Fig. \ref{chap6:fig:Schema}.

\begin{figure}[t]
\centering
\includegraphics[width=0.65\columnwidth, keepaspectratio]{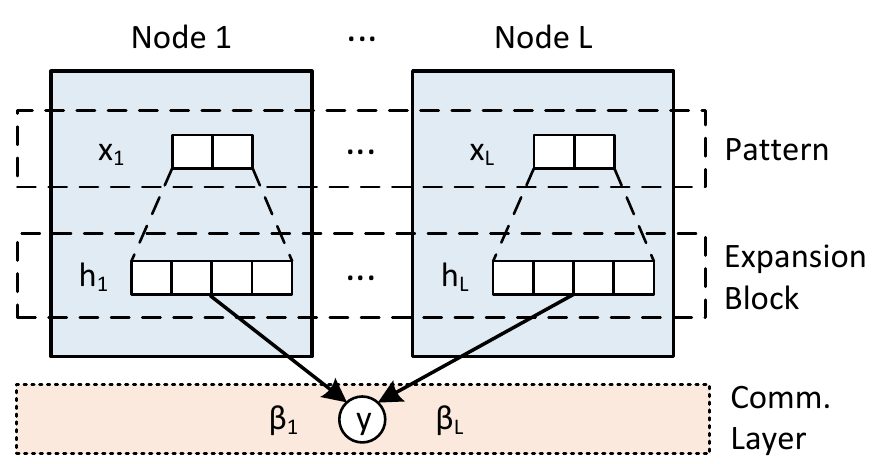}
\caption[Schematic description of the proposed algorithm for training an RVFL with vertically partitioned data.]{Schematic description of the proposed algorithm. Each node has access to a subset of the global pattern. This local feature vector is projected to a local expansion block, and the overall output is computed by a linear combination of the local expansions, through a suitable communication layer.}
\label{chap6:fig:Schema}
\end{figure}
 
The overall optimization problem becomes:
\begin{equation}
\argmin_{\boldsymbol{\beta}}  \frac{1}{2}\norm{\sum_{k=1}^L \vect{H}_k\boldsymbol{\beta}_k - \vect{y}}^2 + \frac{\lambda}{2}\sum_{k=1}^L \norm{\boldsymbol{\beta}_k}^2 \,,
\label{eq:rvfl_dist_opt}
\end{equation}
where $\vect{H}_k$ denotes the hidden matrix computed at the $k$-th node, such that $\vect{H} = \left[ \vect{H}_1 \vline \ldots \vline \vect{H}_L \right]$. The ADMM optimization algorithm can be adapted to this setting, as shown in \cite[Section 8.3]{boyd2011distributed}. To this end, we consider the equivalent optimization problem:
\begin{equation}
\begin{aligned}
&\underset{\boldbeta}{\text{minimize}}
& & \frac{1}{2} \norm{\sum_{k=1}^L \vect{z}_i - \vect{y}}^2 + \frac{\lambda}{2} \sum_{k=1}^L \norm{\boldbeta_k}^2 \\
&\;\;\;\;\text{subject to}
& & \vect{H}_k \boldbeta_k - \vect{z}_k = \vect{0}, \; k = 1\ldots L \,.
\end{aligned}
\label{eq:vp_rvfl_reformulation}
\end{equation}
where we introduced local variables $\vect{z}_k = \vect{H}_k \boldbeta_k$. The augmented Lagrangian of this problem is given by:
\begin{align}
\mathcal{L}(\boldbeta_k, \vect{z}_k, \vect{t}_k) & = \frac{1}{2}\norm{\sum_{k=1}^L \vect{z}_k - \vect{y}}^2 + \frac{\lambda}{2}\sum_{k=1}^L \norm{\boldsymbol{\beta}_k}^2 + \nonumber \\
			& + \sum_{k=1}^L \vect{t}_k^T \left( \vect{H}_k \boldbeta_k - \vect{z}_i \right) + \frac{\rho}{2} \sum_{k=1}^L \norm{\vect{H}_k \boldbeta_k - \vect{z}_i}^2 \,,
\label{eq:augm_lagrangian}
\end{align}
where $\vect{t}_k$ are the Lagrange multipliers, $\rho \in \R^+$ is a regularization factor, and the last term is added to ensure convergence. The solution to problem \eqref{eq:rvfl_dist_opt} can computed by iterating the updates in Eqs. \eqref{eq:admm_step1}-\eqref{eq:admm_step3}. Following the derivation in \cite[Section 8.3]{boyd2011distributed}, and computing the gradient terms, the final updates can be expressed as:
\begin{eqnarray}
\boldbeta_k[n+1] & = & \left( \frac{\lambda}{\rho} \vect{I} + \vect{H}_k^T \vect{H}_k \right)^{-1}\vect{H}_k^T \left( \vect{H}_k\boldbeta_k[n] + \overline{\vect{z}}[n] - \overline{\vect{H}\boldbeta}[n] - \vect{t}[n] \right) \,,\label{eq:admm_1_final}\\
\overline{\vect{z}}[n+1] & = & \frac{1}{L + \rho} \left( \vect{y} + \overline{\vect{H}\boldbeta}[n+1] + \vect{t}[n] \right) \,,\label{eq:admm_2_final}\\
\vect{t}[n+1] & = & \vect{t}[n] + \overline{\vect{H}\boldbeta}[n+1] - \overline{\vect{z}}[n+1] \,,\label{eq:admm_3_final}
\end{eqnarray}
where we defined the averages $\overline{\vect{H}\boldbeta}[n] = \frac{1}{L}\sum_{k=1}^L \vect{H}_k\boldbeta_k[n]$, and $\overline{\vect{z}}[n] = \sum_{k=1}^L \vect{z}_k[n]$. Additionally, the variables $\vect{t}_k$ can be shown to be equal between every node \cite{boyd2011distributed}, so we removed the subscript. Convergence of the algorithm can be tracked locally by computing the residual:
\begin{equation}
\vect{r}_k[n] = \vect{H}_k^T\boldbeta_k[n] - \vect{z}_k[n] \,.
\label{eq:residual}
\end{equation}
It can be shown that, for the iterations defined by Eqs. \eqref{eq:admm_1_final}-\eqref{eq:admm_3_final}, $\norm{\vect{r}_k[n]} \rightarrow 0$ as $n\rightarrow +\infty$, with the solution converging asymptotically to the solution of problem in Eq. \eqref{eq:rvfl_dist_opt}. The overall algorithm, denoted as VP-ADMM-RVFL, is summarized in Algorithm \ref{alg:admm_rvfl_vertical_partitioning}.

\begin{AlgorithmCustomWidth}[h]
    \caption{VP-ADMM-RVFL: Extension of ADMM-RVFL to vertically partitioned data ($k$th node).}
    \label{alg:admm_rvfl_vertical_partitioning}
  \begin{algorithmic}[1]
    \Require{Training set $S_k$, number of nodes $L$ (global), regularization factors $\lambda, \gamma$ (global), maximum number of iterations $T$ (global)}
    \Ensure{Optimal weight vector $\boldbeta^*$}
    \State Select parameters $\vect{w}_1, \dots, \vect{w}_B$, in agreement with the other $L-1$ nodes.
    \State Compute $\vect{H}_k$ and $\vect{y}_k$ from $S_k$.
    \State Initialize $\vect{t}[0] = \vect{0}$, $\overline{\vect{z}}[0] = \vect{0}$.
    \For{$n$ from $0$ to $T$}
    \State Compute $\boldbeta_k[n+1]$ according to Eq. \eqref{eq:admm_1_final}.
    \State Compute averages $\overline{\vect{H}\boldbeta}[n]$ and $\overline{\vect{z}}[n]$ with DAC.
    \State Compute $\overline{\vect{z}}[n+1]$ according to Eq. \eqref{eq:admm_2_final}.
    \State Update $\vect{t}[n]$ according to Eq. \eqref{eq:admm_3_final}.
    \State Check termination with residuals.
    \EndFor
    \State \textbf{return} $\overline{\vect{z}}[n+1]$
  \end{algorithmic}
\end{AlgorithmCustomWidth}

After training, every node has access to its own local mapping $\vect{h}_k(\cdot)$, and to its subset of coefficients $\boldbeta_k$. Differently from the horizontally partitioned (HP) scenario, when the agents require a new prediction, the overall output defined by Eq. \eqref{eq:rvfl_model_vp_distributed} has to be computed in a decentralized fashion. Once again, this part will depend on the actual communication layer available to the agents. As an example, it is possible to run the DAC protocol over the values $\left( \boldsymbol{\beta}_k \right)^T \vect{h}_k(\vect{x}_k)$, such that every node obtain a suitable approximation of $\frac{1}{L}f(\vect{x})$. For smaller networks, it is possible to compute an Hamiltonian cycle between the nodes \cite{predd2006distributed}. Once the cycle is known to the agents, they can compute Eq. \eqref{eq:rvfl_model_vp_distributed} by forward propagating the partial sums up to the final node of the cycle, and then back-propagating the result. Clearly, many other choices are possible, depending on the network.

\section{Experimental setup}

In this section, we present an experimental validation of the proposed algorithm on three classification tasks: Garageband, G50C and Sylva (detailed in Sections \ref{chap4:sec:description_datasets} and \ref{chap5:sec:exp-setup}). Optimal parameters for the RVFL network are taken from the corresponding sections. In our first set of experiments, we consider networks of $8$ agents, whose connectivity is randomly generated such that every pair of nodes has a $60\%$ probability of being connected, with the only global requirement that the overall network is connected. The input features are equally partitioned through the nodes, i.e., every node has access to roughly $d/8$ features, where $d$ is the dimensionality of the dataset. We compare the following algorithms:
\begin{description}
\item[\textbf{Centralized RVFL}] (C-RVFL): this corresponds to the case where a fusion center is available, collecting all local datasets and solving directly Eq. \eqref{eq:rvfl_opt}. Settings for this model are the optimal ones.
\item[\textbf{Local RVFL}] (L-RVFL): this is a naive implementation, where each node trains a local model with its own dataset, and no communication is performed. Accuracy of the models is then averaged throughout the $L$ nodes. As a general settings, we employ the same regularization coefficient for every node as C-RVFL, and $B_k = \lceil B/8 \rceil$ expansions in every agent.
\item[\textbf{Ensemble RVFL}] (ENS-RVFL): this corresponds to a basic distributed ensemble. As for L-RVFL, during the training phase every node trains a local model with its own dataset. In the testing phase, the nodes agree on a single class prediction by taking a majority vote over their local predictions. Parameters are the same as for L-RVFL.
\item[\textbf{Distributed RVFL}] (VP-ADMM-RVFL): this is trained using the distributed protocol introduced in the previous section. Settings are the same as L-RVFL, while for the ADMM we set $\rho = 0.1$ and a maximum number of $200$ iterations.
\end{description}
To compute the misclassification rate, we perform a $3$-fold cross-validation on the overall dataset, and repeat the procedure $15$ times.

\section{Results and discussion}
Results of the experiments are presented in Table \ref{chap6:tab:results}. 

\begin{center}
\begin{table*}[h]
\caption[Misclassification error and training time for VP-ADMM-RVFL.]{Misclassification error and training time for the four algorithms. Results are averaged over the $8$ different nodes of the network. Standard deviation is provided between brackets.}
{\hfill{}
\renewcommand{\arraystretch}{1.3}
\begin{footnotesize}
\begin{tabular}{llcc}  %@{}p{0.20\columnwidth}p{0.15\columnwidth}p{0.15\columnwidth}p{0.7\columnwidth}p{0.45\columnwidth}@{}
	\toprule
	\textbf{Dataset} & \textbf{Algorithm} & \textbf{Misclassification error [\%]} & \textbf{Training time [secs.]} \\
	\midrule
	\multirow{4}{*}{Garageband} & C-RVFL & $41.32$ $(\pm 1.24)$ & $0.03$ $(\pm 0.02)$ \\
								& L-RVFL & $82.79$ $(\pm 3.82)$ & $0.01$ $(\pm 0.01)$ \\
								& ENS-RVFL & $61.01$ $(\pm 1.97)$ & $0.01$ $(\pm 0.01)$ \\
								& VP-ADMM-RVFL & $41.34$ $(\pm 1.34)$ & $2.35$ $(\pm 0.58)$ \\
	\midrule
	\multirow{4}{*}{Sylva} & C-RVFL & $1.18$ $(\pm 0.13)$ & $0.44$ $(\pm 0.06)$ \\
						   & L-RVFL & $49.80$ $(\pm 36.35)$ & $0.05$ $(\pm 0.02)$ \\
						   & ENS-RVFL & $6.04$ $(\pm 0.12)$ & $0.06$ $(\pm 0.02)$ \\
						   & VP-ADMM-RVFL & $1.22$ $(\pm 0.15)$ & $1.94$ $(\pm 0.40)$ \\
	\midrule
	\multirow{4}{*}{G50C} & C-RVFL & $5.80$ $(\pm 1.19)$ & $0.05$ $(\pm 0.02)$ \\
						  & L-RVFL & $49.51$ $(\pm 6.37)$ & $0.01$ $(\pm 0.01)$ \\
						  & ENS-RVFL & $10.98$ $(\pm 2.32)$ & $0.01$ $(\pm 0.01)$ \\
						  & VP-ADMM-RVFL & $5.80$ $(\pm 1.37)$ &  $0.38$ $(\pm 0.16)$ \\
	\bottomrule
\end{tabular}
\end{footnotesize}}
\hfill{}\vspace{0.6em}
\label{chap6:tab:results}
\end{table*}
\end{center}

\vspace{-2.5em}\noindent
It can be seen that, despite we approximate the global expansion block of C-RVFL using $L$ distinct local expansions, this has minimal or no impact on the global solution. In fact, VP-ADMM-RVFL is able to achieve performance comparable to C-RVFL in all three datasets, while the ensemble approach is performing relatively poorly: it has a $20\%$, $5\%$ and $5\%$ increase in error respectively in each dataset. This shows that the relatively common approach of averaging over the local models may be highly sub-optimal in practical situations. 

As a reference, in Table \ref{chap6:tab:results} we also provide the average training time spent at every node. However, we note that in our experiments the network was simulated in a serial architecture, removing all communication costs. Clearly, a practical analysis of this point would require knowledge of the communication layer, which goes beyond the scope of the thesis. Still, we can see from the fourth column of Table \ref{chap6:tab:results} that the proposed algorithm requires an acceptable computational time for performing the $200$ iterations, since the matrix inversions in Eq. \eqref{eq:admm_1_final} can be pre-computed at the beginning of the training process. Additionally, we add that the training time of VP-ADMM-RVFL can be greatly reduced in practice by the implementation of an efficient stopping criterion as in the previous chapter.

Finally, we show the evolution of the misclassification error for VP-ADMM-RVFL and ENS-RVFL when varying the size of the network from $L=4$ to $L=12$. Results of this experiment are given in Fig. \ref{chap6:fig:error} \subref{chap6:fig:garagebanderror}-\subref{chap6:fig:g50cerror}. Settings are kept fixed with respect to the previous experiment, while the features are equally partitioned as before (hence, for smaller networks each node has access to a larger subset of features). Performance of C-RVFL is given as a comparison with a dashed black line. As expected, we see that, although the behavior of ENS-RVFL strongly depends on the number of nodes in the network, VP-ADMM-RVFL is resilient to such change, always approximating very well the centralized performance. It is also interesting to note that the behavior of ENS-RVFL is not always monotonically increasing, as is shown in Fig. \ref{chap6:fig:error}-\subref{chap6:fig:g50cerror}, possibly due to its ensembling characteristics and to the artificial nature of the G50C dataset.

\begin{figure*}[h]
\centering
\subfloat[Dataset Garageband]{\includegraphics[scale=0.8]{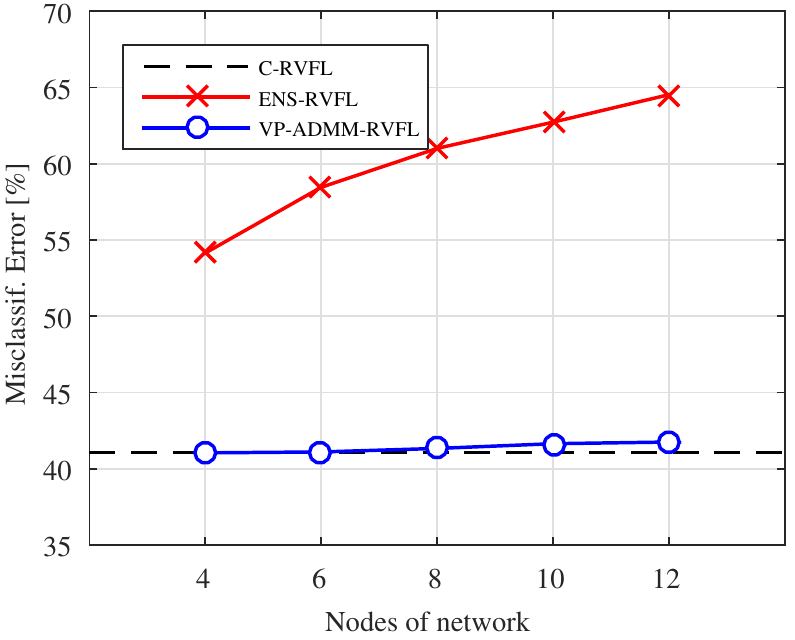}%
\label{chap6:fig:garagebanderror}}
\hfil
\subfloat[Dataset Sylva]{\includegraphics[scale=0.8]{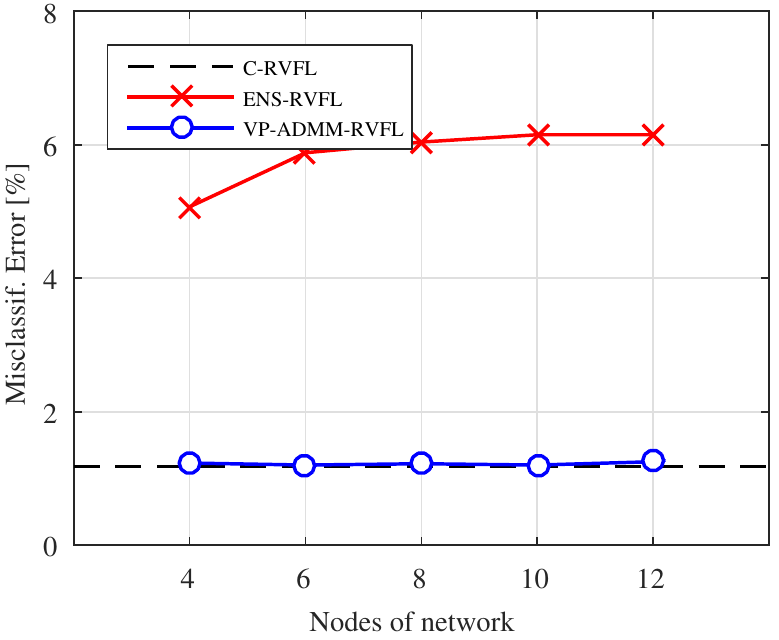}%
\label{chap6:fig:sylvaerror}}
\vfill
\subfloat[Dataset G50C]{\includegraphics[scale=0.8]{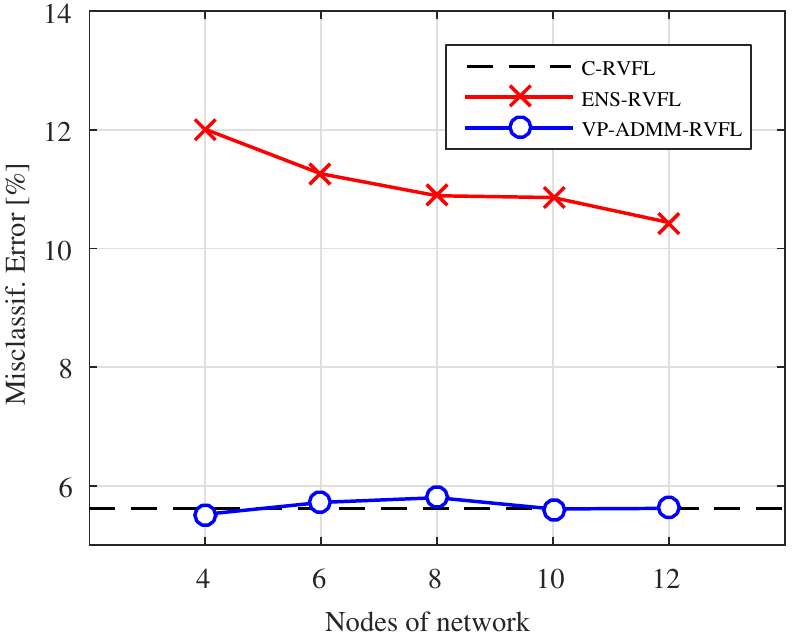}%
\label{chap6:fig:g50cerror}}
\hfil
\caption{Evolution of the error for VP-ADMM-RVFL and ENS-RVFL when varying the size of the network from $L=4$ to $L=12$.}
\label{chap6:fig:error}
\end{figure*}

\part{Distributed Semi-Supervised Learning}
\chapter[Decentralized Semi-supervised Learning via Privacy-Preserving Matrix Completion]{Decentralized Semi-supervised Learning \\ via Privacy-Preserving Matrix Completion}
\chaptermark{Decentralized SSL via Matrix Completion}
\label{chap:dist_ssl}

\minitoc
\vspace{15pt}

\blfootnote{The content of this chapter has been (conditionally) accepted for publication at IEEE Transactions on Neural Networks and Learning Systems.}

\newcommand{\boldalpha}{\ensuremath{\boldsymbol{\alpha}}}

\section{Introduction}

\lettrine[lines=2]{A}{s} we saw in the previous chapters, many centralized SL algorithms have been extended successfully to the distributed setting. However, many crucial sub-areas of machine learning remain to be extended to the fully distributed scenario. Among these, the DL setting could benefit strongly from the availability of distributed protocols for semi-supervised learning (SSL) \citep{Chapelle2006}. In SSL, it is assumed that the labeled training data is supplemented by some additional unlabeled data, which has to be suitably exploited in order to improve the test accuracy. State-of-the-art research on SSL is concerned on the single-agent (centralized) case, e.g. with the use of manifold regularization (MR) \cite{belkin2006manifold,melacci2011laplacian}, transductive learning \cite{chapelle2008optimization}, and several others. To the best of our knowledge, the case of SSL over multiple agents has been addressed only in very specific settings, such as localization over WSNs \cite{chen2011semi}, while no algorithm is available for the general case. However, we argue that such an algorithm would be well suited for a wide range of applications. As an example, consider the case of medical diagnosis, with labeled and unlabeled data distributed over multiple clinical databases. Other examples include distributed text classification over peer-to-peer networks, distributed music classification (which we considered in Chapter \ref{chap:dist_rvfl_sequential}), and so on. In all of them, labeled data at every agent is costly to obtain, while unlabeled data is plentiful. The overall setting is summarized in Fig. \ref{chap7:fig:setting}, where each agent in a network receives two training datasets, one composed of labeled patterns and one composed of unlabeled patterns.

\begin{figure}[h]
\centering
\includegraphics[width=0.7\columnwidth]{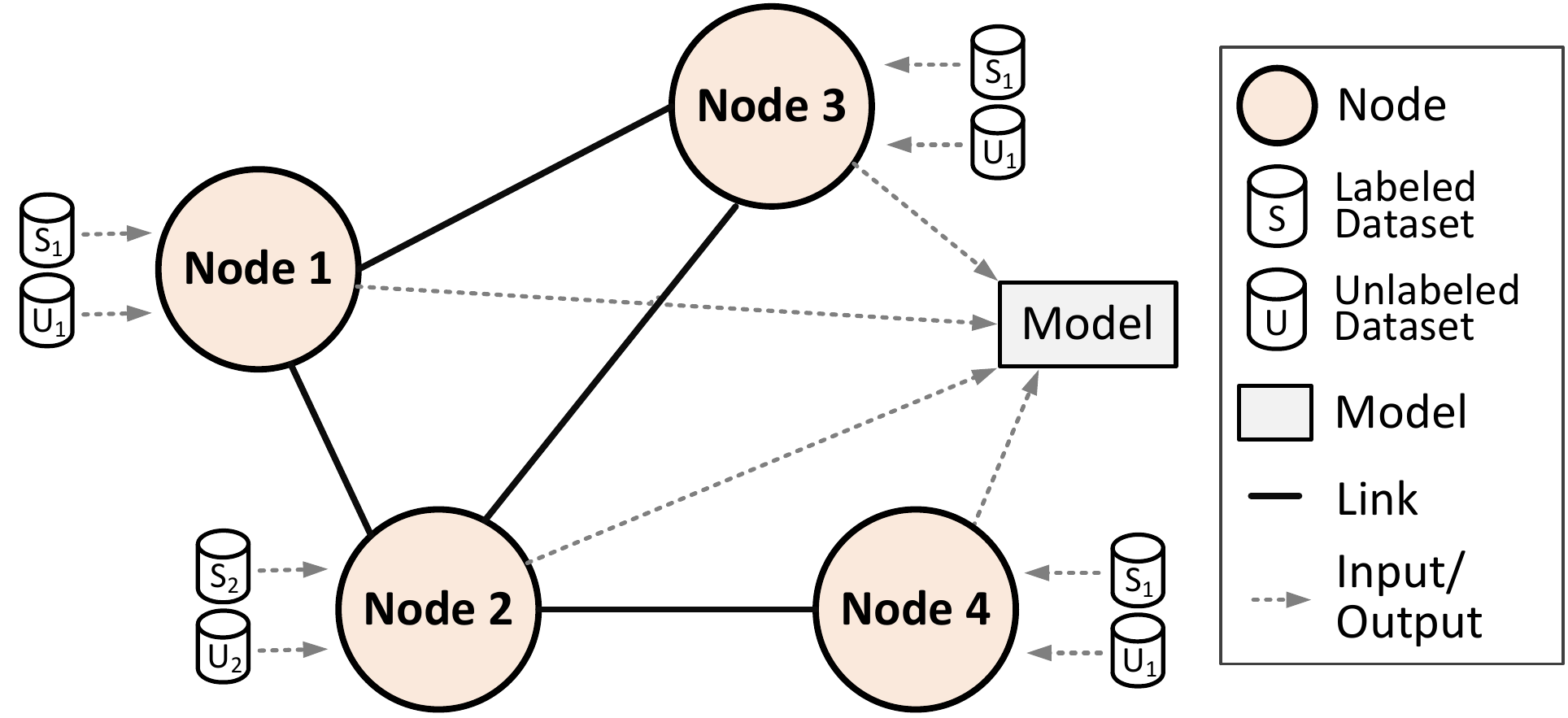} 
\caption[Depiction of distributed SSL over a network of agents.]{Depiction of SSL over a network of agents. Each agent receives a labeled training dataset, together with an unlabeled one. The task is for all the nodes to converge to a single model, by exploiting all their local datasets.}
\label{chap7:fig:setting}
\end{figure}

In this chapter, we propose the first fully distributed algorithm for SSL over networks, satisfying the above requirements. In particular, we extend an algorithm belonging to the MR family, namely laplacian kernel ridge regression (LapKRR) \cite{belkin2006manifold}. MR algorithms, originated in the seminal works of \cite{belkin2004semi} and \cite{belkin2006manifold}, are based on the assumption that data often lie in a low-dimensional manifold $\mathcal{M}$ embedded in the higher-dimensional input space. When the structure of the manifold is unknown, it can be approximated well by a weighted graph where the vertexes are represented by the data points and the weights of the edges represent a measure of similarity between the points. In the MR framework, the classification function is obtained by solving an extension of the classical regularized optimization problem, with an additional regularization term, which incorporates information about the function's smoothness on the manifold.

The algorithm presented in this chapter starts from the observation that, in the MR optimization problem, information is mostly encoded in a matrix $\vect{D}$  of pairwise distances between patterns. In fact, both the additional regularization term, and the kernel matrix (for any translation-invariant kernel function) can be computed using the information about the distance between points. In the distributed setting, each agent can compute this matrix relatively only to its own training data, while information about the distance between points belonging to different agents are unknown. Obtaining this information would allow a very simple protocol for solving the overall optimization problem. As a consequence, we subdivide the training algorithm in two steps: a distributed protocol for computing $\vect{D}$, followed by a distributed strategy for solving the optimization problem.

For the former step, in the initial phase of the algorithm, we allow a small exchange of data patterns between agents. In this phase, privacy can be preserved with the inclusion of any privacy-preserving protocol for the computation of distances \cite{verykios2004state}. For completeness, we describe the strategies that are used in our experiments in Section \ref{subsec:privacy}. As a second step, we recover the rest of the global distance matrix $\vect{D}$ by building on previous works on Euclidean distance matrix (EDM) completion \cite{candes2009exact,mishra2011low}. To this end, we consider two strategies. The first one is a simple modification of the state-of-the-art algorithm presented in \cite{ling2012decentralized,lin2015decentralized}, which is based on a column-wise partitioning of $\vect{D}$ over the agents. In this chapter, we modify it to take into account the specific nature of Euclidean distance matrices, by the incorporation of non-negativity and symmetry constraints. As a second strategy, we propose a novel algorithm for EDM completion, which is inspired to the framework of diffusion adaptation (DA) (see Section \ref{chap4:sec:diffusion_filtering}). The algorithm works by interleaving gradient descent steps with local interpolation of a suitable low-rank factorization of $\vect{D}$. While the first algorithm has a lower computational cost, we found that this comes at the cost of a worse performance, particularly when the sampling set of the matrix to complete is small. On the opposite, our algorithm exploits the particular structure of EDMs, at the cost of a possibly greater computational demanding. We discuss in more detail the advantages and disadvantages of the two approaches in Section \ref{chap7:sec:EDMdistr} and in the experimental section.

As we stated before, once the matrix $\vect{D}$ is known, solving the rest of the optimization problem is trivial. In this chapter we focus on the LapKRR algorithm, and we show that its distributed version can be solved using a single operation of sum over the network. Our experimental results show that, in most cases, the performance of the novel diffusion adaptation-based algorithm for distributed EDM completion overcome those of the state-of-the-art column-wise partitioning strategy. Secondly, experiments show that the distributed LapKRR is competitive with a centralized LapKRR model trained on the overall dataset.

The rest of the chapter is structured as follows: in Section \ref{chap7:sec:prelim} we introduce the theoretical tools upon which our algorithm is based. In particular, we detail the problem of SSL in the framework of MR in Section \ref{subsec:ssl}, some notions of EDM completion in Section \ref{subsec:EDMcomp}, and two strategies for privacy-preserving similarity computation in Section \ref{subsec:privacy}. In Section \ref{chap7:sec:EDMdistr} we propose our algorithm to complete an EDM in a decentralized fashion. Then, Section \ref{chap7:sec:algo} details the proposed framework for distributed LapKRR. In Section \ref{chap7:sec:results} we present the results for both the distributed EDM completion and distributed LapKRR.

\section{Preliminaries}
\label{chap7:sec:prelim}

In this section, we introduce some concepts that are used in the development of our algorithm. We start by describing the basic setting of SSL in Section \ref{subsec:ssl}. Then, we introduce the matrix completion problem and its application to the EDMs in Section \ref{subsec:EDMcomp}. As the last point, in Section \ref{subsec:privacy} we report some results on privacy-preserving similarity computation.
\subsection{Semi-supervised learning}
\label{subsec:ssl}

In the SSL setting, we are provided with a set of $l$ input/output labeled data $S = \left\{ (\vect{x}_1, y_1), \ldots, (\vect{x}_{l}, y_{l}) \right\}$ and an additional set of $u$ unlabeled data $U = \left\{ \vect{x}_{l+1}, \ldots, \vect{x}_{l+u} \right\}$ \cite{Chapelle2006}. As before, in the following inputs are assumed to be $d$-dimensional real vectors $\vect{x} \in \mathcal{X} \subseteq \R^d$, while outputs are assumed to be scalars $y \in \mathcal{Y} \subseteq \R$. The discussion can be extended straightforwardly to the case of a multi-dimensional output. In this chapter, we consider one particular class of SSL algorithms belonging to the family of MR \cite{belkin2006manifold}. Practically, MR learning algorithms are based on three assumptions.
\begin{itemize}
\item[-] Smoothness assumption: if two points $\vect{x}_1, \vect{x}_2\in \mathcal{X}$ are close in the intrinsic geometry of their marginal distribution, then their conditional distributions $p(y \mid \vect{x}_1)$ and $p(y\mid \vect{x}_2)$ are similar.
\item[-] Cluster assumption: the decision boundary should lie in a low-density region of the input space $\mathcal{X}$.
\item[-] Manifold assumption: the marginal distribution $p(\vect{x})$ is supported on a low-dimensional manifold $\mathcal{M}$ embedded in $\mathcal{X}$.
\end{itemize}
We now define the SLL problem formally.

\begin{definition}[SSL problem with manifold regularization]
Let $\mathcal{H}_K$ be a Reproducing Kernel Hilbert Space defined by the kernel function $\mathcal{K}:\mathcal{X}\times \mathcal{X} \rightarrow\mathbb{R}$ with norm $\Vert f\Vert_K^2$, the approximation function for the SSL problem is estimated by solving:
\begin{equation}
\label{chap7:eq:ssl_problem}
f^* = \underset{f\in\mathcal{H}_K}{\text{argmin}}\sum_{i=1}^l l(y_i,f(\vect{x}_i))+\gamma_A\Vert f\Vert_K^2+\gamma_I \Vert f\Vert_I^2\;,
\end{equation}
where $l(\cdot, \cdot)$ is a suitable loss function, $\Vert f\Vert_I^2$ is a penalty term that penalizes the structure of $f$ with respect to the manifold and $\gamma_A,\gamma_I\geq0$ are the regularization parameters.
\end{definition}

Usually, the structure of the manifold $\mathcal{M}$ is unknown and it must be estimated from both labeled and unlabeled data. In particular, we can define an adjacency matrix $\vect{W} \in \R^{l+u\times l+u}$, where each entry $W_{ij}$ is a measure of similarity between patterns $\vect{x}_i$ and $\vect{x}_j$ (see \cite{belkin2006manifold} for possible ways of constructing this matrix). Using this, the regularization term $\Vert f\Vert_I^2$ can be rewritten as \cite{belkin2006manifold}:
\begin{equation}
\Vert f\Vert_I^2 = f^\text{T}\vect{\vect{L}}f\;,
\end{equation}
where $\vect{\vect{L}} \in \R^{l+u\times l+u}$ is the data adjacency graph Laplacian (see Appendix \ref{app:graph_theory}). Practically, the overall manifold $\mathcal{M}$ is approximated with an adjacency graph, which can be computed from both labeled and unlabeled data. In order to obtain better performances, usually a normalized Laplacian $\hat{\vect{\vect{L}}} = \vect{G}^{-1/2}\vect{\vect{L}}\vect{G}^{-1/2}$, or an iterated version $\hat{\vect{\vect{L}}}^q,\;q\geq 0$, is used \cite{belkin2006manifold}.
An extension of the classical Representer Theorem proves that the function $f^*$ is in the form of:
\begin{equation}
\label{chap7:eq:kernel_exp}
f^*(\vect{x}) = \sum_{i=1}^{N}\alpha_i \mathcal{K}(\vect{x},\vect{x}_i)\;,
\end{equation}
where $N = l+u$ and $\alpha_i$ are weight parameters. As we stated in the introduction, for simplicity in this chapter we focus on a particular algorithm belonging to this framework, denoted as LapKRR. This is obtained by substituting Eq. (\ref{chap7:eq:kernel_exp}) into problem (\ref{chap7:eq:ssl_problem}) and setting a squared loss function:
\begin{equation}
l(y_i,f(\vect{x}_i))=\Vert y_i -f(\vect{x}_i)\Vert_2^2\;.
\end{equation}
Considering the dual optimization problem, by the optimality conditions the final parameters vector $\boldalpha^* = \left[\alpha_1,\ldots,\alpha_N\right]^\text{T}$ is easily obtained as:
\begin{equation}
\boldalpha^* = \left(\vect{JK}+\gamma_A\vect{I}+\gamma_I\vect{\vect{L}}\vect{K}\right)^{-1}\hat{\vect{y}}\;,
\end{equation}
where $\hat{\vect{y}}$ is an $N$-dimensional vector with components:
\begin{equation}
\hat{y}_{i} = \begin{cases} y_{i} &\mbox{if } i \in \left\{1, \ldots, l \right\} \\ 0 &\mbox{if } i \in \left\{l+1, \ldots, l+u\right\} \end{cases}\;,
\end{equation}
$\vect{J}$ is an $N\times N$ diagonal matrix with elements:
\begin{equation}
J_{ii} = \begin{cases} 1 &\mbox{if } i \in \left\{1, \ldots, l \right\} \\ 0 &\mbox{if } i \in \left\{l+1, \ldots, l+u\right\} \end{cases}\;,
\end{equation}
and finally $\vect{K}$ is the $N \times N$ kernel matrix defined by $\{ K_{ij} = \mathcal{K}\left(\vect{x}_i,\vect{x}_j\right) \}$.

\subsection{(Euclidean) matrix completion}
\label{subsec:EDMcomp}

The second notion that will be used in the proposed algorithm is the EDM completion problem \cite{alfakih1999solving}. A matrix completion problem is defined as the problem of recovering the missing entries of a matrix only from a set of known entries \cite{candes2009exact}. This problem has many practical applications, i.e. sensors localization, covariance estimation and customer recommendations, and it was largely investigated in the literature. 

In this chapter, we focus on completion of the square matrix $\vect{D} \in \R^{N \times N}$ containing the pairwise distances among the training patterns, i.e.:
\begin{equation}
D_{ij} = \norm{\vect{x}_i - \vect{x}_j}^2 \; \forall i,j = 1, \ldots, N \,.
\label{chap7:eq:distance}
\end{equation}
$\vect{D}$ is called an Euclidean Distance Matrix (EDM). Clearly, Eq. \eqref{chap7:eq:distance} implies that $\vect{D}$ is symmetric and $D_{ii} = 0$ for all the elements on the main diagonal. It is possible to show that the rank $r$ of $\vect{D}$ is upper bounded by $d + 2$, meaning that $\vect{D}$ is low-rank whenever $d \ll N$, which is common in all practical applications.

In the following, we suppose to have observed only a subset of entries of $\vect{D}$, in the form of a matrix $\hat{\vect{D}}$. More formally, there exists a matrix with binary entries $\boldsymbol{\Omega} \in \left[ 0, 1 \right]^{N \times N}$ such that:
\begin{equation}
\hat{\vect{D}} = 
\begin{cases}
		\hat{D}_{ij} = D_{ij} & \text{ if } \Omega_{ij} = 1 \\
		\hat{D}_{ij} = 0 & \text{ otherwise}
\end{cases}\,.
\end{equation}
We wish to recover the original matrix $\vect{D}$ from $\hat{\vect{D}}$, i.e. we want to solve the following optimization problem:
\begin{equation}
\underset{\vect{D} \in \text{EDM}(N)}{\min} \norm[\text{F}]{\boldsymbol{\Omega} \circ \left( \hat{\vect{D}} - \vect{D} \right)}^2 \,,
\label{chap7:eq:edm_problem}
\end{equation}
where $\circ$ denotes the Hadamard product between two matrices, $\text{EDM(N)}$ is the set of all EDMs of size $N$, and $\norm[F]{\vect{A}}$ is the Frobenius norm of matrix $\vect{A}$. It is possible to reformulate problem in Eq. \eqref{chap7:eq:edm_problem} as a semidefinite problem by considering the Schoenberg mapping between EDMs and positive semidefinite matrices \cite{alfakih1999solving}:
\begin{equation}
\begin{aligned}
&\underset{\vect{D}}{\min}
& & \norm[\text{F}]{\boldsymbol{\Omega} \circ \left[ \hat{\vect{D}} - \kappa(\vect{D}) \right]}^2 \\
&\;\text{s. t.}
& & \vect{D} \succeq 0
\end{aligned} \,,
\label{chap7:eq:edm_completion_reformulation}
\end{equation}
where $\vect{D} \succeq 0$ means that $\vect{D}$ is positive semidefinite and:
\begin{equation}
 \kappa(\vect{D}) = \text{diag}(\vect{D})\vect{1}^{\text{T}} + \vect{1}\text{diag}(\vect{D})^{\text{T}} - 2\vect{D} \,,
\end{equation}
such that $\text{diag}(\vect{D})$ extracts the main diagonal of $\vect{D}$ as a column vector. This observation motivated most of the initial research on EDM completion \cite{alfakih1999solving}. Recently, an alternative formulation was proposed in \cite{mishra2011low}, which exploits the fact that every positive semidefinite matrix $\vect{D}$ with rank $r$ admits a factorization $\{ \vect{D}= \vect{VV}^\text{T} \}$, where $\vect{V}\in\mathbb{R}_{*}^{N\times r}=\{\vect{V}\in\mathbb{R}^{N\times r}:\;\det{(\vect{V}^{\text{T}}\vect{V})}\neq 0\}$. Using this factorization and assuming we know the rank of $\vect{D}$, problem (\ref{chap7:eq:edm_completion_reformulation}) can be reformulated as:
\begin{equation}
\underset{\vect{VV}^{\text{T}} \in S_+(r,N)}{\min} \norm[\text{F}]{\boldsymbol{\Omega} \circ \left[ \hat{\vect{D}} - \kappa\left(\vect{V}\vect{V}^\text{T}\right) \right]}^2 \;,
\label{chap7:eq:edm_lowrank_problem}
\end{equation}
where we have:
\begin{equation}
 S_+(r,N) = \{\vect{U}\in\mathbb{R}^{N \times N}:\; \vect{U} = \vect{U}^\text{T}\succeq 0,\; \text{rank}\left(\vect{U}\right)=r\} \;.
\end{equation}

\subsection{Privacy-preserving similarity computation}
\label{subsec:privacy}

As we stated in the Introduction, a fundamental step in the algorithm presented in this chapter is a distributed computation of similarity between two training patterns, i.e. a distributed computation of a particular entry of $\vect{D}$. If these patterns cannot be exchanged over the network, e.g. for privacy reasons, there is the need of implementing suitable protocols for privacy-preserving similarity computation. To show the applicability of the proposed approach, in our experimental simulations we make use of two state-of-the-art solutions to this problem. For completeness, we detail them here briefly.

More formally, the problem can be stated as follows. Given two training patterns $\vect{x}_i, \vect{x}_j \in \R^d$, belonging to different agents, we want to compute $\vect{x}_i^T\vect{x}_j$, without revealing the two patterns. Clearly, computing the inner product allows the computation of several other distance metrics, including the standard $L_2$ Euclidean norm. The first strategy that we investigate here is the random projection-based technique developed in \cite{liu2006random}. Suppose that both agents agree on a projection matrix $\vect{R} \in \R^{m \times d}$, with $m < d$, such that each entry $R_{ij}$ is independent and chosen from a normal distribution with mean zero and variance $\sigma^2$. We have the following lemma:
\begin{lemma}
Given two input patterns $\vect{x}_i, \vect{x}_j$, and the respective projections:
\begin{equation}
\vect{u}_i = \frac{1}{\sqrt{m}\sigma}\vect{R}\vect{x}_i \text{,  and  } \vect{u}_j = \frac{1}{\sqrt{m}\sigma}\vect{R}\vect{x}_j \,,
\end{equation}
we have that:
\begin{equation}
\mathbb{E}\left\{ \vect{u}_i^T\vect{u}_j \right\} = \vect{x}_i^T\vect{x}_j \,.
\end{equation}
\label{lemma:mult}
\end{lemma}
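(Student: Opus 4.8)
The plan is to reduce the claim to the single fact that $\mathbb{E}\left\{ \vect{R}^T\vect{R} \right\} = m\sigma^2 \vect{I}$, after which the normalization constant $\frac{1}{m\sigma^2}$ is exactly what is needed to recover the true inner product. First I would write out the projected inner product explicitly. Since $\vect{u}_i = \frac{1}{\sqrt{m}\sigma}\vect{R}\vect{x}_i$ and likewise for $\vect{u}_j$, we have
\begin{equation}
\vect{u}_i^T\vect{u}_j = \frac{1}{m\sigma^2}\left(\vect{R}\vect{x}_i\right)^T\left(\vect{R}\vect{x}_j\right) = \frac{1}{m\sigma^2}\vect{x}_i^T\vect{R}^T\vect{R}\vect{x}_j \;.
\end{equation}
Because $\vect{x}_i$ and $\vect{x}_j$ are fixed (deterministic) vectors and the only source of randomness is the projection matrix $\vect{R}$, linearity of expectation lets me pull the expectation through to act solely on the Gram matrix $\vect{R}^T\vect{R}$, giving $\mathbb{E}\left\{ \vect{u}_i^T\vect{u}_j \right\} = \frac{1}{m\sigma^2}\vect{x}_i^T\,\mathbb{E}\left\{ \vect{R}^T\vect{R} \right\}\vect{x}_j$.

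The substance of the argument is then an entrywise computation of $\mathbb{E}\left\{ \vect{R}^T\vect{R} \right\}$. Writing the $(a,b)$ entry as $\left(\vect{R}^T\vect{R}\right)_{ab} = \sum_{k=1}^m R_{ka}R_{kb}$ and taking expectations term by term, I would split into the diagonal and off-diagonal cases. For $a \neq b$, the entries $R_{ka}$ and $R_{kb}$ are independent with mean zero, so $\mathbb{E}\left\{ R_{ka}R_{kb} \right\} = \mathbb{E}\left\{ R_{ka} \right\}\mathbb{E}\left\{ R_{kb} \right\} = 0$; for $a = b$, each term is $\mathbb{E}\left\{ R_{ka}^2 \right\} = \sigma^2$ by the definition of the variance together with the zero-mean assumption. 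Summing over the $m$ rows yields
\begin{equation}
\mathbb{E}\left\{ \left(\vect{R}^T\vect{R}\right)_{ab} \right\} = m\sigma^2\,\delta_{ab} \;,
\end{equation}
i.e. $\mathbb{E}\left\{ \vect{R}^T\vect{R} \right\} = m\sigma^2 \vect{I}$, where $\delta_{ab}$ is the Kronecker delta.

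Finally I would substitute this back, so that $\mathbb{E}\left\{ \vect{u}_i^T\vect{u}_j \right\} = \frac{1}{m\sigma^2}\vect{x}_i^T\left(m\sigma^2 \vect{I}\right)\vect{x}_j = \vect{x}_i^T\vect{x}_j$, which is the desired identity. The proof is essentially a direct moment calculation, so there is no genuine obstacle; the only point requiring care is the vanishing of the off-diagonal cross-terms, which is where both the independence and the zero-mean hypotheses on the entries of $\vect{R}$ are used, and the bookkeeping that each of the $m$ rows contributes an identical $\sigma^2$ to the diagonal. It is worth remarking that the argument uses only that the $R_{ij}$ are uncorrelated with common variance $\sigma^2$ and zero mean, not the full normality assumption; normality is needed for the concentration of $\vect{u}_i^T\vect{u}_j$ around its mean, but not for the unbiasedness asserted here.
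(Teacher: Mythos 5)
Your proof is correct. Note, however, that the paper itself does not prove this lemma at all: its ``proof'' is a one-line pointer to the original reference (Lemma 5.2 of \citep{liu2006random}), so you have supplied the argument that the thesis delegates to the literature. Your route is the standard one and matches what the cited source does: reduce everything to the second-moment identity $\mathbb{E}\left\{ \vect{R}^T\vect{R} \right\} = m\sigma^2 \vect{I}$, which you establish entrywise by splitting into off-diagonal terms (killed by independence and zero mean) and diagonal terms (each contributing $\sigma^2$ across the $m$ rows). The bookkeeping is right, the normalization $1/(m\sigma^2)$ cancels exactly, and your closing remark is a genuine refinement of the statement as given in the paper: unbiasedness needs only zero-mean, uncorrelated entries with common variance, whereas the Gaussian assumption in the paper's setup is only relevant for concentration of $\vect{u}_i^T\vect{u}_j$ about its mean, i.e.\ for the quality of the approximation rather than its correctness in expectation. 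The only small caution is terminological: for the cross terms $\mathbb{E}\left\{ R_{ka}R_{kb} \right\}$ with $a \neq b$ you invoke independence, but as you yourself note at the end, pairwise uncorrelatedness suffices; keeping the two hypotheses straight is worthwhile since the weaker one is all the argument consumes.
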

\begin{proof}
See \cite[Lemma 5.2]{liu2006random}.
\end{proof}
\noindent In light of Lemma \ref{lemma:mult}, exchanging the projected patterns instead of the original ones allows preserving, on average, the inner product. A thorough investigation on the privacy-preservation guarantees of this protocol can be found in \cite{liu2006random}. Additionally, we can observe that this protocol provides a reduction on the communication requirements of the application, since it effectively reduces the dimensionality of the patterns to be exchanged by a factor $m/d$.

The second protocol that we investigate in our experimental section is a more general (nonlinear) transformation introduced in \cite{bhaduri2011privacy}. It is given by:
\begin{equation}
\vect{v} = \vect{b} + \vect{Q}\tanh\left( \vect{a} + \vect{C}\vect{x} \right) \,,
\end{equation}
for a generic input pattern $\vect{x}$, where $\vect{b} \in \R^m$, $\vect{Q} \in \R^{m \times t}$, $\vect{a} \in \R^t$, $\vect{C} \in \R^{t \times d}$ are matrices whose entries are drawn from normal distributions with mean zero and possibly different variances. As in the previous method, it is possible to show that the inner product is approximately preserved, provided that the input patterns are not ``outliers'' in a specific sense. See \cite{bhaduri2011privacy} for more details and an analysis of the privacy-preservation capabilities of this scheme. Again, choosing $t$ and $m$ allows to balance between a more accurate reconstruction and a reduction on the input dimensionality.

The field of privacy-preserving similarity computation, and more in general privacy-preserving data mining, is vast and with more methods introduced each year. Although we have chosen these two protocols due to their wide diffusion and simplicity, we stress that our algorithm does not depend specifically on any of them. We refer to \cite{verykios2004state} and references therein for more general investigations on this field.

\section{Distributed Laplacian Estimation}
\label{chap7:sec:EDMdistr}

In this section, we start by formulating a problem of distributed estimation of $\vect{L}$ in Section \ref{subsec:formulation}. Then, we focus on two algorithms for its solution. The first is a modification of a state-of-the-art algorithm, described in Section \ref{subsec:block}, while the second is a fully novel protocol which is based on the ideas of `diffusion adaptation' \cite{sayed2014adaptive} introduced in Section \ref{subsec:diffusion}.

\subsection{Formulation of the problem}
\label{subsec:formulation}

In the distributed Laplacian estimation problem, we suppose that both the labeled data and the unlabeled data are distributed through a network of $L$ interconnected agents, as shown in Fig. \ref{chap7:fig:setting} and described in Appendix \ref{app:graph_theory}. Without loss of generality, we assume that data is organized as follows: the $k$th agent is provided with $N_k$ patterns, such that $N = \sum_{k=1}^L N_k$. For each agent, the first $l_k$ patterns are labeled: $S_k = \left\{(\vect{x}_{k,1},y_{k,1}),\ldots,(\vect{x}_{k,l_k},y_{k,l_k})\right\}$, while the last $u_k$ are unlabeled: $U_k = \left\{ \vect{x}_{k,l_k+1},\ldots, \vect{x}_{k,l_k+u_k} \right\}$. The local data sets are non-overlapping, so we have $S = \cup_{k=1}^L S_k$ and ${U = \cup_{k=1}^L U_k}$.

Let $\vect{\vect{L}}_k\in\mathbb{R}^{N_k\times N_k},\;k = 1\ldots L$, be the Laplacian matrices computed by each agent using its own data; we are interested in estimating in a totally decentralized fashion the Laplacian matrix $\vect{\vect{L}}$ calculated with respect to all the $N$ patterns. The local Laplacian matrices can be always expressed, rearranging the rows and the columns, as block matrices on the main diagonal of $\vect{\vect{L}}$:
\begin{equation}
\vect{\vect{L}} =
	\begin{bmatrix}
  		\vect{\vect{L}}_1 & ? & ?\\
  		 ? & \ddots & ? \\
  		?  & ? & \vect{\vect{L}}_L
	\end{bmatrix}
\label{chap7:eq:block_laplacian}
\end{equation}
The same structure of (\ref{chap7:eq:block_laplacian}) applies also to matrices $\vect{D}$ and $\vect{K}$, with $\vect{D}_k$ and $\vect{K}_k$ representing the distance matrix and kernel matrix computed over the local dataset.
This particular structure implies that the sampling set is not random, and makes non-trivial the problem of completing $\vect{\vect{L}}$ solely from the knowledge of the local matrices. At the opposite, the idea of exchanging the entire local datasets between nodes is unfeasible because of the amount of data to share. Instead of completing in a distributed manner the global Laplacian matrix, in this chapter we consider the alternative approach of computing the global EDM $\vect{D}$ first, and then using it to calculate the Laplacian. This approach has two advantages:
\begin{itemize}
\item[-] We can exploit the structure of EDMs to design efficient algorithms.
\item[-] From the global EDM we can compute, in addition to the Laplacian, the kernel matrix $\vect{K}$ for all kernel functions $\mathcal{K}$ based on Euclidean distance (e.g. the Gaussian kernel).
\end{itemize}
Based on these considerations, we propose a framework for the distributed estimation of $\vect{\vect{L}}$, which consists in five steps:
\begin{enumerate}
\item Patterns exchange: every agent exchanges a fraction $p$ of the available input data (both labeled and unlabeled) with its neighbors. This step is necessary so that the agents can increase the number of known entries in their local matrices. In order to maximize the diffusion of the data within the network, this step is iterated $n_\text{max}^{(1)}$ times; at every iteration an increasing percentage of shared data is constituted by pattern received by the neighbors in previous iterations. A simple strategy to do this consists, at the iteration $n$, to choose $\frac{n_\text{max}-n+1}{n_\text{max}}p$ patterns from the local dataset, and $\frac{n-1}{n_\text{max}}p$ patterns received in the previous $n-1$ iterations. 
In order to preserve privacy, this step can include one of the privacy-preserving strategies showed in Section \ref{subsec:privacy}.
\item Local EDM computation: each agent computes, using its original dataset and the data received from its neighbors, an incomplete approximation $\hat{\vect{D}}_k\in\mathbb{R}^{N\times N}$ of the real EDM matrix $\vect{D}$.
\item Entries exchange: the agents exchange a sample of their local EDMs $\hat{\vect{D}}_k$ with their neighbors. Again, this step is iterated $n_\text{max}^{(2)}$ times using the same rule of step 1.
\item Distributed EDM completion: the agents complete the estimate $\tilde{\vect{D}}$ of the global EDM using one of the distributed algorithms presented in the following sections.
\item Global Laplacian estimation: using $\tilde{\vect{D}}$ the agents compute the global Laplacian estimate $\tilde{\vect{\vect{L}}}$ and the kernel matrix estimate $\tilde{\vect{K}}$.
\end{enumerate}

\subsection{Decentralized block estimation}
\label{subsec:block}

As stated in the Introduction, the first algorithm that we take into account for the decentralized completion of $\vect{D}$ is a modified version of the algorithm named \textit{D-LMaFit} \cite{ling2012decentralized,lin2015decentralized}. To the best of our knowledge, this is the only existing algorithm for distributed matrix completion available in the literature.

Let $\hat{\vect{D}}$ be the incomplete global EDM matrix and denote with $\mathcal{I}$ the set of indexes corresponding to its known entries. In a centralized setting, without taking into account the structure of distance matrices, and assuming that the rank $r$ is known, $\tilde{\vect{D}}$ can be completed by solving the problem:
\begin{equation}
\begin{aligned}
&\underset{\vect{A},\vect{B},\tilde{\vect{D}}}{\min}
& & \norm[\text{F}]{\vect{AB}-\tilde{\vect{D}}}^2 \\
&\;\;\text{s. t.}
& & \tilde{D}_{ij} = \hat{D}_{ij},\;\forall(i,j)\in\mathcal{I}
\end{aligned}
\label{chap7:eq:block_matrix_completion}
\end{equation}
where $\vect{A}\in\mathbb{R}^{N\times r}$, $\vect{B}\in\mathbb{R}^{r\times N}$ represent a suitable low-rank factorization of $\tilde{\vect{D}}$.

In extending problem (\ref{chap7:eq:block_matrix_completion}) to a decentralized setting, the algorithm presented in \cite{ling2012decentralized} considers a column-wise partitioning of $\hat{\vect{D}}$ over the agents. For simplicity of notation, we suppose here that this partitioning is such that the $k$th agent stores only the columns corresponding to its local dataset. Thus, the block partitioning has the form $\hat{\vect{D}} = \left[\hat{\vect{D}}_1,\ldots,\hat{\vect{D}}_L\right]$, where $\hat{\vect{D}}_k\in\mathbb{R}^{N\times N_k}$ is the block of the matrix held by the $k$th agent, and $\mathcal{I}_k$ is the set of indexes of known entries of $\hat{\vect{D}}_k$.
The same block partition applies also to matrices $\vect{B} = \left[\vect{B}_1,\ldots,\vect{B}_L\right]$, with $\vect{B}_k\in\mathbb{R}^{r\times N_k}$, and $\tilde{\vect{D}} = \left[\tilde{\vect{D}}_1,\ldots,\tilde{\vect{D}}_L\right]$, with $\tilde{\vect{D}}_k\in\mathbb{R}^{N\times N_k}$. The matrix $\vect{A}$ cannot be partitioned, but each agent stores a local copy $\vect{A}_k$ to use in computations. The \textit{D-LMaFit} algorithm consists in an alternation of matrix factorizations and inexact average consensus, formalized in the following steps:
\begin{enumerate}
\item Initialization: For each agent, the matrices $\vect{A}_k\left[0\right]$ and $\vect{B}_k\left[0\right]$ are initialized as random matrices of appropriate dimensions. Matrix $\tilde{\vect{D}}_k\left[0\right]$ is initialized as $\tilde{\vect{D}}_k\left[0\right] = \hat{\vect{D}}_k$.
\item Update of $\vect{A}$: At time $n$, the $k$th agent updates its local copy of the matrix $\vect{A}$. If $n=0$, the updating rule is:
\begin{equation}
\vect{A}_k\left[1\right] = \sum_{i=1}^L C_{ki}\vect{A}_i\left[0\right]-\alpha\left(\vect{A}_k\left[0\right]-\tilde{\vect{D}}_k\left[0\right]\vect{B}_k^\text{T}\left[0\right]\right)\,,
\end{equation}
\noindent where $\alpha$ is a suitable positive step-size. If $n>0$, the updating rule is given by:
\begin{align}
\vect{A}_k\left[n+1\right] & = \vect{A}_k\left[n\right] -\sum_{i=1}^L \left( C_{ki}\vect{A}_i\left[n\right] - \tilde{C}_{ki}\vect{A}_i\left[n-1\right] \right)  - \nonumber \\ & \alpha\left(\vect{A}_k\left[n\right]-\vect{A}_k\left[n-1\right]-\tilde{\vect{D}}_k\left[n\right]\vect{B}_k^\text{T}\left[n\right]+\tilde{\vect{D}}_k\left[n-1\right]\vect{B}_k^\text{T}\left[n-1\right]\right) \;.
\label{chap7:eq:block_partitioning_update}
\end{align}
In Eq. \eqref{chap7:eq:block_partitioning_update}, $\tilde{\vect{C}}$ is a mixing matrix that satisfies some properties \cite{lin2015decentralized}. A suitable choice is $\tilde{\vect{C}} = (1/2)(\vect{I}+\vect{C})$.
\item Update of $\vect{B}$ and $\tilde{\vect{D}}$: At the $n$th iteration, agent $k$ updates matrices $\vect{B}_k$ and $\tilde{\vect{D}}_k$ according to:
\begin{align}
\vect{B}_k\left[n+1\right] &= \vect{A}^{\dagger}_k[n+1]\vect{A}_k^\text{T}\left[n+1\right]\tilde{\vect{D}}_k\left[n\right]\\
\tilde{\vect{D}}_k\left[n+1\right] &= \vect{A}_k\left[n+1\right]\vect{B}_k\left[n+1\right]+\nonumber\\ & \text{P}_{\mathcal{I}_k}\left(\hat{\vect{D}}_k-\vect{A}_k\left[n+1\right]\vect{B}_k\left[n+1\right]\right)
\end{align}
where $\vect{A}^{\dagger}_k[n+1]$ is the Moore-Penrose inverse of $\vect{A}_k[n+1]$, and $P_\mathcal{I}\left(\vect{M}\right):\mathbb{R}^{n \times m}\rightarrow\mathbb{R}^{n \times m}$ is a projection operator defined by:
\begin{equation}
P_\mathcal{I}\left(\vect{M}\right)_{ij} = 
	\begin{cases}
	M_{ij} & \text{ if } \left(i,j\right)\in \mathcal{I}\\
	0 & \text{ otherwise}
	\end{cases}\,.
\end{equation}
\end{enumerate}
The algorithm stops when the maximum number of iterations $n_\text{max}^\text{EDM}$ is reached.

As we stated, \textit{D-LMaFit} is not specifically designed for EDM completion. Consequently, it has some important limitations in our context. In particular, the resulting matrix $\tilde{\vect{D}}$ can have negative entries and could be non-symmetric; moreover, it is distributed across the nodes and so, if an agent wants access to the complete matrix, it has to collect the local matrices $\tilde{\vect{D}}$ through all the network. In order to at least satisfy the constraint that $\tilde{\vect{D}}$ be an appropriate EDM, we introduce the following modifications into the original algorithm:
\begin{itemize}
\item[-] The updating equation for $\tilde{\vect{D}}_k$ is modified by setting to $0$ all the negative entries. This projection operator is a standard approach in non-negative matrix factorization to enforce non-negativity constraints \cite{lin2007projected}.
\item[-] When all the agents gathered the complete matrix $\tilde{\vect{D}}$, this is symmetrized as $\tilde{\vect{D}} = \frac{\tilde{\vect{D}}+\tilde{\vect{D}}^\text{T}}{2}$.
\end{itemize}
\subsection{Diffusion gradient descent}
\label{subsec:diffusion}

The second algorithm for distributed EDM completion proposed in this chapter exploits the low-rank factorization $\vect{D} = \kappa(\vect{V}\vect{V}^\text{T})$ showed in Section \ref{subsec:EDMcomp}. In particular, we consider the general framework of DA (see Section \ref{chap3:sec:relation_other_research_fields}). To begin with, we can observe that the objective function in Eq. (\ref{chap7:eq:edm_lowrank_problem}) can be approximated locally by:
\begin{equation}
J_k(\vect{V}) = \norm[\text{F}]{\boldsymbol{\Omega}_k \circ \left[ \hat{\vect{D}}_k - \kappa\left(\vect{VV}^{\text{T}}\right)\right]}^2\; k=1,\ldots,L\;,
\end{equation}
where $\boldsymbol{\Omega}_k$ is the local auxiliary matrix associated with $\hat{\vect{D}}_k$. Hence, we can exploit a DA algorithm to minimize the joint cost function given by $\tilde{J}(\vect{V}) = \sum_{k=1}^L J_k(\vect{V})$. The DGD for the distributed completion of an EDM is defined by an alternation of updating and diffusion equations in the form of:
\begin{enumerate}
\item Initialization: All the agents initialize the local matrices $\vect{V}_k$ as random $N\times r$ matrices.
\item Update of $\vect{V}$: At time $n$, the $k$th agent updates the local matrix $\vect{V}_k$ using a gradient descent step with respect to its local cost function:
\begin{align}
\tilde{\vect{V}}_k\left[ n+1 \right] & = \vect{V}_k[n] - \eta_k[n] \nabla_{\vect{V}_k} J_k(\vect{V}) \,.
\end{align}
where $\eta_k\left[n\right]$ is a positive step-size. It is straightforward to show that the gradient of the cost function is given by:
\begin{align}
\nabla_{\vect{V}_k} J_k(\vect{V}) & = \kappa^* \Bigl\{ \boldsymbol{\Omega}_k \circ \Bigr. \nonumber\\
 & \Bigl. \circ \left( \kappa \left( \vect{V}_k\left[n\right] \vect{V}_k^{\text{T}}\left[n \right] \right)-\hat{\vect{D}}_k \right) \Bigr\} \vect{V}_k\left[ n\right] \,,
\end{align}
where $\kappa^*(\vect{A}) = 2\left[\text{diag}\left(\vect{A}1\right]-\vect{A}\right)$ is the adjoint operator of $\kappa$.
\item Diffusion: The updated matrices are combined according to the mixing weights $\vect{C}$:
\begin{equation}
\vect{V}_k\left[ n+1 \right] = \sum_{i=1}^L C_{ki}\tilde{\vect{V}}_i\left[ n+1 \right].
\end{equation}
\end{enumerate}
Compared with the state-of-the-art decentralized block algorithm presented in the previous section, the diffusion-based approach has two main advantages. First, it is able to take into account naturally the properties of EDM matrices. Secondly, at every step each node has a complete estimate of the overall matrix, instead of a single column-wise block. Thus, there is no need of gathering the overall matrix at the end of the optimization process.
 
\section{Distributed Semi-supervised Manifold Regularization}
\label{chap7:sec:algo}

In this section, we consider the more general distributed SSL setting, as illustrated in Fig. \ref{chap7:fig:setting}. We suppose that the agents in the network have performed a distributed matrix completion step, using either the algorithm in Section \ref{subsec:block} or the one in Section \ref{subsec:diffusion}, so that the estimates $\tilde{\vect{D}}$, $\tilde{\vect{\vect{L}}}$ and $\tilde{\vect{K}}$ are globally known. For the $k$th agent, we denote with $\hat{\vect{y}}_k$ the $N_k$ dimensional vector with elements:
\begin{equation}
\hat{y}_{k,i} = 
	\begin{cases} 
		y_{k,i} & \mbox{if } i \in\left\{1,\ldots,l_k\right\} \\
	 	0 & \mbox{if } i \in \left\{l_k+1,\ldots,l_k+u_k\right\}
	\end{cases}\;,
\end{equation}
and $\hat{\vect{J}}_k$ the $N_k \times N$ matrix defined by $\hat{\vect{J}}_k = \begin{bmatrix} \overline{\vect{0}}_k & \boldsymbol{\Lambda}_k & \underline{\vect{0}}_k \end{bmatrix}$, where $\boldsymbol{\Lambda}_k$ is a $N_k \times N_k$ diagonal matrix with elements:
\begin{equation}
\Lambda_{k,ii} = 
	\begin{cases} 
		1 & \mbox{if } i \in\left\{1,\ldots,l_k\right\} \\ 
		0 & \mbox{if } i \in \left\{l_k+1,\ldots,l_k+u_k\right\}
	\end{cases}\,,
\end{equation}
$\overline{\vect{0}}_k$ is a $N_k \times \sum_{j<k}N_j$ null matrix and $\underline{\vect{0}}_k$ is a $N_k \times \sum_{j>k}N_j$ null matrix. Using this notation, the optimization problem of LapKRR can be reformulated in distributed form as:
\begin{equation}
\min_{\boldalpha}\sum_{k = 1}^L\Vert \hat{\vect{y}}_k -\hat{\vect{J}}_k\tilde{\vect{K}} \boldalpha\Vert_2^2+\gamma_A\boldalpha^{\mathrm{T}} \tilde{\vect{K}}\boldalpha+\gamma_I \boldalpha^\mathrm{T}\tilde{\vect{K}}\tilde{\vect{\vect{L}}}\tilde{\vect{K}}\boldalpha\;.
\end{equation} 
Denoting with $\hat{\vect{J}}_{\text{tot}}= \sum_{k=1}^L \hat{\vect{J}}_k^\text{T}\hat{\vect{J}}_k$ and $\hat{\vect{y}}_{\text{tot}}= \sum_{k=1}^L \hat{\vect{J}}_k^\text{T}\hat{\vect{y}}_k$, we can derive the expression for the optimal weights vector $\boldalpha^*$:
\begin{equation}
\boldalpha^* = \left(\hat{\vect{J}}_{\text{tot}}\tilde{\vect{K}}+\gamma_A\vect{I}+\gamma_I\tilde{\vect{\vect{L}}}\tilde{\vect{K}}\right)^{-1}\hat{\vect{y}}_{\text{tot}}\;.
\end{equation}
The particular structure of $\boldalpha^*$ implies that the distributed solution can be decomposed as $\boldalpha^* = \sum_{k=1}^L \boldalpha^*_k$, where:
\begin{equation}
\boldalpha_k^* = \left(\hat{\vect{J}}_{\text{tot}}\tilde{\vect{K}}+\gamma_A\vect{I}+\gamma_I\tilde{\vect{\vect{L}}}\tilde{\vect{K}}\right)^{-1}\hat{\vect{J}}_k^\text{T}\hat{\vect{y}}_k\,.
\label{chap7:eq: local_solution}
\end{equation}
To compute the local solution $\boldalpha^*_k$, the $k$th agent requires only the knowledge of matrix $\hat{\vect{J}}_{\text{tot}}$, which can be computed with a distributed sum over the network using the DAC protocol. Clearly, the sum can be obtained by post-multiplying the final estimate by $L$. Overall, the distributed LapKRR algorithm can be summarized in five main steps:

\begin{enumerate}
\item Distributed Laplacian estimation: this step corresponds to the process illustrated in Sec. \ref{chap7:sec:EDMdistr}. It includes the patterns exchange (with the inclusion of a privacy-preserving strategy, if needed) and the points exchange procedures, the distributed EDM completion, and the computation of $\tilde{\vect{\vect{L}}}$ and $\tilde{\vect{K}}$.
\item Global sum of $\hat{\vect{J}}_\text{tot}$: in this step the local matrices $\hat{\vect{J}}_k^\text{T}\hat{\vect{J}}_k$ are summed up using the DAC protocol.
\item Local training: using the matrix $\hat{\vect{J}}_\text{tot}$ computed in the previous step, each agent calculates its local solution, given by:
\begin{equation}
\boldalpha_k^* = \left(\hat{\vect{J}}_{\text{tot}}\tilde{\vect{K}}+\gamma_A\vect{I}+\gamma_I\tilde{\vect{\vect{L}}}\tilde{\vect{K}}\right)^{-1}\hat{\vect{J}}_k^\text{T}\hat{\vect{y}}_k\,.
\end{equation}
\item Global sum of $\boldalpha^*$: in this step, using the DAC protocol, the local vectors $\boldalpha_k^*$ are summed up to compute the global weight vector.
\item Output estimation: when a new unlabeled pattern $\vect{x}$ is available to the network, each agent can initialize a partial output as:
\begin{equation}
f_k(\vect{x}) = \sum_{i=1}^{N_k} \mathcal{K}(\vect{x},\vect{x}_{k,i}) \beta_{k,i}^* \;,
\label{chap7:eq:partial_output}
\end{equation}
where $\vect{\beta}_k^*$ is a $N_k$-dimensional vector containing the  entries of $\boldalpha^*$ corresponding to the patterns belonging the $k$th agent. The global output is then computed as:
\begin{equation}
f(\vect{x}) = \sum_{k=1}^L f_k(\vect{x}) \;,
\end{equation}
which can be obtained efficiently with the use of the DAC protocol.
\end{enumerate}
A pseudocode of the algorithm, from the point of view of a single agent, is provided in Algorithm \ref{alg:rklms}.
\begin{AlgorithmCustomWidth}[h]
\caption{Distr-LapKRR: Pseudocode of the proposed distributed SSL algorithm ($k$th node).}
\label{alg:rklms}
\begin{algorithmic}[1]
\Require Labeled $S_k$ and unlabeled $U_k$ training data, number of nodes $L$ (global), regularization parameters $\gamma_A$, $\gamma_I$ (global)
\Ensure Optimal vector $\boldalpha_k^*$
\For{$n = 1$ to $n_\text{max}^1$}
\State Select a set of input patterns and share them with the neighbors $\mathcal{N}_k$, using a privacy-preserving transformation if needed.
\State Receive patterns from the neighbors.
\EndFor
\State Compute the incomplete EDM matrix $\hat{\vect{D}}_k$.
\For{$n = 1$ to $n_\text{max}^2$}
\State Select a set of entries from $\hat{\vect{D}}_k$ and share them with the neighbors.
\State Receive entries from the neighbors.
\State Update $\hat{\vect{D}}_k$ with the entries received.
\EndFor
\State Complete the matrix $\tilde{\vect{D}}$ using the algorithm presented in Sec. \ref{subsec:block} or in Sec. \ref{subsec:diffusion}.
\State Compute the Laplacian matrix $\tilde{\vect{\vect{L}}}$ and the kernel matrix $\tilde{\vect{K}}$ using $\tilde{\vect{D}}$.
\State Compute the sum $\hat{\vect{J}}_\text{tot}$ over the network using the DAC protocol.
\State \Return{$\boldalpha_k^*$ according to Eq. \eqref{chap7:eq: local_solution}}.
\end{algorithmic}
\end{AlgorithmCustomWidth}
\section{Experimental results}
\label{chap7:sec:results}

\subsection{Experiments setup}

We tested the performance of our proposed algorithm over five publicly available datasets. In order to get comparable results with state-of-the-art SSL algorithms, the datasets were chosen among a variety of benchmarks for SSL. A schematic overview of their characteristics is given in Tab. \ref{tab:datasets}. For further information about the datasets, we refer to \cite{belkin2006manifold} for 2Moons, to \cite{Chapelle2006} for BCI, and to \cite{melacci2011laplacian} for the rest of the datasets. The COIL dataset is used in two different versions, one with $2$ classes (COIL2) and a harder version with $20$ classes (COIL20). In all the cases, input variables are normalized between $-1$ and $1$ before the experiments.
\begin{center}
\begin{table}[h]
\footnotesize
\renewcommand{\arraystretch}{1.3}
\centering
\caption{Description of the datasets used for testing Distr-LapKRR.} 
\begin{tabular}{lcccccc}
\toprule
Name & Features & Size & N. Classes & $\vert\text{TR}\vert$ & $\vert\text{TST}\vert$ & $\vert\text{U}\vert$\\ \midrule
2Moons & 2 & 400 & 2 & $14$ & $200$ & $186$ \\
BCI & 117 & 400 & 2 & $14$ & $100$ & $286$ \\
G50C & 50 & 550 & 2 & $50$ & $186$ & $314$ \\
COIL20 & 1024 & 1440 & 20 & $40$ & $400$ & $1000$ \\
COIL2 & 1024 & 1440 & 2 & $40$ & $400$ & $1000$ \\ \bottomrule
\end{tabular}
\vspace{0.5em}
\label{tab:datasets}
\end{table}
\end{center}
\vspace{-2.5em} \noindent In our experimental setup we considered a $7$-nodes network, whose topology is kept fixed for all the experiments. The topology is generated such that each pair of agents is connected with a probability $c$. In particular, in our implementation we set $c = 0.5$, while we choose the weights matrix $\vect{C}$ using the `max-degree' strategy. This choice ensures both convergence of the DAC protocol \cite{xiao2004fast} and it satisfies the requirements of the DA framework \cite{sayed2014adaptive}. All the experiments are repeated 25 times, to average possible outliers results due to the randomness in the processes of exchange and in the initialization of the matrices in the EDM completion algorithms. At every run, data are randomly shuffled and then partitioned in a labeled training set $\text{TR}$, a test set $\text{TST}$, and an unlabeled set $\text{U}$, whose cardinalities are reported in Tab. \ref{tab:datasets}. Both the labeled and unlabeled training sets are then partitioned evenly across the nodes. All the experiments are performed using MATLAB R$2014$a on an Intel i$7$-$3820$ @$3.6$ GHz and $32$ GB of memory. 

\subsection{Distributed Laplacian estimation}

In this section we compare the performance of the two strategies for distributed EDM completion illustrated in Section \ref{chap7:sec:EDMdistr}. We analyze the matrix completion error, together with the overall computational time for the two strategies. Given an estimate $\tilde{\vect{D}}$ of $\vect{D}$, we define the matrix completion error as:
\begin{equation}
E(\tilde{\vect{D}}) = \frac{\norm[F]{\tilde{\vect{D}}-\vect{D}}}{\norm[F]{\vect{D}}}\,.
\end{equation}
The first set of experiments consists in comparing the completion error and the time required by the two algorithms, for different sizes of the sampling set of $\vect{D}$. In our context, the size of the sampling set depends only on the amount of data that are exchanged before the algorithm runs. To this end, we consider the completion error when varying the number of iterations for both the patterns exchange and the entries exchange steps, while keeping fixed the exchange fraction $p$. In particular, for all the datasets we varied the maximum number of iterations $n_\text{max}^{(1)}$ and $n_\text{max}^{(2)}$ from $0$ to $150$, by steps of $10$. Results of this experiment are presented in Fig. \ref{chap7:fig:EDM_error}. The solid red and the solid blue lines show the performance of Decentralized Block Estimation and DGD, respectively. Since the value of the completion error only depends on the input $\vect{x}$, the results for datasets COIL$20$ and COIL$2$ are reported together.
\begin{figure*}[h]
\centering
\subfloat[Dataset: 2Moons]{\includegraphics[width=0.45\columnwidth]{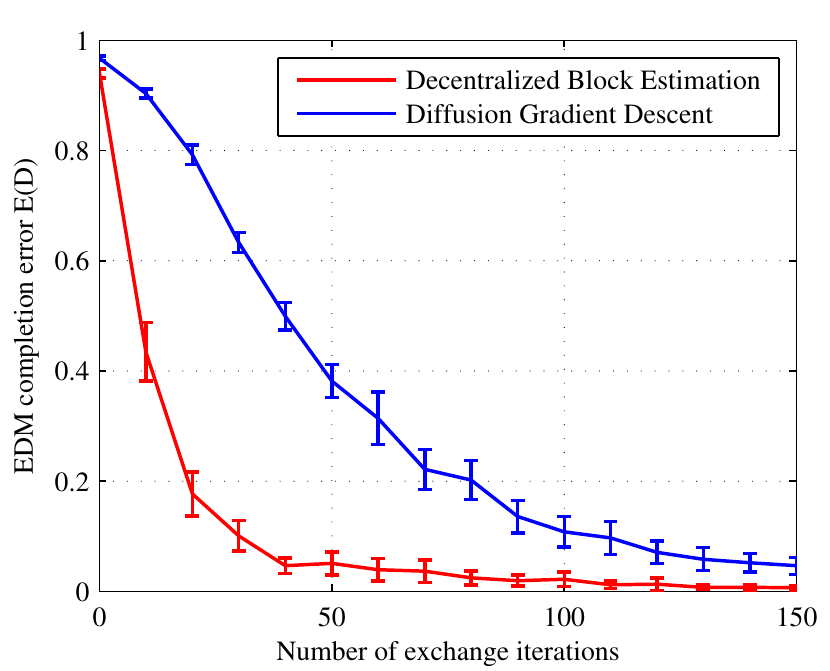}\label{chap7:fig:2Moons_D_err}} %
\subfloat[Dataset: BCI]{\includegraphics[width=0.45\columnwidth]{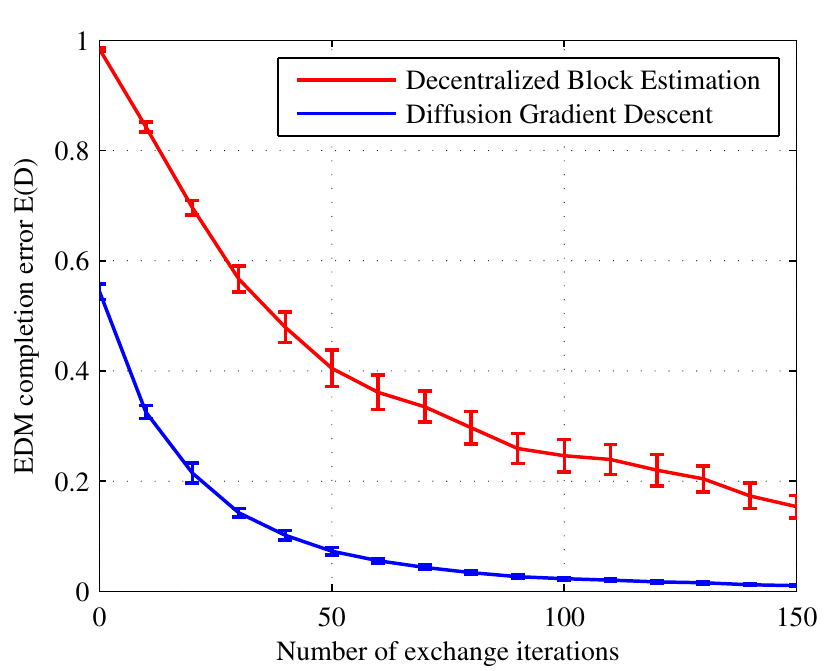} \label{chap7:fig:BCI_D_err}} %
\vfill
\subfloat[Dataset: G50C]{\includegraphics[width=0.45\columnwidth]{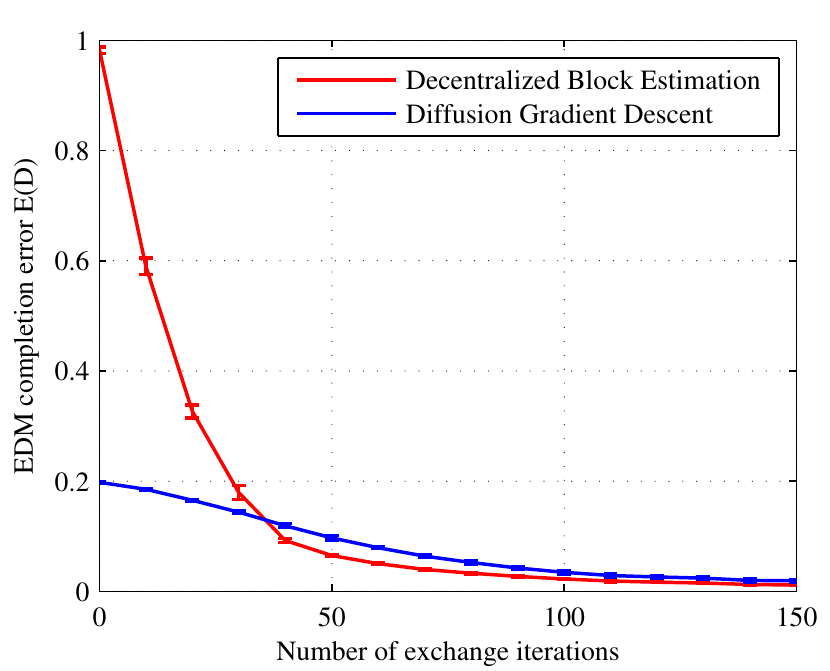} \label{chap7:fig:g50c_D_err}} %
\subfloat[Datasets: COIL$20\backslash$COIL$2$]{\includegraphics[width=0.45\columnwidth]{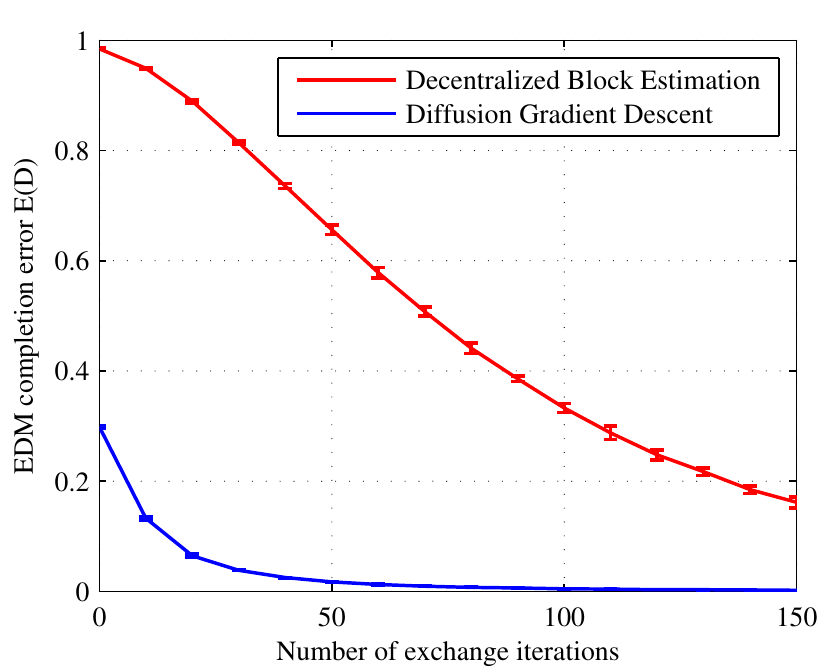} \label{chap7:fig:COIL20_D_err}}
\caption[Average EDM completion error of the two EDM completion strategies on the considered datasets.]{Average EDM completion error of the two strategies on the considered datasets, when varying the number of iterations for the patterns and entries exchange protocols. The vertical bars represent the standard deviation from the average.}
\label{chap7:fig:EDM_error}
\end{figure*}
The values for the patterns exchange fraction p$1$ and the entries exchange fraction p$2$ are chosen to balance the communication overhead and the size of the sampling set of $\vect{D}$. For both the algorithms, we set the maximum number of iterations $n_\text{max}^\text{EDM}$to $1500$, and we used a fixed step-size strategy. In particular for the Decentralized Block Estimation we set $\alpha = 0.4$, as suggested in \cite{lin2015decentralized}, while for the Diffused Gradient Descent, the optimal values for $\eta$ are chosen singularly for each dataset by searching in the interval $10^j,\, j\in\{-10,\ldots,-3\}$. These parameters, together with the values for p$1$ and p$2$, are reported in Tab. \ref{tab:datasets_params}, and are used in all the experiments.

\begin{table}
\scriptsize
\renewcommand{\arraystretch}{1.3}
\caption[Values for the parameters used in the simulations of Distr-LapKRR.]{Values for the parameters used in the simulations. The values in the first group are used in the distributed protocols and in the DGD algorithm (p$1$ and p$2$ are in percentages). Those in the second group are used to build the Laplacian and kernel matrices. In the third group are reported the parameters used in the privacy-preserving transformations.}
\centering
\begin{tabular}{l|clcll|rrrrl|rlllr}
\toprule
\multicolumn{1}{c}{Dataset} & \multicolumn{1}{|c}{p$1$} & \multicolumn{1}{c}{$n_\text{max}^{(1)}$} & \multicolumn{1}{c}{p$2$} & \multicolumn{1}{c}{$n_\text{max}^{(2)}$} & \multicolumn{1}{c}{$\eta$} & \multicolumn{1}{|c}{$\gamma_A$} & \multicolumn{1}{c}{$\gamma_I$} & \multicolumn{1}{c}{nn} & \multicolumn{1}{c}{$\sigma_K$} & \multicolumn{1}{c}{q} & \multicolumn{1}{|c}{t} & \multicolumn{1}{c}{$\sigma_a$} & \multicolumn{1}{c}{$\sigma_\text{b}$} & \multicolumn{1}{c}{$\sigma_{Q}$} & \multicolumn{1}{c}{$\sigma_\text{C}$}\\
%Dataset & p$1$ [\%] & $n_\text{max}^{(1)}$ & p$2$ [\%] & $n_\text{max}^{(2)}$ & $\eta$ & $\gamma_A$ & $\gamma_I$ & nn & $\sigma_K$ & q & t & $\sigma_a$ & $\sigma_\text{b}$ & $\sigma_{Q}$ & $\sigma_\text{C}$ \\
\midrule
2Moons & $3.5$ & $100$ & $3.5$ & $100$ & $10^{-3}$ & $2^{-5}$ & $4$ & $6$ & $0.03$ & $1$ & $-$ & $-$ & $-$ & $-$ & $-$\\ 
BCI    & $2.5$ & $100$ & $2.5$ & $100$ & $10^{-6}$ & $10^{-6}$ & $1$ & $5$ & $1$ & $2$ &$10^4$ & $0$ & $0$ & $1$ & $10^{-6}$ \\ 
G50C   & $2$ & $150$ & $2.5$ & $150$ & $10^{-6}$ & $10^{-6}$ & $10^{-2}$ & $50$ & $17.5$ & $5$ & $2e^4$ & $0$ & $1$ & $1$ & $1.1e^{-6}$ \\ 
COIL   & $2$ & $150$ & $2.5$ & $150$ & $10^{-7}$ & $10^{-6}$ & $1$ & $2$ & $0.6$ & $1$ & $10^3$ & $0$ & $0$ & $1$ & $10^{-6}$\\ \bottomrule
\end{tabular}
\vspace{0.5em}
\label{tab:datasets_params}
\end{table}

We see that, with the solely exception of the $2$Moons dataset (see Fig. \ref{chap7:fig:2Moons_D_err}), the novel Diffused Gradient Descent algorithm achieves better performance when compared to the Decentralized Block Estimation, in particular when few information is exchanged before the completion process. For all the datasets, as the number of the exchange iterations increases, the diffusion strategy is able to converge rapidly to the real EDM $\vect{D}$, while the performance is poorer for the block partitioning strategy, resulting for datasets BCI and COIL in a completion error of $19\%$ even for high quantity of information exchanged (see Fig. \ref{chap7:fig:BCI_D_err} and Fig. \ref{chap7:fig:COIL20_D_err}).

When considering the time required by the two algorithms, which is shown in Fig. \ref{chap7:fig:EDM_time}, we observe that the block partition strategy requires for datasets $2$Moons and G$50$C less than half the time required by the diffused strategy, while, as the number of the features increases, the diffusion strategy tends to be less computational expensive. In fact, the time required by both strategies is nearly the same for the dataset BCI, while for COIL the diffusion strategy is $1.2$ times faster. We remark that the Decentralized Block Estimation requires an additional step for all the agents to gather the columns-wise blocks through the network, which has not been taken into account in calculating the computational time.

\begin{figure*}[h]
\centering
\subfloat[Dataset: 2Moons]{\includegraphics[width=0.45\columnwidth]{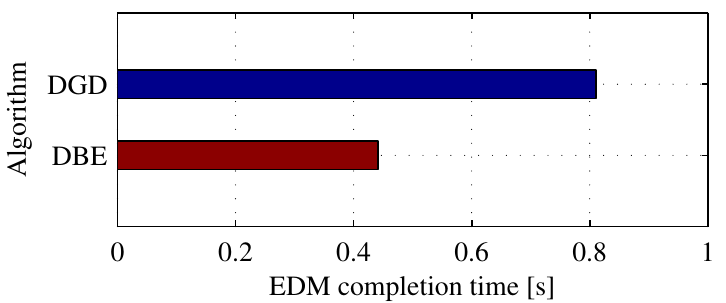}\label{chap7:fig:2Moons_D_time}} %
\subfloat[Dataset: BCI]{\includegraphics[width=0.45\columnwidth]{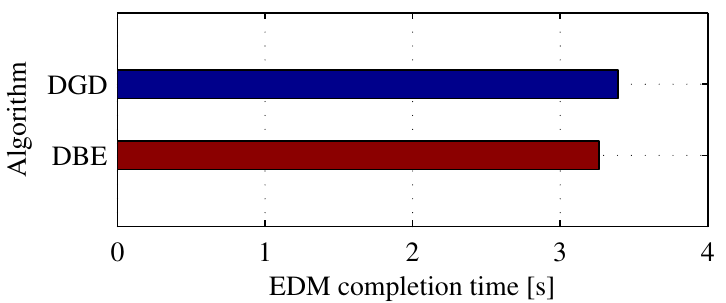} \label{chap7:fig:BCI_D_time}} %
\vfill
\subfloat[Dataset: G50C]{\includegraphics[width=0.45\columnwidth]{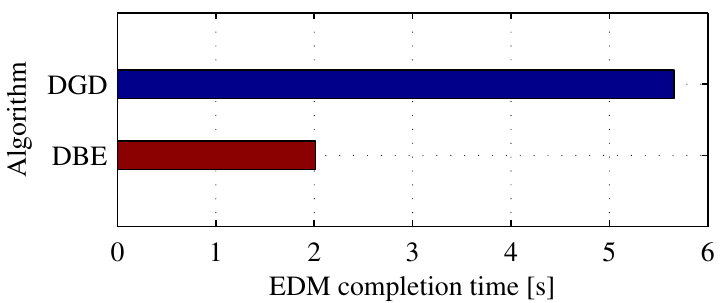} \label{chap7:fig:g50c_D_time}} %
\subfloat[Datasets: COIL$20\backslash$COIL$2$]{\includegraphics[width=0.45\columnwidth]{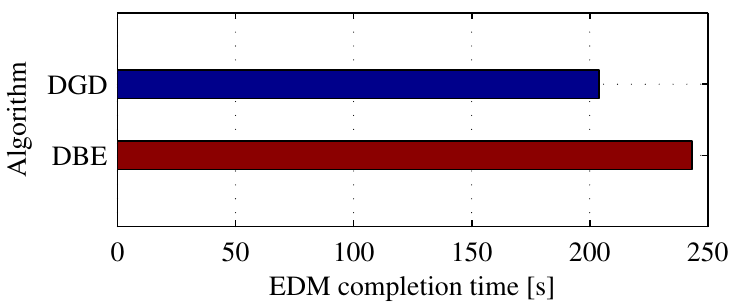} \label{chap7:fig:COIL20_D_time}}
\caption[Average EDM completion time required by the two EDM completion strategies on the considered datasets.]{Average EDM completion time required by the two strategies on the considered datasets. DGD and DBE are the acronyms for Decentralized Block Estimation and DGD respectively.}
\label{chap7:fig:EDM_time}
\end{figure*}

\subsection{Distributed semi-supervised manifold regularization}

The second experiment analyzes the performance of the distributed algorithm when compared to a centralized learning strategy and to a local learning strategy. We compare the following algorithms:
\begin{itemize}
\item[\textbf{-}] \textbf{Centr-LapKRR}: this is the algorithm depicted in Sec. \ref{subsec:ssl}. It is equivalent to a single agent collecting all the training data.

\item[\textbf{-}] \textbf{Local-LapKRR}: in the local setting, the training set is distributed across the agents and every agent trains a LapKRR on its own dataset, without any communication with other agents. The error is averaged throughout the nodes.

\item[\textbf{-}] \textbf{Distr-LapKRR}: as before, the training set is distributed within the network, but the agents converge to a centralized solution using the strategy detailed in Sec. \ref{chap7:sec:algo}. In this experiment, the EDM completion is achieved by the DGD algorithm.
\end{itemize}
For all the algorithms, we build the Laplacian and the kernel matrices according to the method detailed in \cite{melacci2011laplacian}, using the parameters reported in Tab. \ref{tab:datasets_params}. In particular the parameters for datasets G$50$C and COIL come from \cite{melacci2011laplacian}, while those for $2$Moons and BCI come from \cite{belkin2006manifold} and \cite{Chapelle2006}, respectively.  Lower values for the exchange iterations in datasets $2$Moons and BCI are chosen to balance the higher values for the exchange fractions. The classification error and the computational time for the three models over the five datasets are reported in Table \ref{tab:datasets_error}. Results of the proposed algorithm, Distr-LapKRR, are highlighted in bold. 

\begin{table}[h]
\footnotesize
\renewcommand{\arraystretch}{1.3}
\caption[Average values for classification error and computational time, together with one standard deviation, for Distr-LapKRR and comparisons.]{Average values for classification error and computational time, together with standard deviation, for the three algorithms. Results for the proposed algorithm are highlighted in bold.}
\centering
\begin{tabular}{llrr}
\toprule
\multicolumn{1}{c}{Dataset} & \multicolumn{1}{c}{Algorithm} & \multicolumn{1}{c}{Error [\%]} & \multicolumn{1}{c}{Time [s]}\\
%Dataset & Algorithm & Error [\%] & Time [s]\\ 
\midrule
 & Centr-LapKRR & $0.005 \pm 0.001$ & $0.006 \pm 0.015$\\
2Moons & \textbf{Distr-LapKRR} & $\vect{0.01 \pm 0.03}$ & $\vect{0.875 \pm 0.030}$ \\
& Local-LapKRR & $0.41 \pm 0.28$ & $0.000 \pm 0.000$\\ \midrule
 & Centr-LapKRR & $0.49 \pm 0.04$ & $0.021 \pm 0.012$\\
BCI & \textbf{Distr-LapKRR} & $\vect{0.49 \pm 0.05}$ & $\vect{3.396 \pm 0.028}$\\
& Local-LapKRR & $0.54 \pm 0.14$ & $0.001 \pm 0.000$\\ \midrule
 & Centr-LapKRR & $0.07 \pm 0.02$ & $0.101 \pm 0.017$\\
G50C & \textbf{Distr-LapKRR} & $\vect{0.12 \pm 0.10}$ & $\vect{5.764 \pm 0.066}$\\
& Local-LapKRR & $0.45 \pm 0.06$ & $0.001 \pm 0.000$\\ \midrule
 & Centr-LapKRR & $0.13 \pm 0.02$ & $1.565 \pm 0.019$\\
COIL20 & \textbf{Distr-LapKRR} & $\vect{0.13 \pm 0.02}$ & $\vect{195.933 \pm 2.176}$\\
& Local-LapKRR & $0.78 \pm 0.07$ & $0.056 \pm 0.001$\\ \midrule
 & Centr-LapKRR & $0.10 \pm 0.03$ & $1.556 \pm 0.028$\\
COIL2 & \textbf{Distr-LapKRR} & $\vect{0.10 \pm 0.03}$ & $\vect{191.478 \pm 0.864}$\\
& Local-LapKRR & $0.43 \pm 0.12$ & $0.055 \pm 0.000$\\ \bottomrule
\end{tabular}
\vspace{0.5em}
\label{tab:datasets_error}
\end{table}

\noindent We can see that Distr-LapKRR is generally able to match the same performance of the Centr-LapKRR, both in mean and variance, except for a small decrease in the G$50$C dataset. Clearly, the performance of Local-LapKRR is noticeably worse than the other two algorithms, because the local models are built on considerably smaller training sets. The computational time required by the distributed algorithm is given by the sum of the time required by both the exchange protocols, the distributed Laplacian estimation, the DAC protocol, and the matrix inversion in \eqref{chap7:eq: local_solution}. When comparing the results with the values for EDM completion time obtained in the previous experiment, we notice that the order of magnitude of the time required by Distr-LapKRR is given by the time necessary to complete the distance matrix.

\subsection{Privacy preservation}

As a final experiment, we include in our algorithm the two privacy-preserving strategies presented in Sec. \ref{subsec:privacy}. In particular, we analyze the evolution of the classification error when varying the ratio $m/d$ from $0.1$ to $0.95$, i.e. when varying the dimensionality $m$ of the transformed patterns. In this experiment we do not consider the $2$Moons dataset, because of its limited number of features. Since the value of $\sigma$ in the linear random projection has no influence on the error of the transformed patterns, we set $\sigma = 1$ for all the datasets. As for the nonlinear transformation, the values for the parameters are searched inside a grid and then optimized locally. Possible values for $t$ are searched in $10^i,\, i = \{1,\ldots,5\}$, while values for the variances are searched in $10^j,\, j = \{-6,\ldots,6\}$. The optimal values for the datasets are reported in the third group of Table \ref{tab:datasets_params}. 

Results of the experiment are presented in Fig. \ref{chap7:fig:privacy_error}. The classification error for the linear random projection and nonlinear transformation are shown with solid red and dashed blue lines, respectively. In addition, the mean value for Distr-LapKRR (together with its confidence interval) is reported as a baseline, shown with a dashed black line.

\begin{figure}[h]
\centering
\subfloat[Dataset: BCI]{\includegraphics[width=0.45\columnwidth]{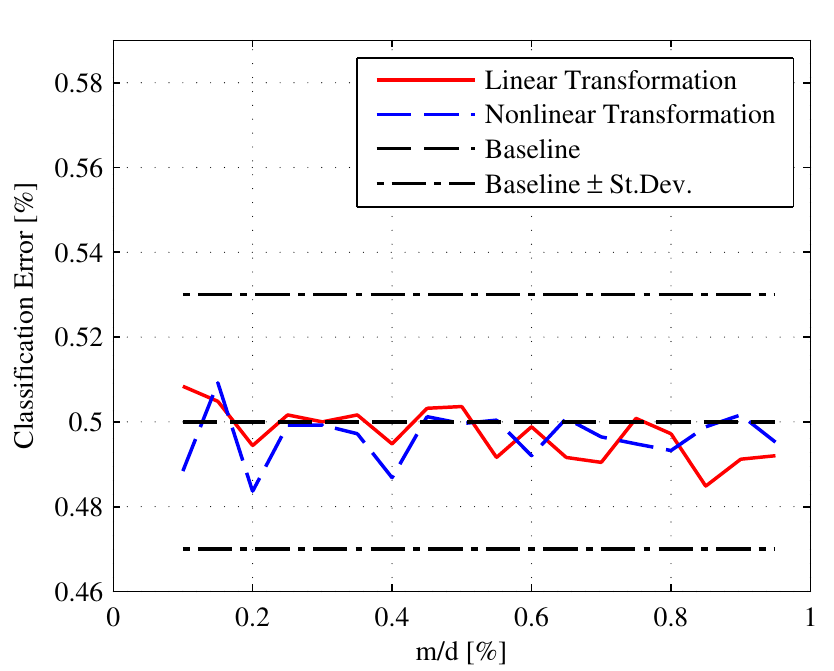} \label{chap7:fig:BCI_priv_err}} %
\subfloat[Dataset: G50C]{\includegraphics[width=0.45\columnwidth]{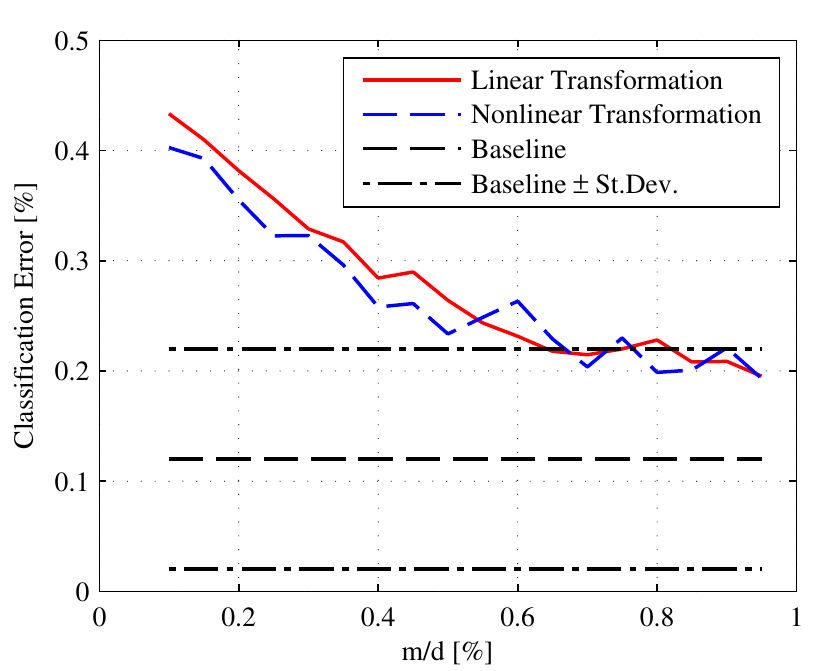} \label{chap7:fig:g50c_priv_err}} %
\vfill
\subfloat[Dataset: COIL$20$]{\includegraphics[width=0.45\columnwidth]{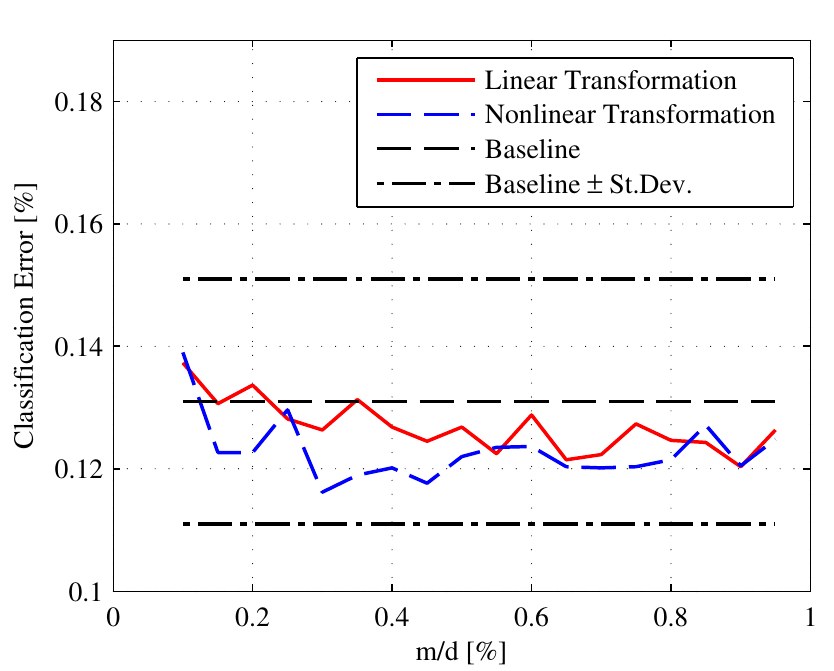} \label{chap7:fig:COIL20_priv_err}}
\subfloat[Dataset: COIL$2$]{\includegraphics[width=0.45\columnwidth]{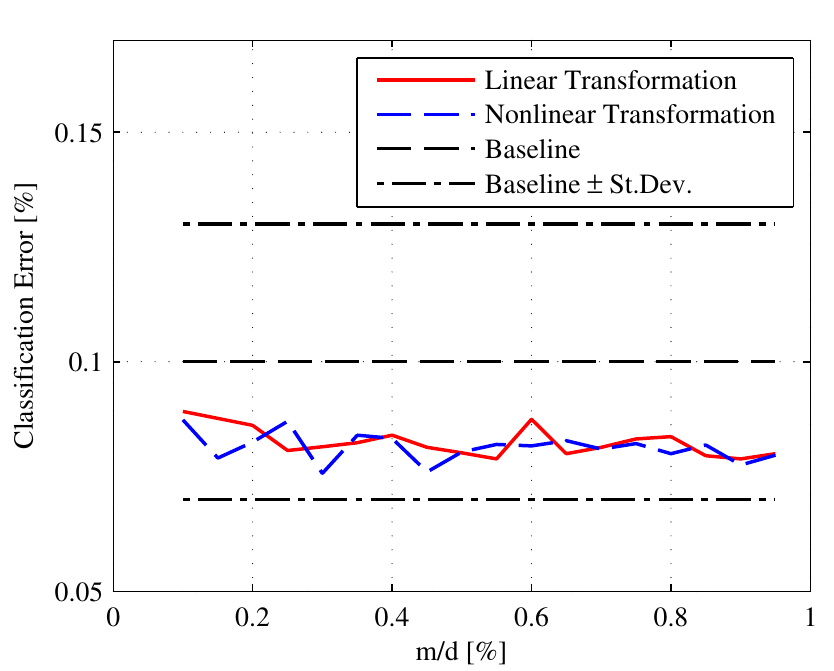} \label{chap7:fig:COIL2_priv_err}}
\caption{Average classification error of the privacy-preserving transformations on the considered datasets when varying the ratio $m/d$.}
\label{chap7:fig:privacy_error}
\end{figure}

By observing the results, we can see that when compared to Distr-LapKRR, the privacy-preserving strategies show different behaviors depending on the dataset. In particular, for dataset BCI, the error is nearly the same of Distr-LapKRR, while it is slightly lower for COIL$2$ and COIL$20$, and somewhat higher for G$50$C, where it shows a decreasing trend. 
For all the datasets, we see that the error achieved using the privacy-preserving strategies remains inside the limits of Distr-LapKRR error's confidence interval, denoting how the variability introduced does not have significant influence on the algorithm's performance. 

We notice that in most cases, we can obtain a comparable or even better performance with respect to the privacy-free algorithm, with significantly fewer features, leading to a reduction of the information exchanged and therefore of the overall computational requirements. For all the datasets, both the transformations present a non-smooth trend, caused by the heuristic nature of these methods. Moreover, the error is very similar between the strategies, suggesting that the use of a nonlinear transformation, potentially safer than a linear one, does not influence the performance.
\chapter[Distributed Semi-Supervised Support Vector Machines]{Distributed Semi-Supervised \\ Support Vector Machines}
\chaptermark{Distributed S3VM}
\label{chap:dist_ssl_next}

\minitoc
\vspace{15pt}

\blfootnote{The content of this chapter is adapted from the material published in \cite{scardapane2016distributedsemi}.}

\section{Introduction}

\lettrine{I}{n} the previous chapter, we have explored the problem of training a semi-supervised Laplacian KRR using a distributed computation of the underlying kernel matrix. However, despite its good performance, the resulting algorithm requires a large amount of computational and/or communication resources, which might not be available on specific devices or communication channels. To this end, in this chapter we propose two simpler algorithms for a different family of semi-supervised SVM, denoted as S$^3$VM. The S$^3$VM has attracted a large amount of attention over the last decades \citep{chapelle2008optimization}. It is based on the idea of minimizing the training error and maximizing the margin over both labeled and unlabeled data, whose labels are included as additional variables in the optimization problem. Since its first practical implementation in \citep{Joachims1999}, numerous researchers have proposed alternative solutions for solving the resulting mixed integer optimization problem, including branch and bound algorithms \citep{chapelle2006branch}, convex relaxations, convex-concave procedures \citep{chapelle2008optimization}, and others. It has been applied in a wide variety of practical problems, such as text inference \citep{Joachims1999}, and it has given birth to numerous other algorithms, including semi-supervised least-square SVMs \citep{adankon2009semisupervised}, and semi-supervised random vector functional-link networks \citep{scardapane2015semi}.

In order to simplify our derivation, in this chapter we focus on the \textit{linear} S$^3$VM formulation, whose decision boundary corresponds to an hyperplane in the input space. Due to this, the algorithms presented in this chapter can be implemented even on agents with stringent requirements in terms of power, such as sensors in a WSN. At the same time, it is known that limiting ourselves to a linear decision boundary can be reasonable, as the linear S$^3$VM can perform well in a large range of settings, due to the scarcity of labeled data \citep{chapelle2008optimization}.

Specifically, starting from the smooth approximation to the original S$^3$VM presented in \citep{chapelle2005semi}, we show that the distributed training problem can be formulated as the joint minimization of a sum of non-convex cost functions. This is a complex problem, which has been investigated only very recently in the distributed optimization literature \citep{bianchi2013convergence,di2015next}. In our case, we build on two different solutions. The first one is based on the idea of diffusion gradient descent (DGD), similarly to the previous chapter. Nevertheless, since it is a gradient-based algorithm exploiting only first order information of the objective function, it generally suffers of slow practical convergence speed, especially in the case of non-convex and large-scale optimization problems. Recently, it was showed in \citep{scutari2014decomposition,di2015next} that exploiting the structure of nonconvex functions by replacing their linearization (i.e., their gradient) with a ``better'' approximant can enhance practical convergence speed. Thus, we propose a distributed algorithm based on the recently proposed In-Network Successive Convex Approximation (NEXT) framework \citep{di2015next}.
The method hinges on successive convex approximation techniques while leveraging dynamic consensus as a mechanism to distribute the computation among the agents as well as diffuse the needed information over the network. Both algorithms are proved convergent to a stationary point of the optimization problem. Moreover, as shown in our experimental results, the NEXT exhibits a faster practical convergence speed with respect to DGD, which is paid by a larger computation cost per iteration.

The rest of the chapter is structured as follows. In Section \ref{chapter8:sec:ss_svm} we introduce the S$^3$VM model together with the approximation presented in \citep{chapelle2005semi}. In Section \ref{chapter8:sec:distributed_s3vm}, we first formulate the distributed training problem for S$^3$VMs, and subsequently we derive our two proposed solutions. Finally, Section \ref{chapter8:sec:results} details an extensive set of experimental results.

\section{Semi-Supervised Support Vector Machines}
\label{chapter8:sec:ss_svm}
Let us consider the standard SSL problem, where we are interested in learning a binary classifier starting from $L$ labeled samples $\left( \vect{x}_i, y_i \right)_{i=1}^L$ and $U$ unlabeled samples $\left( \vect{x}_i \right)_{i=1}^{U}$. As before, each input is a $d$-dimensional real vector $\vect{x}_i \in \R^d$, while each output can only take one of two possible values $y_i \in \left\{ -1, +1 \right\}$. The linear S$^3$VM optimization problem can be formulated as \citep{chapelle2005semi}:
\begin{equation}
\min_{\vect{w}, b, \hat{\vect{y}}} \; \frac{C_1}{2L}\sum_{i=1}^L l\left( y_i, f(\vect{x}_i) \right) + \frac{C_2}{2U}\sum_{i=1}^{U} l(\hat{y}_i, f\left(\vect{x}_i\right)) + \frac{1}{2} \norm{\vect{w}}^2 \,,
\label{eq:ss_svm}
\end{equation}
where $f(\vect{x}) = \vect{w}^T\vect{x} + b$, $\hat{\vect{y}} \in \left\{-1, +1\right\}^U$ is a vector of unknown labels, $l(\cdot, \cdot)$ is a proper loss function and $C_1, C_2 > 0$ are coefficients weighting the relative importance of labeled and unlabeled samples. The main difference with respect to the standard SVM formulation is the inclusion of the unknown labels $\hat{\vect{y}}$ as variables of the optimization problem. This makes Problem \eqref{eq:ss_svm} a mixed integer optimization problem, whose exact solution can be computed only for relatively small datasets, e.g. using standard branch-and-bound algorithms. We note that, for $C_2=0$, we recover the standard SVM formulation. The most common choice for the loss function is the hinge loss, given by:
\begin{equation}
l\left( y, f\left(\vect{x} \right)\right) = \max\left( 0, 1-yf(\vect{x}) \right)^p \,,
\end{equation}
where $p \in \mathbb{N}$. In this chapter, we use the choice $p=2$, which leads to a smooth and convex function. Additionally, it is standard practice to introduce an additional constraint in the optimization problem, so that the resulting vector $\hat{\vect{y}}$ has a fixed proportion $r \in \left[ 0, 1 \right]$ of positive labels:
\begin{equation}
\frac{1}{U}\sum_{i=1}^U \max\left( 0, \hat{y}_i \right) = r \,.
\label{eq:balancing_constraint}
\end{equation}
This constraint helps achieve a balanced solution, especially when the ratio $r$ reflects the true proportion of positive labels in the underlying dataset.

A common way of solving Problem \eqref{eq:ss_svm} stems from the fact that, for a fixed $\vect{w}$ and $b$, the optimal $\hat{\vect{y}}$ is given in closed form by $$\hat{y}_i = \text{sign}(\vect{w}^T\vect{x}_i + b), \; i=1,\ldots,U.$$
Exploiting this fact, it is possible to devise a \textit{continuous} approximation of the cost function in \eqref{eq:ss_svm} \citep{chapelle2008optimization}. In particular, to obtain a smooth optimization problem solvable by standard first-order methods, \citep{chapelle2005semi} propose to replace the hinge loss over the unknown labels with the approximation given by $\exp\left\{ -sf(\vect{x})^2 \right\}, s > 0$. In the following, we choose in particular $s=5$, as suggested by \citep{chapelle2008optimization}. A visual example of the approximation is illustrated in Fig. \ref{chapter8:fig:hinge_approximation}.
\begin{figure}[t]
	\centering
	\includegraphics{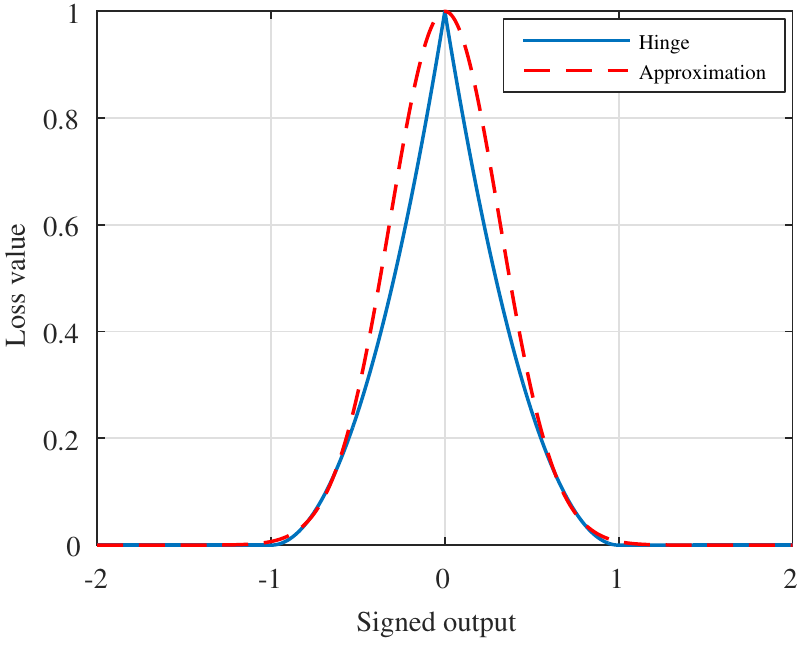}
	\caption[The hinge loss approximation is shown in blue for varying values of $f(\vect{x}_i)$, while in dashed red we show the approximation given by $\exp\left\{ -5f(\vect{x}_i)^2 \right\}$.]{For a fixed choice of $\vect{w}$ and $b$, $\max\left( 0, 1-\hat{y}_if(\vect{x}_i) \right)^2 = \max\left( 0, 1-|f(\vect{x}_i)| \right)^2$. This is shown in blue for varying values of $f(\vect{x}_i)$, while in dashed red we show the approximation given by $\exp\left\{ -5f(\vect{x}_i)^2 \right\}$.}
	\label{chapter8:fig:hinge_approximation}
\end{figure}
The resulting $\nabla$S$^3$VM optimization problem writes as:
\begin{equation}
\min_{\vect{w}, b} \; \frac{C_1}{2L}\sum_{i=1}^L l\left( y_i, f(\vect{x}_i) \right) + \frac{C_2}{2U}\sum_{i=1}^{U} \exp\left\{ -sf(\vect{x}_i)^2 \right\} + \frac{1}{2} \norm{\vect{w}}^2  \,.
\label{eq:delta_s3vm}
\end{equation}
\noindent Problem (\ref{eq:delta_s3vm}) does not incorporate the constraint in (\ref{eq:balancing_constraint}) yet. A possible way to handle the balancing constraint in \eqref{eq:balancing_constraint} is a relaxation that uses the following linear approximation \citep{chapelle2005semi}:
\begin{equation}
\frac{1}{U} \sum_{i=1}^U \vect{w}^T\vect{x}_i + b = 2r - 1 \,,
\end{equation}
which can easily be enforced for a fixed $r$ by first translating the unlabeled points so that their mean is $\vect{0}$, and then fixing the offset $b$ as $b = 2r-1$. The resulting problem can then be solved using standard first-order procedures.

\section{Distributed learning for S$^3$VM}
\label{chapter8:sec:distributed_s3vm}
In this section, we first formulate a distributed optimization problem for a $\nabla$S$^3$VM over a network of agents in Section \ref{chapter8:sec:formulation_of_the_problem}. Then, we present two alternative methods for solving the overall optimization problem in a fully decentralized fashion in Sections \ref{chapter8:sec:gradient_diffusion} and \ref{chapter8:sec:next}.
\subsection{Formulation of the problem}
\label{chapter8:sec:formulation_of_the_problem}

For the rest of this chapter, we assume that labeled and unlabeled training samples are not available on a single processor. Instead, they are distributed over a network of $N$ agents. In particular, as in the previous chapter, we assume that the $k$th node has access to $L_k$ labeled samples, and $U_k$ unlabeled ones, such that $\sum_{k=1}^N L_k = L$ and $\sum_{k=1}^N U_k = U$. Assumptions on the topology are similar to the rest of the thesis. The distributed $\nabla$S$^3$VM problem can be cast as:
\begin{equation}
\min_{\vect{w}} \;\sum_{k=1}^N l_k(\vect{w}) + \sum_{k=1}^N g_k(\vect{w}) + r(\vect{w}) \,,
\label{eq:distributed_delta_s3vm}
\end{equation}
where we have defined the following shorthands:
\begin{align}
l_k(\vect{w}) & = \frac{C_1}{2L} \sum_{i=1}^{L_k} l\left( y_{k,i}, f(\vect{x}_{k,i}) \right) \,, \\
g_k(\vect{w}) & = \frac{C_2}{2U} \sum_{i=1}^{U_k} \exp\left\{ -sf(\vect{x}_{k,i})^2 \right\} \,, \\
r(\vect{w}) & = \frac{1}{2} \norm{\vect{w}}^2 \,.
\end{align}
In the previous equations, we use the double subscript $(k,i)$ to denote the $i$th sample available at the $k$th node, and we assume that the bias $b$ has been fixed \textit{a-priori} using the strategy detailed in the previous section. In a distributed setting, this requires that each agent knows the mean of all unlabeled points given by $\frac{1}{U}\sum_{i=1}^U\vect{x}_i$. This can easily be achieved, before starting the training process, with a number of different in-network algorithms. For example, the agents can compute the average using a DAC procedure, push-sum protocols \citep{hensel2009gadget} in a P2P network, or a number of alternative techniques.

\subsection{Solution $1$: Distributed gradient descent}
\label{chapter8:sec:gradient_diffusion}

The first solution is based on the DGD procedure, which has already been used extensively in the previous chapter for the distributed EDM completion problem. The main problem is that all the previous art on DGD focused on the solution of convex versions of problem the DSO problem. In our case, the $g_k(\vect{w})$ are non-convex, and the analysis in the aforementioned papers cannot be used. However, convergence of a similar family of algorithms in the case of non-convex (smooth) cost functions has been recently studied in \citep{bianchi2013convergence}. Customizing the DGD method in (\ref{chap3:eq:gd_step}) to Problem \eqref{eq:distributed_delta_s3vm}, we obtain the following local update at each agent:
\begin{align}
&\psi_k =  \vect{w}_k[n] - \alpha_k[n] \left( \nabla l_k(\vect{w}_k[n]) + \nabla g_k(\vect{w}_k[n]) + \frac{1}{N} \nabla r(\vect{w}_k[n]) \right) \,. \label{eq:gdg_step1_real}
\end{align}
Note that we have included a factor $\frac{1}{N}$ in \eqref{eq:gdg_step1_real} in order to be consistent with the formulation in \eqref{chap3:eq:dso_cost_function}. Defining the margin $m_{k,i} = y_{k,i}f(\vect{x}_{k,i})$, we can easily show that:
\begin{align}
\nabla l_k(\vect{w}) & = %
		\displaystyle -\frac{C_1}{L} \sum_{i=1}^{L_k} \mathbb{I}(1-m_{k,i}) \cdot m_{k,i}\left( 1 - m_{k,i}\right) \,, \\
\nabla g_k(\vect{w}) & = \displaystyle - s\frac{C_2}{U} \sum_{i=1}^{U_k} \exp\left\{ -sf(\vect{x}_{k,i})^2 \right\}f(\vect{x}_{k,i})\vect{x}_{k,i} \,, \\
\nabla r(\vect{w}) & = \vect{w} \,,
\end{align}
where $\mathbb{I}(\cdot)$ is the indicator function defined for a generic scalar $o \in \R$ as:
\[
\mathbb{I}(o) = \begin{cases}
1 & \; \text{ if } o  \le 0 \\
0 & \; \text{ otherwise }
\end{cases} \,.
\]
 The overall algorithm is summarized in Algorithm \ref{algo:dgd}. Its convergence properties are illustrated in following theorem.

\begin{AlgorithmCustomWidth}[h]
\caption{Distributed $\nabla$S$^3$VM using a distributed gradient descent procedure.}
\label{algo:dgd}
\begin{algorithmic}[1]
		\Require Regularization factors $C_1, C_2$, maximum number of iterations $T$.
		\State \textbf{Initialization}:
		\State \hspace{\algorithmicindent} $\vect{w}_k[0] = \vect{0}$, $k=1, \ldots, N$.
		\For{$n$ from $0$ to $T$}
		\State \textbf{for } $k$ from $1$ to $N$ \textbf{do in parallel}
		\State \hspace{\algorithmicindent} Compute auxiliary variable $\psi_k$ using \eqref{eq:gdg_step1_real}.
		\State \hspace{\algorithmicindent} Combine estimates as $\vect{w}_k[n+1] = \sum_{t=1}^N C_{kt} \psi_t$.
		\State \textbf{end for}
		\EndFor
\end{algorithmic}
\end{AlgorithmCustomWidth}

\begin{theorem}
Let $\{\vect{w}_k[n]\}_{k=1}^N$ be the sequence generated by Algorithm 1, and let $\bar{\vect{w}}[n] = \frac{1}{N}\sum_{k=1}^N \vect{w}_k[n]$ be its average across the agents. Let us select the step-size sequence $\left\{ \alpha[n] \right\}_n$ such that i) $\alpha[n]\in (0,1]$, for all $n$, ii) $\sum_{n=0}^{\infty}\alpha[n]=\infty$; and iii) $\sum_{n=0}^{\infty}\alpha[n]^2<\infty$. Then, if the sequence $\left\{ \bar{\vect{w}}[n] \right\}_n$ is bounded \footnote{Note that this condition is not restrictive in practical implementations. Indeed, one can always limit the behavior of the algorithm using a finite (but arbitrarily large) box constraint that guarantees the boundedness of the sequence $\left\{ \bar{\vect{w}}[n] \right\}_n$, and thus the convergence of the method.}, (a) \emph{\texttt{[convergence]}:} all its limit points are stationary solutions of problem \eqref{eq:distributed_delta_s3vm}; (b) \emph{\texttt{[consensus]}}: all the sequences $\vect{w}_k[n]$ asymptotically agree, i.e.
$\lim_{n\rightarrow +\infty} \norm{\vect{w}_k[n] - \bar{\vect{w}}[n]} = 0$,  $k=1,\ldots, N$.
\end{theorem}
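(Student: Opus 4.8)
The plan is to recognize the statement as a specialization of the convergence theory for multi-agent gradient methods applied to a sum of smooth but non-convex costs, developed in \citep{bianchi2013convergence}. Accordingly, the first step is to verify that Problem \eqref{eq:distributed_delta_s3vm} meets the structural hypotheses of that framework. Writing the global objective as $J(\vect{w}) = \sum_{k=1}^N \left[ l_k(\vect{w}) + g_k(\vect{w}) \right] + r(\vect{w})$, I would check term by term that $J$ is continuously differentiable with a gradient that is Lipschitz continuous on every bounded set: the squared hinge loss $l_k$ is $C^1$ with Lipschitz gradient, the Gaussian-type surrogate $g_k(\vect{w}) = \frac{C_2}{2U}\sum_i \exp\{-sf(\vect{x}_{k,i})^2\}$ is $C^\infty$ (hence its gradient is locally Lipschitz), and $r(\vect{w})=\frac12\norm{\vect{w}}^2$ is quadratic. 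Because the mixing matrix $\vect{C}$ is chosen doubly stochastic (max-degree rule, see Appendix \ref{app:graph_theory}) with $\rho(\vect{C}-\tfrac{1}{N}\vect{1}\vect{1}^T)<1$, and the step-size obeys (i)--(iii), all the ingredients needed to invoke the theorem are in place.

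For the consensus claim (b), I would stack the local iterates and decompose each $\vect{w}_k[n]$ into the network average $\bar{\vect{w}}[n]$ plus a disagreement component. Applying the averaging operator to the update \eqref{eq:gdg_step1_real} and using double stochasticity of $\vect{C}$, the disagreement recursion contracts at rate $\rho(\vect{C}-\tfrac1N\vect{1}\vect{1}^T)$ up to a forcing term proportional to $\alpha[n]$ times the local gradients. Since boundedness of $\{\bar{\vect{w}}[n]\}$ together with an induction argument keeps every $\vect{w}_k[n]$ in a common compact set, the gradients are uniformly bounded there; combined with $\alpha[n]\to 0$ (implied by (iii)), a standard vanishing-input / geometric-series argument yields $\norm{\vect{w}_k[n]-\bar{\vect{w}}[n]}\to 0$, and in fact a quantitative consensus rate of order $O(\alpha[n])$.

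For the stationarity claim (a), I would study the average dynamics. Column stochasticity of $\vect{C}$ eliminates the mixing when averaging, giving $\bar{\vect{w}}[n+1] = \bar{\vect{w}}[n] - \frac{\alpha[n]}{N}\sum_{k}\nabla J_k(\vect{w}_k[n])$, where $\nabla J_k$ is the $k$th agent's local gradient. Writing $\sum_k \nabla J_k(\vect{w}_k[n]) = \nabla J(\bar{\vect{w}}[n]) + \vect{e}[n]$ and bounding the perturbation $\vect{e}[n]$ by the Lipschitz constant times $\max_k\norm{\vect{w}_k[n]-\bar{\vect{w}}[n]}$, the average recursion becomes an inexact gradient descent on $J$. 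Applying the descent lemma, summing over $n$, and exploiting $\sum_n\alpha[n]=\infty$ and $\sum_n\alpha[n]^2<\infty$ together with the $O(\alpha[n])$ consensus rate (so that $\sum_n\alpha[n]\norm{\vect{e}[n]}<\infty$), I would conclude through a deterministic Robbins--Siegmund-type argument that $\{J(\bar{\vect{w}}[n])\}$ converges and $\liminf_n\norm{\nabla J(\bar{\vect{w}}[n])}=0$; a routine refinement then upgrades this to the statement that \emph{every} limit point of $\{\bar{\vect{w}}[n]\}$ (and, by (b), of each $\{\vect{w}_k[n]\}$) is a stationary point of \eqref{eq:distributed_delta_s3vm}.

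The main obstacle is the coupling between the consensus error and the non-convex descent: unlike the convex case, the gradient error $\vect{e}[n]$ cannot be absorbed by a distance-to-minimizer argument, so one must instead establish the quantitative consensus rate $\max_k\norm{\vect{w}_k[n]-\bar{\vect{w}}[n]}=O(\alpha[n])$ and feed it into the descent inequality so that the cumulative perturbation $\sum_n\alpha[n]\norm{\vect{e}[n]}$ is finite. Pinning down this rate — and verifying the uniform Lipschitz and boundedness estimates on the compact set that both steps rely on — is the delicate technical core; once it is secured, parts (a) and (b) follow by the same machinery as in \citep{bianchi2013convergence}.
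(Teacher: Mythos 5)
Your proposal takes exactly the route the paper itself takes: the paper's entire proof is a citation to \citep{bianchi2013convergence}, i.e., the theorem is obtained as a direct specialization of the convergence result for distributed gradient methods on sums of smooth non-convex costs, which is precisely your plan. Your additional work — verifying the smoothness/Lipschitz hypotheses of the S$^3$VM objective, the double stochasticity of $\vect{C}$, and sketching the consensus-plus-inexact-descent machinery internal to that reference — is a sound (and more explicit) elaboration of what the citation silently assumes, so there is no gap to report.
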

\begin{proof}
See \citep{bianchi2013convergence}.
\end{proof}

\subsection{Solution $2$: In-network successive convex approximation}
\label{chapter8:sec:next}

The DGD algorithm is extremely efficient to implement, however, as we discussed in the introduction, its convergence is often sub-optimal due to two main reasons. First, the update in \eqref{eq:gdg_step1_real} considers only first order information and does not take into account the fact that the local cost function has some hidden convexity (since it is composed by the sum of a convex term plus a non-convex term) that one can properly exploit. Second, each agent $k$ obtains information on the cost functions $J_t(\cdot)$, $t \neq k$, only in a very indirect way through the averaging step. In this section, we use a recent framework for in-network non-convex optimization from \citep{di2015next}, which exploits the structure of nonconvex functions by replacing their linearization (i.e., their gradient) with a ``better'' approximant, thus typically resulting in enhanced practical convergence speed. In this section we customize the NEXT algorithm from \citep{di2015next} to our case, and we refer to the original chapter for more details.

The main idea of NEXT is to parallelize the problem in \eqref{eq:distributed_delta_s3vm} such that, at each agent, the original (global) non-convex cost function is replaced with a strongly convex surrogate that preserves the first order conditions, see \citep{di2015next}. To this aim, we associate to agent $k$ the surrogate $F_k(\vect{w}; \vect{w}_k[n])$, which is obtained by: i) keeping unaltered the local convex function $l_k(\vect{w})$ and the regularization function $r(\vect{w})$; ii) linearizing the local non-convex cost $g_k(\vect{w})$
and all the other (non-convex and unknown) terms $f_l(\vect{w})$ and $g_l(\vect{w})$, $l\neq k$, around the current local iterate $\vect{w}_k[n]$. As a result, the surrogate at node $k$ takes the form:
\begin{equation}
\begin{split}
F_k(\vect{w}; \vect{w}_k[n]) & =  l_k(\vect{w}) + \tilde{g}_k(\vect{w}; \vect{w}_k[n]) + r(\vect{w}) \\
& + \boldsymbol{\pi}_k(\vect{w}_k[n])^T\left( \vect{w} - \vect{w}_k[n] \right) \,,
\end{split}
\label{eq:surrogate_cost_function}
\end{equation}
where
\begin{equation}
\tilde{g}_k(\vect{w}; \vect{w}_k[n]) = g_k(\vect{w}_k[n]) + \nabla g_k^T(\vect{w}_k[n])\left( \vect{w} - \vect{w}_k[n] \right) \,,
\label{eq:g_tilde}
\end{equation}
and $\boldsymbol{\pi}_k(\vect{w}_k[n])$ is defined as:
\begin{equation}
\boldsymbol{\pi}_k(\vect{w}_k[n]) = \sum_{t \neq k} \nabla h_k(\vect{w}_k[n]) \,,
\label{eq:pi}
\end{equation}
with $\nabla h_k(\cdot) = \nabla l_k(\cdot) + \nabla g_k(\cdot)$. Clearly, the information in \eqref{eq:pi} related to the knowledge of the other cost functions is not available at node $k$. To cope with this issue, the NEXT approach consists in replacing $\boldsymbol{\pi}_k(\vect{w}_k[n])$  in \eqref{eq:surrogate_cost_function} with a local estimate $\tilde{\boldsymbol{\pi}}_k[n]$  that asymptotically converges to $\boldsymbol{\pi}_k(\vect{w}_k[n])$, thus considering the local approximated surrogate $\tilde{F}(\vect{w}; \vect{w}_k[n], \tilde{\boldsymbol{\pi}}_k[n])$ given by:
\begin{equation}
\begin{split}
\tilde{F}_k(\vect{w}; \vect{w}_k[n], \tilde{\boldsymbol{\pi}}_k[n]) & =  l_k(\vect{w}) + \tilde{g}_k(\vect{w}; \vect{w}_k[n])+ r(\vect{w}) \\
& + \tilde{\boldsymbol{\pi}}_k[n]^T\left( \vect{w} - \vect{w}_k[n] \right) \,.
\end{split}
\label{eq:surrogate_cost_function_approximated}
\end{equation}
In the first phase of the algorithm, each agent solves a convex optimization problem involving the surrogate function in \eqref{eq:surrogate_cost_function_approximated}, thus obtaining a new estimate $\tilde{\vect{w}}_k[n]$.
%We can see that the function $\tilde{F}_k(\cdot)$ is given by the standard cost function of the SVM, supplemented by an additional linear term.
Then, an auxiliary variable $\vect{z}_k[n]$ is computed as a convex combination of the current estimate $\vect{w}_k[n]$ and the new $\tilde{\vect{w}}_k[n]$, as:
\begin{equation}
\vect{z}_k[n] = \vect{w}_k[n] + \alpha[n]\left( \tilde{\vect{w}}_k[n] - \vect{w}_k[n] \right) \,.
\label{eq:z_update_next}
\end{equation}
where $\alpha[n]$ is a possibly time-varying step-size sequence. This concludes the optimization phase of NEXT. The consensus phase of NEXT consists of two main steps. First, to achieve asymptotic agreement among the estimates at different nodes, each agent updates its local estimate combining the auxiliary variables from the neighborhood, i.e., for all $k$,
\begin{equation}
\vect{w}_k[n+1] = \sum_{t=1}^N C_{kt} \vect{z}_t[n] \,.
\label{eq:first_consensus_step_next}
\end{equation}
This is similar to the diffusion step of the DGD procedure. Second, the update of the local estimate $\tilde{\boldsymbol{\pi}}_k[n]$ in \eqref{eq:surrogate_cost_function_approximated} is computed in two steps: i) an auxiliary variable $\vect{v}_k[n]$ is updated through a dynamic consensus step as:
\begin{equation}
\vect{v}_k[n+1] = \sum_{t=1}^N C_{kt}\vect{v}_t[n] + \Bigl( \nabla h_k(\vect{w}_k[n+1]) - \nabla h_k(\vect{w}_k[n])  \Bigr) \,;
\label{eq:v_update}
\end{equation}
ii) the variable $\tilde{\boldsymbol{\pi}}_k[n]$ is updated as:
\begin{equation}
\tilde{\boldsymbol{\pi}}_k[n+1] = N \vect{v}_k[n+1] - \nabla h_k(\vect{w}_k[n+1]) \,.
\label{eq:pi_update}
\end{equation}
The steps of the NEXT algorithm for Problem (\ref{eq:distributed_delta_s3vm}) are described in Algorithm \ref{algo:next}. Its convergence properties are described by a Theorem completely similar to Theorem 1, and the details on the proof can be found in \citep{di2015next}.

\begin{AlgorithmCustomWidth}[h]
\caption{Distributed $\nabla$S$^3$VM using the In-Network Convex Optimization framework.}
\label{algo:next}
\begin{algorithmic}[1]
		\Require Regularization factors $C_1, C_2$, maximum number of iterations $T$.
		\State \textbf{Initialization}:
		\State \hspace{\algorithmicindent} $\vect{w}_k[0] = \vect{0} ,\;\; k=1, \ldots, N$.
		\State \hspace{\algorithmicindent} $\vect{v}_k[0] = \nabla h_k(\vect{w}_k[0]) ,\;\; k = 1, \ldots, N$.
		\State \hspace{\algorithmicindent} $\tilde{\boldsymbol{\pi}}_k[0] = (N-1)\vect{v}_k[0] ,\;\; k = 1, \ldots, N$.
		\For{$n$ from $0$ to $T$}
		\State \textbf{for } $k$ from $1$ to $N$ \textbf{do in parallel}
		\State \hspace{\algorithmicindent} Solve the local optimization problem: \begin{center}
			$
			\tilde{\vect{w}}_k[n] = \arg\min \tilde{F}_k(\vect{w}; \vect{w}_k[n], \tilde{\boldsymbol{\pi}}_k[n]) \,.
			$
			\end{center}
		\State \hspace{\algorithmicindent} Compute $\vect{z}_k[n]$ using \eqref{eq:z_update_next}.
		\State \textbf{end for}
		\State \textbf{for } $k$ from $1$ to $N$ \textbf{do in parallel}
		\State \hspace{\algorithmicindent} Perform consensus step in \eqref{eq:first_consensus_step_next}.
		\State \hspace{\algorithmicindent} Update auxiliary variable using \eqref{eq:v_update}.
		\State \hspace{\algorithmicindent} Set $\tilde{\boldsymbol{\pi}}_k[n+1]$ as \eqref{eq:pi_update}..
		\State \textbf{end for}
		\EndFor
\end{algorithmic}
\end{AlgorithmCustomWidth}

\section{Experimental Results}
\label{chapter8:sec:results}
\subsection{Experimental Setup}
We tested the proposed distributed algorithms on three semi-supervised learning benchmarks, whose overview is given in Tab. \ref{chapter8:tab:datasets}. For more details on the datasets see \citep{Chapelle2006} and the previous chapter for the first two, and \citep{mierswa2005automatic} and Chapter \ref{chap:dist_rvfl_sequential} for GARAGEBAND. For this one, the original dataset comprises $9$ different musical genres. In order to obtain a binary classification task, we select the two most prominent ones, namely `rock' and `pop', and discard the rest of the dataset. For G$50$C and GARAGEBAND, input variables are normalized between $-1$ and $1$. The experimental results are computed over a $10$-fold cross-validation, and all the experiments are repeated $15$ times. For each repetition, the training folds are partitioned in one labeled and one unlabeled datasets, according to the proportions given in Tab. \ref{chapter8:tab:datasets}. Results are then averaged over the $150$ repetitions.

\begin{table*}
\small
\centering
\caption[Description of the datasets used for testing the distributed S$^3$VM.]{Description of the datasets. The fourth and fifth columns denote the size of the training and unlabeled datasets, respectively.}
\begin{tabular}{lllllll}
\toprule
Name & Features & Instances & $L$ & $U$ & Ref.\\
\midrule
G$50$C & $50$ & $550$ & $40$ & $455$ & \citep{Chapelle2006}\\
PCMAC & $7511$ & $1940$ & $40$ & $1700$ & \citep{Chapelle2006} \\
GARAGEBAND & $44$ & $790$ & $40$ & $670$ & \citep{mierswa2005automatic}\\
\bottomrule
\end{tabular}
\vspace{0.5em}
\label{chapter8:tab:datasets}
\end{table*}

We compare the following models:
\begin{itemize}
\item[-] \textbf{LIN-SVM}: this is a fully supervised SVM with a linear kernel, trained only on the labeled data. The model is trained using the LIBSVM library \cite{chang2011libsvm}.
\item[-] \textbf{RBF-SVM}: similar to before, but a RBF kernel is used instead. The parameter for the kernel is set according to the internal heuristic of LIBSVM.
\item[-] \textbf{C-$\nabla$S$3$VM}: this is a centralized $\nabla$S$^3$VM trained on both the labeled and the unlabeled data using a gradient descent procedure.
\item[-] \textbf{DG-$\nabla$S$3$VM}: in this case, training data (both labeled and unlabeled) is distributed evenly across the network, and the distributed model is trained using the diffusion gradient algorithm detailed in Section \ref{chapter8:sec:gradient_diffusion}.
\item[-] \textbf{NEXT-$\nabla$S$3$VM}: data is distributed over the network as before, but the model is trained through the use of the NEXT framework, as detailed in Section \ref{chapter8:sec:next}. The internal optimization problem in \eqref{eq:surrogate_cost_function_approximated} is solved using a standard gradient descent procedure. 
\end{itemize}

For C-$\nabla$S$3$VM, DG-$\nabla$S$3$VM and NEXT-$\nabla$S$3$VM we set $s = 5$ and a maximum number of iterations $T = 500$. In order to obtain a fair comparison between the algorithms, we also introduce a stopping criterion, i.e. the algorithms terminate when the norm of the gradient of the global cost function in \eqref{eq:distributed_delta_s3vm} at the current iteration is less than $10^{-5}$. Clearly, this is only for comparison purposes, and a truly distributed implementation would require a more sophisticated mechanism, which however goes outside the scope of the present chapter. The same value for the threshold is set for the gradient descent algorithm used within the NEXT framework to optimize the local surrogate function in \eqref{eq:surrogate_cost_function_approximated}. In this case, we let the gradient run for a maximum of $T = 50$ iterations. We note that, in general, we do not need to solve the internal optimization problem to optimal accuracy, as convergence of NEXT is guaranteed as long as the the problems are solved with increasing accuracy for every iteration \citep{di2015next}.

We searched the values of $C_1$ and $C_2$ by executing a $5$-fold cross-validation in the interval $\{10^{-5}, 10^{-4}, \ldots, 10^3\}$ using  C-$\nabla$S$3$VM as in \citep{chapelle2005semi}. The values of these parameters are then shared with DG-$\nabla$S$3$VM and NEXT-$\nabla$S$3$VM. For all the models, included NEXT's internal gradient descent algorithm, the step-size $\alpha$ is chosen using a decreasing strategy given by:
\begin{equation}
\alpha[n] = \frac{\alpha_0}{(n+1)^\delta} \,,
\end{equation}
where $\alpha_0, \delta > 0$ are set by the user. In particular, this strategy satisfies the convergence conditions for both the DGD algorithm and NEXT. After preliminary tests, we selected for every model the values of $\alpha_0$ and $\delta$ that guarantee the fastest convergence. The optimal values of the parameters are shown in Tab. \ref{chapter8:tab:params}.

\begin{table*}[t]
\small
\renewcommand{\arraystretch}{1.3}
\centering
\caption[Optimal values of the parameters used in the experiments for the distributed S$^3$VM.]{Optimal values of the parameters used in the experiments. In the first group are reported the values of the regularization coefficients for the three models, averaged over the $150$ repetitions. In the following groups are reported the values of the initial step-size and of the diminishing factor for C-$\nabla$S$3$VM, DG-$\nabla$S$3$VM and NEXT-$\nabla$S$3$VM respectively.}
\begin{tabular}{l|cc|cc|cc|cc}
\toprule
\multicolumn{1}{l}{Dataset} & \multicolumn{1}{|c}{$C_1$} & \multicolumn{1}{c}{$C_2$} & \multicolumn{1}{|c}{$\alpha_0^\text{C}$} & \multicolumn{1}{c}{$\delta^\text{C}$} & \multicolumn{1}{|c}{$\alpha_0^\text{DG}$} & \multicolumn{1}{c}{$\delta^\text{DG}$} & \multicolumn{1}{|c}{$\alpha_0^\text{NEXT}$} & \multicolumn{1}{c}{$\delta^\text{NEXT}$} \\
\midrule
G$50$C & $1$ & $1$ & $1$ & $0.55$ & $1$ & $0.55$ & $0.6$ & $0.8$ \\
PCMAC & $100$ & $100$ & $0.1$ & $0.55$ & $1$ & $0.9$ & $0.5$ & $0.8$ \\
GARAGEBAND & $2$ & $5$ & $0.09$ & $0.8$ & $0.1$ & $0.1$ & $0.05$ & $0.55$\\
\bottomrule
\end{tabular}
\vspace{0.5em}
\label{chapter8:tab:params}
\end{table*}
The network topologies are generated according to the `Erd\H{o}s-R\'{e}nyi model', such that every edge has a $25\%$ probability of appearing. The only constraint is that the network is connected. The topologies are generated at the beginning of the experiments and kept fixed during all the repetitions. We choose the weight matrix $\vect{C}$ using the \textit{Metropolis-Hastings} strategy as in previous chapters. This choice of the weight matrix satisfies the convergence conditions for both the distributed approaches.

\subsection{Results and discussion}

The first set of experiments consists in analyzing the performance of C-$\nabla$S$3$VM, when compared to a linear SVM and RBF SVM trained only on the labeled data. While these results are well known in the semi-supervised literature, they allow us to quantitatively evaluate the performance of C-$\nabla$S$3$VM, in order to provide a coherent benchmark for the successive comparisons. Results of this experiment are shown in Tab. \ref{chapter8:tab:centralized_results}.

\begin{table}
\small
\centering
\caption[Average value for classification error and computational time for the centralized SVMs.]{Average value for classification error and computational time for the centralized algorithms.}
\begin{tabular}{llll}
\toprule
\multicolumn{1}{l}{Dataset} & \multicolumn{1}{l}{Algorithm} & \multicolumn{1}{l}{Error [\%]} & \multicolumn{1}{l}{Time [s]}\\
\midrule
& LIN-SVM & 13.79 & 0.0008\\
G$50$C & RBF-SVM & 13.36 & 0.0005\\
& \textbf{C-$\nabla$S$3$VM} & \textbf{6.36} & \textbf{0.024}\\
\midrule
& LIN-SVM & 21.32 & 0.0035\\
PCMAC & RBF-SVM & 36.68 & 0.0032\\
& \textbf{C-$\nabla$S$3$VM} & \textbf{6.10} & \textbf{35.12}\\
\midrule
& LIN-SVM & 23.87 & 0.0010\\
GARAGEBAND & RBF-SVM & 27.92 & 0.0007\\
& \textbf{C-$\nabla$S$3$VM} & \textbf{21.50} & \textbf{0.2872}\\
\bottomrule
\end{tabular}
\vspace{0.5em}
\label{chapter8:tab:centralized_results}
\end{table}

We can see that, for all the datasets, C-$\nabla$S$3$VM outperforms standard SVMs trained only on labeled data, with a reduction of the classification error ranging from $2.37\%$ on GARAGEBAND to $15.22\%$ on PCMAC. Clearly, the training time required by C-$\nabla$S$3$VM is higher than the time required by a standard SVM, due to the larger number of training data, and to the use of the gradient descent algorithm. Another important aspect to be considered is that, with the only exception of G$50$C, the RBF-SVM fails in matching the performance of the linear model due to higher complexity of the model in relationship to the amount of training data.

Next, we investigate the convergence behavior of DG-$\nabla$S$3$VM and NEXT-$\nabla$S$3$VM, compared to the centralized implementation. In particular, we test the algorithm on randomly generated networks of $L=25$ nodes. Results are presented in Fig. \ref{chapter8:fig:obj_fcn_and_grad_norm}. Particularly, panels on the left show the evolution of the global cost function in \eqref{eq:distributed_delta_s3vm}, while panels on the right show the evolution of the squared norm of the gradient. For readability, the graphs use a logarithmic scale on the $y$-axis, while on the left we only show the first $50$ iterations of the optimization procedure. The results are similar for all three datasets, namely, NEXT-$\nabla$S$3$VM is able to converge faster (up to one/two orders of magnitude) than DG-$\nabla$S$3$VM, which can only exploit first order information on the local cost functions. Indeed, both NEXT-$\nabla$S$3$VM and the centralized implementation are able to converge to a stationary point in a relatively small number of iterations, as shown by the panels on the left. The same can be seen from the gradient norm evolution, shown on the right panels, where the fast convergence of NEXT-$\nabla$S$3$VM is even more pronounced. Similar insights can be obtained by the analysis of the box plots in Fig. \ref{chap8:fig:boxplot}, where we also compare with the results of LIN-SVM and RBF-SVM obtained previously. 

\begin{figure*}[p]
	\centering
	\subfloat[Objective function (G50C)]{\includegraphics[scale=0.76]{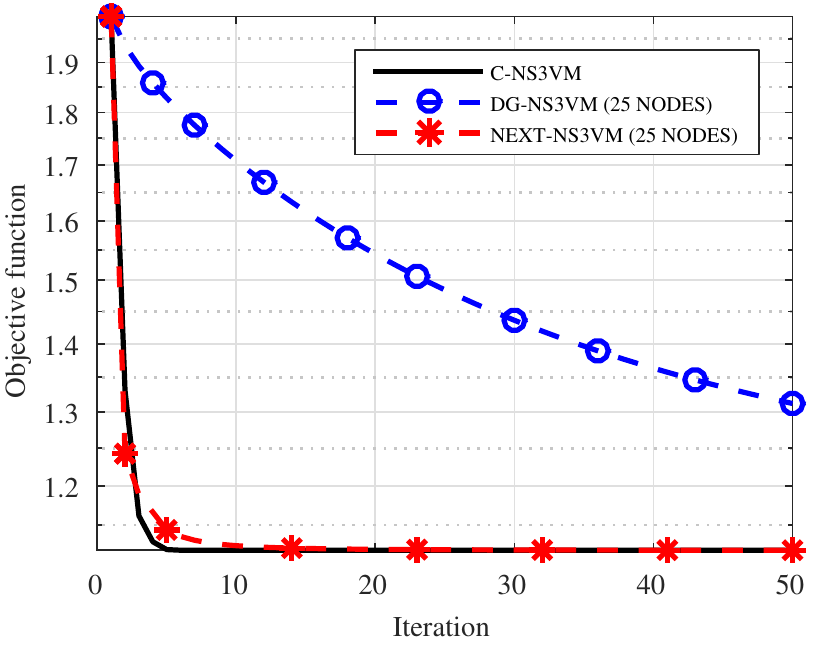}%
		\label{chapter8:fig:Objective_function_g50c}}
	\hfil
	\subfloat[Gradient norm (G50C)]{\includegraphics[scale=0.76]{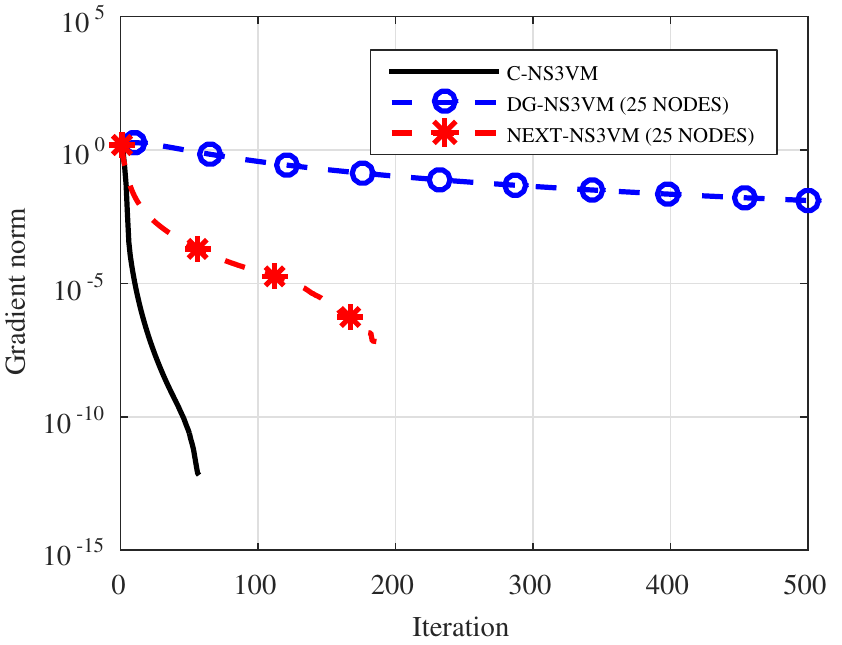} %
		\label{chapter8:fig:Gradient_norm_g50c}}
	\vfil
	\subfloat[Objective function (PCMAC)]{\includegraphics[scale=0.76]{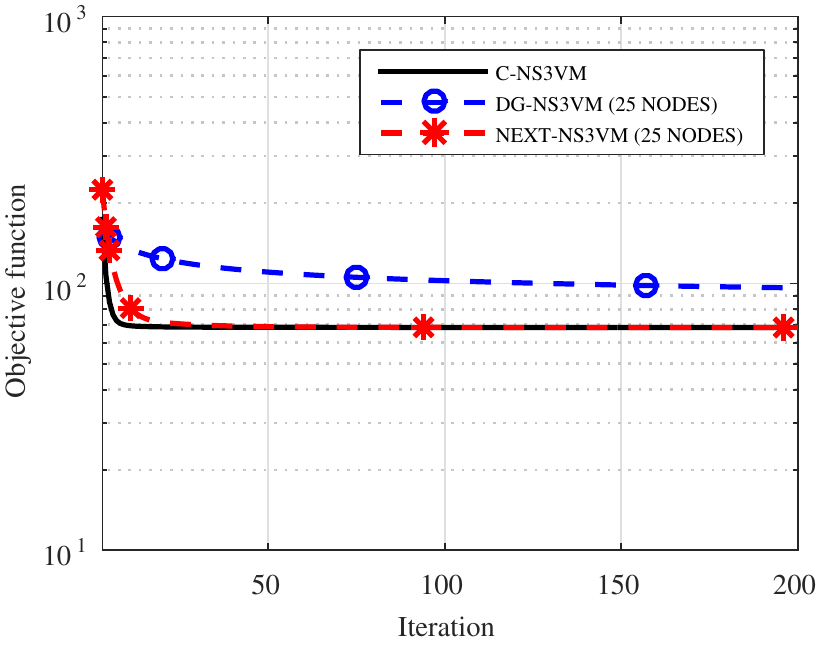}%
		\label{chapter8:fig:Objective_function_pcmac}}
	\hfil
	\subfloat[Gradient norm (PCMAC)]{\includegraphics[scale=0.76]{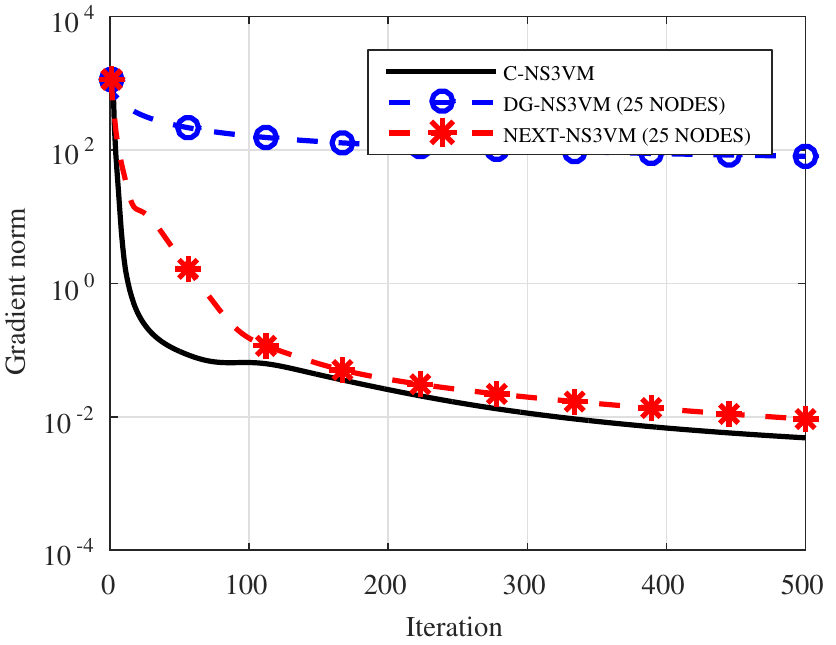} %
		\label{chapter8:fig:Gradient_norm_pcmac}}
	\vfil
	\subfloat[Objective function (GARAGEBAND)]{\includegraphics[scale=0.76]{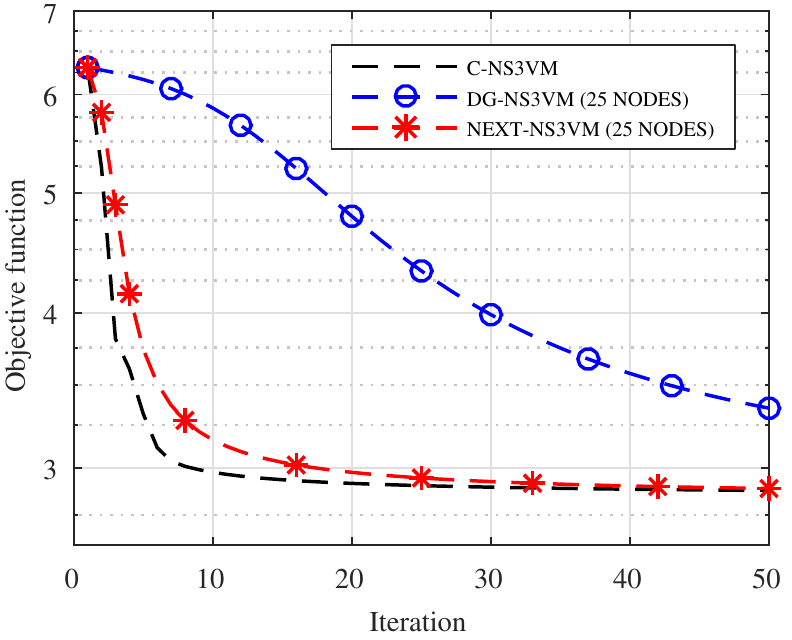}%
		\label{chapter8:fig:Objective_function_garageband}}
	\hfil
	\subfloat[Gradient norm (GARAGEBAND)]{\includegraphics[scale=0.76]{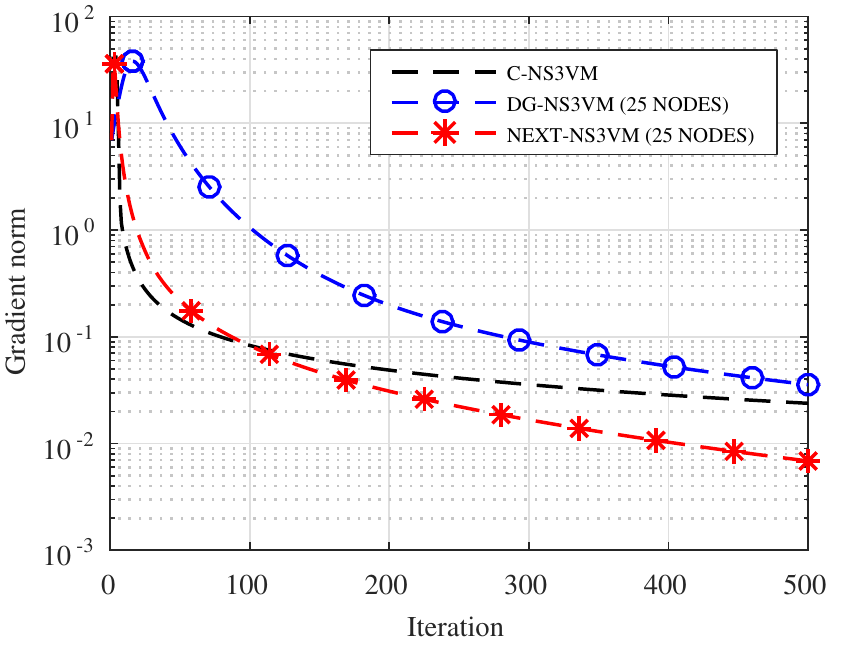} %
		\label{chapter8:fig:Gradient_norm_garageband}}
	\caption[Convergence behavior of DG-$\nabla$S$3$VM and NEXT-$\nabla$S$3$VM, compared to C-$\nabla$S$3$VM.]{Convergence behavior of DG-$\nabla$S$3$VM and NEXT-$\nabla$S$3$VM, compared to C-$\nabla$S$3$VM. The panels on the left show the evolution of the global cost function, while the panels on the right show the evolution of the squared norm of the gradient.}
	\label{chapter8:fig:obj_fcn_and_grad_norm}
\end{figure*}

\begin{figure}[t]
	\centering
	\subfloat[G50C]{\includegraphics[scale=0.7]{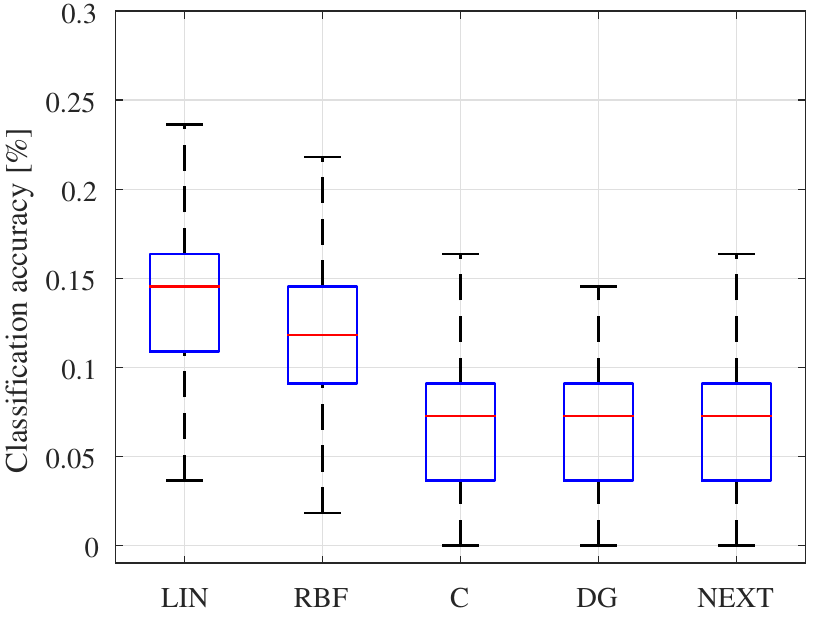}%
		\label{fig:boxplot_g50c}}
	\hfil
	\subfloat[PCMAC]{\includegraphics[scale=0.7]{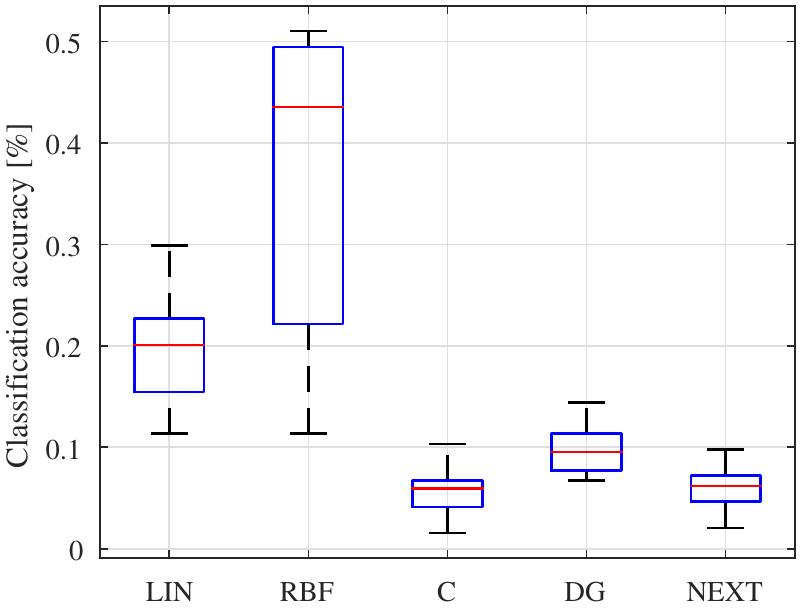}%
		\label{fig:boxplot_pcmac}}
	\hfil
	\subfloat[GARAGEBAND]{\includegraphics[scale=0.7]{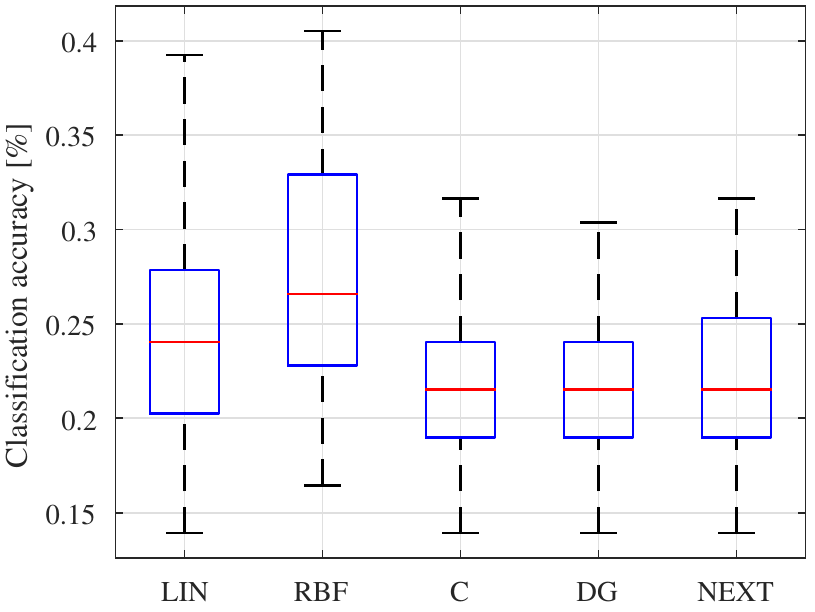}%
		\label{fig:boxplot_garageband}}
	\caption[Box plots for the classification accuracy of the centralized and distributed S$^3$VM algorithms.]{Box plots for the classification accuracy of the $5$ algorithms, in the case $N=25$. The central line is the median, the edges are the $25$th and $75$th percentiles, and the whiskers extend to the most extreme data points. For readability, the names of the algorithms have been abbreviated to LIN (LIN-SVM), RBF (RBF-SVM), C (C--$\nabla$S$3$VM), DG (DG-$\nabla$S$3$VM) and NEXT (NEXT-$\nabla$S$3$VM).}
	\label{chap8:fig:boxplot}
\end{figure}

\begin{figure*}[!p]
	\centering
	\subfloat[Classification error (G50C)]{\includegraphics[scale=0.76]{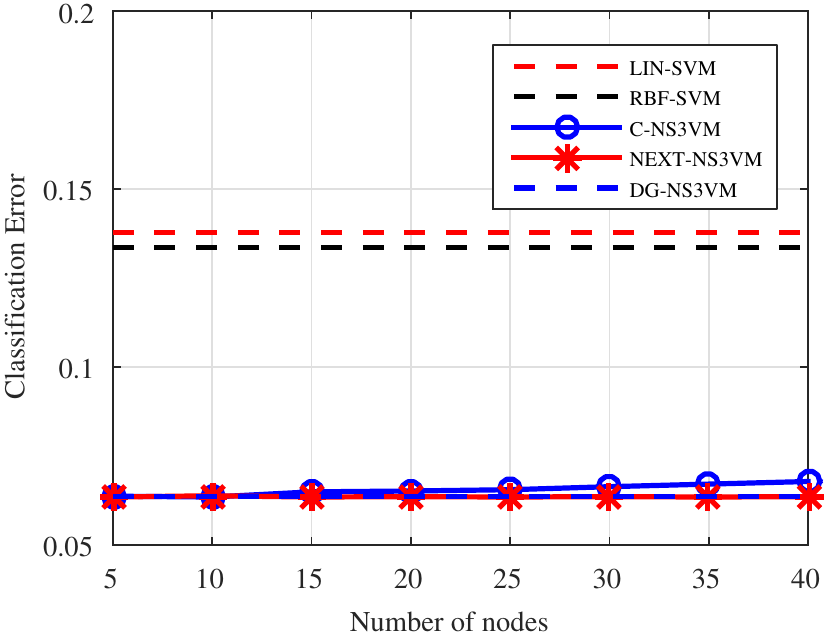}%
		\label{chapter8:fig:Classification_Error_g50c}}
	\hfil
	\subfloat[Training time (G50C)]{\includegraphics[scale=0.76]{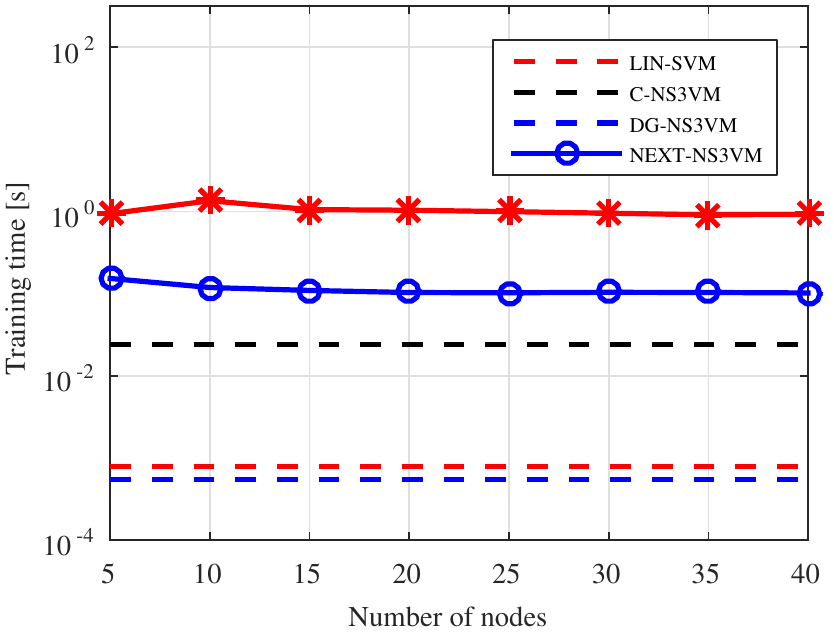} %
		\label{chapter8:fig:Training_time_g50c}}
	\vfil
	\subfloat[Classification error (PCMAC)]{\includegraphics[scale=0.76]{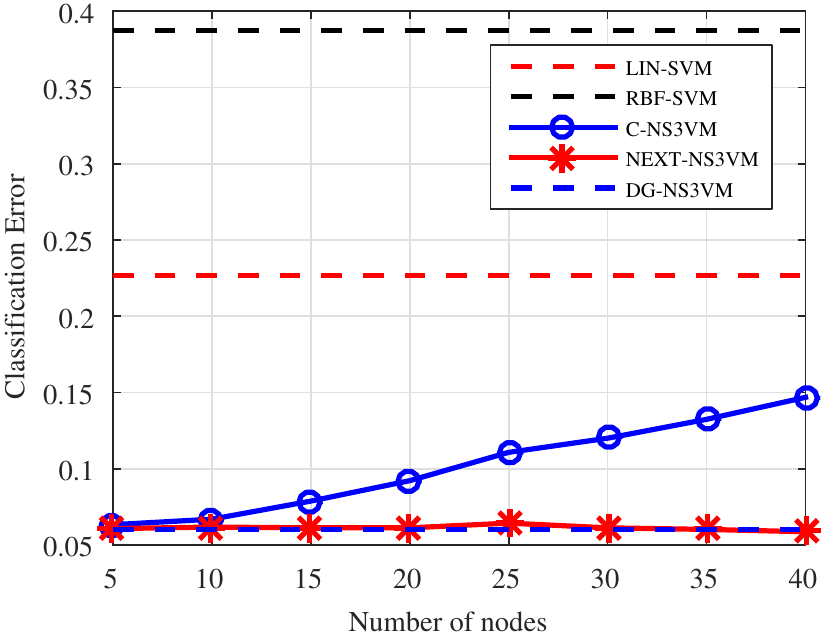}%
		\label{chapter8:fig:Classification_Error_pcmac}}
	\hfil
	\subfloat[Training time (PCMAC)]{\includegraphics[scale=0.76]{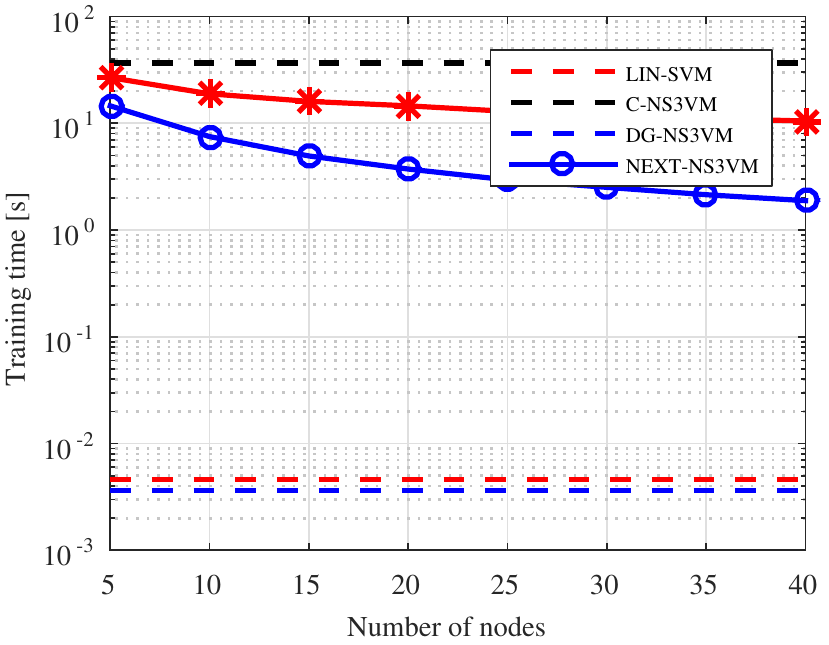} %
		\label{chapter8:fig:Training_time_pcmac}}
	\vfil
	\subfloat[Classification error (GARAGEBAND)]{\includegraphics[scale=0.76]{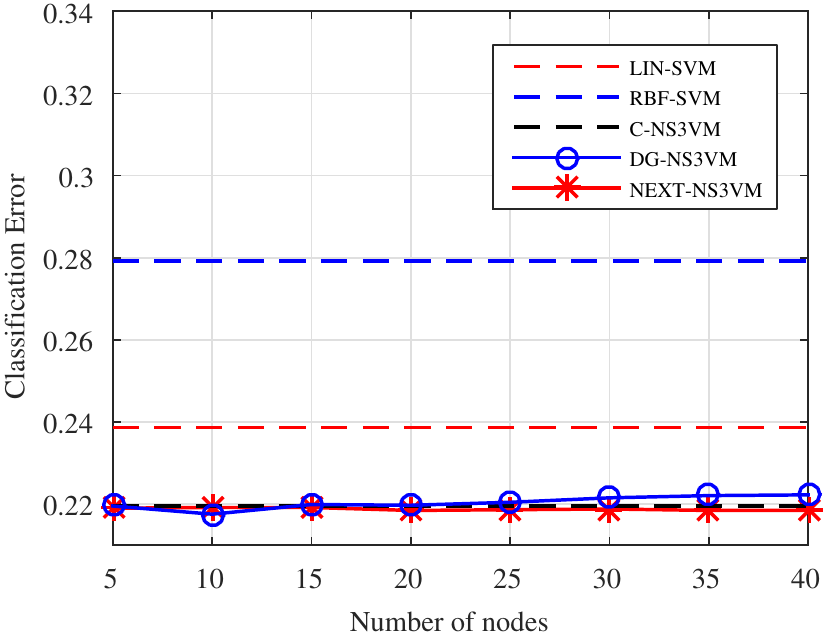}%
		\label{chapter8:fig:Classification_Error_garageband}}
	\hfil
	\subfloat[Training time (GARAGEBAND)]{\includegraphics[scale=0.76]{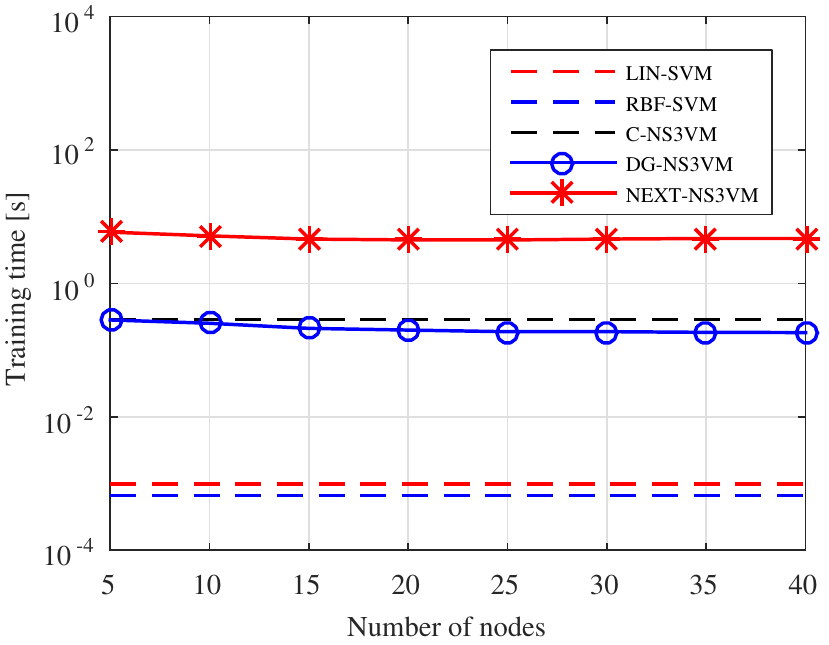} %
		\label{chapter8:fig:Training_time_garageband}}
	\vfil
	\caption[Training time and test error of GD-$\nabla$S$3$VM and NEXT-$\nabla$S$3$VM when varying the number of nodes in the network from $L=5$ to $L=40$.]{Training time and test error of GD-$\nabla$S$3$VM and NEXT-$\nabla$S$3$VM when varying the number of nodes in the network from $L=5$ to $L=40$. Results for LIN-SVM, RBF-SVM and C-$\nabla$S$3$VM are shown with dashed lines for comparison.}
	\label{chapter8:fig:classification_error_and_training_time}
\end{figure*}

As a final experiment, we investigate the scalability of the distributed algorithms, by analyzing the training time and the test error of DG-$\nabla$S$3$VM and NEXT-$\nabla$S$3$VM when varying the number of nodes in the network from $L=5$ to $L=40$ by steps of $5$. Results of this experiment are shown in Fig. \ref{chapter8:fig:classification_error_and_training_time}. The three panels on the left show the evolution of the classification error, while the three panels on the right show the evolution of the training time. Results of LIN-SVM, RBF-SVM and C-$\nabla$S$3$VM are shown with dashed lines for comparison. It is possible to see that NEXT-$\nabla$S$3$VM can track efficiently the centralized solution in all settings, regardless of the size of the network, while DG-$\nabla$S$3$VM is not able to properly converge (in the required number of iterations) for larger networks on PCMAC. With respect to training time, results are more varied. Generally speaking, NEXT-$\nabla$S$3$VM requires in average more training time than DG-$\nabla$S$3$VM. However, for large datasets (PCMAC and GARAGEBAND) both algorithms are comparable in training time with the centralized solution and, more notably, their training time generally decreases for bigger networks.

It is worth mentioning here that the results presented in this chapter strongly depend on our selection of the step-size sequences, and the specific surrogate function in \eqref{eq:surrogate_cost_function_approximated}. In the former case, it is known that the convergence speed of any gradient descent procedure can be accelerated by considering a proper adaptable step-size criterion. Along similar reasonings, the training time of NEXT-$\nabla$S$3$VM can in principle be decreased by loosening the precision to which the internal surrogate function is optimized, due to the convergence properties of NEXT already mentioned above. Finally, we can also envision a different choice of surrogate function for NEXT-$\nabla$S$3$VM, in order to achieve a different trade-off between training time and speed of convergence. As an example, we can replace the hinge loss $l_k(\vect{w})$ with its first-order linearization $\tilde{l}_k(\vect{w})$, similarly to \eqref{eq:g_tilde}. In this case, the resulting optimization problem would have a closed form solution, resulting in a faster training time per iteration (at the cost of more iterations required for convergence).

Overall, the experimental results suggest that both algorithms can be efficient tools for training a $\nabla$S$3$VM in a distributed setting, wherein NEXT-$\nabla$S$3$VM is able to converge extremely faster, at the expense of a larger training time. Thus, the choice of a specific algorithm will depend on the applicative domain, and on the amount of computational resources (and size of the training dataset) available to each agent.

\part{Distributed Learning from Time-Varying Data}
\chapter{Distributed Training for Echo State Networks}
\chaptermark{Distributed Training for ESN}
\label{chap:dist_esn}

\minitoc
\vspace{15pt}

\blfootnote{The content of this chapter is adapted from the material published in \citep{Scardapane2015esn}, except Section \ref{sec:extension_sparse_esns}, whose content is currently under final editorial review at IEEE Computational Intelligence Magazine.}

\section{Introduction}

\lettrine[lines=2]{I}{n} the previous part of this thesis, we considered \textit{static} classification and regression tasks, where the order of presentation of the different examples does not matter. In many real world applications, however, the patterns exhibit a temporal dependence among them, as in time-series prediction. In this case, it is necessary to include some form of memory of the previously observed patterns in the ANN models. In this respect, there are two main possibilities. The first is to include an external memory, by feeding as input a buffer of the last $K$ patterns, with $K$ chosen \textit{a priori}. Differently, it is possible to consider \textit{recurrent} connections inside the ANN, which effectively create an internal memory of the previous state, making the ANN a dynamic model. This last class of ANNs are called recurrent neural networks (RNNs).

In the DL setting, the former option has been investigated extensively, particularly using linear and kernel adaptive filters (see Section \ref{chap4:sec:diffusion_filtering} and Section \ref{chap4:sec:distributed_kernel}). The latter option, however, has received considerably less attention. In fact, despite numerous recent advances (e.g. \citep{hermans2013training}), RNN training remains a daunting task even in the centralized case, mostly due to the well-known problems of the exploding and vanishing gradients \citep{pascanu2012difficulty}. A decentralized training algorithm for RNNs, however, would be an invaluable tool in multiple large-scale real world applications, including time-series prediction on WSNs \citep{predd2006survey}, and multimedia classification over P2P networks.

In this chapter we aim to bridge (partially) this gap, by proposing a distributed training algorithm for a recurrent extension of the RVFL, the ESN. ESNs were introduced by H. Jaeger \citep{jaeger2010the} and together with liquid state machines and backpropagation-decorrelation, they form the family of RNNs known as reservoir computing \citep{lukovsevivcius2009reservoir}. The main idea of ESNs, similar to RVFLs, is to separate the recurrent part of the network (the so-called `reservoir'), from the non-recurrent part (the `readout'). The reservoir is typically fixed in advance, by randomly assigning its connections, and the learning problem is reduced to a standard linear regression over the weights of the readout. Due to this, ESNs do not required complex back-propagation algorithms over the recurrent portion of the network, thus avoiding the problems of the exploding and vanishing gradients. Over the last years, ESNs have been applied successfully to a wide range of domains, including chaotic time-series prediction \citep{jaeger2004harnessing,li2012chaotic}, load prediction \cite{Bianchi2015}, grammatical inference \citep{tong2007learning}, and acoustic modeling \citep{6587732}, between others. While several researchers have investigated the possibility of spatially distributing the reservoir \citep{obst2013distributed,shutin2008echo,vandoorne2011parallel}, to the best of our knowledge, no algorithm has been proposed to train an ESN in the DL setting. 

The remaining of the chapter is formulated as follows. In Section \ref{sec:primer_esn} we introduce the basic concepts on ESNs and a least-square criterion for training them. Section \ref{sec:distributed_esn} details a distributed algorithm for ESNs, extending the ADMM-RVFL presented in Chapter \ref{chap:dist_rvfl}. After some experimental results, we also present an extension to consider ESNs with \textit{sparse} readouts in Section \ref{sec:extension_sparse_esns}.

\section{A primer on ESNs}
\label{sec:primer_esn}

An ESN is a recurrent neural network which can be partitioned in three components, as shown in Fig. \ref{chap8:fig:esn}. 

\begin{figure}[h]
	\centering
	\includegraphics[width=0.5\linewidth, keepaspectratio]{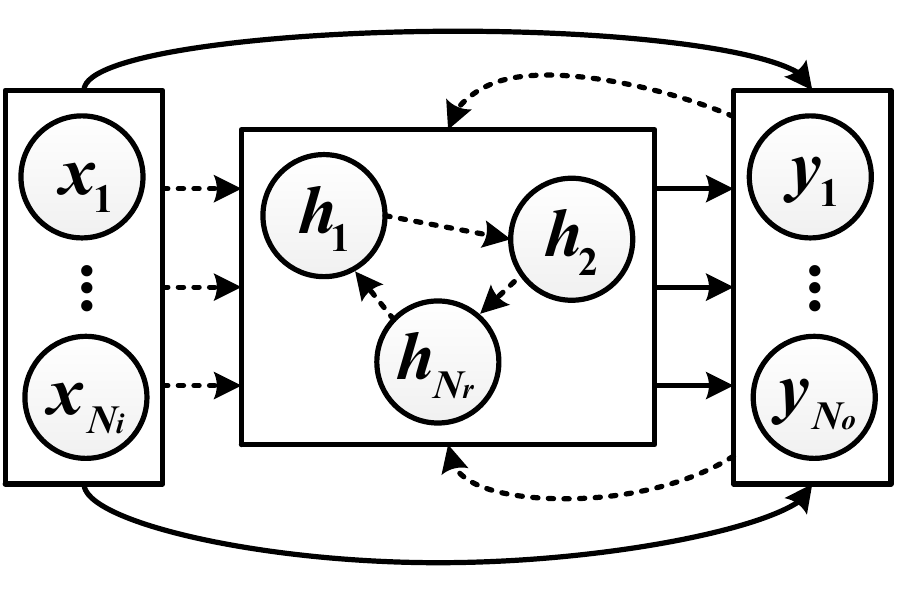}
	\caption[Schematic depiction of a ESN with multiple outputs.]{Schematic depiction of a ESN. Random connections are shown with dashed lines, while trainable connections are shown with solid lines.}
	\label{chap8:fig:esn}
\end{figure}

The $N_i$-dimensional input vector $\vect{x}[n] \in \R^{N_i}$ is fed to an $N_r$-dimensional reservoir, whose internal state $\vect{h}[n-1] \in \R^{N_r}$ is updated according to the state equation:
\begin{equation}
	\vect{h}[n] = f_{\text{res}}(\vect{W}_i^r \vect{x}[n] + \vect{W}_r^r \vect{h}[n-1] + \vect{W}_o^r\vect{y}[n-1]) \,,
	\label{chap8:eq:esn_state_update}
\end{equation}
\noindent where $\vect{W}_i^r \in \R^{N_r \times N_i}$, $\vect{W}_r^r \in \R^{N_r \times N_r}$ and $\vect{W}_o^r \in \R^{N_r \times N_o}$ are randomly generated matrices, $f_{\text{res}}(\cdot)$ is a suitably defined nonlinear function, and $\vect{y}[n-1] \in \R^{N_o}$ is the previous $N_o$-dimensional output of the network. To increase stability, it is possible to add a small uniform noise term to the state update, before computing the nonlinear transformation $f_{\text{res}}(\cdot)$ \citep{jaeger2002adaptive}. Then, the current output is computed according to:
\begin{equation}
\vect{y}[n] = f_{\text{out}}(\vect{W}_i^o \vect{x}[n] + \vect{W}_r^o \vect{h}[n]) \,,
\label{chap8:eq:esn_output_update}
\end{equation}
\noindent where $\vect{W}_i^o \in \R^{N_o \times N_i}, \vect{W}_r^o \in \R^{N_o \times N_r}$ are adapted based on the training data, and $f_{\text{out}}(\cdot)$ is an invertible nonlinear function. For simplicity, in the rest of the chapter we will consider the case of one-dimensional output, i.e. $N_o = 1$, but everything we say extends straightforwardly to the case $N_o > 1$.

To be of use in any learning application, the reservoir must satisfy the so-called `echo state property' (ESP) \citep{lukovsevivcius2009reservoir}. Informally, this means that the effect of a given input on the state of the reservoir must vanish in a finite number of time-instants. A widely used rule-of-thumb that works well in most situations is to rescale the matrix $\vect{W}_r^r$ to have $\rho(\vect{W}_r^r) < 1$, where $\rho(\cdot)$ denotes the spectral radius operator. For simplicity, we adopt this heuristic strategy in this chapter, but we refer the interested reader to \citep{yildiz2012re} for recent theoretical studies on this aspect. If the ESP is satisfied, an ESN with a suitably large $N_r$ can approximate any nonlinear filter with bounded memory to any given level of accuracy \citep{lukovsevivcius2009reservoir}.

\section{Distributed training for ESNs}
\label{sec:distributed_esn}

To train the ESN, suppose we are provided with a sequence of $Q$ desired input-outputs pairs $(\vect{x}[1], d[1]) \ldots, (\vect{x}[Q], d[Q])$. The sequence of inputs is fed to the reservoir, giving a sequence of internal states $\vect{h}[1], \ldots, \vect{h}[Q]$ (this is known as `warming'). During this phase, since the output of the ESN is not available for feedback, the desired output is used instead in Eq. \eqref{chap8:eq:esn_state_update} (so-called `teacher forcing'). Define the hidden matrix $\vect{H}$ and output vector $\vect{d}$ as:
\begin{align}
\vect{H} = & 
\left[\begin{array}{c}
\vect{x}^T[1] \,\, \vect{h}^T[1] \\
\vdots \\
\vect{x}^T[1] \,\, \vect{h}^T[Q]
\end{array}\right] \label{chap8:eq:state_matrix}\\
\vect{d} =  &
\left[\begin{array}{c}
f_{\text{out}}^{-1}(d[1]) \\
\vdots \\
f_{\text{out}}^{-1}(d[Q])
\end{array}\right] \label{chap8:eq:output_vector}
\end{align}
\noindent The optimal output weight vector is then given by solving the following regularized least-square problem:
\begin{equation}
\vect{w}^* = \argmin_{\vect{w} \in \R^{N_i+N_r}}  \frac{1}{2}\norm{\vect{H}\vect{w} - \vect{d}}^2 + \frac{\lambda}{2}\norm{\vect{w}}^2 \,,
\label{chap8:eq:esn_opt}
\end{equation}
\noindent where $\vect{w} = \left[ \vect{W}_i^o \, \vect{W}_r^o \right]^T$ and $\lambda$ is the standard regularization factor.\footnote{Since we consider one dimensional outputs, $\vect{W}_i^o$ and $\vect{W}_r^o$ are now row vectors, of dimensionality $N_i$ and $N_r$ respectively.} Solution of problem \eqref{chap8:eq:esn_opt} is a standard LRR problem as in Eq. \eqref{chap2:eq:lls}, and can be obtained in closed form as:
\begin{equation}
\vect{w}^* = \left( \vect{H}^T\vect{H} + \lambda \vect{I} \right)^{-1}\vect{H}^T \vect{d} \,.
\label{chap8:eq:ridge_regression_minimizer}
\end{equation} 
\noindent Whenever $N_r + N_i > Q$, Eq. \eqref{chap8:eq:ridge_regression_minimizer} can be computed more efficiently by rewriting it as:
\begin{equation}
\vect{w}^* = \vect{H}^T\left( \vect{H}\vect{H}^T + \lambda \vect{I} \right)^{-1}\vect{d} \,.
\label{chap8:eq:ridge_regression_minimizer_2}
\end{equation} 
\noindent More in general, we are provided with a training set $S$ of multiple desired sequences. In this case, we can simply stack the resulting hidden matrices and output vectors, and solve Eq. \eqref{chap8:eq:esn_opt}. Additionally, we note that in practice we can remove the initial $D$ elements (denoted as `wash-out' elements) from each sequence when solving the least-square problem, with $D$ specified \textit{a-priori}, due to their transient state. In the DL setting, we suppose that the $S$ sequences are distributed among the $L$ agents. Clearly, since training results in a LRR problem, at this point we can directly apply any of the algorithms presented in Chapter \ref{chap:dist_rvfl}. In particular, we choose to apply the ADMM algorithm due to its convergence properties. The resulting distributed protocol is summarized in Algorithm \ref{algo:dist_esn}.

\begin{AlgorithmCustomWidth}[h]
	\caption{ADMM-ESN: Local training algorithm for ADMM-based ESN ($k$th node).}
	\label{algo:dist_esn}
	\begin{algorithmic}[1]
		\Require Training set $S_k$ (local), size of reservoir $N_r$ (global), regularization factors $\lambda, \gamma$ (global), maximum number of iterations $T$ (global)
		\Ensure Optimal output weight vector $\vect{w}^*$
		\State Assign matrices $\vect{W}_i^r$, $\vect{W}_r^r$ and $\vect{W}_o^r$, in agreement with the other agents in the network.
		\State Gather the hidden matrix $\vect{H}_k$ and teacher signal $\vect{d}_k$ from $S_k$.
		\State Initialize $\vect{t}_k[0] = \vect{0}$, $\vect{z}[0] = \vect{0}$.
		\For{$n$ from $0$ to $T$}
		\State $\vect{w}_k[n+1] =  \left( \vect{H}_k^T \vect{H}_k + \gamma \vect{I} \right)^{-1} \left( \vect{H}^T_k \vect{d}_k - \vect{t}_k[n] + \gamma \vect{z}[n] \right)$.
		\State Compute averages $\hat{\vect{w}}$ and $\hat{\vect{t}}$ by means of the DAC procedure (see Appendix \ref{appA:sec:consensus}).
		\State $\vect{z}[n+1] =  \frac{\gamma \hat{\vect{w}} + \hat{\vect{t}}}{\lambda/L + \gamma}$.
		\State $\vect{t}_k[n+1] =  \vect{t}_k[n] + \gamma\left( \vect{w}_k[n+1] - \vect{z}[n+1] \right)$.
		\State Check termination with residuals (see Section \ref{chap4:sec:stopping_criterion}).
		\EndFor
		\State \textbf{return} $\vect{z}[n]$
	\end{algorithmic}
\end{AlgorithmCustomWidth}

\subsubsection*{Remark}
A large number of techniques have been developed to increase the generalization capability of ESNs without increasing its computational complexity \citep{lukovsevivcius2009reservoir}. Provided that the optimization problem in Eq. \eqref{chap8:eq:esn_opt} remains unchanged, and the topology of the ESN is not modified during the learning process, many of them can be applied straightforwardly to the distributed training case with the algorithm presented in this chapter. Examples of techniques that can be used in this context include lateral inhibition \citep{xue2007decoupled} and random projections \citep{butcher2013reservoir}. Conversely, techniques that cannot be straightforwardly applied include intrinsic plasticity \citep{steil2007online} and reservoir's pruning \citep{scardapane2014effective}.

\section{Experimental Setup}
\label{sec:results}
In this section we describe our experimental setup. Simulations were performed on MATLAB R2013a, on a $64$bit operative system, using an Intel\textsuperscript{\textregistered} Core\textsuperscript{\texttrademark} i5-3330 CPU with $3$ GHZ and $16$ GB of RAM.
\subsection{Description of the Datasets}

We validate the proposed ADMM-ESN on four standard artificial benchmarks applications, related to nonlinear system identification and chaotic time-series prediction. These are tasks where ESNs are known to perform at least as good as the state of the art \citep{lukovsevivcius2009reservoir}. Additionally, they are common in distributed scenarios. To simulate a large-scale analysis, we consider datasets that are approximately $1$--$2$ orders of magnitude larger than previous works. In particular, for every dataset we generate $50$ sequences of $2000$ elements each, starting from different initial conditions, summing up to $100.000$ samples for every experiment. This is roughly the limit at which a centralized solution is amenable for comparison. Below we provide a brief description of the four datasets.

The NARMA-$10$ dataset (denoted by N$10$) is a nonlinear system identification task, where the input $x[n]$ to the system is white noise in the interval $\left[0, 0.5\right]$, while the output $d[n]$ is computed from the recurrence equation \citep{jaeger2002adaptive}:
\begin{equation}
d[n] = 0.1 + 0.3d[n-1] + 0.05d[n-1]\prod_{i = 1}^{10}d[n-i] + 1.5x[n]x[n-9] \,.
\label{chap8:eq:narma-10}
\end{equation}
The output is then squashed to the interval $\left[-1, +1\right]$ by the nonlinear transformation:
\begin{equation}
d[n] = \tanh(d[n] - \hat{d}) \,,
\label{chap8:eq:shashing_output}
\end{equation}
\noindent where $\hat{d}$ is the empirical mean computed from the overall output vector.

The second dataset is the extended polynomial (denoted by EXTPOLY) introduced in \citep{butcher2013reservoir}. The input is given by white noise in the interval $\left[-1 \; +1\right]$, while the output is computed as:
\begin{equation}
d[n] = \sum_{i=0}^p\sum_{j=0}^{p-i} a_{ij}x^i[n]x^j[n-l] \,,
\label{chap8:eq:extpoly}
\end{equation}
\noindent where $p,l \in \R$ are user-defined parameters controlling the memory and nonlinearity of the polynomial, while the coefficients $a_{ij}$ are randomly assigned from the same distribution as the input data. In our experiments, we use a mild level of memory and nonlinearity by setting $p = l = 7$. The output is normalized using Eq. \eqref{chap8:eq:shashing_output}.

The third dataset is the prediction of the well-known Mackey-Glass chaotic time-series (denoted as MKG). This is defined in continuous time by the differential equation:
\begin{equation}
\dot{x}[n] = \beta x[n] + \frac{\alpha x[n-\tau]}{1 + x^\gamma[n-\tau]} \,.
\label{chap8:eq:mackeyglass}
\end{equation}
\noindent We use the common assignment $\alpha = 0.2$, $\beta = -0.1$, $\gamma = 10$, giving rise to a chaotic behavior for $\tau > 16.8$. In particular, in our experiments we set $\tau = 30$. Time-series \eqref{chap8:eq:mackeyglass} is integrated with a $4$-th order Runge-Kutta method using a time step of $0.1$, and then sampled every $10$ time-instants. The task is a $10$-step ahead prediction task, i.e.:
\begin{equation}
d[n] = x[n+10] \,.
\end{equation}

The fourth dataset is another chaotic time-series prediction task, this time on the Lorenz attractor. This is a $3$-dimensional time-series, defined in continuous time by the following set of differential equations:
\begin{equation}
\begin{cases}
\dot{x}_1[n] & = \sigma \left( x_2[n] - x_1[n] \right) \\
\dot{x}_2[n] & = x_1[n]\left(\eta - x_3[n]\right) - x_2[n] \\
\dot{x}_3[n] & = x_1[n]x_2[n] - \zeta x_3[n]
\end{cases} \,,
\label{chap8:eq:lorenz}
\end{equation}
\noindent where the standard choice for chaotic behavior is $\sigma = 10$, $\eta = 28$ and $\zeta = 8/3$. The model in Eq. \eqref{chap8:eq:lorenz} is integrated using an ODE45 solver, and sampled every second. For this task, the input to the system is given by the vector $\bigl[x_1[n] \; x_2[n] \; x_3[n]\bigr]$, while the required output is a $1$-step ahead prediction of the $x_1$ component, i.e.:
\begin{equation}
d[n] = x_1[n+1] \,.
\end{equation}
For all four datasets, we supplement the original input with an additional constant unitary input, as is standard practice in ESNs' implementations \citep{lukovsevivcius2009reservoir}.

\subsection{Description of the Algorithms}
\label{sec:desc_algos}
In our simulations we generate a network of agents, using a random topology model for the connectivity matrix, where each pair of nodes can be connected with $25\%$ probability. The only global requirement is that the overall network is connected. We experiment with a number of nodes going from $5$ to $25$, by steps of $5$. To estimate the testing error, we perform a $3$-fold cross-validation on the $50$ original sequences. For every fold, the training sequences are evenly distributed across the nodes, and the following three algorithms are compared:
\begin{description}
	\item[Centralized ESN] (C-ESN): This simulates the case where training data is collected on a centralized location, and the net is trained by directly solving problem \eqref{chap8:eq:esn_opt}.
	\item[Local ESN] (L-ESN): In this case, each node trains a local ESN starting from its data, but no communication is performed. The testing error is then averaged throughout the $L$ nodes.
	\item[ADDM-based ESN] (ADMM-ESN): This is an ESN trained with the distributed protocol introduced in the previous section. We set $\rho = 0.01$, a maximum number of $400$ iterations, and $\epsilon_{\text{abs}} = \epsilon_{\text{rel}} = 10^{-4}$.
\end{description}
\noindent All algorithms share the same ESN architecture, which is detailed in the following section. The $3$-fold cross-validation procedure is repeated $15$ times by varying the ESN initialization and the data partitioning, and the errors for every iteration and every fold are collected. To compute the error, we run the trained ESN on the test sequences, and gather the predicted outputs $\tilde{y}_1, \ldots, \tilde{y}_K$, where $K$ is the number of testing samples after removing the wash-out elements from the test sequences. Then, we compute the Normalized Root Mean-Squared Error (NRMSE), defined as:
\begin{equation}
\text{NRMSE} = \sqrt{\frac{\sum_{i = 1}^K \left[\tilde{y}_i - d_i\right]^2}{|K| \hat{\sigma}_d}} \,,
\label{chap8:eq:nrmse}
\end{equation}
\noindent where $\hat{\sigma}_d$ is an empirical estimate of the variance of the true output samples $d_1, \dots, d_K$.
\subsection{ESN Architecture}
As stated previously, all algorithms share the same ESN architecture. In this section we provide a brief overview on the selection of its parameters. First, we choose a default reservoir's size of $N_r = 300$, which was found to work well in all situations. Secondly, since the datasets are artificial and noiseless, we set a small regularization factor $\lambda = 10^{-3}$. Four other parameters are instead selected based on a grid search procedure. The validation error for the grid-search procedure is computed by performing a $3$-fold cross-validation over $9$ sequences, which are generated independently from the training and testing set. Each validation sequence has length $2000$. In particular, we select the following parameters:
\begin{itemize}
	\item The matrix $\vect{W}_i^r$, connecting the input to the reservoir, is initialized as a full matrix, with entries assigned from the uniform distribution $\left[-\alpha_{\text{i}} \; \alpha_{\text{i}}\right]$. The optimal parameter $\alpha_{\text{i}}$ is searched in the set $\left\{0.1, 0.3, \dots 0.9 \right\}$.
	\item Similarly, the matrix $\vect{W}_o^r$, connecting the output to the reservoir, is initialized as a full matrix, with entries assigned from the uniform distribution $\left[-\alpha_{\text{f}} \; \alpha_{\text{f}}\right]$. The parameter $\alpha_{\text{f}}$ is searched in the set $\left\{0, 0.1, 0.3, \dots 0.9 \right\}$. We allow $\alpha_{\text{f}} = 0$ for the case where no output feedback is needed.
	\item The internal reservoir matrix $\vect{W}_r^r$ is initialized from the uniform distribution $\left[-1 \; +1\right]$. Then, on average $75\%$ of its connections are set to $0$, to encourage sparseness. Finally, the matrix is rescaled so as to have a desired spectral radius $\rho$, which is searched in the same interval as $\alpha_{\text{i}}$.
	\item We use $\tanh(\cdot)$ nonlinearities in the reservoir, while a scaled identity $f(s) = \alpha_{\text{t}}s$ as the output function. The parameter $\alpha_{\text{t}}$ is searched in the same interval as $\alpha_{\text{i}}$.  
\end{itemize}
\noindent Additionally, we insert uniform noise in the state update of the reservoir, sampled uniformly in the interval $\left[0, 10^{-3}\right]$, and we discard $D = 100$ initial elements from each sequence.
\section{Experimental Results}
The final settings resulting from the grid-search procedure are listed in Table \ref{tab:gridsearch}. 
\begin{center}
	\begin{table}[h]
	\caption[Optimal parameters found by the grid-search procedure for testing ADMM-ESN.]{Optimal parameters found by the grid-search procedure. For a description of the parameters, see Section \ref{sec:desc_algos}.}
		{\centering\hfill{}
			\setlength{\tabcolsep}{4pt}
			\renewcommand{\arraystretch}{1.3}
			\begin{footnotesize}
				\begin{tabular}{lcccccc}   %@{}p{0.3\columnwidth}p{0.2\columnwidth}p{0.2\columnwidth}@{}
					\toprule
					\textbf{Dataset} & $\rho$ & $\alpha_\text{i}$ & $\alpha_\text{t}$  & $\alpha_\text{f}$ & $N_r$ & $\lambda$ \\
					\midrule
					N10 & $0.9$ & $0.5$ & $0.1$ & $0.3$ & \multirow{4}{*}{$300$} &  \multirow{4}{*}{$2^{-3}$} \\
					%\midrule
					EXTPOLY & $0.7$ & $0.5$ & $0.1$ & $0$ & & \\
					MKG & $0.9$ & $0.3$ & $0.5$ & $0$ & & \\
					LORENZ & $0.1$ & $0.9$ & $0.1$ & $0$ & & \\
					\bottomrule
				\end{tabular}
			\end{footnotesize}
		}
		\hfill{}\vspace{0.3em}
		\label{tab:gridsearch}
	\end{table}
\end{center}
\vspace{-2.5em} \noindent It can be seen that, except for the LORENZ dataset, there is a tendency towards selecting large values of $\rho$. Output feedback is needed only for the N$10$ dataset, while it is found unnecessary in the other three datasets. The optimal input scaling $\alpha_{\text{f}}$ is ranging in the interval $\left[0.5, 0.9\right]$, while the optimal teacher scaling $\alpha_{\text{t}}$ is small in the majority of cases.

The average NRMSE and training times for C-ESN are provided in Table \ref{tab:finalerrorandtime} as a reference. 
\begin{center}
	\begin{table}[h]
	\caption{Final misclassification error and training time for C-ESN, provided as a reference, together with one standard deviation.}
		{\centering\hfill{}
			\setlength{\tabcolsep}{4pt}
			\renewcommand{\arraystretch}{1.3}
			\begin{footnotesize}
				\begin{tabular}{lcc}
					\toprule
					\textbf{Dataset} & \textbf{NRMSE} & \textbf{Time} [secs]  \\
					\midrule
					N10 & $0.08 \pm 0.01$ & $9.26 \pm 0.20$ \\
					EXTPOLY & $0.39 \pm 0.01$ & $8.96 \pm 0.19$ \\
					MKG & $0.09 \pm 0.01$ & $9.26 \pm 0.20$ \\
					LORENZ & $0.67 \pm 0.01$ & $9.47 \pm 0.14$ \\
					\bottomrule
				\end{tabular}
			\end{footnotesize}
		}
		\hfill{}\vspace{0.6em}
		\label{tab:finalerrorandtime}
	\end{table}
\end{center}
\vspace{-2.5em} \noindent Clearly, these values do not depend on the size of the network, and they can be used as an upper baseline for the results of the distributed algorithms. Since we are considering the same amount of training data for each dataset, and the same reservoir's size, the training times in Table \ref{tab:finalerrorandtime} are roughly similar, except for the LORENZ dataset, which has $4$ inputs compared to the other three datasets (considering also the unitary input). As we stated earlier, performance of C-ESN are competitive with the state-of-the-art for all the four datasets. Moreover, we can see that it is extremely efficient to train, taking approximately $9$ seconds in all cases.

To study the behavior of the decentralized procedures when training data is distributed, we plot the average error for the three algorithms, when varying the number of nodes in the network, in Fig. \ref{chap8:fig:error} \subref{chap8:fig:narma10_error}-\subref{chap8:fig:lorenz_error}. The average NRMSE of C-ESN is shown as dashed black line, while the errors of L-ESN and ADMM-ESN are shown with blue squares and red circles respectively.

\begin{figure*}[h]
	\centering
	\subfloat[Dataset N10]{\includegraphics[scale=0.8]{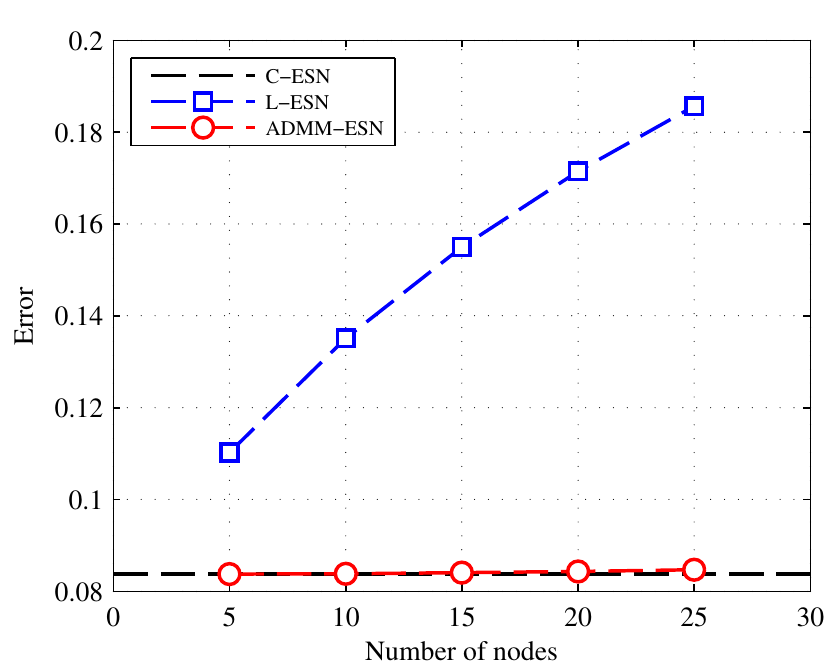}%
		\label{chap8:fig:narma10_error}}
	\hfil
	\subfloat[Dataset EXTPOLY]{\includegraphics[scale=0.8]{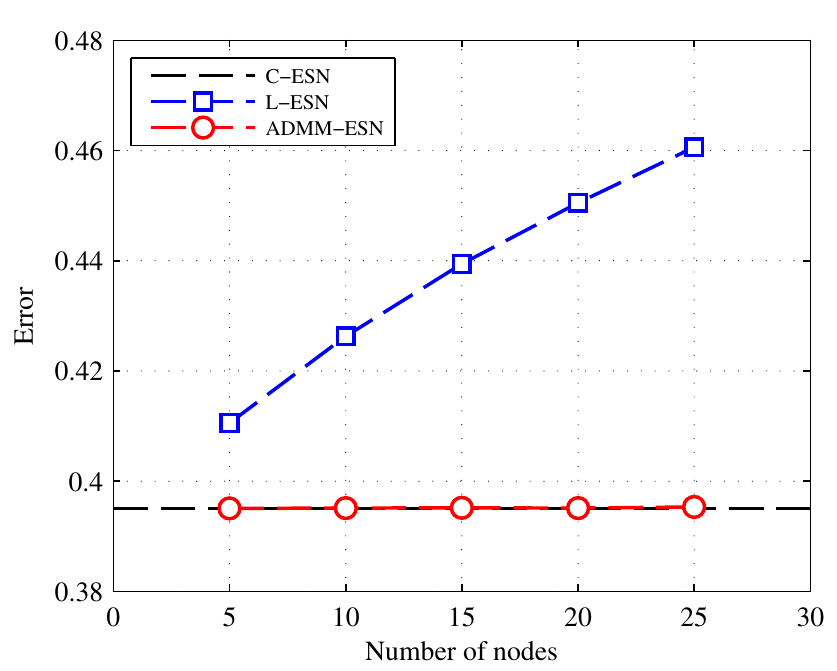}%
		\label{chap8:fig:extpoly_error}}
	\vfill
	\subfloat[Dataset MKG]{\includegraphics[scale=0.8]{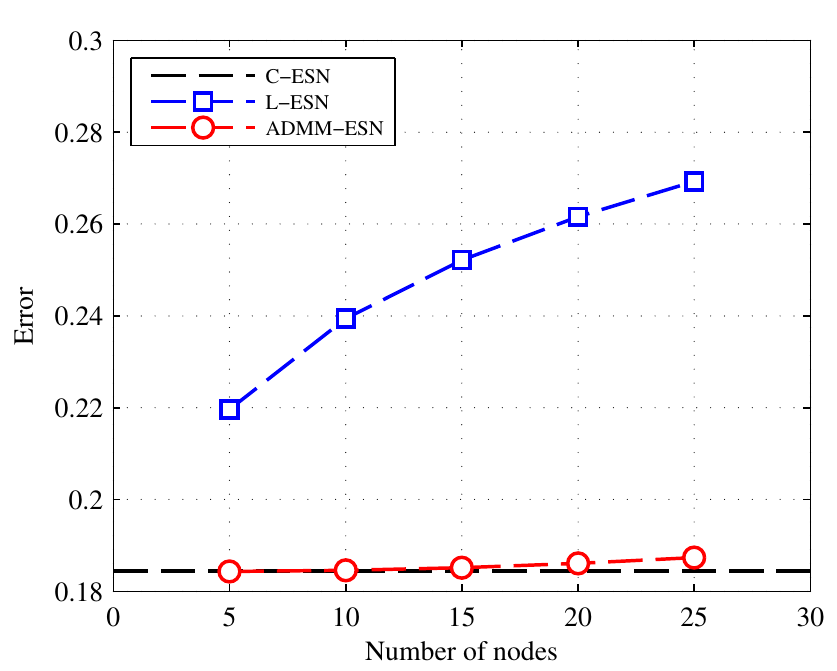}%
		\label{chap8:fig:mkg_error}}
	\hfil
	\subfloat[Dataset LORENZ]{\includegraphics[scale=0.8]{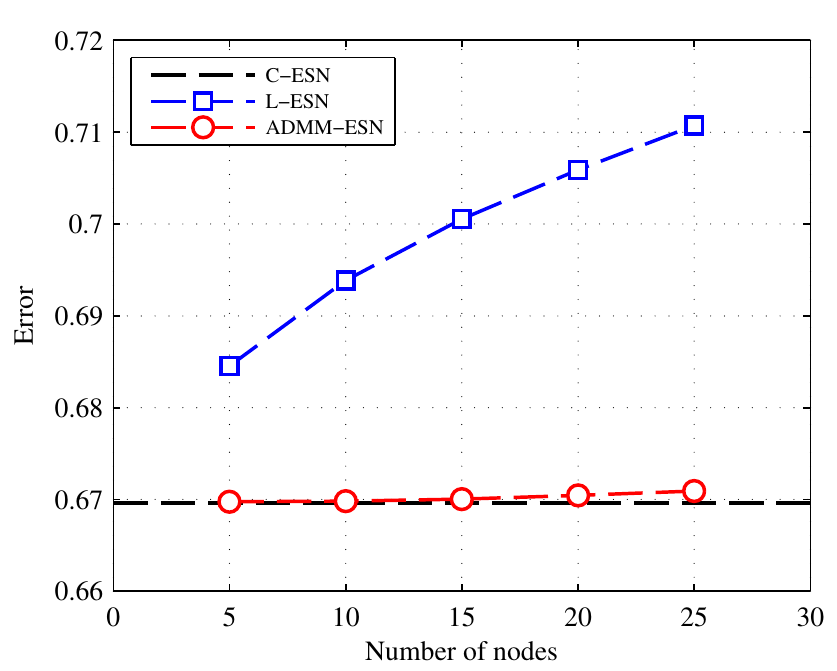}%
		\label{chap8:fig:lorenz_error}}
	\caption[Evolution of the testing error for ADMM-ESN, for networks going from $5$ agents to $25$ agents.]{Evolution of the testing error, for networks going from $5$ agents to $25$ agents. Performance of L-ESN is averaged across the nodes.}
	\label{chap8:fig:error}
\end{figure*}

Clearly, L-ESN is performing worse than C-ESN, due to its partial view on the training data. For small networks of $5$ nodes, this gap may not be particularly pronounced. This goes from a $3\%$ worse performance on the LORENZ dataset, up to a $37\%$ decrease in performance for the N10 dataset (going from an NRMSE of $0.8$ to an NRMSE of $0.11$). The gap is instead substantial for large networks of up to $25$ nodes. For example, the error of L-ESN is more than twice that of C-ESN for the N10 dataset, and its performance is $50\%$ worse in the MKG dataset. Albeit these results are expected, they are evidence of the need of a decentralized training protocol for ESNs, able to take into account all the local datasets.

As is clear from Fig. \ref{chap8:fig:error}, ADMM-ESN is able to perfectly track the performance of the centralized solution in all situations. A small gap in performance is present for the two predictions tasks when considering large networks. In particular, the performance of ADMM-ESN is roughly $1\%$ worse than C-ESN for networks of $25$ nodes in the datasets MKG and LORENZ. In theory, this gap can be reduced by considering additional iterations for the ADMM procedure, although this would be impractical in real world applications.

Training time requested by the three algorithms is shown in Fig. \ref{chap8:fig:time} \subref{chap8:fig:narma10_time}-\subref{chap8:fig:lorenz_time}. The training time for L-ESN and ADMM-ESN is averaged throughout the agents. 

\begin{figure*}[!h]
	\centering
	\subfloat[Dataset N10]{\includegraphics[scale=0.8]{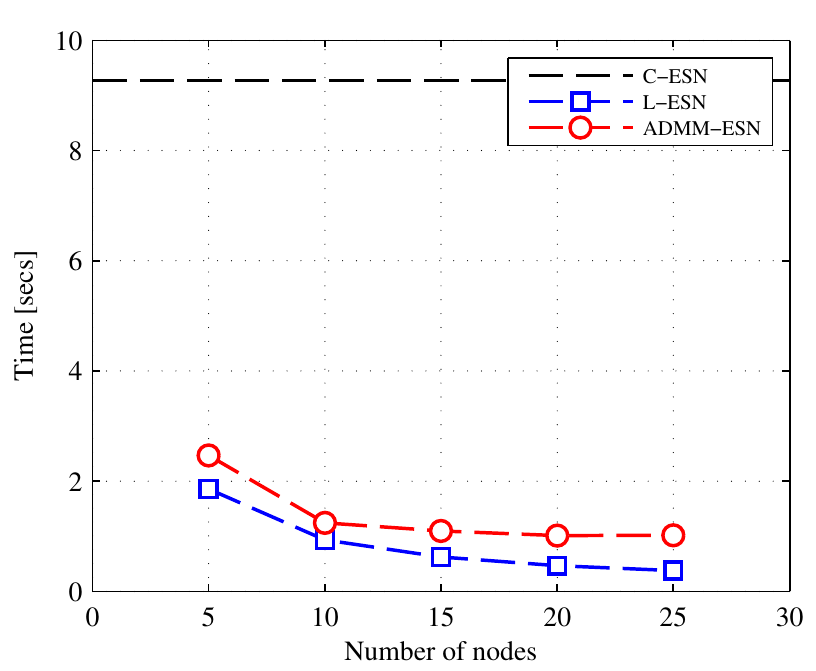}%
		\label{chap8:fig:narma10_time}}
	\hfil
	\subfloat[Dataset EXTPOLY]{\includegraphics[scale=0.8]{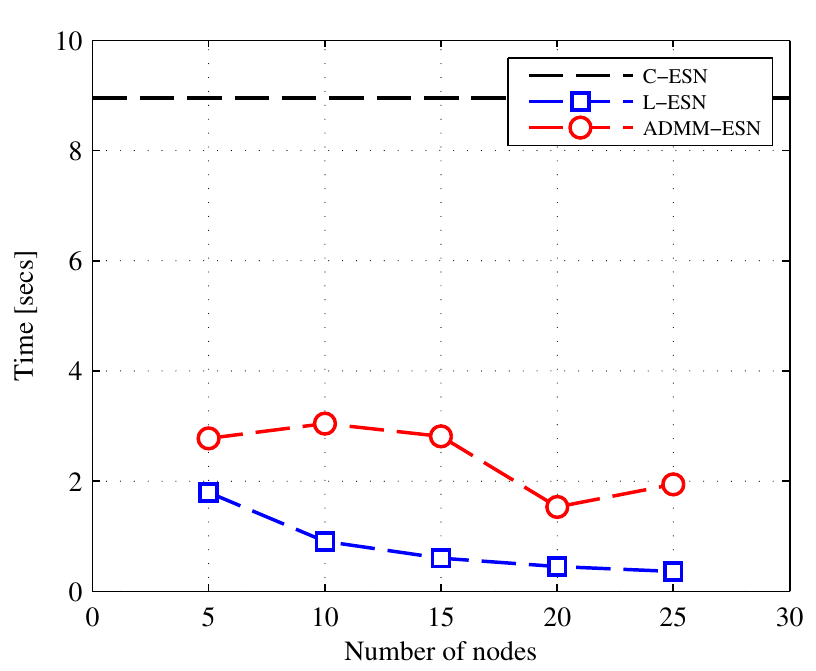}%
		\label{chap8:fig:extpoly_time}}
	\vfill
	\subfloat[Dataset MKG]{\includegraphics[scale=0.8]{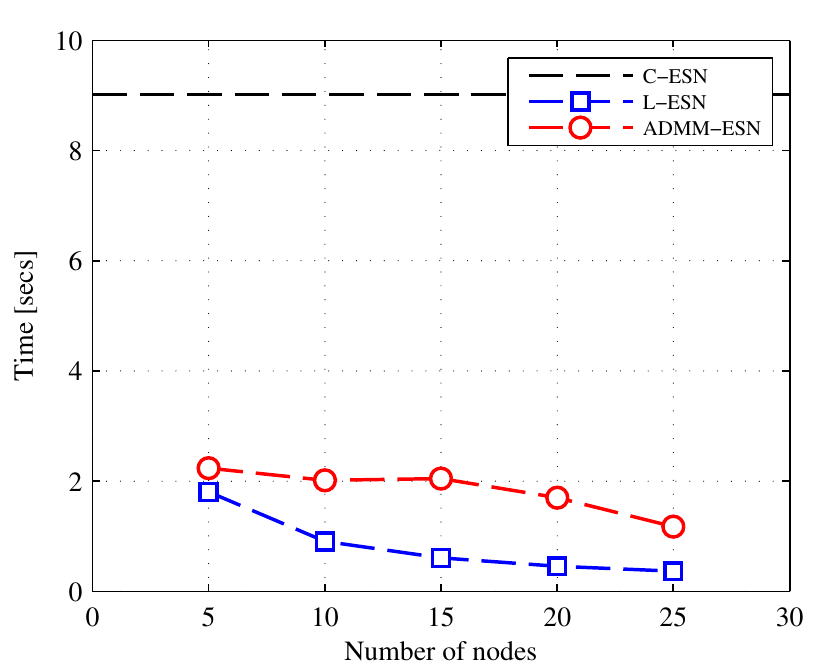}%
		\label{chap8:fig:mkg_time}}
	\hfil
	\subfloat[Dataset LORENZ]{\includegraphics[scale=0.8]{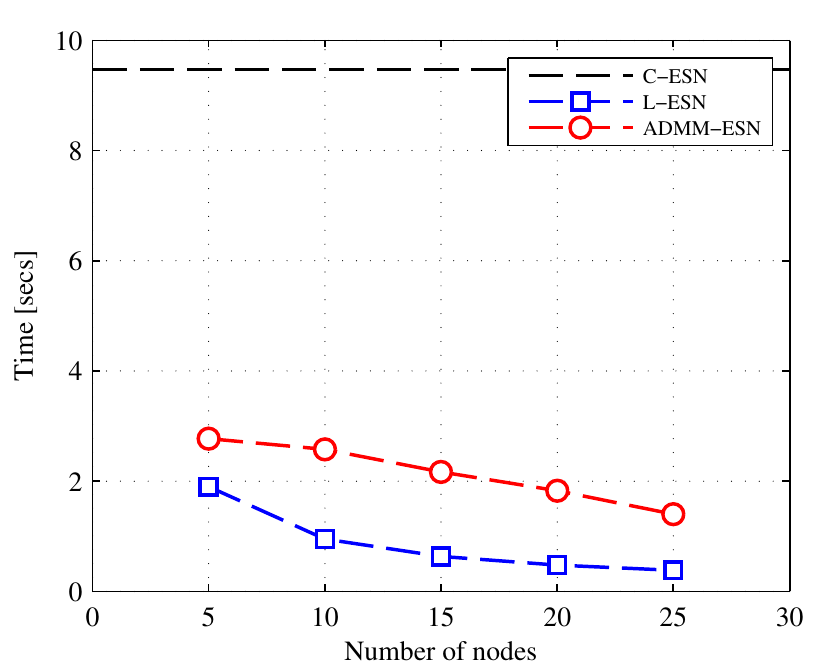}%
		\label{chap8:fig:lorenz_time}}
	\caption[Evolution of the training time for ADMM-ESN, for networks going from $5$ agents to $25$ agents.]{Evolution of the training time, for networks going from $5$ agents to $25$ agents. Time of L-ESN is averaged across the nodes.}
	\label{chap8:fig:time}
\end{figure*}

Since the computational time of training an ESN is mostly related to the matrix inversion in Eq. \eqref{chap8:eq:ridge_regression_minimizer}, training time is monotonically decreasing in L-ESN with respect to the number of nodes in the network (the higher the number of agents, the lower the amount of data at every local node). Fig. \ref{chap8:fig:time} shows that the computational overhead requested by the ADMM procedure is limited. In the best case, the N$10$ dataset with $10$ nodes, it required only $0.3$ seconds more than L-ESN, as shown from Fig. \ref{chap8:fig:time}\subref{chap8:fig:narma10_time}. In the worst setting, the EXTPOLY dataset with $15$ nodes, it required $2.2$ seconds more, as shown from Fig. \ref{chap8:fig:time}\subref{chap8:fig:extpoly_time}. In all settings, the time requested by ADMM-ESN is significantly lower compared to the training time of its centralized counterpart, showing it usefulness in large-scale applications.

\section{Extension to ESNs with Sparse Readouts}
\label{sec:extension_sparse_esns}

Up to now, the chapter has focused on training an ESN with a ridge regression routine. Still, it is known that standard ridge regression may not be the most suitable training algorithm for ESNs. Specifically, a large number of authors have been concerned with training a ESN with a sparse readout, i.e. a readout where the majority of the connections are set to zero. In the centralized case, this has been initially explored in depth in \cite{dutoit2009pruning}. The authors investigated different greedy methods to this end, including backward selection (where connections are removed one at a time based on an iterative procedure), random deletion, and others. Significant improvements are found, both in terms of generalization accuracy, and in terms of computational requirements. Moreover, having only a small amount of connections can lead to extremely efficient implementations \cite{scardapane2014effective}, particularly on low-cost devices. Thus, having the possibility of training sparse readouts for an ESN in a decentralized case can be a valuable tool. 

Since the readout is linear, sparsity can be enforced by including an additional $L_1$ regularization term to be minimized, resulting in the LASSO algorithm. For ESNs, this is derived for the first time in Ceperic and Baric \cite{7046033}. In the distributed case under consideration, the ADMM can be used for solving the LASSO problem quite efficiently, with only a minor modification with respect to the ADMM-ESN \cite{boyd2011distributed}. In particular, it is enough to replace the update for $\vect{z}[n+1]$ with:
\begin{equation}
\vect{z}[n+1] = S_{\lambda/N\rho} \left( \hat{\vect{w}}[n+1] + \hat{\vect{t}}[n] \right) \,,
\end{equation}
where the soft-thresholding operator $S_\alpha(\cdot)$ is defined for a generic vector $\vect{a}$ as:
\[
S_\alpha(\vect{a}) = \left( 1 - \frac{\alpha}{\norm{\vect{a}}} \right)_+ \vect{a} \,,
\]
and $(\cdot)_+$ is defined element-wise as $(\cdot)_+ = \max\left( 0, \cdot \right)$. In order to test the resulting sparse algorithm, we consider the MKG and N10 datasets with the same setup as before, but a lower number of elements (in total $2500$ for training and $2000$ for testing). Additionally, in order to have a slightly redundant reservoir, we select $N_r = 500$.

\subsection{Comparisons in the centralized case}

We begin our experimental evaluation by comparing the standard ESN and the ESN trained using the LASSO algorithm (denoted as L1-ESN) in the centralized case. This allows us to better investigate their behavior, and to choose an optimal regularization parameter $\lambda$. Particularly, we analyze test error, training time, and sparsity of the resulting L1-ESN when varying $\lambda$ in $10^{-j}$, with $j$ going from $-1$ to $-6$. The LASSO problems are solved using a freely available implementation of the iterated ridge regression algorithm by M. Schmidt \cite{schmidt2005least}.\footnote{\url{http://www.cs.ubc.ca/~schmidtm/Software/lasso.html}} The algorithm works by  approximating the $L_1$ term with $\norm[1]{w_i} \approx \frac{w_i^2}{\norm[1]{w_i}}$, and iteratively solving the resulting ridge regression problem. Results are presented in Fig. \ref{chap9:fig:centralized_results}, where results for MG and N10 are shown in the left and right columns, respectively.

\begin{figure*}
	\centering
	\subfloat[Test error (MG)]{\includegraphics{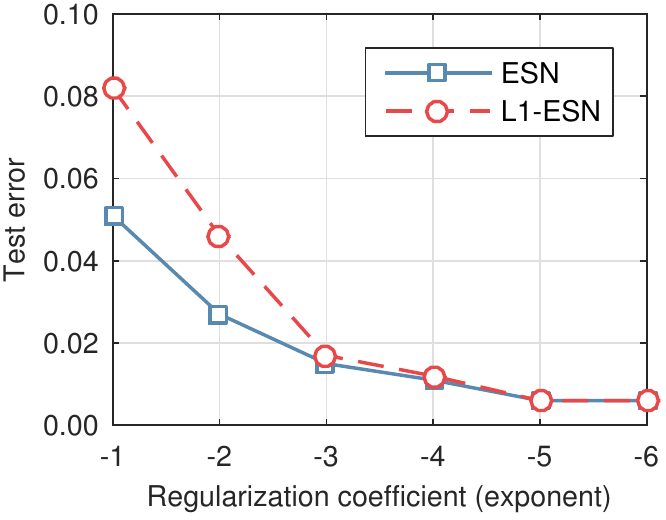}%
		\label{chap9:fig:errors_centralized_mg}} %
	\hfil
	\subfloat[Test error (N10)]{\includegraphics{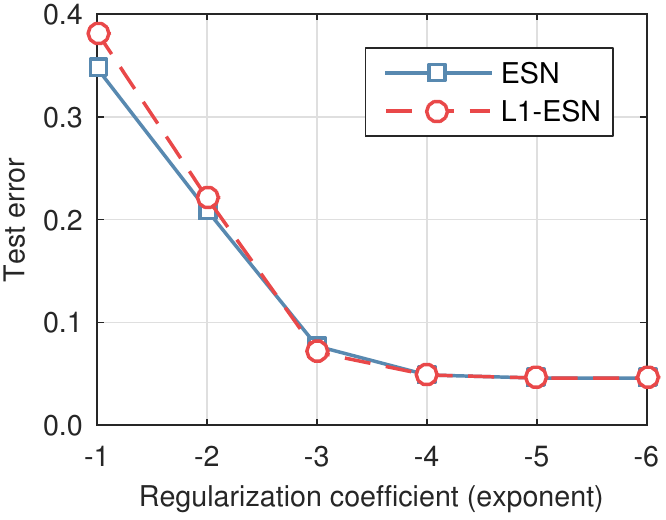}%
		\label{chap9:fig:errors_centralized_narma10}} %
	\vfil
	\subfloat[Tr. time (MG)]{\includegraphics{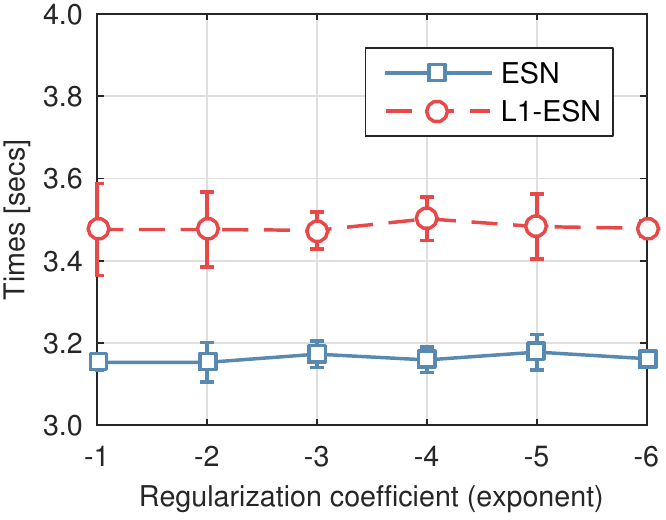}%
		\label{chap9:fig:times_centralized_mg}} %
	\hfil
	\subfloat[Tr. time (N10)]{\includegraphics{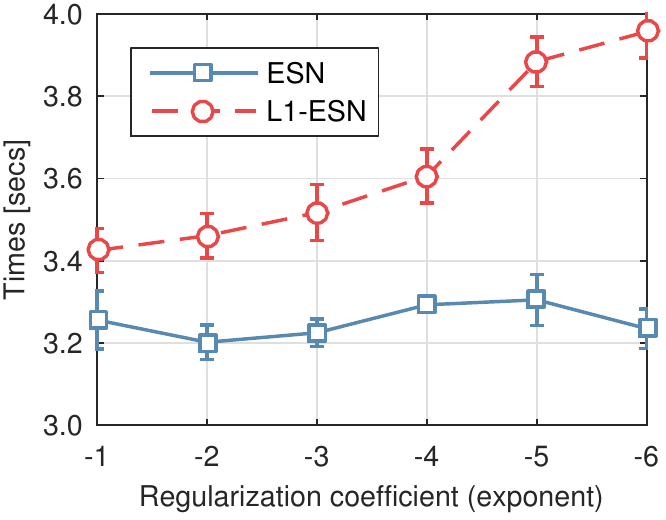}%
			\label{chap9:fig:times_centralized_narma10}} %
	\vfil
	\subfloat[Sparsity (MG)]{\includegraphics{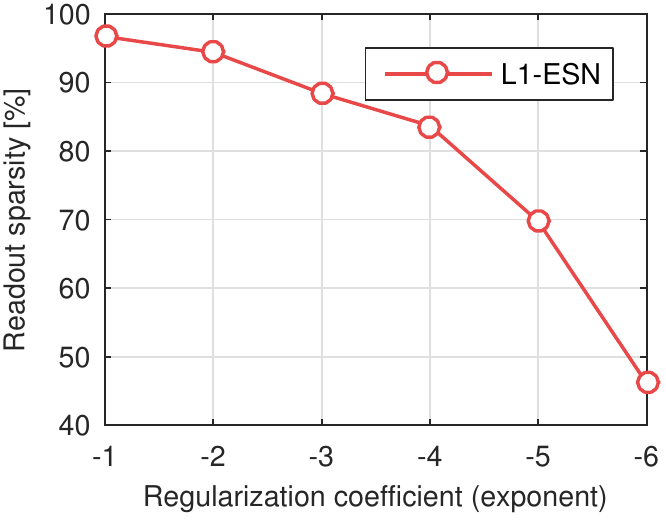}%
		\label{chap9:fig:sparsities_centralized_mg}} %
	\hfil
	 \subfloat[Sparsity (N10)]{\includegraphics{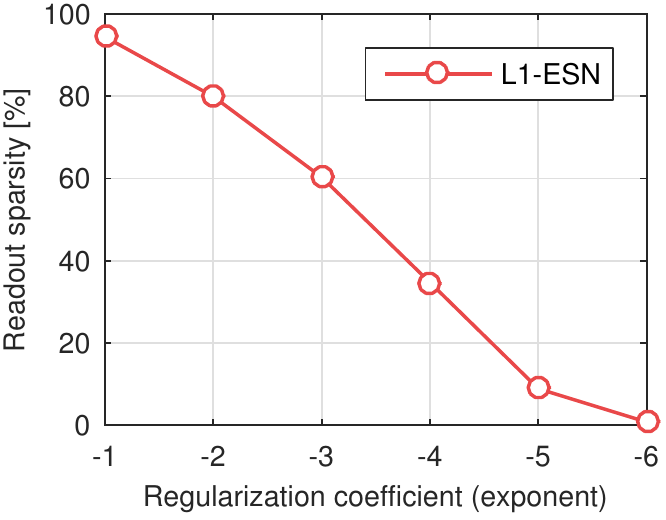}%
	 		\label{chap9:fig:sparsities_centralized_narma10}} %
	\caption[Evolution of test error, training time and sparsity when testing L1-ESN.]{Evolution of (a-b) test error, (c-d) training time and (e-f) sparsity of the output vector when varying the regularization coefficient in $10^j$, $j=-1, \ldots, -6$. Results for the MG dataset are shown on the left column, while results for the N10 dataset are shown in the right column.}
	\label{chap9:fig:centralized_results}
\end{figure*}

First of all, we can see clearly from Fig.s \ref{chap9:fig:errors_centralized_mg} and \ref{chap9:fig:errors_centralized_narma10} that the regularization effect of the two algorithms is similar, a result in line with previous works \cite{dutoit2009pruning}. Particularly, for large regularization factors, the estimates tend to provide an unsatisfactory test error, which however is relatively stable for sufficiently small coefficients. The tendency to select such a small factor is to be expected, due to the artificial nature of the datasets. A minimum in test error is reached for $j$ around $-5$ for MG, and $j$ around $-4$ for N10.

With respect to the training time, ridge regression is relatively stable to the amount of regularization, as the matrix to be inverted tends to be already well conditioned. Training time of LASSO is regular for MG, while it slightly increases for larger values of $j$ in the N10 case, as shown in Fig. \ref{chap9:fig:times_centralized_narma10}. In all cases, however, it is comparable to that of ridge regression, with a small increase of $0.5$ seconds in average.

The most important aspect, however, is evidenced in Fig.s \ref{chap9:fig:sparsities_centralized_mg} and \ref{chap9:fig:sparsities_centralized_narma10}. Clearly, sparsity of the readout goes from almost $100\%$ to $0\%$ as the regularization factor decreases. At the point of best test accuracy, the resulting readout has an average sparsity of $70\%$ for MG and $38\%$ for N10. This, combined with the simultaneous possibility of pruning the resulting reservoir \cite{dutoit2009pruning,scardapane2014effective}, can lead to an extreme saving of computational resources requested at the single sensor during the prediction phase. In order to provide a simpler comparison of the results, we also display them in tabular form in Table \ref{tab:centralized_results}.

\begin{table*}[t]
\small
\centering
\caption{The results of Fig. \ref{chap9:fig:centralized_results}, shown in tabular form, together with one standard deviation.}
\begin{tabular}{lcllll}
\toprule
Dataset & $\lambda$ & Algorithm & Test error (NRMSE) & Tr. time [secs] & Sparsity [\%]\\
\midrule
\multirow{12}{*}{MG} & \multirow{2}{*}{$10^{-1}$} & ESN & $0.051 \pm 0.010$ & $3.153 \pm 0.019$ & $0$ \\
							   &							& L1-ESN & $0.082 \pm 0.001$ & $3.476 \pm 0.112$ & $0.967 \pm 0.01$ \\ \cmidrule{2-6}
							   & \multirow{2}{*}{$10^{-2}$} & ESN & $0.027 \pm 0.006$ & $3.153 \pm 0.048$ & $0$ \\
							   &							& L1-ESN & $0.046 \pm 0.001$ & $3.476 \pm 0.091$ & $0.944 \pm  0.06$ \\ \cmidrule{2-6}
							   & \multirow{2}{*}{$10^{-3}$} & ESN & $ 0.015 \pm 0.003$ & $3.173 \pm 0.032$ & $0$ \\
							   &							& L1-ESN & $0.017 \pm 0.001$ & $3.474 \pm 0.045$ & $0.884 \pm 0.01$ \\ \cmidrule{2-6}
							   & \multirow{2}{*}{$10^{-4}$} & ESN & $0.011 \pm 0.001$ & $3.159 \pm 0.031$ & $0$ \\
							   &							& L1-ESN & $0.012 \pm 0.001$ & $3.502 \pm 0.053$ & $0.837 \pm 0.04$ \\ \cmidrule{2-6}
							   & \multirow{2}{*}{$10^{-5}$} & ESN & $0.006 \pm 0.001$ & $3.178 \pm 0.043$ & $0$ \\
							   &							& L1-ESN & $0.006 \pm 0.001$ & $3.483 \pm 0.079$ & $0.697 \pm 0.09$ \\ \cmidrule{2-6}
							   & \multirow{2}{*}{$10^{-6}$} & ESN & $0.006 \pm 0.001$ & $3.162 \pm 0.011$ & $0$ \\
							   &							& L1-ESN & $0.006 \pm 0.001$ & $3.976 \pm 0.019$ & $0.461 \pm 0.03$ \\
\midrule
\multirow{12}{*}{N10} & \multirow{2}{*}{$10^{-1}$} & ESN & $0.347 \pm 0.006$ & $3.256 \pm 0.071$ & $0$ \\
							 &							  & L1-ESN & $0.382 \pm 0.008$ & $3.425 \pm 0.053$ & $0.944 \pm 0.01$ \\ \cmidrule{2-6}
							 & \multirow{2}{*}{$10^{-2}$} & ESN & $0.209 \pm 0.004$ & $3.202 \pm 0.042$ & $0$ \\
							 &							  & L1-ESN & $0.221 \pm 0.007$ & $3.461 \pm 0.054$ & $0.799 \pm 0.04$ \\ \cmidrule{2-6}
							 & \multirow{2}{*}{$10^{-3}$} & ESN & $0.077 \pm 0.001$ & $3.225 \pm 0.034$ & $0$ \\
							 &							  & L1-ESN & $0.071 \pm 0.001$ & $3.517 \pm 0.068$ & $0.603 \pm 0.03$ \\ \cmidrule{2-6}
							 & \multirow{2}{*}{$10^{-4}$} & ESN & $0.049 \pm 0.001$ & $3.293 \pm 0.015$ & $0$ \\
							 &							  & L1-ESN & $0.049 \pm 0.001$ & $3.606 \pm 0.066$ & $0.347 \pm 0.02$ \\ \cmidrule{2-6}
							 & \multirow{2}{*}{$10^{-5}$} & ESN & $0.046 \pm 0.001$ & $3.305 \pm 0.062$ & $0$ \\
							 &							  & L1-ESN & $0.046 \pm 0.001$ & $3.884 \pm 0.060$ & $0.089 \pm 0.01$ \\ \cmidrule{2-6}
							 & \multirow{2}{*}{$10^{-6}$} & ESN & $0.046 \pm 0.001$ & $3.235 \pm 0.048$ & $0$ \\
							 &							  & L1-ESN & $0.046 \pm 0.001$ & $3.957 \pm 0.064$ & $0.008 \pm 0.01$ \\
							 
\bottomrule
\end{tabular}
\vspace{0.5em}
\label{tab:centralized_results}
\end{table*}

\subsection{Comparisons in the distributed case}

We now consider the implementation of the distributed L1-ESN over a network of agents. More in detail, training observations are uniformly subdivided among the $L$ agents in a predefined network, with $L$ varying from $5$ to $30$ by steps of $5$. For every run, the connectivity among the agents is generated randomly, such that each pair of agents has a $25\%$ probability of being connected, with the only global requirement that the overall network is connected. The following three algorithms are compared:
\begin{enumerate}
	\item \textbf{Centralized ESN} (C-L1-ESN): this simulates the case where training data is collected on a centralized location, and the net is trained by directly solving the LRR problem. This is equivalent to the ESN analyzed in the previous section, and following the results obtained there, we set $\lambda=10^{-5}$ for MG, and $\lambda=10^{-4}$ for N10.
	\item \textbf{Local ESN} (L-L1-ESN): in this case, each agent trains an L1-ESN starting from its local measurements, but no communication is performed. The testing error is averaged throughout the $L$ agents.
	\item \textbf{ADMM-based ESN} (ADMM-L1-ESN): this is trained with the algorithm introduced previously. We select $\gamma = 0.01$ and a maximum number of $400$ iterations. For the DAC protocol, we set a maximum number of $300$ iterations. DAC also stops whenever the updates (in norm) at every agent are smaller than a predefined threshold $\delta = 10^{-8}$:
	\begin{equation}
	\Bigl\lVert\boldsymbol{q}_k[n+1;j] - \boldsymbol{q}_k[n+1;j-1]\Bigr\rVert_2^2 < \delta, \; k \in \left\{ 1, 2,\dots, L \right\} \,.
	\label{eq:consensus_stopping_criterion}
	\end{equation}
\end{enumerate}
Results of this set of experiments are presented in Fig. \ref{chap9:fig:distributed_results}. Similarly to before, results from the two datasets are presented in the left and right columns, respectively. From Fig.s \ref{chap9:fig:errors_distributed_mg} and \ref{chap9:fig:errors_distributed_narma10} we see that, although L-L1-ESN achieves degrading performance for bigger networks (due to the lower number of measurements per agent), ADMM-L1-ESN is able to effectively track the performance of the centralized counterpart, except for a small deviation in MG. Indeed, it is possible to reduce this gap by increasing the number of iterations; however, the performance gain is not balanced by the increase in computational cost. 

With respect to the training time, it is possible to see from Fig.s \ref{chap9:fig:times_distributed_mg} and \ref{chap9:fig:times_distributed_narma10} that the training time is relatively steady for larger networks in ADMM-L1-ESN, showing its feasibility in the context of large sensor networks. Moreover, the computational cost requested by the distributed procedure is low and, in the worst case, it requires no more than $1$ second with respect to the cost of a centralized counterpart. Overall, we can see that our distributed protocol allows for an efficient implementation in terms of performance and training time, while at the same time guaranteeing a good level of sparsity of the resulting readout. This, in turn, is essential for many practical implementations where computational savings are necessary.

\begin{figure*}
	\centering
	\subfloat[Test error (MG)]{\includegraphics{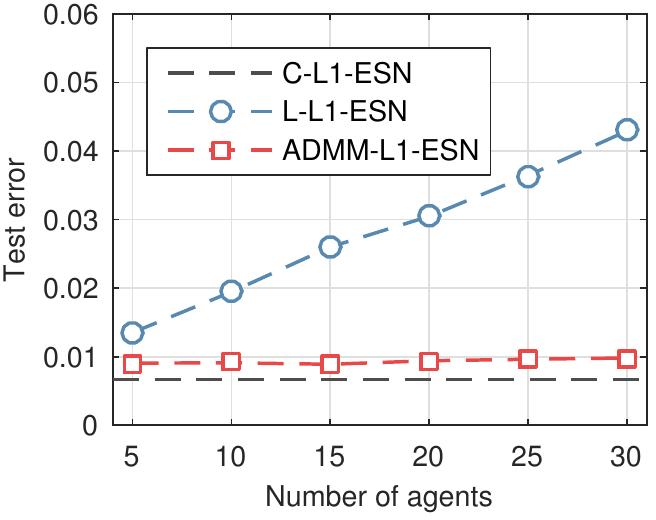}%
	\label{chap9:fig:errors_distributed_mg}}
	\hfil
	\subfloat[Test error (N10)]{\includegraphics{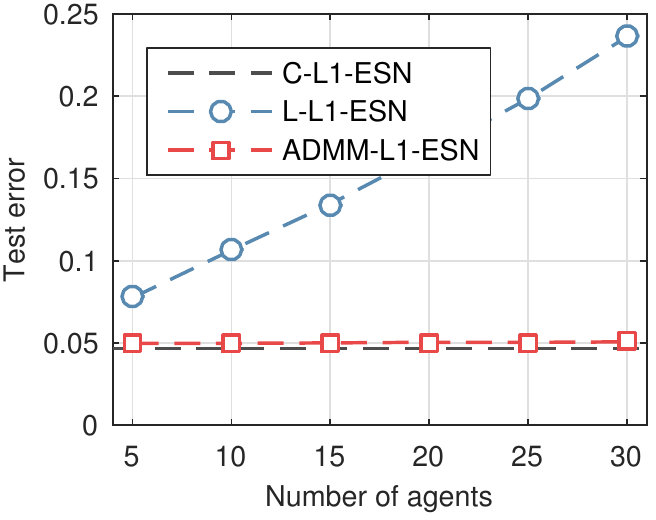}%
	\label{chap9:fig:errors_distributed_narma10}}	
	\vfil
	\subfloat[Training time (MG)]{\includegraphics{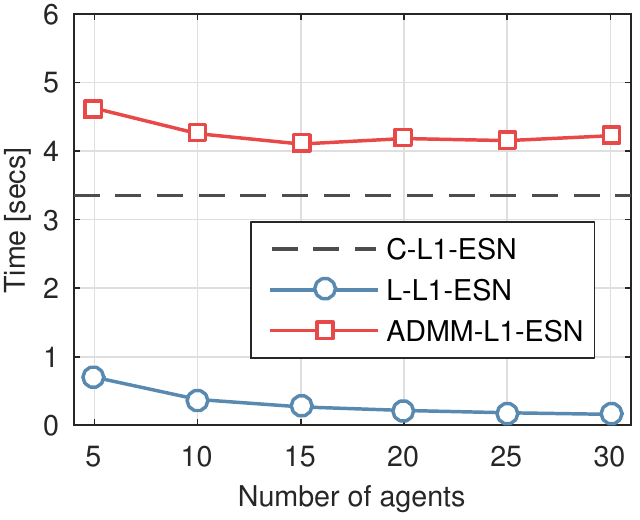}%
	\label{chap9:fig:times_distributed_mg}}
	\hfil
	\subfloat[Training time (N10)]{\includegraphics{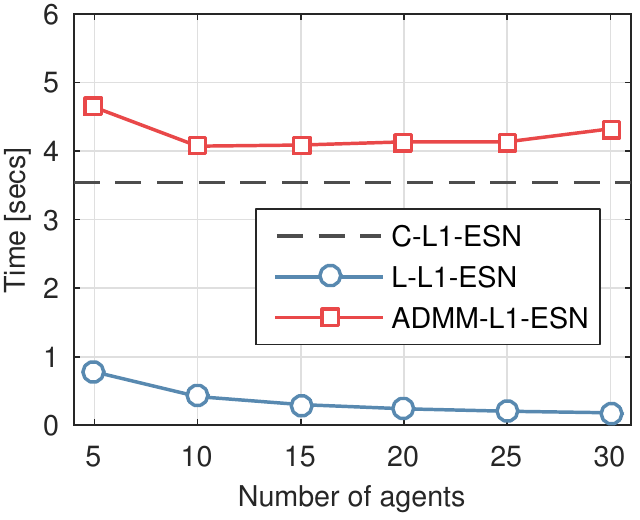}%
	\label{chap9:fig:times_distributed_narma10}}
	\vfil
	\caption[Evolution of test error, training time and sparsity when testing ADMM-L1-ESN.]{Evolution of (a-b) test error, (c-d) training time when varying the number of agents in the network from $5$ to $30$ by steps of $5$.}
	\label{chap9:fig:distributed_results}
\end{figure*}

Some additional insights into the convergence behavior of ADMM-L1-ESN can also be obtained by analyzing the evolution of the so-called (primal) residual, given by \cite{boyd2011distributed}:
\begin{equation}
r[n+1] = \frac{1}{L}\sum_{k=1}^L \Bigl\lVert \vect{w}[n+1] - \vect{z}[n+1] \Bigr\rVert_{2}  \,.
\label{eq:admm_residual}
\end{equation}
As can be seen from Fig. \ref{chap9:fig:admm_residual} (shown with a logarithmic $y$-axis), this rapidly converges towards $0$, ensuring that the algorithm is able to reach a stationary solution in a relatively small number of iterations.

\begin{figure*}
	\centering
	\subfloat[Residual (MG)]{\includegraphics{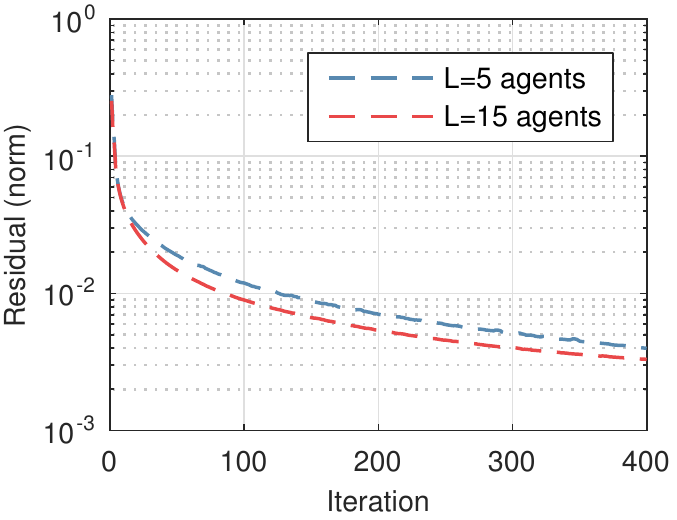}%
		\label{chap9:fig:admm_residual_mg}}
	\hfil
	\subfloat[Residual (N10)]{\includegraphics{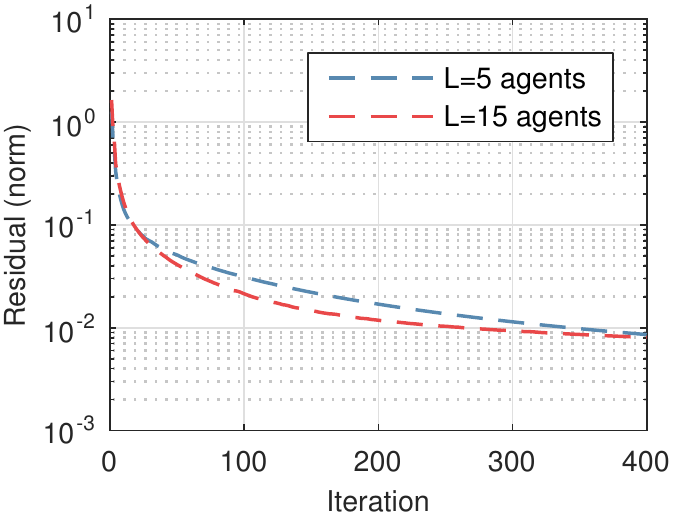}%
		\label{chap9:fig:admm_residual_narma10}}
	\vfill
	\caption{Evolution of the (primal) residual of ADMM-L1-ESN for $L=5$ and $L=15$.}
	\label{chap9:fig:admm_residual}
\end{figure*}
\chapter{Diffusion Spline Filtering}
\label{chap:dist_saf}

\minitoc
\vspace{15pt}

\blfootnote{A partial content of this chapter is currently under review for the 2016 European Signal Processing Conference (EUSIPCO).}

\section{Introduction}

\lettrine[lines=2]{T}{his} chapter continues the investigation on distributed training algorithms for time-varying data. Particularly, we focus on models with external memory (i.e., with a buffer of the last input elements), adequate to devices with extremely low computation resources. Available approaches in this sense include the linear diffusion filters (see Section \ref{chap4:sec:diffusion_filtering}), and kernel-based distributed filters (see Section \ref{chap4:sec:distributed_kernel}). However, the former applicability is limited to scenarios where the assumption of a linear model between the output and the observed variables is meaningful. Kernel methods, instead, are hindered by the fact that a kernel model depends by definition on the full observed dataset, as we analyzed extensively in Chapters \ref{chap:multi_agent_sl} and \ref{chap:dist_ssl}.

In this chapter, we propose a novel nonlinear distributed filtering algorithm based on the recently proposed spline adaptive filter (SAF) \cite{Scarpiniti2013}. Specifically, we focus on the Wiener SAF filter \cite{Scarpiniti2013}, where a linear filter is followed by an \textit{adaptive} nonlinear transformation, obtained with spline interpolation. They are attractive nonlinear filters for two main reasons. First, the nonlinear part is linear-in-the-parameters (LIP), allowing for the possibility of adapting both parts of the filter using standard linear filtering techniques. Secondly, while the spline can be defined by a potentially large number of parameters, only a small subset of them must be considered and adapted at each time step ($4$ in our experiments). Due to this, they allow to approximate non-trivial nonlinear functions with a small increase in computational complexity with respect to linear filters. 

Based on the general theory of DA,\footnote{Described in Chapter \ref{chap:multi_agent_sl}.} in this chapter we propose a diffused version of the SAF filter, denoted as D-SAF. In particular, we show that a cooperative behavior can be implemented by considering two subsequent diffusion operations, on the linear and non-linear components of the SAF respectively. Due to this, the D-SAF inherits the aforementioned characteristics of the centralized SAF, namely, it enables the agents to collectively converge to a non-linear function, with a small overhead with respect to a purely linear diffusion filter. In fact, D-LMS can be shown to be a special case of D-SAF, where adaptation is restricted to the linear part only. To demonstrate the merits of the proposed D-SAF, we perform an extensive set of experiments, considering medium and large-sized networks, coupled with mild and strong non-linearities. Simulations show that the D-SAF is able to efficiently learn the underlying model, and strongly outperform D-LMS and a purely non-cooperative SAF.

The rest of the chapter is organized as follows. Section \ref{sec:saf} introduces the basic framework of spline interpolation and SAFs. Section \ref{sec:diffusionsaf} formulates the D-SAF algorithm. Subsequently, we details our experimental setup and results in Section \ref{sec:exp_setup} and Section \ref{sec:exp_results} respectively.

\section{Spline Adaptive Filter}
\label{sec:saf}
Denote by $x[n]$ the input to the SAF filter at time $n$, and by $\vect{x}_n = \left[ x[n], \ldots, x[n - M + 1] \right]^T$ a buffer of the last $M$ samples. As in the previous chapters, we assume to be dealing with real inputs. Additionally, we assume that an unknown Wiener model is generating the desired response as follows:
\begin{equation}
d[n] = f_0\left( \vect{w}_0^T \vect{x}_n \right) + \nu[n] \,,
\label{eq:data_model}
\end{equation}
where $\vect{w}_0 \in \R^M$ are the linear coefficients, $f_0(\cdot)$ is a desired nonlinear function, which is supposed continuous and derivable, and $\nu[n] \sim \mathcal{N}(0, \sigma^2)$ is a Gaussian noise term. Similarly, a SAF computes the output in a two-step fashion. First, it performs a linear filtering operation given by:
\begin{equation}
s[n] = \vect{w}_n^T\vect{x}_n \,.
\label{eq:linearoutput}
\end{equation}
Then, the final output is computed via spline interpolation over $s[n]$. A spline is a flexible polynomial defined by a set of $Q$ control points (called \textit{knots}), and denoted as $\vect{Q}_i = \left[ q_{x,i} \; q_{y,i} \right]$. We suppose that the knots are uniformly distributed, i.e. $q_{x,i+1} = q_{x,i} + \Delta x$, for a fixed $\Delta x \in \R$. Without lack of generality, we also constrain the knots to be symmetrically spaced around the origin.  This pair of assumptions are at the base of the SAF family of algorithms, and dates back to earlier work on spline neurons for multilayer perceptrons \cite{guarnieri1999multilayer}. Practically, they allow for a simple derivation of the adaptation rule, while sacrificing only a small part of the flexibility of the spline interpolation framework. This is shown pictorially in Fig. \ref{fig:spline_interpolation}. 
\begin{figure}
\centering
	\includegraphics[width=0.7\columnwidth,keepaspectratio]{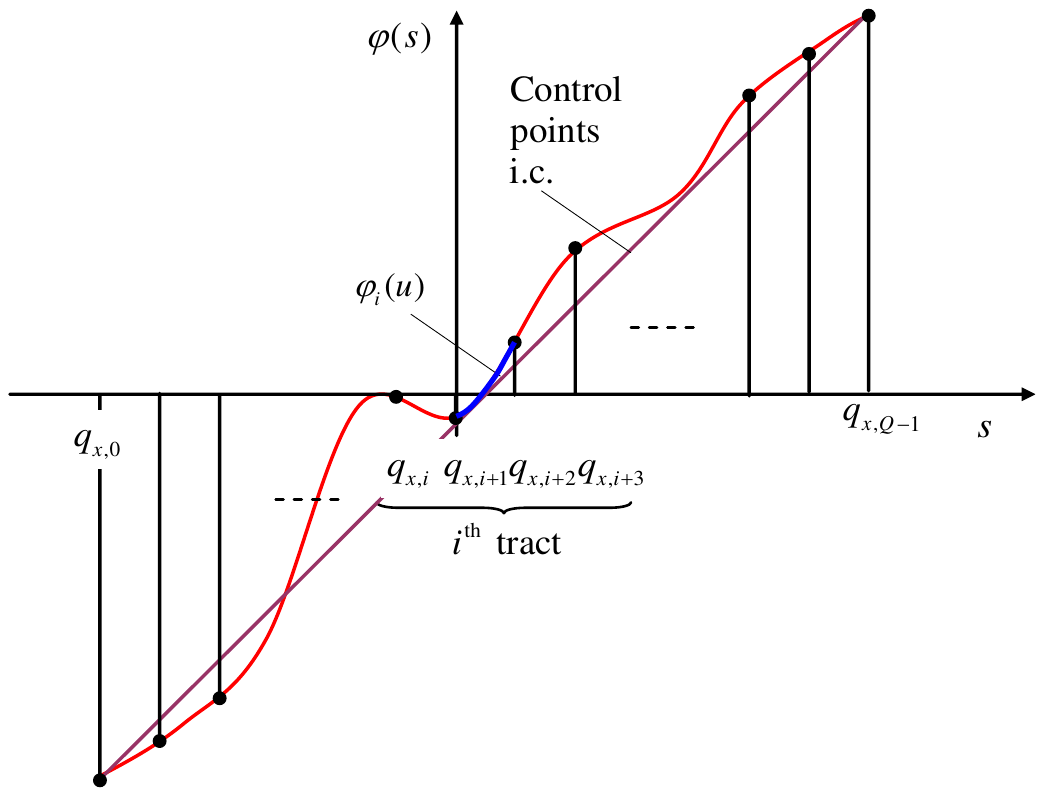}
\caption[Example of spline interpolation scheme.]{Example of spline interpolation scheme. We suppose that the control points are equispaced on the $x$-axis, and symmetrically spaced around the origin.}
\label{fig:spline_interpolation}
\end{figure}

Given the output of the linear filter $s[n]$, the spline is defined as an interpolating polynomial of order $P$, passing by the closest knot to $s[n]$ and its $P$ successive knots. In particular, due to our earlier assumptions, the index of the closest knot can be computed as:
\begin{equation}
i = \left\lfloor \frac{s[n]}{\Delta x} \right\rfloor + \frac{Q - 1}{2} \,.
\label{eq:i}
\end{equation}
Given this, we can define the normalized abscissa value between $q_{x,i}$ and $q_{x,i+1}$ as:
\begin{equation}
u = \frac{s[n]}{\Delta x} - \left\lfloor \frac{s[n]}{\Delta x} \right\rfloor \,.
\label{eq:u}
\end{equation}
From $u$ we can compute the normalized reference vector $\vect{u} = \left[ u^P \; u^{P-1} \ldots u \; 1 \right]^T$, while from $i$ we can extract the relevant control points $\vect{q}_{i,n} = \left[ q_{y,i} \; q_{y,i+1} \ldots q_{y,i+P} \right]^T$. We refer to the vector $\vect{q}_{i,n}$ as the $i$th \textit{span}. The output of the filter is then given by:
\begin{equation}
y[n] = \varphi(s[n]) = \vect{u}^T\vect{B}\vect{q}_{i,n} \,,
\label{eq:saf_output}
\end{equation}
where $\varphi(s[n])$is the adaptable nonlinearity as shown in Fig. \ref{fig:spline_interpolation}, and $\vect{B} \in \R^{\left(P+1\right) \times \left(P+1\right)}$ is called the spline basis matrix. In this chapter, we use the Catmul-Rom (CR) spline with $P=3$, given by:
\begin{equation}
\vect{B} = \frac{1}{2} 
\begin{bmatrix}
	-1 & 3 & -3 & 1 \\
	2 & -5 & 4 & -1 \\
	-1 & 0 & 1 & 0 \\
	0 & 2 & 0 & 0
\end{bmatrix} \,.
\label{eq:catmulrom_basis_matrix}
\end{equation}
Several alternative choices are available, such as the B-spline matrix \cite{Scarpiniti2013}. Different bases give rise to alternative interpolation schemes, e.g. a spline defined by a CR basis passes through all the control points, but its second derivative is not continuous, while the opposite is true for the B-spline basis. Note that both \eqref{eq:linearoutput} and \eqref{eq:saf_output} are LIP, and can be adapted with the use of any standard linear filtering technique. Applying the chain rule, it is straightforward to compute the derivative of the SAF output with respect to the linear coefficients:
\begin{align}
\frac{\partial \varphi(s[n])}{\partial \vect{w}_n} & = \frac{\partial \varphi(s[n])}{\partial u}\cdot\frac{\partial u}{\partial s[n]}\cdot\frac{\partial s[n]}{\partial \vect{w}_n} = \nonumber \\
 & = \dot{\vect{u}} \vect{B}\vect{q}_{i,n}\left(\frac{1}{\Delta x}\right)\vect{x}_n \,,
\label{eq:saf_derivative_linear}
\end{align}
where:
\begin{equation}
\dot{\vect{u}} = \frac{\partial \vect{u}}{\partial u} = \left[ Pu^{P-1} \; (P-1)u^{P-2} \ldots 1 \; 0 \right]^T \,.
\label{eq:u_dot}
\end{equation}
Similarly, for the nonlinear part we obtain:
\begin{align}
\frac{\partial \varphi(s[n])}{\partial \vect{q}_{i,n}} & = \vect{B}^T \vect{u} \,.
\label{eq:saf_derivative_nonlinear}
\end{align}
We consider a first-order adaptation for both the linear and the nonlinear part of the SAF. Defining the error $e[n] = d[n] - y[n]$, we aim at minimizing the expected mean-squared error given by:
\begin{equation}
J(\vect{w}, \vect{q}) = \mathbb{E} \left\{ e[n]^2 \right\} \,,
\label{eq:cost_function}
\end{equation}
where $\vect{q} = \left[ q_{y,1}, \ldots, q_{y,Q} \right]^T$. As is standard approach, we approximate \eqref{eq:cost_function} with the instantaneous error given by:
\begin{equation}
\hat{J}(\vect{w}, \vect{q}) = e^2[n]  \,.
\label{eq:cost_function_instantenous}
\end{equation}
Then, we apply two simultaneous steepest-descent steps to solve the overall optimization problem:
\begin{align}
\vect{w}_{n+1} & = \vect{w}_n + \mu_w e[n]\varphi'(s[n])\vect{x}_n \,, \label{eq:lmsw} \\
\vect{q}_{i, n+1} & = \vect{q}_{i, n} + \mu_q e[n]\vect{B}^T\vect{u} \,, \label{eq:lmsq}
\end{align}
where we defined $\varphi'(s[n]) = \dot{\vect{u}} \vect{B}\vect{q}_{i,n}\left(\frac{1}{\Delta x}\right)$, and we use two possibly different step-sizes $\mu_w, \mu_q > 0$. For simplicity, we consider adaptation with constant step sizes. Additionally, note that in \eqref{eq:lmsq} we adapt only the coefficients related to the $i$th span, since it can easily be shown that $\frac{\partial \hat{J}(\vect{w}_{n}, \vect{q}_{n})}{\partial \vect{q}_{n}}$ is $0$ for all the coefficients outside the span. Convergence properties of this scheme are analyzed in a number of previous works \cite{Scarpiniti2013}. The overall algorithm is summarized in Algorithm \ref{algo:saf}. A standard way to initialize the coefficients of the spline is to consider:
\begin{equation}
q_{x,i} = q_{y,i}, \;\; i = 1, \ldots, Q \,,
\label{eq:spline_init}
\end{equation} 
such that $\varphi(s[n]) = s[n]$. Using this initialization criterion, the LMS filter can be considered as a special case of the SAF, where adaptation is restricted to the linear part, i.e. $\mu_q = 0$.

\begin{AlgorithmCustomWidth}[h]
\caption{SAF: Summary of the SAF algorithm with first-order updates.}
\label{algo:saf}
\begin{algorithmic}[1]
 \State Initialize $\vect{w}_{-1} = \delta[n], \vect{q}_0$
 \For{$n=0,1,\ldots$}
 	\State $s[n] = \vect{w}_n^T \vect{x}_n$
 	\State $u = s[n]/\Delta x - \lfloor s[n]/\Delta x \rfloor$
 	\State $i = \lfloor s[n]/\Delta x \rfloor + (Q-1)/2$
 	\State $y[n] = \vect{u}^T\vect{B}\vect{q}_{i,n}$
 	\State $e[n] = d[n] - y[n]$
 	\State $\vect{w}_{n+1} = \vect{w}_n + \mu_w e[n]\varphi'(s[n])\vect{x}_n$
 	\State $\vect{q}_{i, n+1} = \vect{q}_{i,n} + \mu_q e[n]\vect{B}^T\vect{u}$
\EndFor
\end{algorithmic}
\end{AlgorithmCustomWidth}

\section{Diffusion SAF}
\label{sec:diffusionsaf}

Consider a network model as in the previous chapters. At a generic time instant $n$, each agent receives some input/output data denoted by $\left( \vect{x}_n^{(k)}, d^{(k)}[n] \right)$, where we introduce an additional superscript $(k)$ for explicating the node dependence. We assume that streaming data at the local level is generated similarly to \eqref{eq:data_model}, according to:
\begin{equation}
d^{(k)}[n] = f_0\left( \vect{w}_0^T \vect{x}^{(k)}_n \right) + \nu^{(k)}[n] \,.
\label{eq:data_model_distributed}
\end{equation}
More in particular, we assume that $\vect{w}_0$ and $f_0(\cdot)$ are shared over the network, which is a reasonable assumption in many situations \cite{cattivelli2010diffusion,sayed2014adaptive}. Each node, however, receives input data with possibly different autocorrelation $\vect{R}^{(k)}_u = \mathrm{E}\left\{ \vect{x}^{(k)T}\vect{x}^{(k)} \right\}$, and different additive noise terms $\nu^{(k)}[n] \sim \mathcal{N}(0, \sigma_k^2)$. Additionally, we assume that the nodes have agreed beforehand on a specific spline basis matrix $\vect{B}$, and on a set of initial control points $\vect{q}_0$. Both quantities are common throughout the network. This is shown schematically in Fig. \ref{fig:diffused_saf_depiction}.

\begin{figure}
\centering
\includegraphics[width=0.7\columnwidth,keepaspectratio]{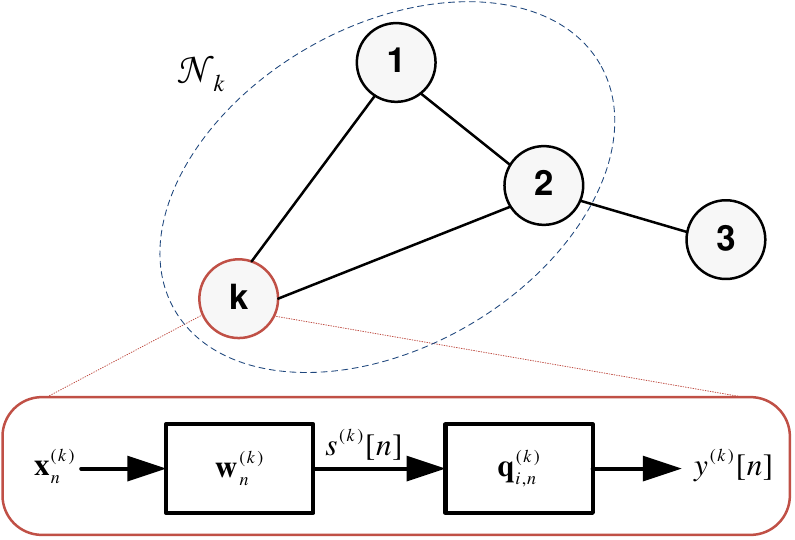}
\caption[Schematic depiction of SAF interpolation performed over a network of agents.]{Schematic depiction of SAF interpolation performed over a network of agents. Each agent is connected to a neighborhood of other agents, and at every time instant it updates a local estimate of the optimal SAF model.}
\label{fig:diffused_saf_depiction}
\end{figure}

Given these assumptions, the network objective is to find the optimal SAF parameters $(\vect{w}, \vect{q})$ such that the following global cost function is minimized:
\begin{equation}
J_{\text{glob}}(\vect{w}, \vect{q}) = \sum_{k=1}^L J_{\text{loc}}^{(k)}(\vect{w}, \vect{q}) = \sum_{k=1}^L \mathrm{E}\left\{ e^{(k)}[n]^2 \right\} \,,
\label{eq:global_cost_function}
\end{equation}
where each expectation is defined with respect to the local input statistics. Remember that the main idea of DA techniques is to interleave parallel adaptation steps with diffusion steps, where information on the current estimates are locally combined based on the mixing matrix $\vect{C}$ (see for example \cite[Section V-B]{sayed2014adaptive}). Denote by $\left( \vect{w}_{n}^{(k)}, \vect{q}_n^{(k)} \right)$ the SAF estimate of node $k$ at time-instant $n$. In the diffusion SAF (D-SAF), each node starts by diffusing its own estimate of the linear part of the SAF filter:
\begin{align}
\boldsymbol{\psi}_{n}^{(k)} & = \sum_{l \in \mathcal{N}_{k}} C_{kl} \vect{w}_{n}^{(k)} \,. \\[-\baselineskip] \tag*{w-diffusion}
\label{eq:w_diffusion}
\end{align}
Next, we can use the new weights $\boldsymbol{\psi}_{n}^{(k)}$ to compute the linear output of the filter as $s^{(k)}[n] = \boldsymbol{\psi}_{n}^{(k)T} \vect{x}_n^{(k)}$. From this, each node can identify its current span index $i$ with \eqref{eq:i} and \eqref{eq:u}. In the second phase, the nodes performs a second diffusion step over their span:
\begin{align}
\boldsymbol{\xi}^{(k)}_{i,n} = \sum_{l \in \mathcal{N}_{k}} C_{kl} \vect{q}^{(k)}_{i,n} \,. \\[-\baselineskip] \tag*{q-diffusion}
\label{eq:q_diffusion}
\end{align}
Note that the q-diffusion step requires combination of the coefficients in the span $\vect{q}^{(k)}_{i,n}$, hence its complexity is independent on the number of control points in the spline, being defined only by the spline order $P$.

Once the nodes have diffused their information, they can proceed to a standard adaptation step as in the single-agent case. In particular, the spline output given the new span is obtained as:
\begin{equation}
y^{(k)}[n] = \varphi_k(s^{(k)}[n]) = \vect{u}^T\vect{B}\boldsymbol{\xi}^{(k)}_{i,n} \,.
\end{equation}
From this, the local error is given as $e^{(k)}[n] = d^{(k)}[n] - y^{(k)}[n]$. The two gradient descent steps are then:
\begin{align}
\vect{w}_{n+1}^{(k)} & = \boldsymbol{\psi}_{n}^{(k)} + \mu_{w}^{(k)} e^{(k)}[n] \varphi'(s^{(k)}[n])\vect{x}^{(k)}_n \,, \\[-0.3\baselineskip] \label{eq:w_k_adapt} \tag*{w-adapt} \\
\vect{q}^{(k)}_{i,n+1} & = \boldsymbol{\xi}^{(k)}_{i,n} + \mu_{k}^{(k)} e^{(k)}[n]\vect{B}^T\vect{u} \,. \\[-0.3\baselineskip] \label{eq:q_k_adapt} \tag*{q-adapt}
\end{align}
where the two step sizes $\mu_{w}^{(k)}, \mu_{q}^{(k)}$ are possibly different across different agents. The overall algorithm is summarized in Algorithm \ref{algo:dsaf}. Note that in this chapter we consider a diffusion step prior to the adaptation step. In the DA literature, this is known as a combine-then-adapt (CTA) strategy \cite{takahashi2010diffusion}. This is true even if the two diffusion steps are not consecutive in Algorithm \ref{algo:dsaf}. In fact, Algorithm \ref{algo:dsaf} is equivalent to the case where the full vector $\vect{q}_{n}^{(k)}$ is exchanged before selecting the proper span. Following similar reasonings, we can easily obtain an adapt-then-combine (ATC) strategy by inverting the two steps. Additionally, similarly to what we remarked in Section \ref{sec:saf}, we note that D-LMS \cite{lopes2008diffusion} is a special case of the D-SAF, where each node initialize its nonlinearity with \eqref{eq:spline_init}, and $\mu_q^{(k)} = 0, k = 1, \ldots, L$.

\begin{AlgorithmCustomWidth}[h]
\caption{D-SAF: Summary of the D-SAF algorithm (CTA version).}
\label{algo:dsaf}
\begin{algorithmic}[1]
 \State Initialize $\vect{w}_{-1}^{(k)} = \delta[n], \vect{q}_0^{(k)}$, for $k=1,\ldots,L$
 \For{$n=0,1,\ldots$}
 	\For{$k=1,\ldots, L$}
 		\State $\boldsymbol{\psi}_{n}^{(k)} = \sum_{l \in \mathcal{N}_{k}} C_{kl} \vect{w}_{n}^{(k)}$
	 	\State $s^{(k)}[n] = \boldsymbol{\psi}_{n}^{(k)T} \vect{x}_n^{(k)}$
	 	\State $u = s^{(k)}[n]/\Delta x - \lfloor s^{(k)}[n]/\Delta x \rfloor$
	 	\State $i = \lfloor s^{(k)}[n]/\Delta x \rfloor + (Q-1)/2$
	 	\State $\boldsymbol{\xi}^{(k)}_{i,n} = \sum_{l \in \mathcal{N}_{k}} C_{kl} \vect{q}^{(k)}_{i,n}$
	 	\State $y^{(k)}[n] = \vect{u}^T\vect{B}\boldsymbol{\xi}^{(k)}_{i,n}$
	 	\State $e^{(k)}[n] = d^{(k)}[n] - y^{(k)}[n]$
	 	\State $\vect{w}_{n+1}^{(k)} = \boldsymbol{\psi}_{n}^{(k)} + \mu_{w}^{(k)} e^{(k)}[n] \varphi'(s^{(k)}[n])\vect{x}^{(k)}_n$
	 	\State $\vect{q}^{(k)}_{i,n+1} = \boldsymbol{\xi}^{(k)}_{i,n} + \mu_{k}^{(k)} e^{(k)}[n]\vect{B}^T\vect{u}$
 	\EndFor
 \EndFor
\end{algorithmic}
\end{AlgorithmCustomWidth}

\section{Experimental Setup}
\label{sec:exp_setup}

To test the proposed D-SAF, we consider network topologies with $L$ agents, whose connectivity is generated randomly, such that every pair of nodes has a $60\%$ probability of being connected. To provide sufficient diversity, we experiment with a small network with $L=10$ and a larger network with $L=30$. Data is generated according to the Wiener model in \eqref{eq:data_model_distributed}, where the optimal weights $\vect{w}_0$ are extracted randomly from a normal distribution, while the nonlinearity $f_{0}(\cdot)$ for the initial experiments is depicted in Fig. \ref{fig:nonlinearity}. This represents a mild nonlinearity. 

\begin{figure}[h]
\centering
\includegraphics[scale=0.8]{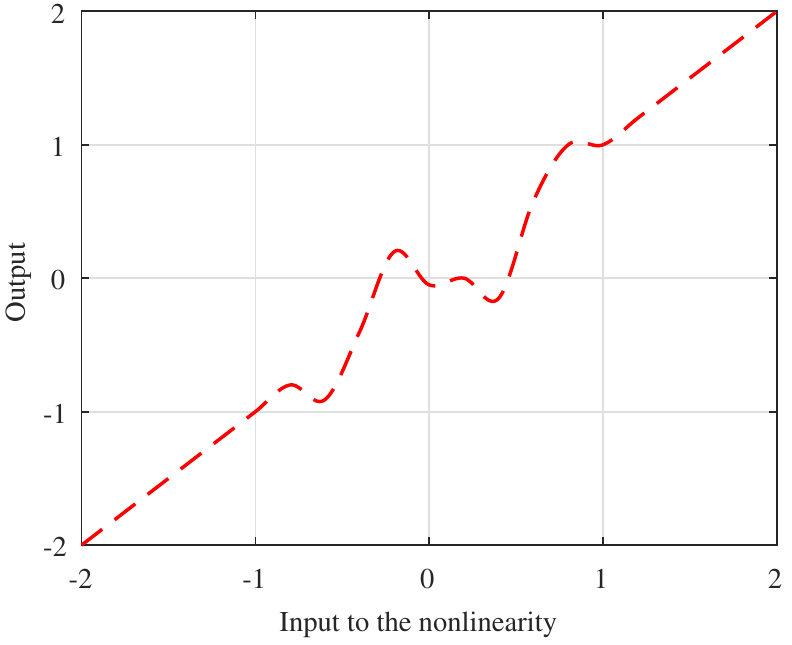}
\caption[Nonlinear distortion applied to the output signal in experiments $1$ and $2$ for testing D-SAF.]{Nonlinear distortion applied to the output signal in experiments $1$ and $2$.}
\label{fig:nonlinearity}
\end{figure}

The input signal at each node is generated following the experiments in \cite{Scarpiniti2013}, and it consists of $25000$ samples generated according to:
\begin{equation}
x_k[n] = a_k x_k[n-1] + \sqrt{1-a_k^2} \xi[n] \,,
\label{eq:input}
\end{equation}
where the correlation coefficients $a_k$ are assigned randomly at every node from an uniform probability distribution in $\left[0, 0.8\right]$, while $\xi[n]$ is a white Gaussian noise term with zero mean and unitary variance. The desired signal is then given by \eqref{eq:data_model_distributed}, where the noise variances $\sigma^{(k)}[n]$ at every node are assigned randomly in $\left[-10, -25\right]$ dB. The mixing coefficients are chosen according to the `metropolis' strategy as in previous chapters. In all experiments, knots are equispaced in $\left[-2, +2\right]$ with $\Delta x = 0.2$.

We compare D-SAF with a non-cooperative SAF (denoted as NC-SAF), which corresponds in choosing a diagonal mixing matrix $\vect{C} = \vect{I}$. Similarly, we compare with the standard D-LMS \cite{lopes2008diffusion}, and a non-cooperative LMS, denoted as NC-LMS. To average out statistical effects, experiments are repeated $15$ times, by keeping fixed the topology of the network and the optimal parameters of the system. Results are then averaged throughout the nodes.

\section{Experimental Results}
\label{sec:exp_results}

\subsection{Experiment 1 - Small Network ($L = 10$)}

\begin{figure*}[t]
\centering
\subfloat[Correlation coefficients]{\includegraphics[scale=0.7]{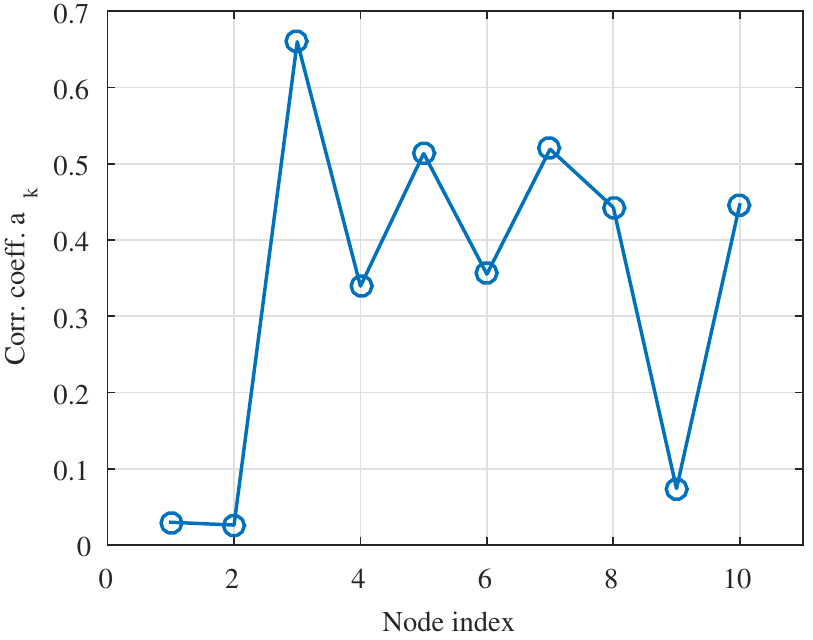}%
\label{fig:corrcoeffs_exp1}}
\hfil
\subfloat[Noise variances]{\includegraphics[scale=0.7]{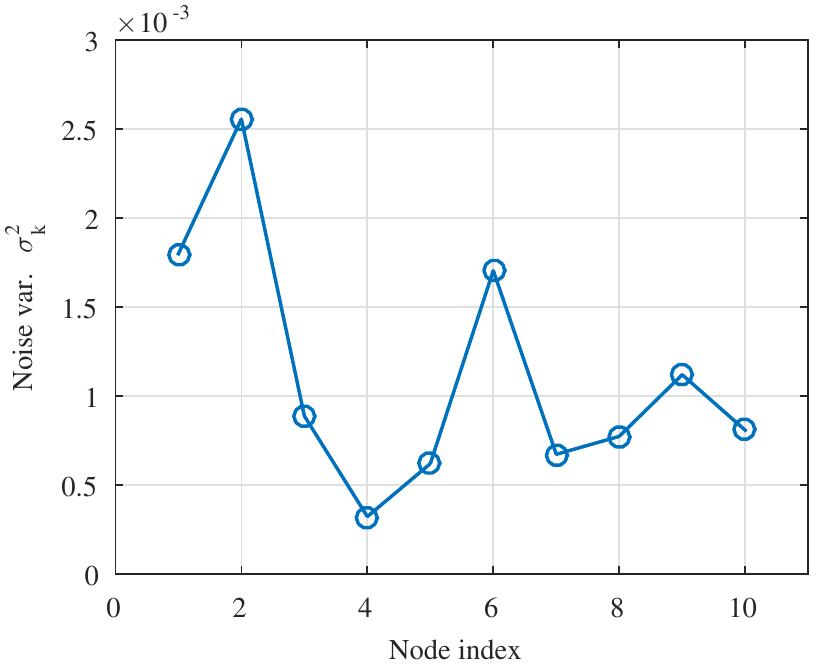}%
\label{fig:sigma_exp1}}
\hfil
\subfloat[Step sizes]{\includegraphics[scale=0.7]{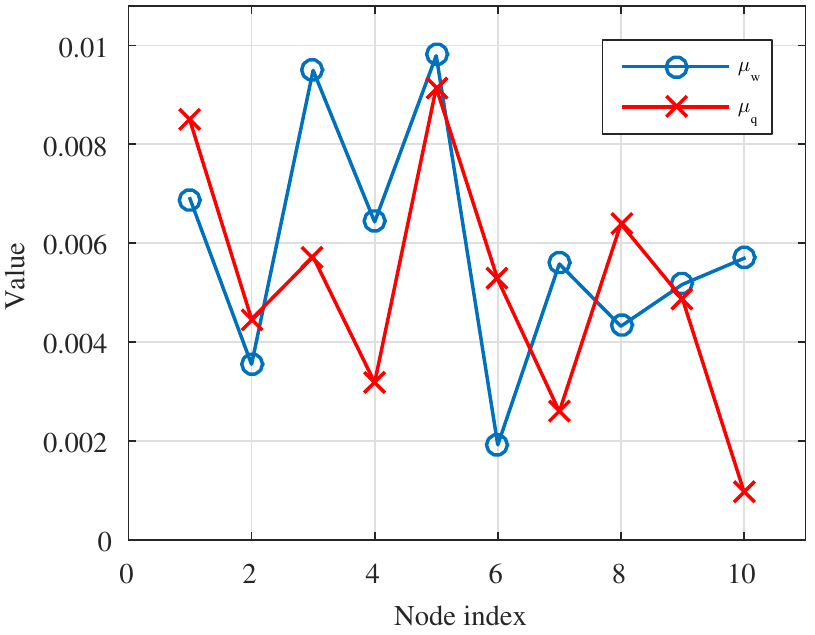}%
\label{fig:mu_exp1}}
\caption[Dataset setup for the first experiment of D-SAF.]{Dataset setup for experiment 1. (a) Correlation coefficients in \eqref{eq:input}; (b) Noise variances in \eqref{eq:data_model_distributed}; (c) Step sizes.}
\label{fig:setup_exp1}
\end{figure*}

In the first experiment, we consider a network with $L=10$. Details on the signal generation are provided in Fig. \ref{fig:setup_exp1}. In particular, the local correlation coefficients are shown in Fig. \ref{fig:corrcoeffs_exp1}, and the amount of noise variance in Fig. \ref{fig:sigma_exp1}. The step-sizes are instead given in Fig. \ref{fig:mu_exp1}. These settings allow a certain amount of variety on the network. As an example, input values at node $3$ are highly correlated, while node $2$ has the strongest amount of noise. Similarly, speed of adaptation (and consequently steady-state convergence) covers a large range of settings, as depicted in Fig. \ref{fig:mu_exp1}. The first measure of error that we consider is the mean-squared error (MSE), defined in dB as:
\begin{equation}
\text{MSE}_k[n] = 10 \log \left( d_k[n] - y_k[n] \right)^2 \,.
\label{eq:mse}
\end{equation}
Results in term of MSE are given in Fig. \ref{fig:mse_exp1}, where the proposed algorithm is shown with a solid violet line. Here and in the following figures, the MSE is computed by averaging \eqref{eq:mse} over the different nodes. 

\begin{figure}[h]
\centering
\includegraphics[scale=1]{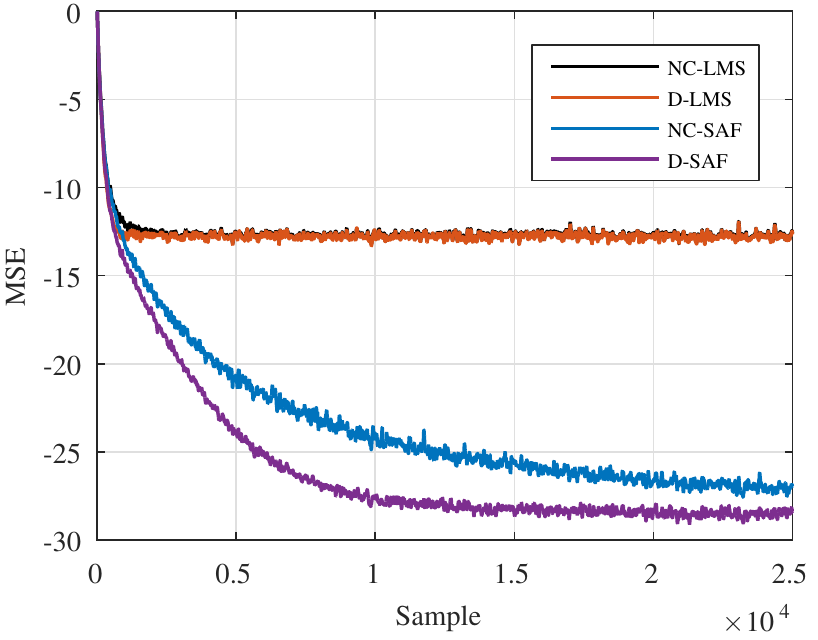}
\caption[Average MSE evolution for experiment 1 of D-SAF.]{MSE evolution for experiment 1, averaged across the nodes.}
\label{fig:mse_exp1}
\end{figure}

As expected, due to the nonlinear distortion, LMS achieves a generally poor performance, with a steady-state MSE of $-12$ dB. Additionally, there is almost no improvement when considering D-LMS compared to NC-LMS. The SAF filters are instead able to approximate extremely well the desired system. The diffusion strategy, however, provides a significant improvement in convergence time with respect to the non-cooperative version, as is evident from Fig. \ref{fig:mse_exp1}. Further clarifications on the two algorithms can be obtained by considering the linear mean-squared deviation (MSD) given by:

\begin{equation}
\text{MSD}_k^{\text{l}} = 10\log\left( \norm{\vect{w}_0 - \vect{w}^{(k)}_n} \right) \,,
\label{eq:msd_linear}
\end{equation}
and the nonlinear MSD given by:
\begin{equation}
\text{MSD}_k^{\text{nl}} = 10\log\left( \norm{\vect{q}_0 - \vect{q}^{(k)}_n} \right) \,.
\label{eq:msd_nonlinear}
\end{equation}
\noindent The overall behavior of the MSD is shown in Fig. \ref{fig:msd_exp1}. In particular, we show in Fig. \ref{fig:msd_lin_exp1} and Fig. \ref{fig:msd_nonlin_exp1} the global MSD of the network, which is obtained by averaging the local MSDs at every node. 

\begin{figure*}[h]
\centering
\subfloat[Linear MSD (global)]{\includegraphics[scale=0.8]{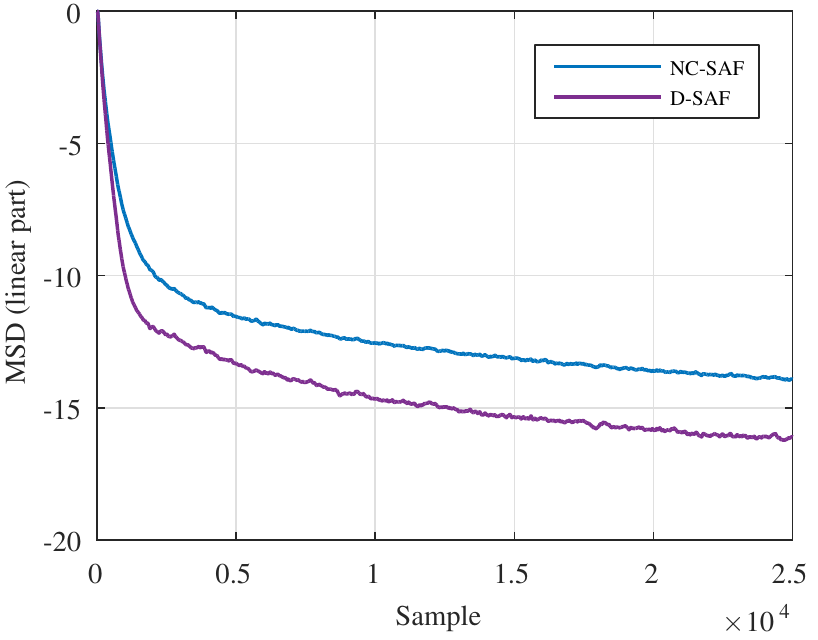}%
\label{fig:msd_lin_exp1}}
\hfil
\subfloat[nonlinear MSD (global)]{\includegraphics[scale=0.8]{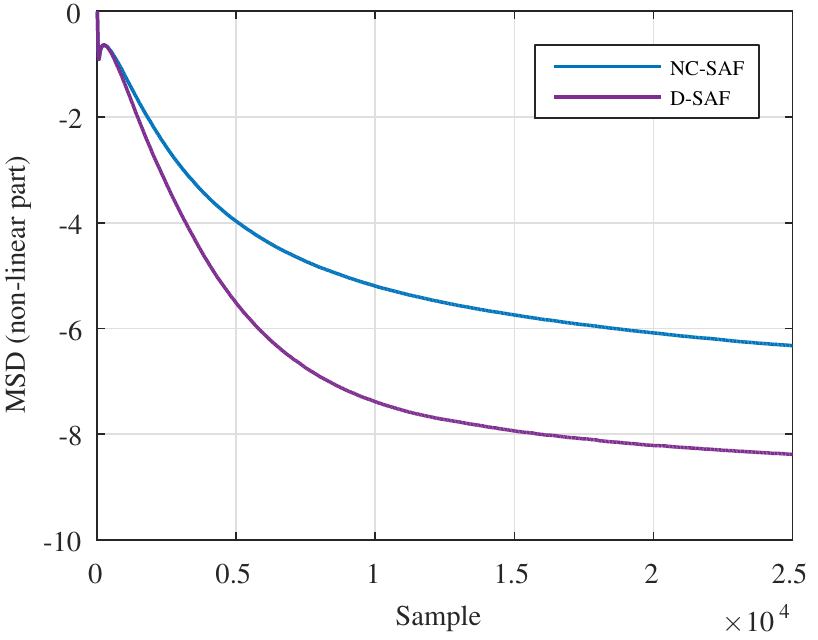}%
\label{fig:msd_nonlin_exp1}}
\vfill
\subfloat[Linear MSD (local)]{\includegraphics[scale=0.8]{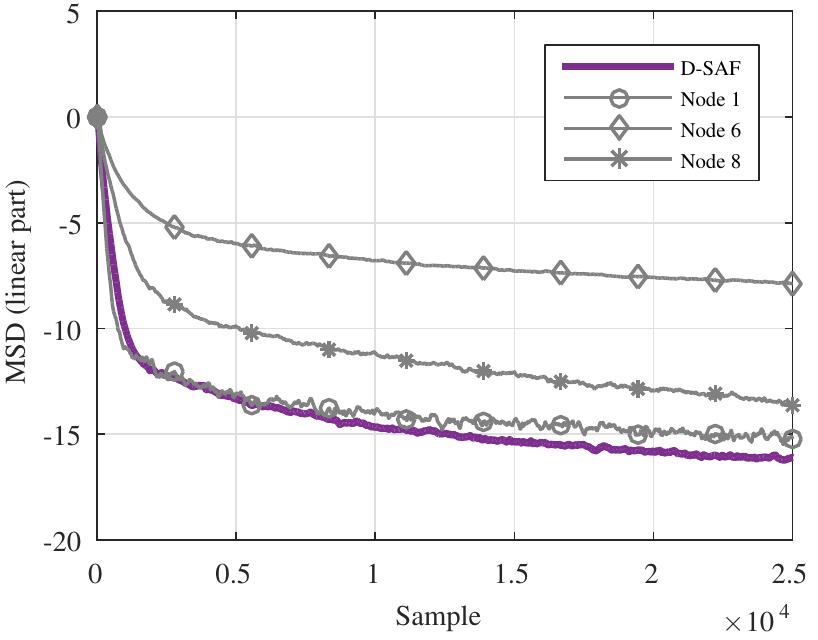}%
\label{fig:msd_lin_local}}
\hfil
\subfloat[nonlinear MSD (local)]{\includegraphics[scale=0.8]{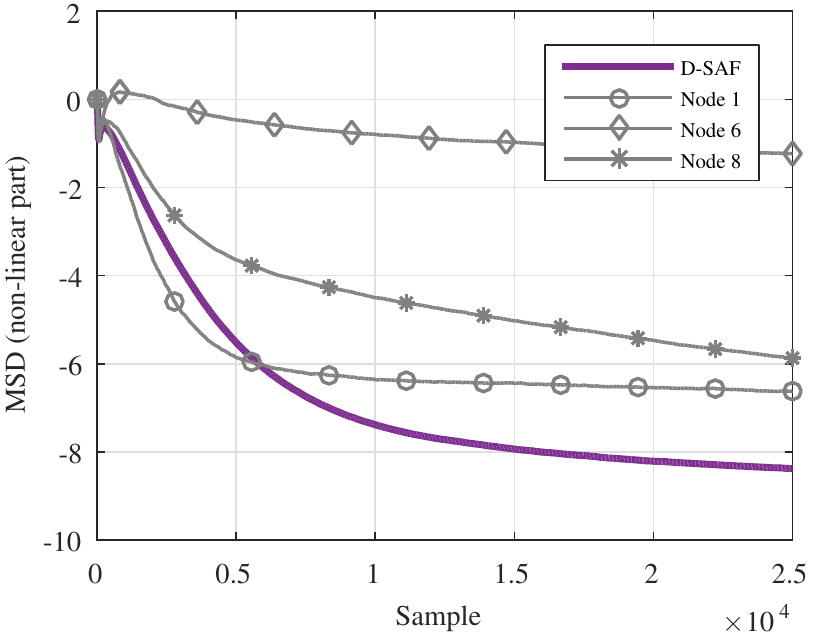}%
\label{fig:msd_nonlin_local}}
\vfill
\caption[MSD evolution for experiment $1$ of D-SAF.]{MSD evolution for experiment $1$. (a-b) Global MSD evolution. (c-d) MSD evolution for D-SAF and $3$ representative nodes running NC-SAF.}
\label{fig:msd_exp1}
\end{figure*}

It can be seen that the MSD achieved with a diffusion algorithm strongly outperform the average MSD obtained with a non-cooperative solution. Additionally, the gap in the linear and nonlinear case is similar, with a steady-state difference of $4$ dB. The reason of this difference is shown in Fig. \ref{fig:msd_lin_local} and Fig. \ref{fig:msd_nonlin_local}, where we plot the MSD evolution for D-SAF and for three representative agents running NC-SAF. It can be seen that, due to the differences in configuration, some nodes have a much slower convergence than other, such as node $6$ compared to node $1$. However, these statistical variations are successfully averaged out by the diffusion algorithm, which is able to outperform even the fastest node in the network. This is shown visually in Fig. \ref{fig:spline_exp1}, where we show the resulting nonlinear models for three representative nodes running NC-SAF, and for the nodes running D-SAF.

\begin{figure*}[h]
\centering
\subfloat{\includegraphics[scale=0.8]{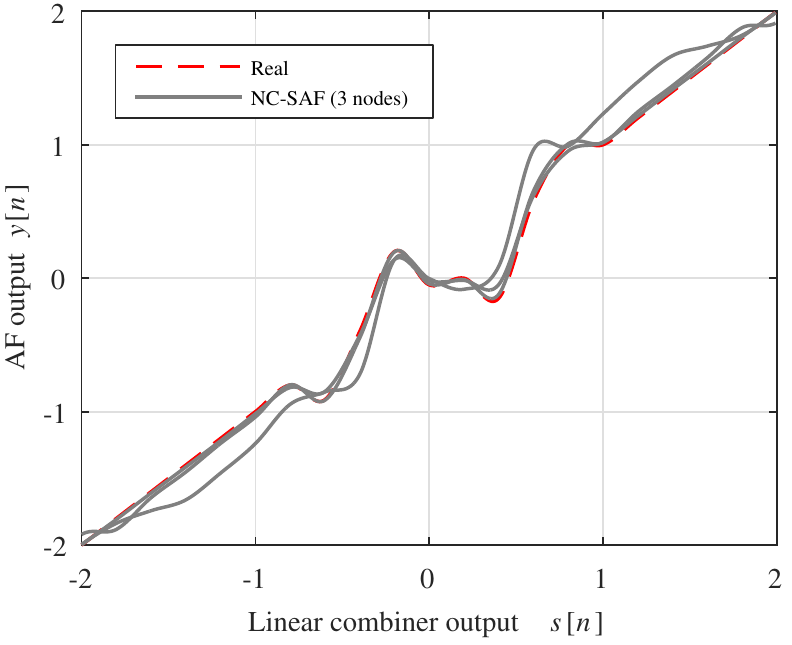}%
\label{fig:spline_nc_exp1}}
\hfil
\subfloat{\includegraphics[scale=0.8]{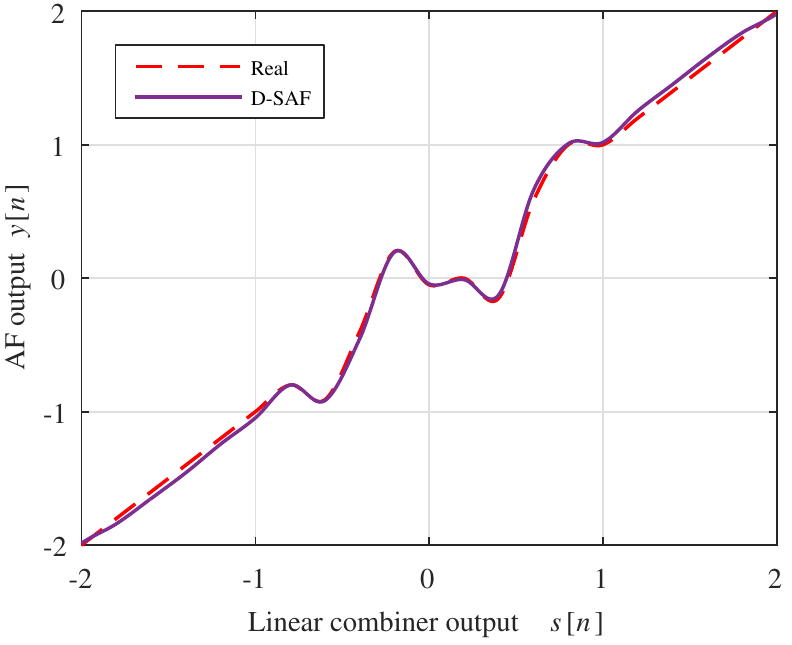}%
\label{fig:spline_diff_exp1}}
\vfill
\caption[Final estimation of the nonlinear model in experiment 1 of D-SAF.]{Final estimation of the nonlinear model in experiment 1. (a) Three representative nodes running NC-SAF. (b) Final spline of the nodes running D-SAF.}
\label{fig:spline_exp1}
\end{figure*}
\subsection{Experiment 2 - Large Network ($L=30$)}
For the second set of experiments, we consider a larger network with $L=30$ agents. Everything else is kept fixed as before, in particular, each pair of nodes in the network has a $60\%$ probability of being connected, with the only requirement that the global network is connected. The local correlation coefficients in \eqref{eq:input}, noise variances in \eqref{eq:data_model_distributed}, and local step-sizes are extracted randomly at every node using the same settings as the previous experiment. In this case, this setting provides a larger range of configurations for the different nodes. The results in term of MSE evolution are shown in Fig. \ref{fig:mse_exp2}, where the proposed D-SAF is again shown with a violet line. 
\begin{figure}[h]
\centering
\includegraphics[scale=1]{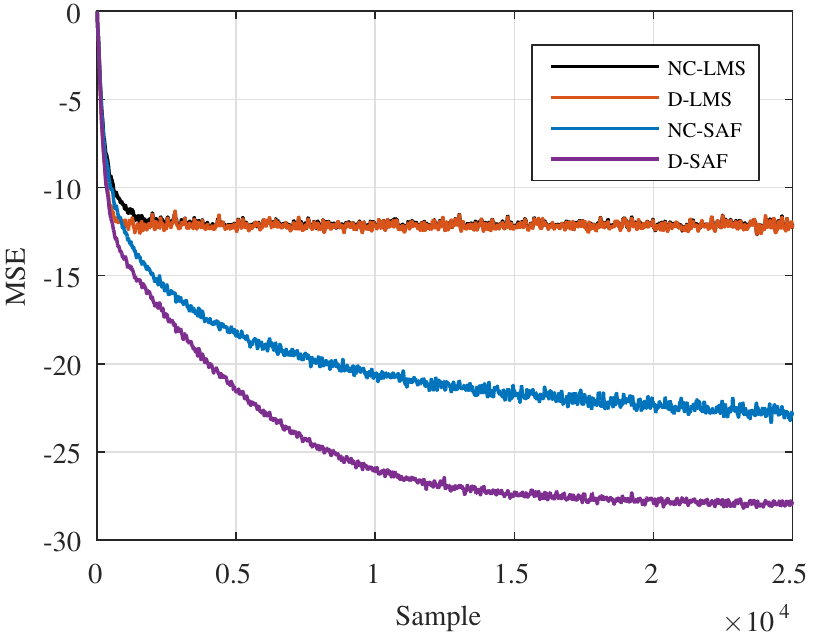}
\caption[Average MSE evolution for experiment 2 of D-SAF.]{MSE evolution for experiment 2, averaged across the nodes.}
\label{fig:mse_exp2}
\end{figure}

\noindent While the performance of NC-LMS and D-LMS are similar to those exhibited in the previous experiment, it is interesting to observe that, by increasing the amount of nodes in the network, the convergence of NC-SAF is slower in this case, to the point that the algorithm is not able to converge efficiently in the provided number of samples. D-SAF, instead, is robust to this increase in network's size, and it is able to reach almost complete convergence in less than $15000$ samples. Clearly, this is expected from the behavior of the algorithm. The larger the network, the higher the amounts of neighbors each single agent has. Thus, the diffusion steps are able to fuse more information, providing a faster convergence, as also evidenced by previous literature in DA \cite{sayed2014adaptive}. Due to this, the algorithm is able to average out the performance of isolated nodes, where convergence is not achieved. This can be seen from Fig. \ref{fig:spline_exp2}, where we plot the splines obtained from $3$ representative nodes running NC-SAF in Fig. \ref{fig:spline_nc_exp2}, and the spline resulting from D-SAF in Fig. \ref{fig:spline_diff_exp2}. In Fig. \ref{fig:spline_nc_exp2}, it is possible to see that some nodes achieve perfect convergence, while others would require a larger amount of samples. Even worse, some nodes are actually diverging from the optimal solution, due to their peculiar configuration. Despite this, D-SAF is converging globally to an optimal solution, as shown by the black line in Fig. \ref{fig:spline_diff_exp2}.
\begin{figure*}[h]
\centering
\subfloat{\includegraphics[scale=0.8]{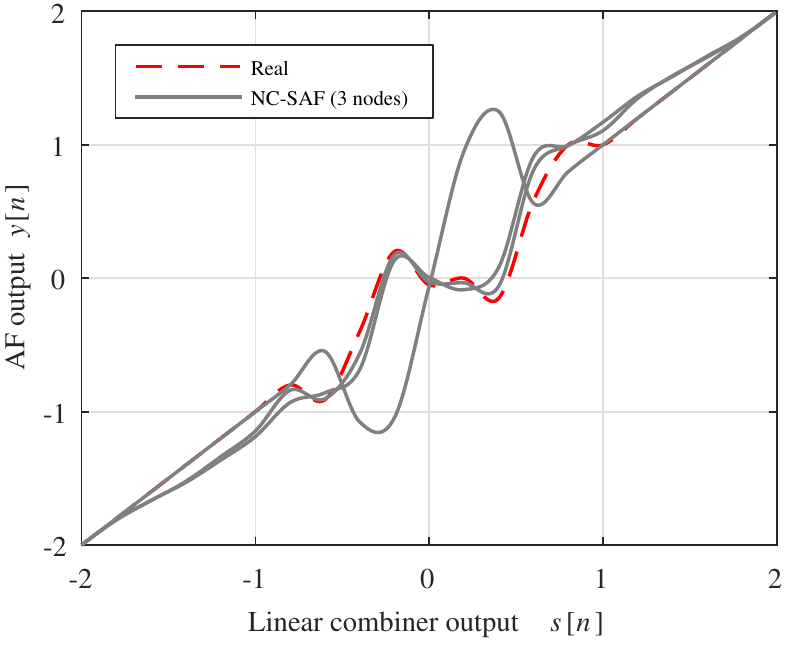}%
\label{fig:spline_nc_exp2}}
\hfil
\subfloat{\includegraphics[scale=0.8]{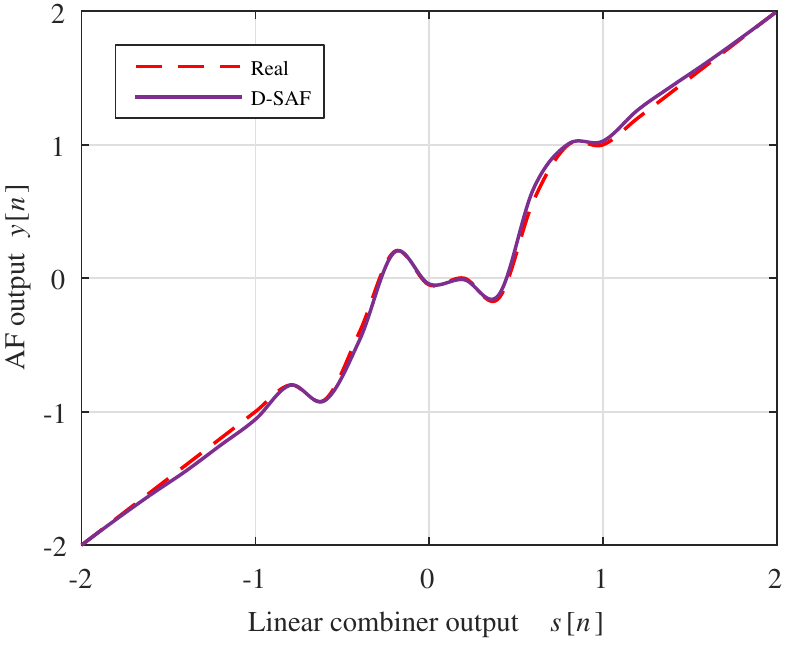}%
\label{fig:spline_diff_exp2}}
\vfill
\caption[Final estimation of the nonlinear model in experiment 2 of D-SAF.]{Final estimation of the nonlinear model in experiment 2. (a) Three representative nodes running NC-SAF. (b) Final spline of the nodes running D-SAF.}
\label{fig:spline_exp2}
\end{figure*}
\subsection{Experiment 3 - Strong nonlinearity ($L=15$)}
As a final validation, we consider an intermediate network with $L=15$, but we change the output nonlinearity in \eqref{eq:input} with the one showed in Fig. \ref{fig:spline_exp3}. This is a stronger nonlinearity, with two larger peaks. As before, the correlation coefficients in \eqref{eq:input}, the variances of the noise, and the local step-sizes are assigned randomly at every node. Results of the experiment are shown in Fig. \ref{fig:mse_exp3}. 
\begin{figure}[h]
\centering
\includegraphics[scale=1]{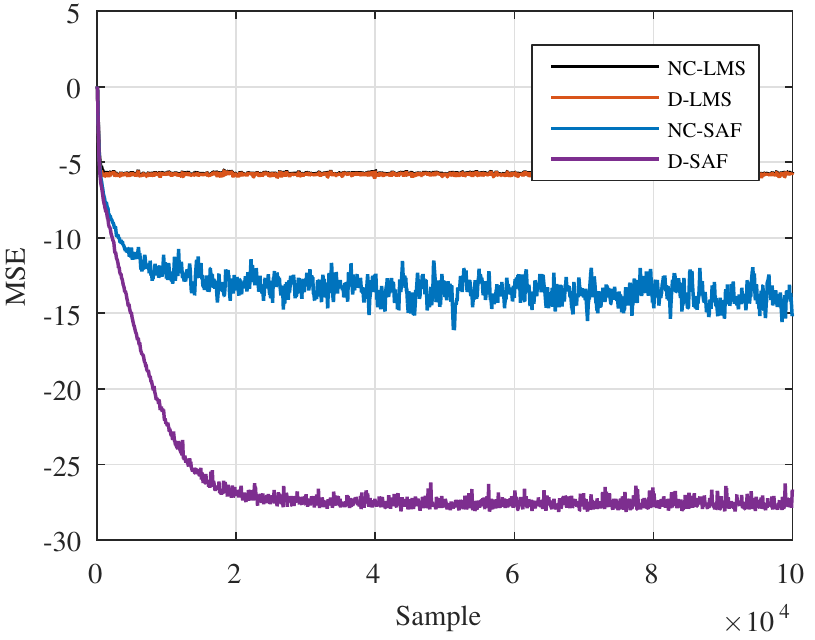}
\caption[Average MSE evolution for experiment 3 of D-SAF.]{MSE evolution for experiment 3, averaged across the nodes.}
\label{fig:mse_exp3}
\end{figure}
\begin{figure*}[h]
\centering
\subfloat{\includegraphics[scale=0.8]{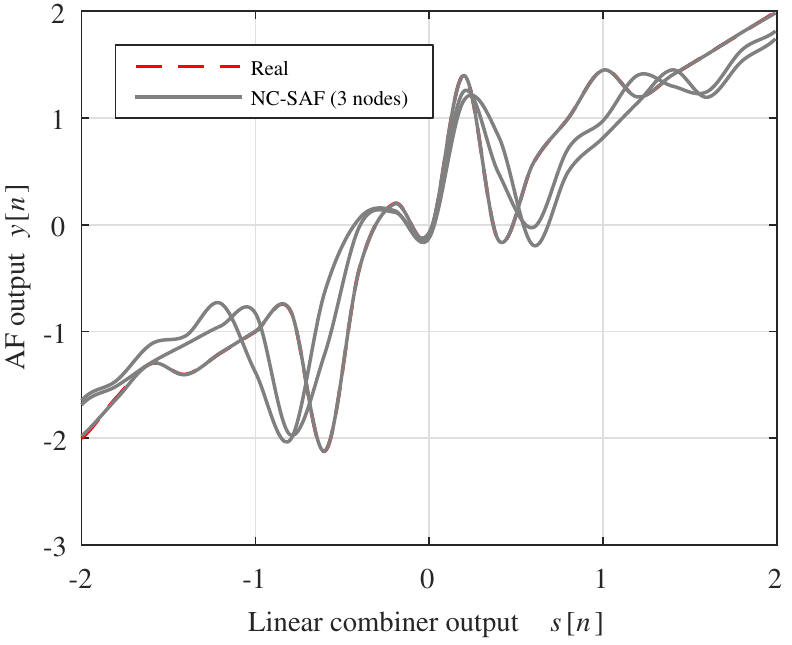}%
\label{fig:spline_nc_exp3}}
\hfil
\subfloat{\includegraphics[scale=0.8]{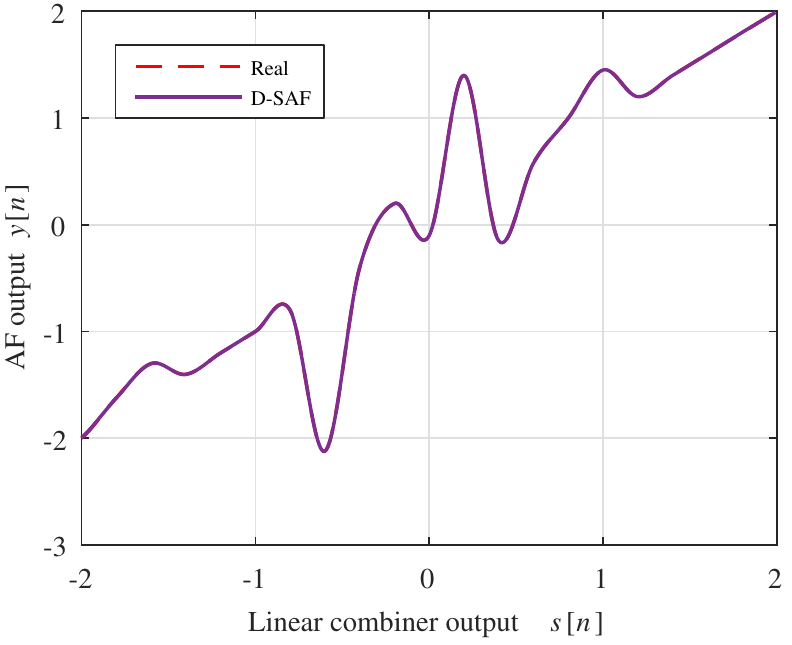}%
\label{fig:spline_diff_exp3}}
\vfill
\caption[Final estimation of the nonlinear model in experiment 3 of D-SAF.]{Final estimation of the nonlinear model in experiment 3. (a) Three representative nodes running NC-SAF. (b) Final spline of the nodes running D-SAF.}
\label{fig:spline_exp3}
\end{figure*}
Due to the increased nonlinearity, the two linear filters are performing poorly, with a steady-state MSE of $-5$ dB. Convergence is also slowed for NC-SAF, while there is now a gap of almost $10$ dB between the final MSE of the cooperative and non-cooperative versions of SAF. It is particularly interesting to observe the final nonlinearities at every node. These are shown for three representative nodes running NC-SAF in Fig. \ref{fig:spline_nc_exp3}, and for D-SAF in Fig. \ref{fig:spline_diff_exp3}. Overall, a small portion of nodes running NC-SAF is achieving a satisfactory convergence, while several of them are only achieving a moderate convergence, or no convergence at all. Despite this, and the smaller size of the network with respect to the second experiment, D-SAF is still obtaining an almost complete convergence at a global level. Overall, this shows that the algorithm is robust to variations in the network's size, local configurations, and amount of nonlinearity.

\part{Conclusions and Future Works}
\chapter{Conclusions and Future Works}
\label{chap:conclusions}

\lettrine[lines=2]{D}{istributed} learning has received considerable attention over the past years due to its broad real-world applications. It is
common nowadays that data must be collected, stored locally and data exchange is not allowed for specific reasons, such as technological bottlenecks or privacy concerns. In such a circumstance, it is necessary and useful to build in a decentralized fashion an ANN model. Motivated by this, throughout this thesis we have put forth multiple algorithms to such end. 

Initially, we have explored extensions to the DL setting of the well-known RVFL network (Chapters \ref{chap:dist_rvfl}-\ref{chap:dist_rvfl_vertical_partitioning}). As in the centralized case, distributed RVFL networks are able to provide strong nonlinear modeling capabilities, while at the same time allowing for a fast and simple set of training algorithms, which are fundamentally framed in the linear regression literature. Thus, they provide a good compromise between a linear model and more complex nonlinear ANNs, such as distributed SVMs \cite{forero2010consensus}.

The successive chapters have considered the more complex problem of distributed training in the presence of labeled and unlabeled data, thus extending the theory of SSL \cite{Chapelle2006}. This is a relatively new problem in the literature, with a large set of possible real world applications. In this sense, the two distributed models explored in Chapters \ref{chap:dist_ssl} and \ref{chap:dist_ssl_next} are only initial explorations of a field which can potentially reveal much promise.

Finally, in the last part of the thesis we have been concerned with learning from time-varying data. Although this is a well-known setting, both of the algorithms that we presented are relatively novel. Indeed, Chapter \ref{chap:dist_esn} has introduced one of the first available algorithms for training recurrent networks, while the diffusion SAF in Chapter \ref{chap:dist_saf} can be seen as a general nonlinear extension of the much celebrated D-LMS \cite{sayed2014adaptive}.

Below we provide a set of possible future lines of research, which refer to specific portions of the thesis, along with the main content of each chapter.
\begin{itemize}
\item In \textbf{Chapters \ref{chap:dist_rvfl}-\ref{chap:dist_rvfl_vertical_partitioning}} we have detailed distributed algorithms for learning a RVFL network, in the case of batch and online learning, both for HP and VP partitioned data. In Chapter \ref{chap:dist_rvfl_sequential}, in particular, we have focused on the application to multiple distributed music classification tasks, including genre and artist recognition. These problems arise frequently in real-world scenarios, including P2P and mobile networks. Our experimental results show that the proposed algorithms can be efficiently applied in these situations, and compares favorably with a centralized solution in terms of accuracy and speed. Clearly, the algorithms can be successfully applied to distributed learning problems laying outside this specific applicative domain, particularly in real-world big data scenarios. Moreover, although in Chapter \ref{chap:dist_rvfl_sequential} we have focused on local updates based on the BRLS algorithm, nothing prevents the framework from being used with different rules, including efficient stochastic gradient descent updates. Similar considerations also apply for Chapter \ref{chap:dist_rvfl} and Chapter \ref{chap:dist_rvfl_vertical_partitioning}.
\item In \textbf{Chapter \ref{chap:dist_ssl}} we have proposed a totally decentralized algorithm for SSL in the framework of
MR. The core of our proposal is constituted by a distributed protocol designed to compute the Laplacian matrix. Our experimental results show that, also in this case, the proposed algorithm is able to match efficiently the performance of a centralized model built on the overall training set. Although we have focused on a particular algorithm belonging to MR, namely LapKRR, the framework is easily applicable to additional algorithms, including the laplacian SVM (LapSVM) \cite{belkin2006manifold}, and others. Moreover, extensions beyond MR are possible, i.e. to all the methods that encode information in the form of a matrix of pairwise distances, such as spectral dimensionality reduction, spectral clustering, and so on. In the case of kernels that directly depend on the dot product between patterns (e.g. the polynomial one), particular care must be taken in designing appropriate privacy-preserving protocols for distributed margin computation \cite{shi2012margin}, an aspect which is left to future investigations. Currently, the main limit of our algorithm is the computation time required by the distributed algorithm for completing the Laplacian matrix. This is due to a basic implementation of the two optimization algorithms. In this
sense, in future works we intend to improve the distributed algorithm to achieve better computational performance. Examples of possible modifications include adaptive strategies for the choice of the step-size, as well as early stopping protocols.
\item Next, in \textbf{Chapter \ref{chap:dist_ssl_next}} we have solved the problem of distributed SSL via another type of semi-supervised SVM, framed in the transductive literature. Particularly, we have leveraged over recent advances on distributed non-convex optimization, in order to provide two flexible mechanisms with a different balance in computational requirements and speed of convergence. A natural extension would be to consider different
semi-supervised techniques to be extended to the distributed setting, particularly among those developed for the S$^3$VM \cite{chapelle2008optimization}.
\item In \textbf{Chapter \ref{chap:dist_esn}} we have introduced a decentralized algorithm for training an ESN.  Experimental results on multiple benchmarks, related to non-linear system identification and chaotic time-series prediction, demonstrated that it is able to efficiently track a purely centralized solution, while at the same time imposing a small computational overhead in terms of vector$-$matrix operations requested to the single node. This represents a first step towards the development of data-distributed strategies for general RNNs, which would provide invaluable tools in real world applications. Future lines of research involve considering different optimization procedures with respect to ADMM, or more flexible DAC procedures. More in general, it is possible to consider other distributed training criteria beyond ridge regression and LASSO (such as training via a support vector algorithm) to be implemented in a distributed fashion. Finally, ESN are known to perform worse for problems that require a long memory. In this case, it is necessary to devise distributed strategies for other classes of recurrent networks, such as LSTM architectures \cite{hochreiter1997long,monner2012generalized}.
\item Finally, in \textbf{Chapter \ref{chap:dist_saf}} we have investigated a distributed algorithm for adapting a particular class of nonlinear filters, called SAF, using the general framework of DA. The algorithm inherits the properties of SAFs in the centralized case, namely, it allows for a flexible nonlinear estimation of the underlying function, with a relatively small increase in computational complexity. In particular, the algorithm can be implemented with two diffusion steps, and two gradient descent steps, thus requiring in average only twice as much computations as the standard D-LMS. Our experimental results show that D-SAF is able to efficiently learn hard nonlinearities, with a definite increase in convergence time with respect to a non-cooperative implementation. In the respective chapter, we have focused on a first-order adaptation algorithm, with CTA combiners. In future works, we plan to extend the D-SAF algorithm to the case of second-order adaptation with Hessian information, ATC combiners, and asynchronous networks. Additionally, we plan to investigate diffusion protocols for more general architectures, including Hammerstein and IIR spline filters.
\end{itemize}
A few general considerations on the thesis are also worth mentioning here:
\begin{itemize}
\item \textbf{Fixed topology}: for simplicity, in this thesis we have supposed that the network of agents is fixed, and connectivity is known at the agent level. This is not a necessary condition (indeed, many practical applications might require time-varying connectivities), and work along this sense is planned in the near future. Indeed, many of the tools employed throughout the thesis, e.g. ADMM, already possess extensions to this scenario, which can in principle be applied to the problems considered here.
\item \textbf{Synchronization}: similar considerations apply for the problem of having synchronized updates, requiring a general mechanism for coordinating the agents. Investigations along this line can start by considering asynchronous versions of SGD \cite{ormandi2011asynchronous}, or DA \cite{zhao2015asynchronous1}.
\item \textbf{Specific ML fields}: another important aspect is that, similarly to SSL, many subfields of ML remain to be extended to the distributed setting. As an example, there is limited literature for distributed implementation of \textit{active learning} strategies \cite{tong2002support}, where agents are allowed to request a set of labels on items that they assume to be interesting. This can potentially reduce drastically the training time and the amount of communication overhead.
\item \textbf{Multilayer networks}: we have investigated distributed methods only for ANN having at most one hidden layer of nonlinearities, which are known as `shallow' in the current terminology \cite{Schmidhuber2014}. Indeed, we saw in Section \ref{sec:distributed_mlp} that investigations on distributed deep neural networks have been limited. This is due to the large number of parameters to be exchanged, and to the resulting non-convex optimization problem. Both these problems require additional investigations in order to be properly addressed.
\item \textbf{Additional distributed techniques}: finally, we expect that techniques originally developed for distributed signal processing and distributed AI might be applied to the problem of DL, resulting in beneficial effects in term of in-network communication and/or computational requirements. This is the case, for example, of message censoring \cite{tay2007asymptotic}, a set of techniques allowing each individual agent to decide whether to take a specific measurement and propagate it over the network.
\end{itemize}

\pagestyle{starred}

\appendix
\addcontentsline{toc}{part}{\bf Appendices}
\part*{Appendices}
\chapter{Elements of Graph Theory}
\chaptermark{Algebraic Graph Theory}
\label{app:graph_theory}

\section{Algebraic graph theory}

Consider a graph composed by $L$ nodes, whose connectivity is fixed and known in advance. Mathematically, this graph can be represented by the so-called adjacency matrix $\vect{A} \in \left\{0,1\right\}^{L \times L}$, defined as:
\begin{equation}
A_{ij} = %
\begin{cases}
	1 & \text{ if node } i \text{ is connected to node } j \\
	0 & \text{ otherwise}
\end{cases} \;.
\label{appA:eq:adjacency_matric}
\end{equation}
The symbol $\mathcal{N}_k$ denotes the exclusive neighborhood of node $k$, i.e. the set of nodes connected to $k$, with the exclusion of $k$ itself. In this thesis, we are concerned with graphs which are undirected, meaning that $\vect{A}$ is symmetric, and connected, meaning that each node can be reached from every other node in a finite sequence of steps. Additionally, we suppose that there are no self-loops. We can define the degree $d_k$ of node $k$ as the number of nodes which are connected to it:
\begin{equation}
d_k = |\mathcal{N}_k| = \sum_{l=1}^L A_{kl} \;.
\label{appA:eq:degree}
\end{equation}
The degree $d$ of the network is defined as the maximum degree of its composing nodes:
\begin{equation}
d = \max_{k = 1, \ldots, L} d_k \;.
\label{appA:eq:max_degree}
\end{equation}
The degree matrix $\vect{D} \in \mathbb{N}^{L \times L}$ is then defined as:
\begin{equation}
\vect{D} = \text{diag}\left\{ d_1, \ldots, d_L \right\} \;,
\label{eq:degree_matrix}
\end{equation}
where $\text{diag}\left\{\cdot\right\}$ constructs a diagonal matrix from its arguments. Lastly, the Laplacian matrix $\vect{L} \in \mathbb{Z}^{L \times L}$ is defined as:
\begin{equation}
\vect{L} = \vect{D} - \vect{A} \;.
\label{appA:eq:laplacian_matrix}
\end{equation}
From the previous definitions, we obtain:
\begin{equation}
L_{ij} = %
\begin{cases}
	d_i & \text{ if } i = j \\
	-1  & \text{ if } i \in \mathcal{N}_j \\
	0   & \text{ otherwise }
\end{cases} \;.
\label{appA:laplacian_matrix_entry}
\end{equation}
It is known that an analysis of the Laplacian matrix allows to derive multiple important properties of the underlying graph. As an example, it can be shown that $\lambda_0(\vect{L}) = 0$, while the second-smallest eigenvalue is directly related to the connectivity of the graph itself \citep{newman2010networks}. In Chapter \ref{chap:dist_ssl} we make use of a variant of $\vect{L}$, called the normalized Laplacian matrix and defined as:
\begin{equation}
\vect{L}' = \vect{D}^{-\frac{1}{2}}\vect{L}\vect{D}^{-\frac{1}{2}} \;.
\label{appA:eq:normalized_laplacian_matrix}
\end{equation}
If follows straightforwardly that:
\begin{equation}
L'_{ij} = %
\begin{cases}
	1 & \text{ if } i = j \\
	\displaystyle - \frac{1}{\sqrt{d_id_j}} & \text{ if } i \in \mathcal{N}_j \\
	0 & \text{ otherwise }
\end{cases} \;.
\label{appA:eq:normalized_laplacian_matrix_enty}
\end{equation}

\section{Decentralized average consensus}
\label{appA:sec:consensus}

Suppose now that the nodes in the graph represent agents in a physical network. Additionally, each of them has access to a measurement vector $\vect{m}_k \in \R^S$. The task is for each of them to compute the global average given by:
\begin{equation}
\vect{\hat{m}} = \frac{1}{L} \sum_{k=1}^L \vect{m}_k \,.
\label{appA:eq:average}
\end{equation}

\noindent For generality, however, we allow every node to communicate only with its direct neighbors. With respect to the categorization of Section \ref{sec:categorization_dl_algorithms}, this is denoted as one-hop communication. DAC is an iterative network protocol to compute the global average (or, equivalently, sum) with respect to the local measurement vectors, requiring only local communications between them \citep{barbarossa2013distributed,olfati2007consensus,Xiao2007}. Its simplicity makes it suitable for implementation even in the most basic networks, such as robot swarms. Each agent initializes its estimate of the global average as $\vect{m}_k[0] = \vect{m}_k$. Then, at a generic iteration $n$, the local DAC update is given by:
\begin{equation}
\vect{m}_k[n] = \sum_{j = 1}^L C_{kj} \vect{m}_j[n-1] \,,
\label{eq:consensus}
\end{equation}
where the weight $C_{kj}$ is a real-valued scalar denoting the confidence that the $k$th node has with respect to the information coming from the $j$th node. By reorganizing these weights in a $L \times L$ connectivity matrix $\vect{C}$, and defining:
\begin{equation}
\vect{M}[n] = \left[ \vect{m}_1[n] \ldots \vect{m}_L[n] \right] \,,
\end{equation}
Eq. \eqref{eq:consensus} can be rewritten more compactly as:
\begin{equation}
\vect{M}[n] = \vect{C}\vect{M}[n-1] \,.
\label{eq:consensus_global_update}
\end{equation}

\noindent If the weights of the connectivity matrix $\vect{C}$ are chosen appropriately, this recursive procedure converges to the global average given by Eq. \eqref{appA:eq:average} \citep{olfati2007consensus}:
\begin{equation}
\lim_{n \rightarrow +\infty} \vect{m}_k[n] = \frac{1}{L} \sum_{k=1}^L \vect{m}_k[0] = \hat{\vect{m}}, \, k =  1, 2,\dots, L \,.
\label{eq:consensus_convergence}
\end{equation}
\noindent Practically, the procedure can be stopped after a certain predefined number of iterations is reached, or when the norm of the update is smaller than a certain user-defined threshold $\delta$:
\begin{equation}
\Bigl\lVert\vect{m}_k[n] - \vect{m}_k[n-1]\Bigr\rVert_2^2 < \delta, \; k =  1, 2,\dots, L \,.
\label{eq:consensus_stopping_criterion}
\end{equation}
\noindent In the case of undirected, connected networks, convergence is guaranteed provided that the connectivity matrix $\vect{C}$ respects the following properties:
\begin{equation}
\vect{C}\cdot\vect{1} = \vect{1} \,,
\end{equation}
\begin{equation}
\rho\left(\vect{C} - \frac{\vect{1}\cdot\vect{1}^T}{L}\right)  < 1 \,.
\end{equation}
\noindent A simple way of ensuring this is given by choosing the so-called `max-degree' weights \citep{olfati2007consensus}:
\begin{equation}
C_{kj} = \begin{cases}
\frac{1}{d + 1} & \text{ if $k$ is connected to $j$ } \\
1 - \frac{d_k}{d + 1} & \text{ if } k = j \\
0 & \text{ otherwise}
\end{cases}
\label{eq:max_degree_weights} \,,
\end{equation}
\noindent In practice, many variations on this standard procedure can be implemented to increase the convergence rate, such as the `definite consensus' \citep{georgopoulos2014distributed}, or the strategy introduced in \citep{sardellitti2010fast}. In this thesis, Eq. \eqref{eq:max_degree_weights} is used for choosing $\vect{C}$ unless otherwise specified. Other strategies are explored in Section \ref{chap5:sec:dac_comparison}.
\chapter{Software Libraries}
\chaptermark{Software Libraries}
\label{app:software}

In this appendix, we present the open-source software libraries developed during the course of the PhD. The libraries can be used to replicate most of the experiments and simulations presented in the previous chapters. All of them are implemented in the MATLAB environment.

\section{Lynx MATLAB Toolbox (Chapters \ref{chap:dist_rvfl}-\ref{chap:dist_rvfl_vertical_partitioning})}

Lynx is a research-oriented MATLAB toolbox, to provide a simple environment for performing large-scale comparisons of SL algorithms.\footnote{\url{https://github.com/ispamm/Lynx-Toolbox}} Basically, the details of a comparison, in terms of algorithms, datasets, etc., can be specified in a human understandable configuration file, which is loaded at runtime by the toolbox. In this way, it is possible to abstract the elements of the simulation from the actual code, and to repeat easily previously defined experiments. An example of configuration file is provided below, where two different algorithms (an SVM and a RVFL network) are compared on a well-known UCI benchmark.
\begin{lstlisting}[caption=Demo code for the Lynx Toolbox]
% Add the SVM to the simulation
add_model('SVM', 'Support Vector Machine', @SupportVectorMachine);

% Add the RVFL network
add_model('RVFL', 'Random Vector Network', @RandomVectorFunctionalLink);

% Add a dataset
add_dataset('Y', 'Yacht', 'uci_yacht');
\end{lstlisting}
Inside the toolbox, we implemented a set of utilities in order to simplify the development of distributed algorithms. Specifically, the toolbox allows to develop additional `features', which are objects that perform specific actions during the course of a simulation. We implemented a \verb|InitializeTopology()| feature, which takes care of partitioning the training data evenly across a network of agents, and provides the algorithms with a set of network specific functions, such as DAC protocols. Below is an example of enabling this feature in a configuration file:
\begin{lstlisting}[caption=Example of data partitioning]
% Define a network topology (in this case, a randomly generated one with 20 agents)
r = RandomTopology(20, 0.2);

% Initialize the feature (with a set of user-specified flags)
feat = InitializeTopology(r, 'disable_parallel', 'distribute_data', 'disable_plot');

% Add the feature to the simulation
add_feature(feat);
\end{lstlisting}
Due to the way in which the toolbox is structured, it is possible to combine distributed and non-distributed algorithms in the same simulation, leaving to the software the task of choosing whether to partition or not the data, and to collect the results from the different agents in the former case. The configuration files for Chapter \ref{chap:dist_rvfl} and Chapter \ref{chap:dist_rvfl_vertical_partitioning} are available on the author's website,\footnote{\url{http://ispac.diet.uniroma1.it/scardapane/software/code/}.} together with a set of additional examples of usage.

\section{Additional software implementations}

\subsection{Distributed LapKRR (Chapter \ref{chap:dist_ssl})}

The code for this chapter is available on BitBucket.\footnote{\url{https://bitbucket.org/robertofierimonte/distributed-semisupervised-code/}} The network utilities (e.g. random graph generation) are adapted from the Lynx toolbox (see previous section). Each set of experiments can be repeated by running the corresponding script in the `Scripts' folder. Specifically, there are three scripts, which can be used to replicate the experiments on EDM completion, distributed SSL, and privacy preservation, respectively.

\subsection{Distributed S$^3$VM (Chapter \ref{chap:dist_ssl_next})}

Similarly to the previous chapter, the code has been released on BitBucket,\footnote{\url{https://bitbucket.org/robertofierimonte/code-distributed-s3vm}} with some adaptations from the Lynx toolbox in terms of network utilities. With respect to the library for distributed LapKRR, the code has been designed in a more flexible fashion, as it allows to define a variable number of centralized and distributed algorithms to be compared in the \verb|test_script.m| file:

\clearpage

\begin{lstlisting}[caption=Definition of semi-supervised distributed algorithms]
% Define a centralized RBF and a centralized Nabla-S3VM
centralized_algorithms = {...
    CentralizedSVM('RBF-SVM', 1, '2', ''), ...
    NablaS3VM('NS3VM (GD)', 1, 1, 5)
};

% Define a distributed Nabla-S3VM
distributed_algorithms = {...
    DiffusionNablaS3VM('D-NS3VM', 1, 1, 5)
};
\end{lstlisting}
New algorithms can be defined by extending the abstract class \verb|LearningAlgorithm|.

\subsection{Distributed ESN (Chapter \ref{chap:dist_esn})}

This code is released on a different package on BitBucket,\footnote{\url{https://bitbucket.org/ispamm/distributed-esn}}, following the general ideas detailed above. Specifically, configuration and execution are divided in two different scripts, which can be easily customized. The ESN is implemented using a set of functions adapted from the Simple ESN toolbox developed by the research group of H. Jaeger.\footnote{\url{http://organic.elis.ugent.be/node/129}}

\subsection{Diffusion Spline Filtering (Chapter \ref{chap:dist_saf})}

This package has been developed in order to provide an effective testing ground for distributed filtering applications.\footnote{\url{https://bitbucket.org/ispamm/diffusion-spline-filtering}} As for the previous libraries, it is possible to declare dynamically new distributed filters to be tested, as long as they derive correctly from the base abstract class \verb|DiffusionFilter|. 

\chapter*{Acknowledgments}
\addcontentsline{toc}{chapter}{\bf Acknowledgments}
\markboth{Acknowledgments}{Acknowledgments}

\setlength{\parskip}{1em}

Throughout these years, I have had the pleasure of working with a large number of people, all of whom have taught me something. I am indebted to all of them, and this thesis, in its smallness, is dedicated to them. Needless to say, this thesis is also dedicated to those that have been the closest to me: family and loved ones.

\hangpara{1em}{1} %
To start with, I would like to express my gratitude to my supervisor, Prof. Aurelio Uncini, for its never-ending support. The same gratitude also extends to current and past members of the ISPAMM research group, including (in strict alphabetical order): Andrea Alemanno, Danilo Comminiello, Francesca Ortolani, Prof. Raffaele Parisi, and Michele Scarpiniti. It has been a real pleasure working with all of you so far.

\hangpara{1em}{1} %
I am indebted to Prof. Massimo Panella for giving me the opportunity of spending a few months in this strange land which is Australia. My most sincere gratitude goes to my Australian supervisor, Prof. Dianhui Wang, for his warmth and inspiration. Another thanks goes to anyone, and particularly his students, who has welcomed me and cheered me during my stay there.

\hangpara{1em}{1} %
During my PhD program, I have had the possibility of collaborating with many researchers from my department. Thus, I would like to thank, in no particular order, Prof. Antonello Rizzi, Filippo Bianchi, Marta Bucciarelli, Prof. Fabiola Colone, Andrea Proietti, Paolo Di Lorenzo, Luca Liparulo and Rosa Altilio. Most of all, I thank Prof. Sergio Barbarossa for introducing me with enthusiasm to many topics on distributed optimization, that are used extensively here.

\hangpara{1em}{1} %
Thanks to the students that I have had the pleasure of partially supervising, including Roberto Fierimonte, Valentina Ciccarelli, Marco Biagi and Gabriele Medeot.

\hangpara{1em}{1} %
Thanks to Prof. Giandomenico Boffi and Prof. Carlo Cirotto for having organized the wonderful SEFIR schools in Perugia, and to Prof. Giovanni Iacovitti and Prof. Giulio Iannello for allowing me to participate there twice. Of course, I thank all those that I have met there, including in particular Prof. Flavio Keller.

\hangpara{1em}{1} %
Before starting my PhD program, I had a fantastic yearlong work experience. Many of the colleagues I met there have remained my friends, and among them, I would like to particularly thank Fernando Nigro, Alessandra Piccolo and Ilaria Piccolo.

\hangpara{1em}{1} %
I thank Prof. Asim Roy from Arizona State University and Prof. Plamen Angelov from Lancaster University for giving me a great opportunity to see close-hand how an international conference is organized. Even if I could not participate in the end, I have had a great experience, and I thank my `colleagues' Bill Howell, Teng Teck-Hou, and José Iglesias.

\hangpara{1em}{1} %
Thanks are also in order to Prof. Amir Hussain of Stirling University. As an honorary fellow of his research group, I hope in a long and fruitful collaboration in the future.

\hangpara{1em}{1} %
Thanks to Antonella Blasetti for giving me space to talk about machine learning in her fantastic events as head of the Lazio-Abruzzo Google Developer Group (LAB-GDG). A general thanks goes to all the members of the LAB-GDG.

\hangpara{1em}{1} %
A few final thanks to Prof. Stefano Squartini, for his insightful discussions, to Dr. Steven van Vaerenbergh for his advice and help on the Lynx MATLAB toolbox, and to the awesome staff at the $2014$ IEEE International Conference on Acoustics, Speech, and Signal Processing.

\setlength{\parskip}{0em}

\renewcommand{\bibname}{References}
\bibliographystyle{IEEEtranS}
\begin{footnotesize}
\bibliography{biblio}
\end{footnotesize}
\addcontentsline{toc}{chapter}{\bf References}\mtcaddchapter 

\end{document}